\definecolor{codegreen}{rgb}{0,0.6,0}
\definecolor{codegray}{rgb}{0.5,0.5,0.5}
\definecolor{codepurple}{rgb}{0.58,0,0.82}
\definecolor{backcolour}{rgb}{1,1,1}
\lstdefinestyle{mystyle}{
    backgroundcolor=\color{backcolour},   
    commentstyle=\color{codegreen},
    keywordstyle=\color{magenta},
    numberstyle=\tiny\color{codegray},
    stringstyle=\color{codepurple},
    basicstyle=\ttfamily\footnotesize,
    breakatwhitespace=false,         
    breaklines=true,                 
    captionpos=b,                    
    keepspaces=true,                 
    numbers=left,                    
    numbersep=5pt,                  
    showspaces=false,                
    showstringspaces=false,
    showtabs=false,                  
    tabsize=2
}
\definecolor{tabblue}{HTML}{4e79a7}
\definecolor{tabred}{HTML}{e15759}
\newcommand{\FM}{{\mathbf{F}}}
\newcommand{\CM}{{\mathbf{C}}}
\newcommand{\SM}{{\mathbf{S}}}
\newcommand{\RM}{{\mathbf{R}}}
\newcommand{\tRM}{\widetilde{\RM}}
\newcommand{\AM}{{\mathbf{A}}}
\newcommand{\PM}{{\mathbf{P}}}
\newcommand{\LM}{{\mathbf{L}}}
\newcommand{\BM}{{\mathbf{B}}}
\newcommand{\oLM}{\overline{\LM}}
\newcommand{\tLM}{\widetilde{\LM}}
\newcommand{\tBM}{\widetilde{\BM}}
\newcommand{\MM}{{\mathbf{M}}}
\newcommand{\ML}{{\mathbf{L}}}
\newcommand{\MR}{{\mathbf{R}}}
\newcommand{\MP}{{\mathbf{P}}}
\newcommand{\MK}{{\mathbf{K}}}
\newcommand{\vW}{{\mathbf{W}}}
\newcommand{\oMM}{\overline{\MM}}
\newcommand{\curlyM}{\mathcal{M}}
\newcommand{\curlyDB}{\mathcal{DB}}
\newcommand{\curlyBD}{\mathcal{BD}}
\newcommand{\BlockyMonarch}{\textsc{Blocky Monarch}}
\newcommand{\R}{\mathbb{R}}
\newcommand{\C}{\mathbb{C}}
\newcommand{\parens}[1]{\left({#1}\right)}
\newcommand{\idx}[3]{\parens{#1,#2}_{#3}}
\newcommand{\idxsqrtn}[2]{\parens{#1,#2}}
\newcommand{\ceil}[1]{\left\lceil {#1} \right\rceil}
\algrenewcommand\algorithmicrequire{\textbf{Input:}}
\algrenewcommand\algorithmicensure{\textbf{Output:}}
\newcommand{\floor}[1]{\left\lfloor{#1}\right\rfloor}
\newcommand{\brackets}[1]{\left[ {#1}\right]}
\newcommand{\cbrackets}[1]{\left\{ {#1}\right\}}
\newcommand{\BlockyMonarchConv}{\textsc{BlockMonarchConv}}
\newcommand{\diag}{\text{diag}}
\newcommand{\rootN}[2]{\sqrt[{#1}]{{#2}}}
\newcommand{\pN}{\rootN{p}{N}}
\newcommand{\ve}{\mathbf{e}}
\newcommand{\vi}{\mathbf{i}}
\newcommand{\vj}{\mathbf{j}}
\newcommand{\vu}{\mathbf{u}}
\newcommand{\vk}{\mathbf{k}}
\newcommand{\vy}{\mathbf{y}}
\newcommand{\vz}{\mathbf{z}}
\newcommand{\vX}{\mathbf{X}}
\newcommand{\vzero}{\mathbf{0}}
\newcommand{\lpolyuni}[4]{\ell_{#1,#2}^{\parens{#3}}\parens{#4}}
\newcommand{\lpolyunicoeff}[4]{\ell_{#1,#2}^{\parens{#3}}\brackets{#4}}
\newcommand{\lpolyij}[1]{\lpolyuni{\vj}{\vi}{a}{#1}}
\newcommand{\lpolyija}{\lpolyij{X_a}}
\newcommand{\lpolyijamcoeff}{\lpolyunicoeff{\parens{j_{a+1},\dots,j_{p-1}}}{\parens{i_0,\dots,i_{a-1}}}{a}{m}}
\newcommand{\lpolyonlyj}[2]{\ell_{#1}^{\parens{#2}}}
\newcommand{\bpoly}[3]{\bp^{#1}_{#2}\parens{#3}}
\newcommand{\bpolyNj}[1]{\bpoly{N}{\vj}{#1}}
\newcommand{\clpolyuni}[4]{\widetilde{\ell}_{#1,#2}^{\parens{#3}}\parens{#4}}
\newcommand{\clpolyunicoeff}[3]{\widetilde{\ell}_{#1}^{\parens{#2}}\brackets{#3}}
\newcommand{\clpolyij}[1]{\clpolyuni{\vj}{\vi}{a}{#1}}
\newcommand{\clpolyija}{\clpolyij{X_a}}
\newcommand{\clpolyijamcoeff}{\clpolyunicoeff{\parens{j_a}}{a}{m}}
\newcommand{\clpolyonlyj}[2]{\widetilde{\ell}_{#1}^{\parens{#2}}}
\newcommand{\cbpoly}[3]{\widetilde{\bp}^{#1}_{#2}\parens{#3}}
\newcommand{\cbpolyNj}[1]{\cbpoly{N}{\vj}{#1}}
\newcommand{\cnewpoly}[4]{\widetilde{\ell}^{\parens{#1,#2}}_{#3}\parens{#4}}
\newcommand{\lagpoly}[3]{\Delta_{#1}^{\parens{#2}}\parens{#3}}
\newcommand{\lagpolyia}[1]{\lagpoly{\vi}{a}{#1}}
\newcommand{\cosroots}[2]{\omega_{#2,#1}}
\newcommand{\rootpN}[1]{\cosroots{#1}{\pN}}
\newtheorem{theorem}{Theorem}
\newtheorem{lemma}{Lemma}
\newtheorem{corollary}{Corollary}
\newtheorem{definition}{Definition}
\newtheorem{proposition}{Proposition}
\newtheorem{assumption}{Assumption}
\newcommand{\bp}{q}
  \newcommand{\colornote}[3]{{\color{#1}\bf{#2 #3}\normalfont}}
  \newcommand{\colornote}[3]{}
\definecolor{darkred}{rgb}{0.7,0.1,0.1}
\definecolor{darkgreen}{rgb}{0.1,0.5,0.1}
\definecolor{cyan}{rgb}{0.7,0.0,0.7}
\definecolor{dblue}{rgb}{0.2,0.2,0.8}
\definecolor{maroon}{rgb}{0.76,.13,.28}
\definecolor{burntorange}{rgb}{0.81,.33,0}
\definecolor{royalpurple}{rgb}{0.47,.31,0.66}
\definecolor{greenn}{rgb}{0.1,0.5,0.3}
\definecolor{purplee}{rgb}{0.5,.31,0.8}
  \newcommand{\num}[1]{{\color{red}\bf{#1}\normalfont}}
  \newcommand{\num}[1]{#1}
\newcommand{\sysname}{\textsc{Monarch Mixer}\xspace}
\newcommand{\sysabbrev}{\textsc{M2}\xspace}
\newcommand{\xmark}{\ding{55}}%
\newif\ifarxiv
\titlespacing{\section}{0pt}{*0.3}{*0}
\titlespacing{\subsection}{0pt}{*0.15}{*0}
\title{\sysname: A Simple Sub-Quadratic\\{GEMM}-Based Architecture}
\author[1]{Daniel Y. Fu}
\author[1]{Simran Arora\thanks{Equal contribution.}$^{,}$}
\author[2]{Jessica Grogan$^{*,}$}
\author[2]{Isys Johnson$^{*,}$}
\author[1]{Sabri Eyuboglu$^{*,}$}
\author[3]{\\Armin W. Thomas$^{*,}$}
\author[1]{Benjamin Spector}
\author[1]{Michael Poli}
\author[2]{Atri Rudra}
\author[1]{Christopher R{\'e}}
\affil[1]{Department of Computer Science, Stanford University}
\affil[2]{Department of Computer Science and Engineering, University at Buffalo, SUNY}
\affil[3]{Department of Psychology, Stanford University}
\affil[ ]{Contact Email: {\texttt{danfu@cs.stanford.edu}}}
\title{
\sysname: A Simple Sub-Quadratic\\{GEMM}-Based Architecture \\
}
\author{
\author[$\dagger$]{Daniel Y. Fu}
\author[$\dagger$]{Simran Arora\thanks{Equal contribution.}}
\author[$\ddagger$]{Jessica Grogan$^*$}
\author[$\ddagger$]{Isys Johnson$^*$}
\author[$\dagger$]{Sabri Eyuboglu$^{*}$}
\author[$\S$]{\\Armin W. Thomas$^{*}$}
\author[$\dagger$]{Benjamin Spector}
\author[$\dagger$]{Michael Poli}
\author[$\ddagger$]{Atri Rudra}
\author[$\dagger$]{Christopher R{\'e}}
\affil[$\dagger$]{Department of Computer Science, Stanford University}
\affil[$\ddagger$]{Department of Computer Science and Engineering, University at Buffalo, SUNY}
\affil[$\S$]{Department of Psychology, Stanford University}
\affil[ ]{Contact Email: {\texttt{danfu@cs.stanford.edu}}}
}
\date{October 18, 2023}
\begin{document}

\maketitle


\begin{abstract}

Machine learning models are increasingly being scaled in both sequence length and model dimension to reach longer contexts and better performance. 
However, existing architectures such as Transformers scale quadratically along both these axes.
We ask: are there performant architectures that can scale \textit{sub-quadratically} along sequence length and model dimension?
We introduce \sysname (\sysabbrev), a new architecture that uses the same sub-quadratic primitive along both sequence length and model dimension:
Monarch matrices,
a simple class of expressive structured matrices that captures many linear transforms, achieves high hardware efficiency on GPUs, and scales sub-quadratically.
As a proof of concept, we explore the performance of \sysabbrev in three domains: non-causal BERT-style language modeling, ViT-style image classification, and causal GPT-style language modeling.
For non-causal BERT-style modeling,
\sysabbrev matches BERT-base and BERT-large in downstream GLUE quality with up to \num{27\%} fewer parameters, and achieves up to 9.1$\times$ higher throughput at sequence length 4K.
On ImageNet, \sysabbrev outperforms ViT-b by \num{1\%} in accuracy, with only half the parameters. Causal GPT-style models introduce a technical challenge: enforcing causality via masking introduces a quadratic bottleneck. To alleviate this bottleneck, we develop a novel theoretical view of Monarch matrices based on multivariate polynomial evaluation and interpolation, which lets us parameterize \sysabbrev to be causal while remaining sub-quadratic. Using this parameterization, \sysabbrev matches GPT-style Transformers at 360M parameters in pretraining perplexity on The PILE---showing for the first time that it may be possible to match Transformer quality without attention or MLPs.

\end{abstract}


\section{Introduction}
\label{sec:intro}

Machine learning models in natural language processing and computer vision are being stretched to longer sequences and higher-dimensional representations to enable longer context and higher quality, respectively~\citep{yu2023megabyte, brown2020language, openai2023gpt4, chowdhery2022palm}. 
However, existing architectures exhibit time and space complexities that grow quadratically in sequence length and/or model dimension---which limits context length and makes scaling expensive.
For example, attention and MLP in Transformers scale quadratically in sequence length and model dimension~\citep{dao2022flashattention}.
In this paper, we explore a natural question: \textit{can we find a performant architecture that is sub-quadratic in both sequence length and model dimension?}

\begin{figure}
    \includegraphics[width=\textwidth]{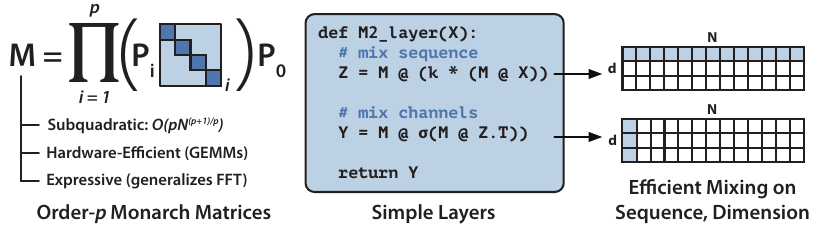}
    \vspace{-1.5em}
    \caption{\label{fig:banner}
        Monarch matrices are a simple, expressive, and hardware-efficient class of sub-quadratic structured matrices.
        \sysname (\sysabbrev) uses Monarch matrices to mix inputs first along the sequence dimension and then along the model dimension.
        See
        the Appendix
        for PyTorch implementation of an \sysabbrev layer.
    }
\end{figure}

In our exploration, we seek a sub-quadratic primitive for both the sequence length and model dimension.
Our framing takes inspiration from work such as MLP-mixer~\cite{tolstikhin2021mlp} and ConvMixer~\citep{ng2022convmixer}, which observed that many machine learning models operate by repeatedly \textit{mixing} information along the sequence and model dimension axes, and used a single operator for both axes.
Finding mixing operators that are expressive, sub-quadratic, and hardware-efficient is challenging.
For example, the MLPs in MLP-mixer and convolutions in ConvMixer are expressive, but they both scale quadratically in their input dimension \citep{tolstikhin2021mlp, ng2022convmixer}.
Several recent studies have proposed sub-quadratic sequence mixing with long convolutions or state space models~\cite{poli2023hyena,fu2023hungry,wang2022pretraining}---both computed using the FFT---but these models have poor FLOP utilization (3-5\%~\citep{fu2023simple}) and maintain quadratic scaling in model dimension.
Meanwhile, there has been promising work in sparsifying dense MLP layers without losing quality, but some of the models can actually be \textit{slower} than their dense counterparts, due to low hardware utilization~\citep{dao2022monarch,chen2021pixelated,chen2021scatterbrain,frankle2018lottery,han2015deep}.

We turn to an expressive class of sub-quadratic structured matrices called \textit{Monarch matrices}~\citep{dao2022monarch} (Figure~\ref{fig:banner} left) to propose \sysname (\sysabbrev). 
Monarch matrices are a family of structured matrices that generalize the fast Fourier transform (FFT) and have been shown to capture a wide class of linear transforms including Hadamard transforms, Toeplitz matrices~\citep{gray2006toeplitz}, AFDF matrices~\citep{moczulski2015acdc}, and convolutions.
They are parameterized as the products of block-diagonal matrices, called \textit{monarch factors}, interleaved with permutation.
Their compute scales sub-quadratically: setting the number of factors to $p$ results in computational complexity of $O(pN^{(p+1)/p})$ in input length $N$, allowing the complexity to interpolate between $O(N \log N)$ at $p = \log N$ and $O(N^{3/2})$ at $p = 2$.\footnote{Monarch matrices were originally~\citep{dao2022monarch} parameterized with $p = 2$, but the general $p$ case is a natural extension.}

\sysabbrev uses Monarch matrices to mix information along the sequence and model dimension axes. It is both simple to implement and hardware-efficient: the block-diagonal Monarch factors can be computed efficiently on modern hardware using GEMMs (generalized matrix multiply algorithms).
Our proof-of-concept implementation of an \sysabbrev layer, written in less than \num{40} lines of pure PyTorch (including imports), relies only on matrix multiplication, transpose, reshape, and elementwise products (see pseudocode in Figure~\ref{fig:banner} middle) and achieves \num{25.6\%} FLOP utilization\footnote{For context, the most optimized attention implementations achieve 25\% FLOP utilization, while unoptimized implementations of attention can have as low as 10\% FLOP utilization~\citep{dao2022flashattention}.} for inputs of size 64K on an A100 GPU.
On newer architectures such as the RTX 4090, a simple CUDA implementation achieves \num{41.4}\% FLOP utilization at the same size.

\paragraph{Non-Causal Settings}
As a first proof of concept of \sysabbrev, we evaluate how it compares to Transformers in terms of speed and quality in non-causal settings such as BERT-style masked language modeling~\cite{devlin2020transformers} and ImageNet classification.
We introduce \sysabbrev-BERT, which replaces the attention blocks in BERT with bidirectional gated convolutions implemented using Monarch matrices and replaces the dense matrices in the MLP with Monarch matrices.
\sysabbrev-BERT reduces parameter count but maintains quality---matching BERT-base and BERT-large in downstream GLUE quality with \num{27\%} and \num{24\%} fewer parameters, respectively.
Sub-quadratic scaling in sequence length enables high throughput at longer sequences---up to \num{9.1$\times$} higher throughput at sequence length 4K than HuggingFace BERT, and \num{3.1$\times$} higher throughput at sequence length 8K than BERT optimized with FlashAttention~\citep{dao2022flashattention}.

For image classification, we adapt HyenaViT-b~\citep{poli2023hyena}, an attention-free vision transformer based on gated convolutions.
We replace the convolution operation with \sysabbrev primitives and replace the MLP layers with an \sysabbrev block as well.
These changes reduce the parameter count compared to a ViT-b~\citep{dosovitskiy2020image} model with the same model width and depth by a factor of \num{2}.
Surprisingly, despite this parameter reduction, we find that \sysabbrev slightly outperforms ViT-b and HyenaViT-b baselines, achieving \num{1\%} higher accuracy on ImageNet~\citep{deng2009imagenet}.

\paragraph{Causal Settings}
Causal settings such as GPT-style~\citep{radford2018improving} auto-regressive language modeling present a technical challenge: masking out the upper triangular elements in an attention matrix (or equivalent structure) introduces a quadratic bottleneck.
To alleviate this quadratic bottleneck with Monarch matrices,
we develop new theory to characterize which parameterizations of Monarch matrices maintain causality.
To do so, we take a view of $p$-order Monarch matrix multiplication as $p$-variate polynomial evaluation and interpolation (e.g., $p=2$ factors corresponds to bivariate polynomials, Figure~\ref{fig:causal_maps} left).
Using this view, we show that the \sysabbrev convolution shown in Figure~\ref{fig:banner} (middle) can be viewed as manipulation of modular polynomial multiplication.
This result allows us to develop conditions (Theorem~\ref{thm:causal_univariate}) under which \sysabbrev is causal.
We can use this causal parameterization to outperform GPT-style language models on causal language modeling by \num{0.2} PPL points on the PILE at model size \num{360M}--without using either attention or MLP blocks.

\paragraph{Summary} Overall, our results present a potential path to building machine learning models with sub-quadratic primitives.
We hope our work can serve as a starting point to explore models that are more efficient in both sequence length and model dimension.


\section{Preliminaries}
\label{sec:background}

In this section, we provide some background on the key components behind the cost of operations on GPUs, and then discuss the scaling characteristics of some common primitives used to mix information across the sequence dimension and model dimension in modern machine learning models.

\paragraph{GPU Accelerator Cost Model}
We provide a brief discussion of relevant factors affecting runtime performance of deep learning operations on GPUs.
Depending on the balance of computation and memory accesses, operations can be
classified as either compute-bound or memory-bound \citep{konstantinidis2015practical}.
In compute-bound operations, the time accessing GPU memory is relatively small compared to the time spent doing arithmetic operations.
Typical examples are matrix multiply with large inner
dimension, and short convolution kernels with a large number of channels.

The speed of these operations is determined by the FLOP/s available on compute units, and the number of FLOPs necessary to complete the operation.
In our paper, we exploit fast matrix multiply units such as tensor cores.
On the A100, tensor cores can achieve 312 TFLOP/s in half-precision matrix multiply operations, while non-matrix multiply operations are limited to 19 TFLOP/s~\citep{nvidia2020nvidia}. This trend began with tensor cores in the V100~\citep{nvidia2017nvidia}, and is continuing into the next-generation H100 chips~\citep{nvidia2022nvidia}.

In memory-bound operations, the time taken by the operation is determined by the
number of memory accesses, while time spent in computation is much smaller.
Examples include most elementwise operations (e.g., activation, dropout) and reductions (e.g., sum, softmax, batch norm, layer norm).

The runtime of memory-bound operations is determined by the memory bandwidth of different layers of the \textit{memory hierarchy}.
GPU memory is large but relatively slow---up to 80 GB on A100, but with bandwidth of 2 TB/s~\citep{nvidia2020nvidia}.
Higher levels of the memory hierarchy such as caches are much smaller (20 MB) but an order of magnitude faster (19 TB/s).

\begin{wraptable}{r}{0.4\textwidth}
        \vspace{-2em}
        \caption{FLOP cost and utilization of various mixer layers, input dimension 64K on an RTX 4090.}
        \vspace{2mm}
        \label{tab:flop_scale}
        \centering
        \begin{tabular}{rcccccc} \toprule
        Layer & FLOP Cost & Util\\ \midrule
        MLP & $N^2$ & 95.5\% \\
        FlashAttn & $N^2$ & 24.0\% \\ 
        FFT& $N \log N$ & 3.0\% \\ \midrule
        \sysabbrev Conv& $N^{3/2}$ & 41.4\% &  \\
        \bottomrule 
        \end{tabular}
\end{wraptable}
\paragraph{Common Mixer Primitives}
To help contextualize our work, we provide scaling and hardware utilization characteristics for a few common operations that are used to mix information in machine learning models, summarized in Table~\ref{tab:flop_scale}.

Transformers~\citep{vaswani2018attention} use attention to mix information across the sequence dimension, and MLP blocks to mix information across the model dimension.
Both of these blocks scale quadratically in input length.
MLP layers are compute-bound, so they have high FLOP utilization out of the box.
Attention blocks are memory-bound, so even the most optimized implementations such as 
\textsc{FlashAttention}~\citep{dao2022flashattention} have relatively lower FLOP utilization.

Recent work has made progress towards attention-free models by replacing attention layers with long convolution layers, interleaved with elementwise gating~\cite{fu2023hungry, fu2023simple, poli2023hyena, ma2022mega, hasani2022liquid, romero2021ckconv, romero2022flexconv, romero2022towards}.
These layers are computed using FFT operations using the FFT convolution theorem:
\ifarxiv
$$y = \mathbf{K} \ast \mathbf{X} = FFT^{-1}(FFT(\mathbf{X}) * FFT(\mathbf{K}))$$
\else
$y = \mathbf{K} \ast \mathbf{X} = FFT^{-1}(FFT(\mathbf{X}) * FFT(\mathbf{K})).$
\fi
While the FFT scales asymptotically well in $O(N \log N)$, it is often memory-bound and thus has low FLOP utilization.
In our work, we aim to construct a mixer that has both sub-quadratic scaling and high FLOP utilization.


\section{\sysname}
\label{sec:method}

In this section, we recall Monarch matrices, introduce how \sysabbrev uses Monarch matrices to mix along the sequence and model dimensions, and benchmark a \sysabbrev convolution in terms of hardware utilization.

\subsection{Monarch Matrices}

Monarch matrices~\citep{dao2022monarch} are a sub-quadratic class of structured matrices that are hardware-efficient and expressive.
They can represent many linear transforms, including convolutions, Toeplitz-like transforms, low-displacement rank transforms, and orthogonal polynomials.
Directly implementing these different structured transforms on GPUs as dense matrices can be inefficient. 
In contrast, their Monarch decompositions can be computed by interleaving matrix multiplications with tensor permutations.

A Monarch matrix $\MM \in \R^{N \times N}$ of order $p$ is defined by the following:
\begin{equation}
    \MM = \parens{\prod_{i=1}^p \MP_i \mathbf{B}_i}\MP_0,
    \label{eq:monarch-def}
\end{equation}
where each $\MP_i$ is related to the `base $\pN$' variant of the bit-reversal permutation, and $\mathbf{B}_i$ is a block-diagonal matrix with block size $b$.
Setting $b = \sqrt[p]{N}$ achieves \textit{sub-quadratic} compute cost.
For example, for $p=2, b=\sqrt{N}$, Monarch matrices require $O(N^{3/2})$ compute in sequence length $N$.

In this paper, we use Monarch matrices to construct architectures that are sub-quadratic in both sequence length $N$ and model dimension $d$.
We will often parameterize order-$2$ Monarch matrices, written as $\MM = \MP \ML \MP \MR \MP$, where $\ML$ and $\MR$ are block-diagonal matrices (for ``left" and ``right"), and $\MP = \MP_2 = \MP_1 = \MP_0$ is a permutation that reshapes the input to 2D, transposes it, and flattens it to 1D.
A common case is to set $\ML = \MR = (\mathbf{I}_{\sqrt{N}} \otimes \mathbf{F}_{\sqrt{N}})$, where $\mathbf{F}_{\sqrt{N}}$ is a $\sqrt{N}$ DFT matrix, and $\otimes$ is the Kronecker product.

\subsection{\sysname Architecture}
\begin{table}[t]
    \caption{FLOP cost and utilization of \sysabbrev compared to dense MLP at different input sizes $N$, with block size $\sqrt{N}$, on an A100 and RTX 4090.}
    \label{tab:mm_flops}
    \centering
    \begin{tabular}{rcccccc} \toprule
    $N$& 4K & 16K & 64K & 256K \\ \midrule
    Dense Matmul TFLOP Cost & 0.025 & 0.412 & 6.60 & 106.0 \\
    \sysabbrev TFLOP Cost & 0.002 & 0.013 & 0.103 & 0.824 \\ 
    \midrule
    Dense FLOP Utilization (A100) & 63.0\% & 78.0\% & 80.0\% & OOM \\
    \sysabbrev FLOP Utilization (A100) & 4.78\% & 12.7\% & 25.6\% & 42.8\% \\
    Wall-Clock Speedup (A100) & 1.2$\times$ & 5.1$\times$ & 20.6$\times$ & $>$55.0$\times$ \\
    \midrule
    Dense FLOP Utilization (4090) & 74.6\% & 96.7\% & 98.0\% & OOM \\
    \sysabbrev FLOP Utilization (4090) & 11.1\% & 32.1\% & 41.4\% & 53.7\% \\
    Wall-Clock Speedup (4090) & 2.2$\times$ & 10.5$\times$ & 27.0$\times$ & $>$69.1$\times$ \\
    \bottomrule 
    \end{tabular}
\end{table}
We describe how \sysname uses Monarch matrices and elementwise operations to construct sub-quadratic architectures (Figure~\ref{fig:banner} middle).
We take a mixer view of model architectures, where each layer is a sequence of mixing operations across the sequence and the model dimension axes.
Each layer takes as input a sequence of embeddings $\mathbf{X} \in \mathbb{R}^{N \times d}$, and outputs a sequence $\mathbf{Y} \in \mathbb{R}^{N\times d}$, where $N$ is the sequence length, and $d$ is the model dimension.
For simplicity, we show the order-2 case here, though we can use higher-order blocks to scale to longer sequences and larger model dimensions.

Let $\MM_1, \MM_2 \in \R^{N \times N}$ and $\MM_3, \MM_4 \in \R^{d \times d}$ be order-2 Monarch matrices, let $\MK_1 \in \R^{N \times d}$, let $\sigma$ be an optional point-wise non-linearity (\textit{e.g.} ReLU), and let $\odot$ be elementwise multiplication.
\sysabbrev uses Monarch matrices to construct expressive architectures.
For example, a convolutional block with a sparse MLP can be expressed as follows:
\begin{enumerate}[leftmargin=*,nosep,nolistsep]
    \item Mix along \textbf{sequence} axis: 
        \begin{equation}
            \label{eqn:m3_attn}
            \mathbf{\tilde{X}} = \MM_2 (\MK_1 \odot \MM_1 \mathbf{X})
        \end{equation}
    \item Mix along \textbf{embedding} axis: 
        \begin{equation}
            \label{eqn:m3_mlp}
            \mathbf{Y}^\top = \MM_4 \sigma(\MM_3 \mathbf{\tilde{X}}^\top)
        \end{equation}
\end{enumerate}
When $\MM_1$ is set to the DFT and $\MM_2$ is set to the inverse DFT, Equation~\ref{eqn:m3_attn} exactly corresponds to a convolution with kernel $\MK_1$ parameterized in frequency space.
Equation~\ref{eqn:m3_mlp} corresponds to an MLP with the dense matrices replaced by Monarch matrices.
More expressive layers are also easily expressible; for example, replacing Equation~\ref{eqn:m3_attn} with $\mathbf{V} \odot \MM_2 (\MK_1 \odot \MM_1 (\mathbf{Q} \odot \mathbf{K}))$, where $\mathbf{Q}, \mathbf{K}, \mathbf{V}$ are linear projections of $\mathbf{X}$, reproduces a gated convolution block, as in~\citep{poli2023hyena,fu2023hungry,fu2023simple}.

The basic \sysabbrev layer is simple to implement; pseudocode is shown in Figure~\ref{fig:banner} (middle), and
the Appendix gives an efficient implementation of \sysabbrev in under \num{40} lines of pure PyTorch (including imports).
The convolution case with Monarch matrices fixed to DFT and inverse DFT matrices also admits implementations based on FFT algorithms~\citep{cooley1965algorithm}.

\subsection{Architecture Benchmarks}
We benchmark the efficiency of the $\MM (\MK \odot \MM \mathbf{X})$ convolution operator (Equation~\ref{eqn:m3_attn}) implemented in a simple CUDA kernel (calling standard cuBLAS sub-routines~\citep{nvidia2023cublas}), as the dimension $N$ increases. Equation~\ref{eqn:m3_mlp} scales similarly, as dimension $d$ increases.
We keep the block size $b$ fixed to $\sqrt{N}$.

Table~\ref{tab:mm_flops} shows the FLOP cost and utilization of a \sysabbrev operator as a function of the input size on an A100 as well as on an RTX 4090.
On the A100, the operator is more dominated by the data movement costs of the permutation operations (see 
the Appendix
for a roofline analysis).
For longer inputs, the sub-quadratic scaling allows \sysname to outperform dense matrix multiplication.
On the RTX 4090, which has a larger and faster L2 cache than the A100, we can manually optimize an implementation to amortize data movement costs.


\section{Theoretical Analysis: \sysabbrev as Polynomial Multiplication}
\label{sec:theory}

In this section, we develop theory to make the \sysabbrev layer causal in the input $\mathbf{X}$---e.g., ensure that an output $Y_i$ of the \sysabbrev should only depend on $X_1$, ..., $X_{i}$.
Our approach involves interpreting Monarch matrix multiplication as multivariate polynomial evaluation and interpolation.
We then show that an \sysabbrev convolution is equivalent to modular polynomial manipulation in a univariate basis.

The challenge is controlling the degrees of the resulting univariate polynomials, to prevent ``underflow" under modular multiplication (see Figure~\ref{fig:causal_maps} for an overview).
Our key result is deriving sufficient conditions on the degrees of the bivariate polynomials defining the Monarch factors to prevent such underflow.
We focus on the bivariate case (order $p$ = 2) in the body, and give the general multivariate case
in the Appendix.
We present proof sketches in the main body, and leave proofs and additional results for
the Appendix.

\begin{figure}
    \centering
    \ifarxiv
    \includegraphics[width=\textwidth]{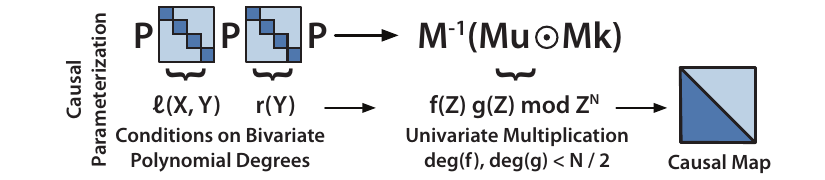}
    \else
    \includegraphics{figs/causal_monarch_pdf.pdf}
    \fi
    \caption{\label{fig:causal_maps}
        Monarch multiplication can be interpreted as polynomial evaluation and interpolation.
        We derive sufficient conditions on the polynomial formulation of Monarch matrices for \sysabbrev to be causal.
    }
\end{figure}

\paragraph{Monarch Multiplication as Polynomial Evaluation}
First, we show that order-2 Monarch matrix-vector multiplication $\MM \cdot \vu$ is equivalent to bivariate polynomial evaluation.

Fix a Monarch matrix $\MM\in\R^{N\times N} = \MP \ML \MP \MR \MP$, for two block-diagonal matrices $\ML$ and $\MR$ with blocks of size $b = \sqrt{N}$.
We can interpret Monarch matrices as bivariate polynomial evaluation by setting $A = \{\omega_0, \dots, \omega_{b-1}\}$ as a set of evaluation points (e.g., the $b$th roots of unity),
and letting $\{\ell_0(X, Y), \dots, \ell_{b-1}(X, Y)\}$, $\{r_0(Y), \dots, r_{N-1}(Y)\}$ be sets of basis polynomials with individual degrees of $X,Y$ being $<\sqrt{N}$.
The values of $\{\ell_0(X, Y), \dots, \ell_{b-1}(X, Y)\}$ evaluated on $A^2$ determine the entries of $\ML$, and the values of $\{r_0(Y), \dots, r_{N-1}(Y)\}$ evaluated on $A$ determine the entries of $\MR$.
We give the mapping from $\ell, r,$ and $A$ to $\ML$ and $\MR$ 
in the Appendix.

Then, matrix-vector multiplication between $\MM$ and a vector $\vu$ is equivalent to polynomial evaluation of the basis functions $\ell, r$ on the evaluation points $A^2$:
\begin{theorem}

\label{thm:monarch-bivariate}
Let $m(j) = j\mod\sqrt{N}$.
For any vector $\vu \in \R^N$, $\MM \vu$ is a bivariate polynomial $u(X, Y)$ evaluated at $A^2$, with $u(X, Y) = \sum_{ j = 0 }^{N-1} u_j f_j(X, Y),$ where $f_j(X, Y) = \ell_{m(j)} (X, Y) r_j(Y)$.
\end{theorem}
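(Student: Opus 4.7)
The plan is to unfold the factorization $\MM = \MP \ML \MP \MR \MP$ from right to left and show that each block-diagonal multiplication performs a partial polynomial evaluation, with the permutations $\MP$ serving to reshape the intermediate vector so that the next block-diagonal acts on the correct coordinate. The overall statement then reduces to the two-stage evaluation identity
\[
u(x,y) \;=\; \sum_{k=0}^{b-1} \ell_k(x,y) \left( \sum_{j : m(j) = k} u_j\, r_j(y) \right),
\qquad (x,y) \in A^2,
\]
where $b = \sqrt{N}$. This is just the regrouping of $u(X,Y) = \sum_j u_j f_j(X,Y)$ by the common factor $\ell_{m(j)}(X,Y)$, which is independent of $j_1 := (j - m(j))/b$.

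First I would fix the indexing convention: write $j = j_1 b + j_0$ with $j_0 = m(j)$, and similarly index the output by a pair $(i_0, i_1) \in \{0,\dots,b-1\}^2$ corresponding to the evaluation point $(\omega_{i_0}, \omega_{i_1}) \in A^2$. The permutation $\MP$ is the ``transpose'' of the $b \times b$ reshape, so it swaps the low and high digits. With this convention, after the first $\MP$ the vector $\vu$ is laid out so that $\MR$ (block-diagonal by the appendix's construction from the $r_j$) acts on the $\omega$-th block and produces, in slot $(k, \omega)$, the inner sum $c_k(\omega) = \sum_{j : m(j) = k} u_j\, r_j(\omega)$. This is exactly the univariate evaluation step in the variable $Y$, performed simultaneously at all $\omega \in A$.

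The second $\MP$ then transposes so that values sharing a common $Y$-coordinate $\omega$ are grouped into a single length-$b$ block. The block-diagonal matrix $\ML$, specified from the $\ell_k(X,Y)$, then evaluates $\sum_{k} c_k(\omega)\, \ell_k(x,\omega)$ at each $x \in A$, producing the value $u(x,\omega)$ at each point. A final $\MP$ rearranges the output into the canonical flat indexing of $A^2$. Substituting $c_k(\omega)$ back and expanding $f_j = \ell_{m(j)}\, r_j$ recovers the claimed formula for $u(X,Y)$.

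The main obstacle will be pinning down the indexing of the permutations carefully enough that the block-wise action of $\MR$ and $\ML$ aligns precisely with the mapping $\ell, r, A \mapsto \ML, \MR$ defined in the appendix; this is a bookkeeping exercise in the base-$\sqrt{N}$ bit-reversal combinatorics, not a conceptual one. Once that is fixed, the theorem is essentially a tautology: $\MR$ performs the $Y$-evaluation stage, $\ML$ performs the $X$-evaluation stage at each fixed $Y$, and the composition is bivariate evaluation of $u(X,Y)$ on $A^2$. A natural induction on $p$ would then give the order-$p$ multivariate generalization stated in the appendix, with each block-diagonal factor $\BM_i$ contributing the evaluation of one additional variable.
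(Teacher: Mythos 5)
Your proposal is correct, and the mathematical content coincides with the paper's: in both cases everything reduces to the fact that the interleaved permutations align the block-diagonal factors so that the $\bigl((i_1,i_0),(j_1,j_0)\bigr)$ entry of the product factors as $\ell_{m(j)}(\alpha_{i_1},\beta_{i_0})\cdot r_j(\beta_{i_0})$, after which linearity gives the claim. The presentation differs slightly in route: the paper starts from the known entrywise formula $\MM'[(i_1,i_0),(j_1,j_0)]=\oLM_{i_1,j_1}[i_0,i_0]\cdot\RM_{j_1,j_1}[i_0,j_0]$ inherited from the Monarch factorization, uses Lagrange interpolation to set up a bijection between coefficient data and $(\oLM,\RM)$, reads off that each \emph{column} of $\MM$ is the evaluation of one basis polynomial $f_j$ on $A^2$, and then handles the trailing permutation $\MM=\MM'\PM$ (which is exactly the index swap $r_{j_1,j_0}\mapsto r_{j_0,j_1}$ behind your $r_j$); you instead propagate $\vu$ through the factorization and interpret $\MR$ and $\ML$ as the two stages of a separable bivariate evaluation via the regrouping $u(x,y)=\sum_k \ell_k(x,y)\sum_{j:\,m(j)=k}u_jr_j(y)$. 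Your staged view is more self-contained (it does not cite the entrywise formula) and makes the induction to general $p$ transparent, but the index bookkeeping you defer is precisely the content of the paper's corollary chain, so neither approach buys anything the other lacks.
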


\paragraph{Monarch Inverse as Polynomial Interpolation}
Next, we exploit the fact that Monarch inverse multiplication $\MM^{-1} \cdot \vu$ is equivalent to polynomial interpolation in the basis polynomials of $\MM$.

\begin{theorem}
\label{thm:monarch-inverse}
Let $\MM_0,\MM_1,\MM_2$ be Monarch matrices, and let  $A$ be the set of $\sqrt{N}$ roots of unity. Then, the operation
\begin{equation}
   \mathbf{f} = \MM^{-1}_0 \left( (\MM_1 \vk) \odot (\MM_2 \vu)\right)
    \label{eq:ku-mod-multi}.
\end{equation}
is equivalent to representing the polynomial 
\[ h(X, Y) = k(X, Y) u(X, Y) \mod (X^{\sqrt{N}}-1, Y^{\sqrt{N}} - 1)\] 
in terms of the basis polynomials $\ell, r$ corresponding to $\MM_0$, and where $k(X, Y)$ and $u(X, Y)$ are the polynomials corresponding to $\MM_1 \vk$ and $\MM_2 \vu$, respectively.
\end{theorem}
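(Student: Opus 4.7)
The plan is to apply Theorem~\ref{thm:monarch-bivariate} to each of $\MM_1$, $\MM_2$, and (the inverse of) $\MM_0$, then glue these three instances together using the identity that evaluation at $\sqrt{N}$-th roots of unity agrees with reduction modulo $(X^{\sqrt{N}}-1, Y^{\sqrt{N}}-1)$. Conceptually, the three applications turn the two forward Monarch multiplies into bivariate polynomial evaluations on $A^2$, the Hadamard product into pointwise polynomial multiplication, and the final $\MM_0^{-1}$ into polynomial interpolation back to coefficients in the $\ell, r$ basis of $\MM_0$.

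Concretely, I would first apply Theorem~\ref{thm:monarch-bivariate} to $\MM_1 \vk$ and to $\MM_2 \vu$ separately, obtaining bivariate polynomials $k(X,Y)$ and $u(X,Y)$ with individual degrees $<\sqrt{N}$ in $X$ and $Y$, such that $\MM_1 \vk$ and $\MM_2 \vu$ are the vectors of values of $k$ and $u$ on $A^2$, respectively. The elementwise product $(\MM_1 \vk) \odot (\MM_2 \vu)$ is therefore the vector of values of the product polynomial $k(X,Y)\,u(X,Y)$ on $A^2$. Next, I would invoke the key identity: for any $\omega\in A$ we have $\omega^{\sqrt{N}}=1$, so evaluating any bivariate polynomial at a point of $A^2$ agrees with evaluating its reduction modulo $(X^{\sqrt{N}}-1, Y^{\sqrt{N}}-1)$. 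This reduction brings $k\cdot u$ (which a priori has individual degrees $<2\sqrt{N}-1$) back down to individual degrees $<\sqrt{N}$, giving the polynomial $h(X,Y) = k(X,Y)u(X,Y)\bmod (X^{\sqrt{N}}-1, Y^{\sqrt{N}}-1)$ whose values on $A^2$ coincide with $(\MM_1 \vk)\odot(\MM_2 \vu)$. Finally, since $h$ has individual degrees $<\sqrt{N}$, it lies in the span of $\{f_j(X,Y) = \ell_{m(j)}(X,Y)\,r_j(Y)\}_{j=0}^{N-1}$, so I can write $h = \sum_j \mathbf{f}_j\, f_j$ and apply Theorem~\ref{thm:monarch-bivariate} in the forward direction to $\MM_0$ to get $\MM_0 \mathbf{f} = (\MM_1 \vk)\odot(\MM_2 \vu)$; inverting yields the claimed identity $\mathbf{f} = \MM_0^{-1}\!\left((\MM_1 \vk)\odot(\MM_2 \vu)\right)$, and $\mathbf{f}$ is by construction the coefficient vector of $h$ in the $\ell, r$ basis of $\MM_0$.

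The main obstacle is justifying that $\{f_j\}_{j=0}^{N-1}$ really is a basis for the $N$-dimensional space of bivariate polynomials of individual degree $<\sqrt{N}$, so that the representation $h = \sum_j \mathbf{f}_j f_j$ is well-defined and unique. A clean way to dispatch this is a dimension count: both the coefficient space and the evaluation space on $A^2$ have dimension $N=(\sqrt{N})^2$, so it suffices to show the coefficient-to-evaluation map $\MM_0$ is injective, which is exactly the hypothesis implicit in writing $\MM_0^{-1}$. Once this linear-algebra fact is in hand, the rest of the argument reduces to a routine composition of the three applications of Theorem~\ref{thm:monarch-bivariate} with the roots-of-unity identity that turns evaluation at $A^2$ into reduction modulo $(X^{\sqrt{N}}-1, Y^{\sqrt{N}}-1)$.
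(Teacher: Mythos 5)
Your proposal is correct and follows essentially the same route as the paper's proof: apply Theorem~\ref{thm:monarch-bivariate} to the two forward multiplications, observe that the Hadamard product gives the evaluations of $k\cdot u$ on $A^2$, use $\omega^{\sqrt{N}}=1$ to identify these with evaluations of the reduction modulo $(X^{\sqrt{N}}-1, Y^{\sqrt{N}}-1)$, and interpret $\MM_0^{-1}$ as interpolation into the $\ell, r$ basis of $\MM_0$. Your explicit dimension-count justification that $\{f_j\}$ forms a basis (equivalently, that the representation of $h$ is well-defined and unique given the invertibility of $\MM_0$) is a point the paper handles only implicitly, but it is the same argument.
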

The above follows from \Cref{thm:monarch-bivariate} and the fact that Monarch matrix-vector multiplication with an inverse Monarch matrix is equivalent to polynomial interpolation in a given basis. The $\mod$ part comes from the fact that $A$ is the set of roots of the polynomial $Z^{\sqrt{N}}-1$.

\paragraph{Causal Monarch Maps}
Now, we give a class of Monarch matrices from which we can build a causal map.
First, we define a polynomial with \textit{minimum} degree $j$:
\begin{definition}
\label{eq:general_b}
A polynomial of minimum degree $j$ (and maximum degree $N-1$) is defined as
    $\bar{q}_{j}(Z)=\sum_{a=j}^{N-1}\bar{q}_{j}[a] Z^a.$
\end{definition}
To ensure causality, we first convert the bivariate polynomial basis into a univariate basis, and then we expand the degree of the univariate polynomial. The resulting univariate polynomial multiplication is naturally causal (exploiting similar properties as the causal FFT convolution).

We use the Kronecker substitution $(X\gets Z,Y\gets Z^{\sqrt{N}})$ to convert the bivariate polynomial basis into a univariate basis:
\begin{equation}
\label{eq:uni-kronecker}
 q_j(Z) = \ell_{m(j)}(Z) r_{{j}}\parens{Z^{\sqrt{N}}}, 
\end{equation}
where $m(j)$ is defined as in~\Cref{thm:monarch-bivariate}.

Then, the following class of Monarch matrices (with the conversion to univariate polynomial basis as above) forms a causal map:
\begin{theorem}
\label{thm:causal_univariate} 
Let $\vu, \vk \in \R^n$, where $n < N / 2$.
Let $m(j)$ be as in~\Cref{thm:monarch-bivariate}, and 
$k(j) = \floor{j / \sqrt{N}}$.
Then define the basis polynomials $\ell_{m(j)}$ to have minimum degree $m(j)$, basis polynomials $r_{{j}}$ to have minimum degree $k(j)$, and all polynomials $q_j(Z)$ to have maximum degree $<N/2$ for all $j<N/2$ and for $N/2\le j<N$ have maximum degree $N -1$.
Let $\MM_{N}$ be defined by such basis polynomials via~\eqref{eq:uni-kronecker} where the evaluation points are now the $N$th roots of unity. Then, we have that
\begin{equation}
    \mathbf{u}\mapsto\left(\MM_{N}^{-1} ( \MM_{N} (\mathbf{k},\mathbf{0}_{N-n}) \odot \MM_{N} (\mathbf{u},\mathbf{0}_{N-n}))\right)\left[0:{n-1}\right]
    \label{eq:univar-causal}
\end{equation}
gives a causal map in $\mathbf{u}$.
\end{theorem}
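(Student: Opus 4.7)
The plan is to translate the operation in~\eqref{eq:univar-causal} into plain univariate polynomial multiplication modulo $Z^N-1$ via the Kronecker substitution~\eqref{eq:uni-kronecker}, check that the stated degree bounds preclude any modular wrap-around, and then use the fact that $q_j$ has minimum degree \emph{exactly} $j$ to show that in the expansion of $k(Z)u(Z)$ in the basis $\{q_i\}$, the coefficient of $q_i$ only involves inputs $u_b$ with $b\le i$.

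First I would establish a univariate analog of Theorems~\ref{thm:monarch-bivariate} and~\ref{thm:monarch-inverse}. By the same evaluation/interpolation argument, $\MM_{N}(\vk,\vzero_{N-n})$ is the vector of evaluations of $k(Z)=\sum_{j=0}^{n-1}k_j q_j(Z)$ at the $N$th roots of unity, and similarly for $\vu$; pointwise multiplication then lifts to univariate multiplication modulo $Z^N-1$, and $\MM_N^{-1}$ re-expresses the result in the $\{q_j\}$ basis. Hence~\eqref{eq:univar-causal} outputs the first $n$ coefficients, in the $\{q_j\}$ basis, of $h(Z)=k(Z)u(Z)\bmod(Z^N-1)$. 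Because only basis polynomials with index $j<n<N/2$ contribute and each such $q_j$ has maximum degree $<N/2$ by hypothesis, both $k(Z)$ and $u(Z)$ have degree $<N/2$, so $\deg(k(Z)u(Z))<N$ and the mod reduction is the identity.

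The key structural step is to exploit the minimum-degree condition. Since $\ell_{m(j)}(Z)$ has minimum degree $m(j)$ and $r_j(Z^{\sqrt{N}})$ has minimum degree $k(j)\sqrt{N}$, their product $q_j$ has minimum degree exactly $m(j)+k(j)\sqrt{N}=j$. Hence the change-of-basis matrix from the monomial basis $\{Z^a\}$ to $\{q_j\}$ is lower triangular with nonzero diagonal; inverting, $Z^c=\sum_{i\ge c}D_{i,c}\,q_i(Z)$. Since $q_a(Z)q_b(Z)$ has minimum degree $a+b$ as a plain polynomial, re-expanding gives
\[
q_a(Z)q_b(Z)=\sum_{i\ge a+b}\alpha_{a,b,i}\,q_i(Z).
\]
Therefore
\[
k(Z)u(Z)=\sum_{a,b}k_a u_b\,q_a(Z)q_b(Z)=\sum_{i}c_i\,q_i(Z),\qquad c_i=\sum_{a+b\le i}\alpha_{a,b,i}\,k_a u_b,
\]
and the constraint $a+b\le i$ together with $a\ge 0$ forces $b\le i$, so $c_i$ (for $i<n$) depends on $\vu$ only through $u_0,\dots,u_i$. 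This is precisely the causal dependence claimed.

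The main obstacle will be the bookkeeping in the first step: rigorously adapting Theorems~\ref{thm:monarch-bivariate} and~\ref{thm:monarch-inverse} to the Kronecker-substituted univariate setting at the $N$th (rather than $\sqrt{N}$th) roots of unity, and verifying that the degree conditions imposed on $\ell$ and $r$ still leave enough freedom for $\{q_j\}_{j=0}^{N-1}$ to form a genuine basis of polynomials of degree $<N$, so that $\MM_N$ is invertible and the interpolation step is well-defined. Once this is confirmed, the causality conclusion follows essentially immediately from the lower-triangular structure of the basis change described above.
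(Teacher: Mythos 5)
Your proposal is correct and follows essentially the same route as the paper: reinterpret the map as univariate multiplication modulo $Z^N-1$ via the Kronecker substitution, use the degree hypotheses ($n<N/2$ and $\deg q_j<N/2$ for $j<N/2$) to rule out modular wrap-around, and then use the fact that $q_j$ has minimum degree exactly $m(j)+k(j)\sqrt{N}=j$ so that $q_a q_b$ expands only over basis elements $q_i$ with $i\ge a+b$, giving the triangular dependence $f_i=\sum_{a+b\le i}\alpha_{a,b,i}k_a u_b$. The paper packages this as a general lemma for arbitrary minimum-degree bases plus a reduction showing the Monarch basis is a special case (with some extra index-permutation bookkeeping), but the substance is identical to yours.
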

Theorem~\ref{thm:causal_univariate} gives a causal map that can be computed entirely using Monarch matrices -- enforcing causality with sub-quadratic scaling. The main technical ingredient in proving the above result is that the product $q_j(Z) q_{j'}(Z)$ can be written as a linear combination of $q_a(Z)$ for $j+j'\le a<N$ (this uses the above specified properties on the minimum and maximum degrees of $q_j(Z)$). This in turn implies that the term $k_{j'}u_jq_j(Z) q_{j'}(Z)$ only contributes to the coefficients of ``higher order" basis polynomials $q_a(Z)$ for $a\ge j+j'$ in the product $k(Z) u(Z)$, which is needed for causality.
Figure~\ref{fig:causal_maps} gives an example of restricted polynomials generating a causal map.


\section{Experiments}
\label{sec:experiments}
\ifarxiv
\begin{wrapfigure}{r}{0.4\textwidth}
\else
\begin{wrapfigure}{r}{0.4\textwidth}
\fi
    \vspace{-3em}
    \begin{center}
        \includegraphics{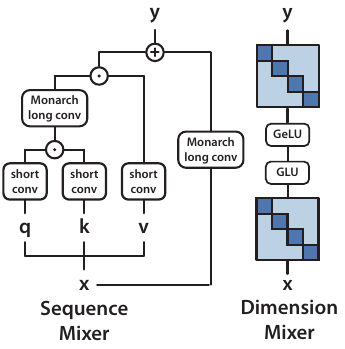}
    \end{center}
    \vspace{-1.5em}
    \caption{
        \sysabbrev-BERT uses Monarch matrices to create a bidirectional gated long convolution in the sequence mixer, and uses Monarch matrices to replace the linear layers in the dimension mixer.\label{fig:bert_arch}
    }
    \vspace{-0.5em}
\end{wrapfigure}

We compare \sysname to Transformers on three tasks where Transformers have been dominant: BERT-style non-causal masked language modeling, ViT-style image classification, and GPT-style causal language modeling.
In each, we show that we can match Transformers in quality using neither attention nor MLPs.
We additionally evaluate wall-clock speedups against strong Transformer baselines in the BERT setting.
Additional experiments on speech and alternative architectures are given in Appendix~\ref{app:additional_experiments}, and experimental details are given in Appendix~\ref{app:experiment_details}.

\subsection{Non-Causal Language Modeling}
We introduce \sysabbrev-BERT, an \sysabbrev-based architecture for non-causal language modeling.
\sysabbrev-BERT acts as a drop-in replacement for BERT-style language models \cite{devlin2020transformers}, which are a workhorse application of the Transformer architecture \cite{jin2020bert, joshi2020spanbert, lee2020biobert, liu2019roberta, ma2019universal, miller2019leveraging, yu2019improving, zhang2019bertscore, beltagy2019scibert, koroteev2021bert}. We train \sysabbrev-BERT using masked language modeling over C4~\citep{raffel2019t5} with the \verb|bert-base-uncased| tokenizer.

\sysabbrev-BERT starts with a Transformer backbone and replaces the attention and MLPs with \sysabbrev layers, shown in Figure~\ref{fig:bert_arch}.
In the sequence mixer, we replace attention with bidirectional gated convolutions with a residual convolution (Figure~\ref{fig:bert_arch} left).
To recover convolutions, we set the Monarch matrices to DFT and inverse DFT matrices.
Following~\citep{poli2023hyena,fu2023hungry}, we also add short depthwise convolutions after the projections.
In the dimension mixer, we replace the two dense matrices in MLPs with learned block-diagonal matrices (Monarch matrix of order 1, $b=4$).
We pretrain two \sysabbrev-BERT-base models, at 80M and 110M, and two \sysabbrev-BERT-large models, at 260M and 341M.
These are equivalent to BERT-base and BERT-large, respectively.

\begin{table*}[t]
    \centering
    \caption{Average GLUE Score for \sysabbrev-BERT-base compared to BERT-base~\citep{devlin2018bert}, along with change in parameters and GLUE score.\label{tab:glue_score_bert_base}}
    \begin{tabular}{@{}rccc@{}}
    \toprule
    \textbf{Model}                       & \textbf{GLUE Score} & \textbf{$\Delta$ Params} & \textbf{$\Delta$ GLUE Score} \\ \midrule
    BERT-base (110M)  & 79.6                        & -0\%                                         & +0.0                                         \\ \midrule
    \sysabbrev-BERT-base (80M)                   & 79.9                        & -27\%                                        & +0.3                                         \\
    \sysabbrev-BERT-base (110M)                  & \textbf{80.9}               & -0\%                                         & +1.3 \\
    \bottomrule
    \end{tabular}
\end{table*}
\begin{table*}[t]
    \centering
    \caption{Average GLUE Score for \sysabbrev-BERT-large compared to BERT-large~\citep{devlin2018bert}, along with change in parameters and GLUE score.\label{tab:glue_score_bert_large}}
    \begin{tabular}{@{}rccc@{}}
    \toprule
    \textbf{Model}       & \textbf{GLUE Score} & \textbf{$\Delta$ Params} & \textbf{$\Delta$ GLUE Score} \\ \midrule
    BERT-large (340M)    & 82.1                        & -0\%            & +0.0                \\ \midrule
    \sysabbrev-BERT-large (260M) & 82.2                        & -24\%           & +0.1                \\
    \sysabbrev-BERT-large (341M) & \textbf{82.8}                        & +0.2\%          & +0.7                \\ \bottomrule
    \end{tabular}
\end{table*}
\paragraph{Downstream GLUE Scores}
First, we evaluate \sysabbrev-BERT models on downstream fine-tuning compared to BERT-base and BERT-large from~\citep{devlin2018bert}.
We take the pretrained models and fine-tune them on BERT, following the procedure in~\citep{izsak2021train}.
Table~\ref{tab:glue_score_bert_base} shows performance for BERT-base equivalent models, and Table~\ref{tab:glue_score_bert_large} shows performance for BERT-large equivalent models.
M2-BERT-base can match BERT-base in GLUE quality with \num{27\%} fewer parameters---or outperform BERT-base in quality by \num{1.3} points when parameter matched.
M2-BERT-large matches BERT-large with \num{24\%} fewer parameters, and outperforms by \num{0.7} points when parameter matched.

\begin{table*}[t]
    \centering
    \caption{Throughput in tokens/ms by context length for \sysabbrev-BERT-base (80M) compared to BERT-base.}
    \label{tab:gpu_inference_bert_base}
    \begin{tabular}{@{}rccccc@{}}
    \toprule
    \textbf{Model}                  & \textbf{512}   & \textbf{1024}  & \textbf{2048}  & \textbf{4096}  & \textbf{8192}  \\ \midrule
    HF BERT-base (110M)             & 206.1          & 130.8          & 71.3           & 39.0           & OOM            \\
    FlashAttention BERT-base (110M) & 367.4          & 350.1          & 257.2          & 179.1          & 102.4          \\
    M2-BERT-base (80M)              & \textbf{386.3} & \textbf{380.7} & \textbf{378.9} & \textbf{353.9} & \textbf{320.1} \\ \midrule
    M2 Speedup over HF BERT-base (110M)      & 1.9$\times$ & 2.9$\times$          & 5.2$\times$          & 9.1$\times$          & –             \\ \bottomrule \\
    \end{tabular}
\end{table*}
\paragraph{GPU Throughput by Sequence Length}
Next, we evaluate throughput of \sysabbrev-BERT models by sequence length, compared to HuggingFace implementations of BERT, as well as optimized implementations of BERT running FlashAttention~\citep{dao2022flashattention}.
Table~\ref{tab:gpu_inference_bert_base} shows forward throughput for BERT-base equivalent models, and the appendix shows throughput for BERT-large (where the performance trends are similar).
Inference times are reported in tokens/ms on an A100-40GB GPU.
\sysabbrev-BERT-base achieves higher throughput than even highly-optimized BERT models, and up to \num{9.1$\times$} faster throughput than a standard HuggingFace implementation at sequence length 4K.

\begin{table}[t]
    \caption{CPU inference latency in milliseconds with a batch size of 1 at varied input sequence lengths. Measurements averaged over 10 examples on a 48 vCPU, 96 GB RAM instance from the GCP n2-standard-48 series, which runs Intel Cascade Lake processors. This is based on the protocol in \cite{funtowicz2021bertcpu}.}
    \label{tab:bert_cpu}
    \centering
    \begin{tabular}{rcccccc} \toprule
    {Model} & {512} & 1024 & 2048 & 4096 & 8192 \\ 
    \midrule
    BERT-base (110M) & \textbf{182} & 389 & 918 & 2660 & 11820  \\
    \sysabbrev-BERT-base (80M) & 289 &  \textbf{361} & \textbf{651} & \textbf{948} & \textbf{1820}  \\
    \midrule
    Speedup & 0.6$\times$ & 1.1$\times$ & 1.4$\times$& 2.8$\times$ & 6.5$\times$\\
    \bottomrule 
    \end{tabular}
\end{table}
\paragraph{CPU Inference Latency}
Finally, we report CPU inference latency for \sysabbrev-BERT-base (80M) compared to BERT-base, running direct PyTorch implementations for both.
In short sequences, the impacts of data locality still dominate the FLOP reduction, and operations such as filter generation (which are not present in BERT) pay a higher cost.
Starting at sequences 1K and longer, \sysabbrev-BERT-base starts to have speedup over BERT-base, up to 6.5$\times$ at sequence length 8K.
We believe further optimization and applying IO-aware principles can further improve CPU performance.

\subsection{Image Classification}

To validate that our methods generalize to images as well as language for non-causal modeling, we next evaluate \sysabbrev on image classification. We compare \sysabbrev to ViT-style models and recent work, HyenaViT-b~\citep{poli2023hyena}, which uses gated long convolutions to replace the attention layers in ViT-b.
In our work, \sysabbrev-ViT builds off HyenaViT-b and replaces the long convolutions with the \sysabbrev operator in Equation~\ref{eqn:m3_attn} (again setting the Monarch matrices to the DFT and inverse DFT).
We replace the MLP blocks in HyenaViT-b with block-diagonal matrices, similarly to \sysabbrev-BERT.
Appendix~\ref{app:additional_experiments} additionally compares \sysabbrev to the Swin-family of architectures~\citep{liu2021Swin,liu2021swinv2}.

\begin{table}[t]
    \caption{Accuracy on ImageNet-1k. ResNet-152 provided for reference.}
    \label{tab:imagenet}
    \centering
    \begin{tabular}{rccccc} \toprule
    {Model} & Top-1\% & Top-5\% & Description\\ \midrule
    ResNet-152 (60M) & 78.6 & 94.3 & ConvNet, MLP \\ \midrule
    ViT-b (87M) & 78.5 & 93.6 & Attention, MLP \\
    ViT-b + Monarch (33M) & 78.9 & 94.2 & Attention, MLP-Free \\
    HyenaViT-b (88M) & 78.5 & 93.6 & Attention-Free, MLP\\ \midrule
    \sysabbrev-ViT-b (45M) & \textbf{79.5} & \textbf{94.5} & Attention-Free, MLP-Free \\
    \bottomrule 
    \end{tabular} 
\end{table}
\begin{table}[t]
    \caption{Perplexity on the PILE when trained for different amounts of tokens.}
    \label{tab:the_pile}
    \centering
    \begin{tabular}{rccccc} \toprule
    {Model} & 5B & 10B & 15B & Description\\ \midrule
    Transformer (125M) & 13.3 & 11.9 & 11.2 & Attention, MLP \\
    Hyena (155M) & 13.1 & 11.8 & 11.1 & Attention-Free, MLP \\
    \sysabbrev-GPT (145M) & \textbf{12.9} & \textbf{11.6} & \textbf{10.9} & Attention-Free, MLP-Free \\
    \midrule
    Transformer (355M) & 11.4 & 9.8 & 9.1 & Attention, MLP \\
    Hyena (360M) & 11.3 & 9.8 & 9.2 & Attention-Free, MLP \\
    \sysabbrev-GPT (360M) & \textbf{11.0} & \textbf{9.6} & \textbf{9.0} & Attention-Free, MLP-Free \\
    \bottomrule 
    \end{tabular}
\end{table}
Table~\ref{tab:imagenet} shows the performance of \sysname against ViT-b, HyenaViT-b, and ViT-b-Monarch (which replaces the MLP blocks of standard ViT-b with Monarch matrices) on ImageNet-1k.
\sysname outperforms the other models with only half the parameters of the original ViT-s model.
Surprisingly, \sysname also outperforms ResNet-152, with fewer parameters---even though the latter was explicitly designed for ImageNet performance.

\subsection{Causal Language Modeling}

GPT-style causal language modeling is a critical application for Transformers~\citep{brown2020language, kocon2023chatgpt, bard}.
We introduce \sysabbrev-GPT, a \sysabbrev-based architecture for causal language modeling.
For the sequence mixer, \sysabbrev-GPT combines the convolutional filter from Hyena~\citep{poli2023hyena}, the state-of-the-art attention-free language model, with parameter sharing across multiple heads from H3~\citep{fu2023hungry}.
We use the causal parameterization of Equation~\ref{eqn:m3_attn} to replace the FFT in these architectures, and we remove the MLP layers entirely.
The resulting architecture is entirely attention- and MLP-free.

We pretrain \sysabbrev-GPT
on the PILE, a standard dataset for causal language modeling.
Following prior work~\citep{fu2023simple, poli2023hyena}, we train models at two model sizes, with varying amounts of training data---decaying the learning rate appropriately for each experiment.
Table~\ref{tab:the_pile} shows the results.
Even though our model is attention- and MLP-free, it outperforms both Transformers and Hyena in perplexity on pretraining.
These results suggest that radically different architectures than Transformers may be performant on causal language modeling.

\ifarxiv

\section{Related Work}
\label{sec:related}

\paragraph{Long Convolutions} Recent work proposes to use long convolution layers as a replacement for the Transformer attention layers in sequence modeling \cite{romero2021ckconv, romero2022flexconv, romero2022towards, poli2023hyena, fu2023simple}.
Many of these models rely on the FFT convolution theorem to compute the long convolutions.
We build on the insights in many of these architectures in constructing our \sysabbrev architectures, and additionally replaces the FFT operations with Monarch matrices.

Our work is also related to a rich literature in convolutions in other bases, such as Chebyshev bases~\citep{xu2018spectral} or orthogonal polynomial bases~\citep{hale2014algorithm}.
These approaches have analogues in our multivariate analysis; replacing the basis polynomials of the Monarch matrices in \sysname may be able to approximate some of these operations.
An interesting question for future work would be to study how well our techniques and concerns about causality and hardware utilization translate to these alternative convolution bases.

\paragraph{Optimization of deep learning primitives} There is a rich history of the optimization of deep learning primitives, as accelerating their performance can yield substantial savings in compute and cost for large models. There are many approaches to speed up these operations, but they usually either reduce data movement or compute.

\underline{Reducing data movement}: In many applications, the major bottleneck is the storage and movement of large amounts of memory.
One popular approach to reducing data movement is checkpointing, wherein one stores fewer intermediate results and recomputes the others on-the-fly where they are needed, trading additional compute for memory~\cite{kusumoto2019graph, wang2022melon}.
Another approach is kernel fusion, wherein algorithms initially described as sequential steps can often be fused in ways that improve their properties. For example, it is generally faster to implement a dot-product through a multiply-accumulate rather than first multiplying and then accumulating. Recently, libraries such as PyTorch 2.0~\cite{paszke2019pytorch} have added kernel fusion capabilities, although the very best performance usually still arises from handwritten kernels. Third, in order to better exploit memory locality, it is often fastest to load small blocks of memory, do intensive computation on them, and then write the results a tile at a time~\cite{xu2022training}. Finally, many algorithms also have hand-optimizations that can remove unnecessary computation or memory accesses~\cite{milakov2018online}.

Efficient algorithms usually make use of a combination of these techniques. For example, FlashAttention \cite{dao2022flashattention} uses all four to dramatically decrease both the latency and memory consumption of multi-head attention. Though we have made a modest effort to implement \sysname efficiently, we think it likely that \sysname could be further optimized by these techniques. 

\underline{Reducing flops}: A first target for optimization is the multi-layer perceptron (MLP), owing to its ubiquity. A variety of structured sparse factorizations exist, many of which we draw on in this work \cite{cooley1965algorithm, frankle2018lottery, dao2022monarch, dao2020learning, zhu2021long, dao2021kaleidoscope, dettmers2019sparse, chen2021pixelated}.  Attention is also a popular target for optimization. Recently, a plethora of sub-quadratic approximations of attention have emerged, that aim to approximate attention to reduce its quadratic complexity. Some methods rely on sparsification, relying on the fact that the attention matrix is extremely sparse at long sequence lengths~\cite{beltagy2020longformer, kitaev2020reformer, ma2021luna, fedus2022switch, du2022glam}. Others use low-rank approximations of the attention matrix~\cite{wang2020linformer, dai2020funnel, zhu2021long} or kernel methods instead~\cite{choromanski2020rethinking, katharopoulos2020transformers}. A subset use a combination of these techniques, such as~\cite{chen2021scatterbrain, tay2022efficient}. Finally, a third category of methods \cite{poli2023hyena, fu2023hungry} aim to replace attention entirely, relying on state-space models \cite{gu2021efficiently}.
\fi

\section{Discussion and Conclusion}
We explore \sysname (\sysabbrev), a new architecture that is sub-quadratic in both sequence length and model dimension and is hardware-efficient on modern accelerators.
We motivate \sysabbrev from both theoretical and systems performance perspectives and conduct a preliminary proof-of-concept investigation into performance on masked language modeling, image classification, and causal language modeling.

While our initial results are promising, our work is only a first step in this direction.
The \sysabbrev layer can likely be further optimized with systems optimization techniques such as kernel fusion.
Our work has also not been optimized for inference like more well-established models such as Transformers, or even more recent models such as state space models.
It also remains to be seen whether \sysabbrev layers can have as widespread applicability as Transformers.
We hope that these can be fruitful directions for future work.

\ifarxiv
\else
Related work and broader impacts can be found in the Appendix.
\fi

\ifarxiv
\section*{Acknowledgments}

We gratefully acknowledge the support of DARPA under Nos. FA86501827865 (SDH) and FA86501827882 (ASED); NIH under No. U54EB020405 (Mobilize), NSF under Nos. CCF1763315 (Beyond Sparsity), CCF1563078 (Volume to Velocity), and 1937301 (RTML); ONR under No. N000141712266 (Unifying Weak Supervision); the Moore Foundation, NXP, Xilinx, LETI-CEA, Intel, IBM, Microsoft, NEC, Toshiba, TSMC, ARM, Hitachi, BASF, Accenture, Ericsson, Qualcomm, Analog Devices, the Okawa Foundation, American Family Insurance, Google Cloud, Swiss Re,
Brown Institute for Media Innovation,
Department of Defense (DoD) through the National Defense Science and
Engineering Graduate Fellowship (NDSEG) Program, 
Fannie and John Hertz Foundation,
National Science Foundation Graduate Research Fellowship Program,
Texas Instruments Stanford Graduate Fellowship in Science and Engineering,
and members of the Stanford DAWN project: Teradata, Facebook, Google, Ant Financial, NEC, VMWare, and Infosys. The U.S. Government is authorized to reproduce and distribute reprints for Governmental purposes notwithstanding any copyright notation thereon. Any opinions, findings, and conclusions or recommendations expressed in this material are those of the authors and do not necessarily reflect the views, policies, or endorsements, either expressed or implied, of DARPA, NIH, ONR, or the U.S. Government.
JG and AR's work is supported by NSF grant\# CCF-2247014.
IJ's work is supported by an NSF Graduate Fellowship.

\fi

\bibliographystyle{plain}
\bibliography{main}

\newpage
\appendix
\ifarxiv
\section*{Author Contributions}
\noindent
\begin{tabular}{ll}
    \textbf{D.Y.F.} & Conceptualized the research; coordinated collaborations; developed \sysabbrev \\
    & architectures; led experimental and implementation efforts; assisted in development \\
    & of theoretical results; coordinated writing. \\
    \textbf{S.A.} & Assisted with the development of \sysabbrev-BERT architecture; conducted BERT \\
    & experiments; assisted in writing. \\
    \textbf{J.G.} & Led development of theory and causal algorithms; wrote Appendix~\ref{app:all_proofs}. \\
    \textbf{I.J.} & Led development of theory and causal algorithms; wrote Appendix~\ref{app:all_proofs}. \\
    \textbf{S.E.} & Assisted with BERT experiments; conducted Swin experiments; wrote Listing~\ref{app:implementation}; \\
    & assisted in writing. \\
    \textbf{A.W.T.} & Conducted ViT experiments; assisted in writing. \\
    \textbf{B.S.} & Assisted in optimized \sysabbrev implementation; conducted mixer benchmarks; \\
    & assisted in writing. \\
    \textbf{M.P.} & Assisted in development of \sysabbrev-GPT architecture. \\
    \textbf{A.R.} & Supervised theory development; developed proofs; reviewed manuscript. \\
    \textbf{C.R.} & Supervised research; reviewed manuscript. \\
\end{tabular}

\noindent \textit{Simran Arora, Jessica Grogan, Isys Johnson, Sabri Eyuboglu, and Armin Thomas contributed equally to this work.}
\else
\fi

\section*{Appendix}
\ifarxiv
\else
\Cref{sec:broader_impacts} discusses broader impacts of our work.
\Cref{sec:related} discusses related work.
\fi
\Cref{app:implementation} gives a PyTorch code listing of an \sysabbrev layer.
\Cref{app:additional_experiments} presents additional experiments.
\Cref{app:experiment_details} gives details for the experiments, including model architectures and hyperparameters.
\Cref{app:all_proofs} gives missing details and proofs for the theoretical analysis, as well as generalizations to broader results.

\ifarxiv
\else

\section{Broader Impacts}
\label{sec:broader_impacts}
Our work seeks to understand the fundamental capabilities and limitations of newly-emerging model architectures.
As the amount of data and model size grows, we also seek to understand how to make training these models more efficient, both in terms of the amount of training context and the model size.
This potentially connects to energy savings during model development and deployment, as well as making machine learning models accessible to a larger population of people.

However, as with any machine learning models, developing new techniques may impact a wide range of applications, each with potential benefits and harms.
Making language model training cheaper and longer context may make it cheaper to spread disinformation.
Similarly, improving the efficiency of model training may not reduce the overall environmental footprint of training, since the same resources may be used to train more models, or train the same models for longer.
While our work makes partial progress on the fronts of efficiency and understanding, it does not explicitly address the issues of fairness and bias in language models.
In addition, our work demonstrates a proof-of-concept; it has error modes, and we recognize the inherent risks of training and using machine learning models, including language models.
Detailed discussions of these risks are in~\cite{bommasani2021opportunities, weidinger2021ethical,bender2021dangers}.

\fi

\section{Implementation}
\label{app:implementation}
\begin{lstlisting}[language=Python,style=mystyle,caption={A basic implementation of the M2 layer.}]
from einops import rearrange
import torch
from torch import nn

def blockdiag_matmul(x, w):
    return torch.einsum(
        "bnm,...bm->...bn", w, x.view(*x.shape[:-1], w.shape[0],  w.shape[-1])
    ).reshape(*x.shape)

class MonarchMatrix(nn.Module):

    def __init__(self, sqrt_n: int):
        super().__init__()
        self.sqrt_n = sqrt_n
        self.L = nn.Parameter(torch.randn((sqrt_n, sqrt_n, sqrt_n)))
        self.R = nn.Parameter(torch.randn((sqrt_n, sqrt_n, sqrt_n)))
    
    def forward(self, x):
        x = rearrange(x, "... (m n) -> ... (n m)", n=self.sqrt_n)
        x = blockdiag_matmul(x, self.L) 
        x = rearrange(x, "... (m n) -> ... (n m)", n=self.sqrt_n)
        x =	blockdiag_matmul(x, self.R)
        return rearrange(x, "... (m n) -> ... (n m)", n=self.sqrt_n)

class MonarchMixerLayer(nn.Module):
    def __init__(self, sqrt_n: int, sqrt_d: int):
        super().__init__()
        self.m1 = MonarchMatrix(sqrt_n)
        self.m2 = MonarchMatrix(sqrt_n)
        self.m3 = MonarchMatrix(sqrt_d)
        self.m4 = MonarchMatrix(sqrt_d)
        
        self.n_kernel = nn.Parameter(torch.randn(sqrt_d ** 2, sqrt_n ** 2))
        self.d_kernel = nn.Parameter(torch.randn(1, sqrt_d ** 2))
        self.layer_norm = nn.LayerNorm(sqrt_d ** 2)
        
    def forward(self, x: torch.Tensor):  # x.shape = (b, n, d)
        x_tilde = self.m2(torch.relu(self.n_kernel * self.m1(x.transpose(-1, -2)))).transpose(-1, -2)  # mix sequence
        y = self.m4(torch.relu(self.d_kernel * self.m3(x_tilde)))  # mix features
        return self.layer_norm(y + x_tilde)  # skip connection

\end{lstlisting}

\section{Additional Experiments}
\label{app:additional_experiments}

\subsection{Per-Task GLUE Numbers}
\begin{table}[t]
    \caption{Fine-tuning performance on GLUE \cite{wang2018glue}. We report the standard metrics -- F1 scores for QQP and MRPC, Matthew’s correlation for CoLA, Spearman's correlation for STS-B, and accuracy for the remaining tasks, following the procedure from~\citep{izsak2021train}.
    }
    \label{tab:bert_finetuning_full}
    \scriptsize
    \centering
    \begin{tabular}{rcccccccccc} \toprule
    {Model} & MNLI (m / mm) & RTE & QNLI & QQP & SST2 & STS-B & CoLA & MRPC & \textbf{Average} \\ \midrule
    \sysabbrev-BERT-base (80M)   & 78.4 / 78.6 & 68.5 & 84.6 & 86.7 & 92.0 & 86.3 & 53.0 & 89.8 & 79.9 \\
    \sysabbrev-BERT-base (110M)  & 79.6 / 80.5 & 69.3 & 86.0 & 87.0 & 92.3 & 86.9 & 56.0 & 89.2 & 80.9 \\
    \midrule
    \sysabbrev-BERT-large (260M) & 81.7 / 81.9 & 72.8 & 84.7 & 87.8 & 93.3 & 88.0 & 59.2 & 90.0 & 82.2 \\
    \sysabbrev-BERT-large (341M) & 82.2 / 82.3 & 75.0 & 87.0 & 87.7 & 92.4 & 88.3 & 59.6 & 90.1 & 82.8 \\
    \bottomrule 
    \end{tabular}
\end{table}
We report full GLUE numbers for \sysabbrev-BERT-base and \sysabbrev-BERT-large in Table~\ref{tab:bert_finetuning_full}.

\subsection{Additional Throughput Results}
\begin{table*}[t]
    \centering
    \caption{Throughput in tokens/ms by context length for \sysabbrev-BERT-base (80M) compared to 80M BERT models.}
    \label{tab:gpu_inference_bert_base_80}
    \begin{tabular}{@{}rccccc@{}}
    \toprule
    \textbf{Model}                  & \textbf{512}   & \textbf{1024}  & \textbf{2048}  & \textbf{4096}  & \textbf{8192}  \\ \midrule
    HF BERT (79M)             & 248.4          & 157.3          & 86.0           & 46.8           & OOM            \\
    FlashAttention BERT (79M) & \textbf{433.3} & \textbf{425.1} & 335.2          & 217.4          & 122.6          \\
    M2-BERT-base (80M)        & 386.3 & 380.7 & \textbf{378.9} & \textbf{353.9} & \textbf{320.1} \\ \midrule
    M2 Speedup over HF BERT (80M)      & 1.6$\times$ & 2.4$\times$          & 4.4$\times$          & 7.5$\times$          & –             \\ \bottomrule
    \end{tabular}
\end{table*}
\begin{table*}[t]
    \centering
    \caption{Throughput in tokens/ms by context length for \sysabbrev-BERT-large (260M) compared to BERT-large.}
    \label{tab:gpu_inference_bert_large}
    \begin{tabular}{@{}rccccc@{}}
    \toprule
    \textbf{Model}                  & \textbf{512}   & \textbf{1024}  & \textbf{2048}  & \textbf{4096}  & \textbf{8192}  \\ \midrule
    HF BERT-large (340M)             & 75.4          & 47.1           & 25.2           & OOM            & OOM            \\
    FlashAttention BERT-large (340M) & \textbf{125.0}& 111.9          & 91.6           & 54.5           & OOM           \\
    M2-BERT-large (260M)             & 122.5         & \textbf{118.6} & \textbf{109.4} & \textbf{94.5} & \textbf{75.0} \\ \midrule
    M2 Speedup over HF BERT-large (340M)      & 1.6$\times$ & 2.5$\times$          & 4.3$\times$          & -          & -             \\ \bottomrule \\
    \end{tabular}
\end{table*}
We report the throughput of \sysabbrev-BERT-base (80M) compared to BERT models of the same size (BERT-base with fewer parameters), as well as the throughput of \sysabbrev-BERT-large (260M) compared to BERT-large.

Table~\ref{tab:gpu_inference_bert_base_80} compares the performance of \sysabbrev-BERT-base (80M) to BERT models parameter-matched to 80M parameters.
\sysabbrev is slower than FlashAttention for sequence lengths 512 and 1K, but outperforms FlashAttention starting at sequence length 2K.
We believe further optimization of the \sysabbrev kernel can close the gap to FlashAttention for short sequences.

Table~\ref{tab:gpu_inference_bert_large} compares \sysabbrev-BERT-large (260M) to BERT-large.
Trends are mostly similar to comparisons against BERT-base; \sysabbrev nearly matches FlashAttention at sequence length 512, and outperforms it for sequence length 1K and longer.
We also see up to \num{4.3}$\times$ speedup over HuggingFace BERT-large at sequence length 2K.


\subsection{ImageNet Comparison against Swin}
\begin{table*}[t]
    \centering
    \caption{ImageNet accuracy of Swin models.}
    \label{tab:swin_imagenet}
    \begin{tabular}{@{}rcc@{}}
    \toprule
    \textbf{Model}  & \textbf{ImageNet (acc@1)} & \textbf{ImageNet (acc@5)} \\ \midrule
    Swin-MLP-B & 81.3                                                                                               & 95.3                      \\
    Swin-V1-B     & 83.5                                                                                      & 96.5             \\
    Swin-V2-B  & \textbf{84.2}                                                                                      & \textbf{96.9}             \\
    \sysabbrev-Swin-B  & 83.5 & 96.7 \\ \bottomrule                          
    \end{tabular}
\end{table*}
Table~\ref{tab:swin_imagenet} reports the results of replacing attention and MLP in Swin-V2 using \sysabbrev as a drop-in replacement.
Surprisingly, Swin-M2 outperforms Swin-MLP-B, is competitive with Swin-V1-B, and comes within 1 point of Swin-V2-B, even without any hyperparameter tuning or architecture adjustment from the ViT formula.
We expect that performance may improve further with hyperparameter tuning specific to \sysabbrev.

\subsection{Speech Applications}
\begin{table*}[t]
    \centering
    \caption{Accuracy on Speech-Commands 10. An ``x" means that the model did not fit in memory.}
    \label{tab:speech}
    \begin{tabular}{@{}cccccc@{}}
    \toprule
    \sysabbrev            & S4            & WaveGan-D     & Transformer & Performer & CKConv \\ \midrule
    \textbf{97.9} & 97.5 & 96.3 & x  & 30.8      & 71.7  \\ \bottomrule
    \end{tabular}
\end{table*}
Table~\ref{tab:speech} presents the performance of \sysabbrev on Speech Commands-10, a speech classification task over raw 1-second clips sampled at 16 kHz. \sysabbrev is competitive with state-of-the-art architectures on this task.

\subsection{CIFAR10}

\begin{table}[t]
    \caption{Accuracy on CIFAR-10.}
    \label{tab:cifar}
    \centering
    \begin{tabular}{rcc} \toprule
    {Model} & Top-1\% & Description\\ \midrule
    ViT (1.2M) & 78.6 & Attention + MLP \\
    ViT + Monarch (607K) & 79.0 & Attention, MLP-Free \\
    HyenaViT (1.3M) & 80.6 & Attention-Free + MLP\\
    HyenaViT-\sysabbrev (741K) & \bf{80.8} & Attention-Free + MLP Free \\
    \bottomrule 
    \end{tabular}
\end{table}
Table~\ref{tab:cifar} shows the performance of \sysname on CIFAR10.
The trends are largely the same as on ImageNet.

\subsection{Learnable Monarch Matrices in Sequence Mixer}
\begin{table*}[t]
    \centering
    \caption{Accuracy on sequential CIFAR for fixed vs. learnable Monarch in the sequence mixer.}
    \label{tab:learned_monarch}
    \begin{tabular}{@{}rc@{}}
    \toprule
    \textbf{Model}                  & \textbf{sCIFAR Accuracy} \\ \midrule
    \sysabbrev, Fixed Monarch     & 91.0            \\
    \sysabbrev, Learnable Monarch & \textbf{92.5}           \\ \bottomrule
    \end{tabular}
\end{table*}
In most of our models, we have used fixed Monarch matrices for the sequence mixer, and learnable Monarch matrices for the dimension mixer.
Table~\ref{tab:learned_monarch} presents an experiment evaluating using learnable Monarch matrices for the sequence mixer on the sequential CIFAR task.
We use a non-gated convolutional architecture based off long convolutions, as presented in~\citep{fu2023simple}.
Learning the Monarch matrices in the sequence mixer yields 1.5 points of lift.

\subsection{Roofline Analysis}
\label{app:roofline}
\begin{figure}
    \includegraphics[width=0.5\textwidth]{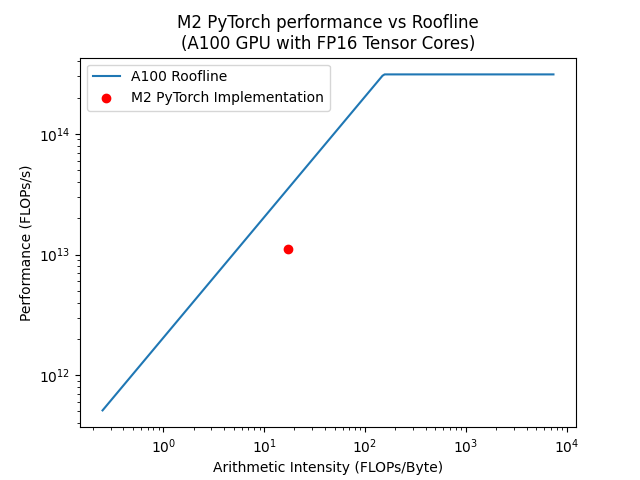}
    \centering
    \caption{\label{fig:roofline}
        Roofline plot of a PyTorch implementation of a single M2 operator $\MM^{-1} (\MM \mathbf{u} \odot \MM \mathbf{k})$.
    }
\end{figure}
Figure~\ref{fig:roofline} shows a Roofline analysis of a simple PyTorch implementation of a single \sysabbrev operator $\MM^{-1} (\MM \mathbf{u} \odot \MM \mathbf{k}$ on an A100 GPU, with 4K input length.
The operation is more dominated by the data movement operations, which helps explain why performance is higher on newer architectures like RTX 4090 (which have faster and larger L2 cache).

\subsection{Associative Recall}

\begin{table}[t]
    \caption{In-context learning performance on associative recall at various sequence lengths, vocab size 20. \xmark\ indicates the Transformer did not finish in a week.}
    \label{tab:icl}
    \centering
    \begin{tabular}{rcccccc} \toprule
    {Model} & 0.5K & 2K & 8K & 32K & 128K \\ \midrule
    Transformer & 100.0 & 100.0 & 100.0 & \xmark & \xmark \\
    \sysname & 98.7 & 99.4 & 99.4 & 99.4 & 99.4 \\
    \bottomrule 
    \end{tabular}
\end{table}

In Table~\ref{tab:icl}, we present a simple experiment demonstrating the causal parameterization of \sysabbrev on associative recall, a synthetic language designed to test in-context learning.
The model demonstrates in-context learning abilities in sequences up to \num{128K} tokens, but Transformers do not scale past \num{8K}.

\subsection{BERT Experiments with Alternative Architecture}

Here, we report results using an older version of the \sysabbrev-BERT architecture, that uses non-gated convolutions and is trained on English Wikipedia~\citep{wikidump} and English Bookcorpus~\citep{Zhu_2015_ICCV}.
For clarity, we refer to this model as M1-BERT.

We found that M1-BERT could match Transformers on MLM quality, but underperformed on downstream fine-tuning.
We attribute this gap in performance to sub-optimal training hyperparameters (optimized for throughput using NVIDIA MLPerf hyperparameters) as well as a sub-optimal architecture.
We report results here for completeness, but refer to the gated convolution architecture in the main body as the proper \sysabbrev-BERT model.

These models followed the reference implementations and hyperparameters from Hugging Face Transformers examples \cite{wolf2020transformers} and  Nvidia Deep Learning examples (\url{https://github.com/NVIDIA/DeepLearningExamples}). In particular, we use the LAMB optimizer with a learning rate of $5e-3$. For each sequence length, we use as large a minibatch size as possible that fits on the GPU (A100-80GB in Table \ref{tab:bert_8a100} and V100 in Table \ref{tab:bert_v100}). We set the gradient accumulation to reach a global batch size of $65,536$ sequences. To investigate the effect of sequence length, each model is trained for a fixed sequence length in a single phase of training (in contrast to some training protocols, which train the model in multiple phases, each at different sequence lengths). 

\paragraph{Time to a Fixed Pretraining Quality on 8xA100} 
We compare time to a fixed pretraining quality, training M1-BERT-base on English Wikipedia~\citep{wikidump} and English Bookcorpus~\citep{Zhu_2015_ICCV}.
We compare against BERT-base trained with \textsc{FlashAttention}~\citep{dao2022flashattention}, as well as the Monarch-BERT-base implementation from the original Monarch paper~\citep{dao2022monarch}. 
We measure wall-clock time for M1-BERT and the base Transformer to reach 50\% in masked language modeling accuracy on 8xA100 Nvidia GPUs with 80GB memory each. Table \ref{tab:bert_8a100} summarizes results.
In short sequence lengths, M1-BERT is comparable to \textsc{FlashAttention}, even without using a heavily-optimized fused kernel.
In longer sequence lengths, the FLOP savings make M1-BERT more efficient---up to \num{2.4}$\times$ faster than BERT with \textsc{FlashAttention} at sequence length $4096$.

\paragraph{BERT in Half a Day} Inspired by recent work focusing on training under limited resource constraints \cite{geiping2022cramming}, we measure how far we can get when training on a single V100 GPU in 12 hours. In Table \ref{tab:bert_v100}, we report the masked language modeling accuracy achieved by the same set of models and sequence lengths (except for the \textsc{FlashAttention} baseline, which is not supported on V100). We observe M1-BERT both achieves higher accuracy within the time limit and can be trained at longer sequence lengths than the baseline architectures.


\begin{table}[t]
    \caption{Time in hours to reach 50\% masked language modeling validation accuracy on 8xA100 with different sequence lengths.}
    \label{tab:bert_8a100}
    \small
    \centering
    \begin{tabular}{rcccccc} \toprule
    {Model} & {512} & 1024 & 2048 & 4096  & Architecture Details\\ \midrule
    BERT-base-\textsc{FlashAttention} (110M) & 2.7 & 3.8 & 5.7 & 13.2  & Attention, MLP \\
    BERT-base-HuggingFace (110M) & 3.3 & 5.6 & 13.1 & 26.7 & Attention, MLP \\
    BERT-Monarch-base (80M) & 3.1 & 4.7 & 10.3 & 22.1  & Attention, MLP-free\\
    M1-BERT-base (55M) & 2.5 & 3.5 & 4.0 & 5.5  & Attention-Free, MLP-free \\
    \midrule
    Speedup & 1.1$\times$ & 1.1$\times$ & 1.3$\times$ & 2.4$\times$  \\
    \bottomrule 
    \end{tabular}
\end{table}
\begin{table}[t]
    \caption{Masked language modeling validation accuracy achieved on a single V100 in 12 hours with different sequence lengths. \xmark\ indicates the model does not fit on device with a batch size of $1$.}
    \label{tab:bert_v100}
    \centering
    \begin{tabular}{rcccccc} \toprule
    {Model} & {512} & 1024 & 2048 & 4096 & 8192 & Architecture Details \\ 
    \midrule
    BERT-base (110M) & 11.5 & 7.8 & 6.8 & \xmark & \xmark & Attention, MLP \\
    BERT-Monarch-base & 6.9 & 8.5 & 6.8 & \xmark & \xmark & Attention, MLP-Free\\
    M1-BERT-base & 20.2 & 20.2 & 20.1 & 17.1 & 12.9 & Attention-Free, MLP-Free \\
    \bottomrule 
    \end{tabular}
\end{table}
\begin{table}[t]
    \caption{Fine-tuning performance on the GLUE benchmark \cite{wang2018glue}, after pretraining on Wikipedia and Bookcorpus. We report the standard metrics -- F1 scores for QQP and MRPC, Matthew’s correlation for CoLA, Spearman's correlation for STS-B, and accuracy for the remaining tasks \cite{devlin2020transformers}.
    }
    \label{tab:bert_finetuning}
    \scriptsize
    \centering
    \begin{tabular}{rcccccccccc} \toprule
    {Model} & MNLI (m / mm) & RTE & QNLI & QQP & SST2 & STS-B & CoLA & MRPC & Architecture Details \\ \midrule
    BERT no pretrain & 34.1 / 34.1 & 47.3 & 50.0 & 68.6 & 79.9 & 17.8 & 0.0 & \textbf{77.9} & Attention, MLP\\
    BERT-base & \textbf{74.5 / 74.7} & \textbf{55.6} & 69.3 & \textbf{81.8} & 83.9 & 19.8 & 12.1 & 74.2 & Attention, MLP \\
    M1-BERT-base & 69.9 / 70.5 & 53.1 & \textbf{73.2} & 81.4 & \textbf{85.2} & \textbf{68.1} & \textbf{33.6} & 75.4 & Attention-free, MLP-free \\
    \bottomrule 
    \end{tabular}
\end{table}

\paragraph{Downstream Fine-Tuning} We evaluate the quality of M1-BERT-base models on the GLUE benchmark \cite{wang2018glue}.
Table~\ref{tab:bert_finetuning} shows fine-tuning performance on the GLUE tasks, using the same hyperparameters and 5 epochs for all tasks and both models.
M1-BERT-base is competitive with Transformers trained using MLPerf hyperparameters on Bookcorpus and Wikitext, but underperforms fully-trained transformers and \sysabbrev-BERT-base.

\section{Experiment Details}
\label{app:experiment_details}

\subsection{Model Architectures}
\label{app:architectures}
In this section, we describe the exact model architectures we used for each task, including the design of the block (residuals and gating). We additionally release our code for reproducibility, 

\paragraph{BERT Language Modeling}
\label{app:architectures:bert}
The \sysabbrev-BERT architectures use a standard BERT backbone, but replace the attention with bidirectional gated convolutions and replace the linear layers in the MLPs with block-diagonal matrices.
All the \sysabbrev-BERT architectures use an expansion factor of four.
\sysabbrev-BERT-base (80M) has a model width of 768 and 12 layers; \sysabbrev-BERT-base (110M) has a model width of 960 and 12 layers; \sysabbrev-BERT-large (260M) has a model width of 1536 and 12 layers; and \sysabbrev-BERT-large (341M) has a model width of 1792 and 12 layers.
We train all these models on C4 for 70,000 steps, with sequence length 128, and global batch size 4096 sequences.
For all the models, we use decoupled AdamW with learning rate 8e-4 and decoupled weight decay 1e-5.
We use linear learning rate decay with a warmup of 6\% of the steps, and we use MLM masking percentage of 30\%.

For GLUE fine-tuning, we do a small search of learning rate, weight decay, and number of epochs.
Following~\citep{izsak2021train}, we fine-tune RTE, MRPC, and STS-B from the MNLI checkpoint.
We fine-tune all tasks with sequence length 128.
For some tasks, we also pool the embeddings of all the non-padding tokens instead of using the CLS token.

The final hyperparameters for \sysabbrev-BERT-base (80M) are decoupled AdamW with learning rate 5e-5 and weight decay 5e-6 for 3 epochs for MNLI; AdamW with learning rate 5e-5 and weight decay 0.01 for 6 epochs for RTE; AdamW with learning rate 3e-5 and weight decay 0.01 for 10 epochs on QQP; AdamW with learning rate 5e-5 and weight decay 1e-5 for 10 epochs with average pooling for QNLI; decoupled AdamW with learning rate 3e-5 and weight decay 3ed-6 for 3 epochs for SST-2; AdamW with learning rate 7e-5 and weight decay 0.01 for 10 epochs for STS-B; AdamW with learning rate 5e-5 and weight decay 0.01 for 10 epochs for MRPC; and decoupled AdamW with learning rate 5e-5 and weight decay 5e-6 for 10 epochs for COLA.

For \sysabbrev-BERT-base (110M), the hyperparameters are decoupled AdamW with learning rate 5e-5 and weight decay 5e-6 for 3 epochs for MNLI; decoupled AdamW with learning rate 1e-5 and weight decay 1e-6 for 3 epochs for RTE; decoupled AdamW with learning rate 3e-5 and weight decay 3e-6 for 5 epochs on QQP; decoupled AdamW with learning rate 5e-5 and weight decay 1e-5 for 10 epochs with average pooling for QNLI; decoupled AdamW with learning rate 3e-5 and weight decay 3ed-6 for 3 epochs for SST-2; decoupled AdamW with learning rate 8e-5 and weight decay 3e-6 for 10 epochs for STS-B; decoupled AdamW with learning rate 8e-5 and weight decay 8e-5 for 10 epochs for MRPC; and  AdamW with learning rate 8e-5 and weight decay 5e-6 for 10 epochs for COLA.

For \sysabbrev-BERT-large (260M), the hyperparameters are decoupled AdamW with learning rate 5e-5 and weight decay 5e-6 for 3 epochs for MNLI; decoupled AdamW with learning rate 1e-5 and weight decay 1e-6 for 3 epochs for RTE; decoupled AdamW with learning rate 3e-5 and weight decay 3e-6 for 5 epochs on QQP; decoupled AdamW with learning rate 5e-5 and weight decay 1e-5 for 10 epochs for QNLI; decoupled AdamW with learning rate 3e-5 and weight decay 3ed-6 for 3 epochs for SST-2; decoupled AdamW with learning rate 7e-5 and weight decay 3e-6 for 10 epochs for STS-B; decoupled AdamW with learning rate 8e-5 and weight decay 8e-6 for 10 epochs for MRPC; and AdamW with learning rate 5e-5 and weight decay 5e-6 for 10 epochs for COLA.

For \sysabbrev-BERT-large (341M), the hyperparameters are decoupled AdamW with learning rate 5e-5 and weight decay 5e-6 for 3 epochs for MNLI; AdamW with learning rate 5e-5 and weight decay 1e-6 for 2 epochs for RTE; decoupled AdamW with learning rate 3e-5 and weight decay 3e-6 for 5 epochs on QQP; decoupled AdamW with learning rate 5e-5 and weight decay 1e-6 for 10 epochs for QNLI; decoupled AdamW with learning rate 3e-5 and weight decay 3ed-6 for 3 epochs for SST-2; decoupled AdamW with learning rate 8e-5 and weight decay 3e-5 for 8 epochs for STS-B; decoupled AdamW with learning rate 8e-5 and weight decay 8e-6 for 10 epochs for MRPC; and decoupled AdamW with learning rate 5e-5 and weight decay 1e-6 for 10 epochs for COLA.

\paragraph{ViT}
\label{app:architectures:vit}

We use a standard ViT model architecture as base \cite{dosovitskiy2020image}.
In line with recent improvements to the ViT architecture \cite{yuan2021tokens,beyer2022better,steiner2021train}, we use sinusoidal position embeddings and global average-pooling (GAP) instead of a class token.

We adapt the ViT architecture by replacing its MLP and/or attention components with Monarch Matrices (similar to our adaptation of BERT): 

We replace the MLP with randomly initialized Monarch Matrices of the same dimension as the dense matrices of the MLP and learn those matrices during training, setting the number of blocks in the block-diagonal matrices to $4$.

We replace attention with the recently introduced Hyena operator \cite{poli2023hyena}. 
The Hyena operator represents a recurrence of two efficient sub-quadratic primitives, an implicit long convolution and multiplicative element-wise gating of the projected input.
Hyena operators apply the FFT algorithm to achieve fast long convolutions in sub-quadratic time.
We further adapt the Hyena operator by replacing its long convolutions with the M2 operator and setting the Monarch Matrices to the DFT and inverse DFT.

\paragraph{ViT for ImageNet-1k} In line with other work \cite{poli2023hyena,dao2022monarch,beyer2022better,steiner2021train}, we use a ViT-base architecture with $12$ layers, a hidden size of $768$, $12$ attention heads per layer, an intermediate size of the MLP projection of $3,072$, and a patch size of $16 \times 16$ pixels.
For optimization, we follow the training procedure of T2T-ViT \cite{yuan2021tokens}, including augmentations such as RandAugment \cite{cubuk2020randaugment} ($\text{magnitude}=9, \text{magnitude-std}=0.5, \text{layers}=2$), Mixup \cite{zhang2017mixup} ($\alpha=0.8$), CutMix \cite{yun2019cutmix} ($\alpha=1.0$), Random erasing \cite{zhong2020random} ($\text{rate}=0.25$), and AugMix \cite{hendrycks2019augmix}.
See Table \ref{tab:vit-training-details} for all other training settings.

\paragraph{ViT for CIFAR-10} We use a ViT architecture with $6$ layers, a hidden size of $128$, $8$ attention heads per layer, an intermediate size of the MLP projection of $512$, and a patch size of $4 \times 4$ pixels.
We further tune weight decay ($0$ or $0.1$), stochastic depth rate ($0$ or $0.1$), and base learning rate ($1e-4$ or $3e-4$ or $1e-3$)  and report the test performance for the model variant that achieved the highest accuracy in a separate held-out validation dataset (randomly selected $10\%$ of training data).
We also apply an early stopping rule such that training is stopped if the model's validation loss does not improve for $10$ training epochs.
See Table \ref{tab:vit-training-details} for all other training settings.
\begin{table}[t]
    \caption{ViT training settings.}
    \label{tab:vit-training-details}
    \centering
    \begin{tabular}{rcc}
    \toprule
    {} & ImageNet-1k & CIFAR-10 \\
    \midrule
    Optimizer & \multicolumn{2}{c}{AdamW} \\
    Optimizer momentum & \multicolumn{2}{c}{$\beta_1, \beta_2=0.9, 0.999$} \\
    Learning rate schedule & \multicolumn{2}{c}{Cosine decay w/ linear warmup} \\
    Dropout rate & \multicolumn{2}{c}{0} \\
    Label smoothing & \multicolumn{2}{c}{0.1} \\
    \midrule
    Image size & 224 x 224 & 32 x 32 \\
    Base learning rate & 1e-3 & \{1e-4, 3e-4, 1e-3\} \\
    Batch size & 1024 & 512 \\
    Training epochs & 300 & up to 500 \\
    Warmup epochs & 10 & 5 \\
    Stochastic depth rate  & 0.1 & \{0, 0.1\} \\
    Weight decay & 0.05 & \{0, 0.1\} \\
    \bottomrule 
    \end{tabular}
\end{table}

\paragraph{GPT Causal Language Modeling}

Similarly to our ViT approach, we also replace attention with the Hyena operator, using the same architecture as in~\citep{poli2023hyena} as a starting point.
The Hyena architecture has two convolutions, which can be computed using the FFT convolution theorem.
In our architecture, we additionally replace these FFT operations with causal Monarch matrices.

In addition, we re-use the heads extension from the H3 architecture~\citep{fu2023hungry}.
The heads extension groups the model dimension into heads,  ties together the long convolution parameters in each head, and then computes the outer product between different input projections.
An algorithmic listing adapted from the H3 paper~\citep{fu2023hungry} is provided in Listing~\ref{alg:causal}, with updates to replace the SSM layers with Hyena convolutions.
We use a head dimension of 16.
Setting the head dimension to be 1 and replacing the Monarch matrices with FFT is equivalent to the Hyena layer.

\begin{algorithm}[H]
  \caption{\label{alg:causal} M2 Hyena Layer with Heads}
  \small
  \begin{algorithmic}
    \Require{Input sequence $u \in \mathbb{R}^{N \times d}$ from the previous layer, weight
    matrices $\vW_{X1}, \vW_{X2}, \vW_V, \vW_O \in \mathbb{R}^{d \times d}$, causal Monarch matrix $\MM$, short convolution kernels $\mathbf{K}_\mathrm{1}, \mathbf{K}_\mathrm{2}, \mathbf{K}_\mathrm{3}$, a Hyena convolution kernel $\mathbf{K}_{\mathrm{long}}$, head dimension $d_h$.}
    \Ensure{Output sequence $y \in \mathbb{R}^{N \times d}$}
    \State Compute $\vX_1 = u \vW_{X1}, \vX_2 = u \vW_{X2}, \mathbf{V} = u \vW_V \in \mathbb{R}^{N \times d}$.
    \State Pass $\vX_1, \vX_2, \mathbf{V}$ each through the short convolution using the causal Monarch matrices: $\overline{\vX_1}, \overline{\vX_2}, \overline{\mathbf{V}} = \MM^{-1} (\MM\vX_1 \odot \MM\mathbf{K}_\mathrm{1}), \MM^{-1} (\MM\vX_2 \odot \MM\mathbf{K}_\mathrm{2}), \MM^{-1} (\MM\mathbf{V} \odot \MM\mathbf{K}_\mathrm{3})$.
    \State Split $\overline{\vX_1}, \overline{\vX_2}, \overline{\mathbf{V}}$ into $H$ ``heads'' ($\overline{\vX_1}^{(h)}, \overline{\vX_2}^{(h)}, \overline{\mathbf{V}}^{(h)}$
    for $h = 1, \dots, H$), each a sequence of $N$ vectors of size $d_{h} = d / H$.
    \For{$1 \leq h \leq H$}
    \State Take the batched outer product $\overline{\vX_2}^{(h)} (\overline{\mathbf{V}}^{(h)})^\top \in \mathbb{R}^{N \times d_h \times d_h}$ (batched in the $N$-dimension) and pass it through the long convolution using the causal Monarch: $\mathbf{XV}^{(h)} = \MM^{-1}(\MM \overline{\vX_2}^{(h)} (\overline{\mathbf{V}}^{(h)})^\top \odot \MM \mathbf{K}_{\mathrm{long}}) \in \mathbb{R}^{N \times d_{h} \times {d_{h}}}$.
    \State Batch-multiply by $\overline{\vX_1}$:  $\mathbf{O}^{(h)} = [\overline{\vX_1}^{(h)}_1 \mathbf{XV}^{(h)}_1, \dots, \overline{\vX_1}^{(h)}_N \mathbf{XV}^{(h)}_N]\in  \mathbb{R}^{N \times d_h}$ (batched in the $N$-dimension).
    \EndFor
    \State Concatenate the output $\mathbf{O}^{(h)}$ of each head, and multiply by the output
    projection matrix $\vW_O \in \mathbb{R}^{d \times d}$.
  \end{algorithmic}
\end{algorithm}

Finally, we remove the MLP layers entirely (equivalent to replacing the layer with an identity), and make the model wider to compensate (the depths match the equivalent Hyena models).
The small model has a model width of 1160 with 18 layers and uses a learning rate of 0.0006, and the medium model has model width of 1344 with 40 layers and uses a learning rate of 0.0008.
All other hyperparameters match the Hyena models~\citep{poli2023hyena}.

\section{Missing details from \Cref{sec:theory}}
\label{app:all_proofs}

This section contains all the missing details (including proofs) from \Cref{sec:theory}.

In \Cref{app:notation}, we review some definitions and results on multi-variate polynomials and set some notation needed for this section. In \Cref{sec:MMaBPE}, we explicitly connect Monarch matrices for $p=2$ and bivariate polynomial evaluation. Specifically, we prove \Cref{thm:monarch-bivariate} and \Cref{thm:monarch-inverse}. 
Then in \Cref{app:causal_univariate} we show how to instantiate the bivariate basis polynomials so that we get a causal map. This includes converting the bivariate polynomials to univariate polynomials (with evaluations over the $N$th roots of unity) and this proves \Cref{thm:causal_univariate}. 
We then show how this causal map can be implemented only using GEMMs (and $O\parens{N^{3/2}}$ FLOPs) in \Cref{app:blocky-monarch-bivariate}.

Next, we note that while our evaluations points are over complex numbers, our input and output to the Monarch convolution layers are over reals. Hence, it is natural to wonder if we can implement the entire layer just with operations over real numbers. One potential advantage of this is that we theoretically only have to keep $N$ real numbers for intermediate results (instead of $2N$ reals numbers when we keep track of vectors in $\C^N$). This can reduce the data movement costs. Further, multiplication of two complex numbers requires six operations over real numbers (four multiplication and two addition). Thus, moving to an implementation that only uses real numbers could potentially lead to wall clock time speedup. We propose one such scheme in \Cref{app:monarch-conv-reals} that proves a version of \Cref{thm:causal_univariate} just over reals by moving to the Chebyshev basis (instead of the standard monomial basis). This creates new technical challenges, which we also address.

Finally, we generalize our results to arbitrary $p\ge 2$ in \Cref{app:monarch-multi}. We would like to point out that to get a causal map (in \Cref{thm:real-casual-conv-multi}) we need to `embed' input vectors of size $n$ into vectors of size $N= 2^p\cdot n + O\parens{n^{1-1/p}}$. For $p=2$, we avoided the blowup of $2^2=4$ with a blowup of $2$ instead (via \Cref{thm:causal_univariate}). Whether this is possible to do (i.e. have a blowup of $2$ instead of $2^p$) for $p>2$ is an interesting direction for future work. Further, the matrices that lead to causal map can be represented with $O\parens{pN^{2/p}}$ parameters while the matrices in \Cref{thm:causal_univariate} use more parameters. Extending the causal map for $p>2$ that uses $O\parens{N^{1+\frac 1p}}$ parameters is an exciting direction for future work.

\newcommand{\B}{b}
\newcommand{\sqrtN}{{\sqrt N}}

\newcommand{\Zp}{\mathbb{Z}_{\ge 0}}
\newcommand{\va}{\mathbf{a}}

\subsection{Background and Notation}
\label{app:notation}

We collect known facts and definitions  about multi-variate polynomials  in \cref{subsubsec:interpolate} and recall some notation from \cite{dao2022monarch} in \cref{subsubsec:notation}. These will be needed throughout this appendix section.

\subsubsection{Multi-variate Polynomials}
\label{subsubsec:interpolate}

\paragraph{Basic Definitions}

Let $p\ge 1$ be an integer. We recollect some definitions on $p$-variate polynomials (over $\R$) in variables $X_0,\dots,X_{p-1}$. When $p\in \{1,2\}$, we will use variables in $\{X,Y,Z\}$ for notational simplicity.

We will use $\vX$ to denote the vector of variables $\parens{X_0,\dots,X_{p-1}}$. Further for $\vj\in\Zp^p$, we use the notation
\[\vX^{\vj}=\prod_{a=0}^{p-1} X_a^{j_a}.\]
$\vX^{\vj}$ is a (standard basis) {\em monomial}, where $\vj = (j_0,\dots,j_{p-1})$.

A generic $p$-variate polynomial is defined as (with standard monomial representation)
\[q(\vX) =\sum_{\vj\in\Zp^p} q_{\vj}\cdot \vX^\vj,\]
where the {\em coefficient} $q_{\vj}\in\R$.

We will need the following notion of degrees:
\begin{definition}[Degree]
Let $0\le a<p$.    The {\em degree} of $X_a$ in $\vX^{\vj}$ (with $\vj = 
(j_0,\dots,j_{p-1})$) is $j_a$. The degree of $X_a$ of $q(\vX)$, denoted by $\deg_{X_a}(q)$ is the maximum degree of $X_a$ over all monomials $\vX^{\vj}$ with $q_{\vj}\ne 0$.
\end{definition}
Note that for $p=1$ the above coincides with the usual notion of degree of a univariate polynomial $q(Z)$, in which case we just use $\deg(\bp(Z))$ to denote $\deg_Z(\bp(Z))$.

We will need the notion of taking $\mod$ of a $p$-variate polynomial with $p$-tuple of polynomials. The notion of $\mod$ is well defined for a univariate polynomial (which we will assume as a given below) but in general for arbitrary $p$-variate polynomials $q(\vX)$ and $q'(\vX)$, the operation $q(\vX)\mod{q'(\vX)}$ is not well defined. However, we will only need the following restricted operation:
\begin{definition}
Let $p\ge 1$. Fix a $p$-tuple of polynomials $R_0(X_0),\dots,R_{p-1}(X_{p-1})$. Then for any $\vj\in\Zp^p$, we  define
\[\vX^\vj \mod{\parens{R_0\parens{X_0},\dots,R_{p-1}\parens{X_{p-1}}}}=\prod_{a=0}^{p-1}\parens{X^{j_a}\mod{\parens{R_a\parens{X_a}}}}.\]
For a general polynomial $p(\vX)$, 
\[p(\vX)\mod{\parens{R_0\parens{X_0},\dots,R_{p-1}\parens{X_{p-1}}}}\]
is defined by extending the definition for $\vX^\vj$ by linearity.
\end{definition}

\paragraph{Polynomial Evaluation}

Given a $p$-variate polynomial $q(\vX)$ and an point $\va\in\R^p$, the {\em evaluation} of $q$ at $\va$ denoted by $q(\va)$ is evaluation of $q$ as a function at $\va$.

Given subsets $S_a\subseteq \C$, we define $q(\vX)$ evaluated at $\times_{a=0}^{p-1} S_a$ as the vector of values $q(\va)$ overall $\va\in\times_{a=0}^{p-1} S_a$.

In this paper, we will in many cases evaluate polynomials at the appropriate roots of unity. Specifically for an integer $N$, we will define
\[\omega_N=e^{2\pi\iota/N}\]
and note that the $N$th roots of unity is the set $\{\omega_N^i|0\le i <N\}$.

\paragraph{Polynomial Interpolation}
We now recall univariate and bivariate polynomial interpolation results (proved via the Lagrange basis), which we will use in later subsections.

\begin{theorem}
\label{thrm:univar}
Let $D \ge 1$ be an integer. Given $y_i$ for $0 \le i < D$ and $ \alpha_i$ for $0 \le i < D$ there exists a unique univariate polynomial $P(X)$ with $\deg(P) < D$ , such that for all $0 \le i < D$,
    \begin{equation}
    \label{eq:0}
        P(\alpha_i) = y_{i} .
    \end{equation}
\end{theorem}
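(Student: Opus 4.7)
The plan is to prove the two claims (existence and uniqueness) separately, using the standard Lagrange interpolation construction for existence and the fact that a nonzero univariate polynomial of degree $<D$ has at most $D-1$ roots for uniqueness. Note that for the statement to make sense we need to assume the evaluation points $\alpha_0, \dots, \alpha_{D-1}$ are pairwise distinct (otherwise both existence and uniqueness can fail trivially), so I would state this assumption up front.

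For existence, I would explicitly construct $P(X)$ from the Lagrange basis. Define, for each $0 \le i < D$,
\[
  L_i(X) \;=\; \prod_{\substack{0 \le j < D \\ j \neq i}} \frac{X - \alpha_j}{\alpha_i - \alpha_j},
\]
which is well defined by the distinctness assumption. A direct check shows $\deg(L_i) = D-1$ and $L_i(\alpha_k) = \mathbbm{1}\{i = k\}$. Setting
\[
  P(X) \;=\; \sum_{i=0}^{D-1} y_i \, L_i(X),
\]
we obtain a polynomial of degree $<D$ satisfying $P(\alpha_k) = \sum_{i} y_i \, \mathbbm{1}\{i=k\} = y_k$ for every $0 \le k < D$, which is exactly~\eqref{eq:0}.

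For uniqueness, suppose $P_1(X)$ and $P_2(X)$ are two polynomials of degree $<D$ satisfying the interpolation conditions. Then $Q(X) \coloneqq P_1(X) - P_2(X)$ also has degree $<D$, and $Q(\alpha_i) = y_i - y_i = 0$ for all $0 \le i < D$. Thus $Q$ has at least $D$ distinct roots; since any nonzero univariate polynomial over $\R$ (or $\C$) of degree $<D$ has at most $D-1$ roots, we must have $Q \equiv 0$, i.e., $P_1 = P_2$.

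There is no genuine obstacle here—this is the classical univariate interpolation theorem and both halves reduce to one-line arguments given the Lagrange basis and the root-counting fact. The only point that warrants care is flagging the distinctness hypothesis on the $\alpha_i$; once that is in place, the proof is essentially a verification. The main reason the paper is stating it explicitly is presumably to use it as a building block for the multivariate generalization (needed for the Monarch-as-polynomial-evaluation interpretation in~\Cref{thm:monarch-bivariate} and~\Cref{thm:monarch-inverse}), so I would keep the proof short and emphasize the uniqueness half, since it is the uniqueness that will drive the polynomial interpolation interpretation of $\MM^{-1}$.
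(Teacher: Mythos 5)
Your proposal is correct and follows essentially the same route as the paper's proof: construct $P$ via the Lagrange basis for existence, and use root-counting on the difference of two interpolants for uniqueness. Your explicit flagging of the distinctness hypothesis on the $\alpha_i$ is a worthwhile addition, since the theorem statement omits it even though the paper's proof assumes it.
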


\begin{proof}
This proof is based on the Wikipedia entry for Lagrange polynomials \cite{wiki:Lagrange}.

Given a sequence of values $\alpha_i$ for $0 \le i < D$ s.t. $\alpha_i \ne \alpha_j$ , $i\ \ne j$, the Lagrange basis for polynomials of degree $< D$ for these values is the set of each polynomials $\{p_0(X), p_1(X), \dots p_{D-1}(X)\}$ each of degree $D-1$. Each basis polynomial are defined as: 
\begin{equation}
    p_i(X) = 
    \frac{X - \alpha_0}{\alpha_i - \alpha_0} 
    \cdot \cdot \cdot
    \frac{X - \alpha_{i-1}}{\alpha_i - \alpha_{i-1}}
    \cdot 
    \frac{X - \alpha_{i+1}}{\alpha_i - \alpha_{i+1}} 
    \cdot \cdot \cdot 
    \frac{X - \alpha_{D-1}}{\alpha_i - \alpha_{D-1}}
    = \prod_{\substack{0 \le j < D \\ j \ne i}}\frac{X - \alpha_j}{\alpha_i - \alpha_j} .
\end{equation}

By definition,
\begin{equation}
\label{eq:uni j=i}
p_i(\alpha_j) = 
    \begin{cases}
    1 & \text{for } j = i \\
    0 & \text{otherwise}
    \end{cases} .
\end{equation}

The Lagrange interpolating polynomial for those nodes through the corresponding values $y_i$ for $0 \le i < D$ is the linear combination: 
\begin{equation}
    P(X) = \sum_{i=0}^{D-1}y_i \cdot p_i(X).
\end{equation}

By (\ref{eq:uni j=i}), for all $0 \le i < D$:
\begin{equation}
        P(\alpha_i) = y_{i}.
\end{equation}

Finally, the interpolating polynomial is unique. Assume there is another polynomial $M(X)$ of degree $< D$ such that $M(\alpha_i) = y_i$ for all $0 \le i < D$. Then the difference $M(X) - P(X)$ is $0$ at $D$ distinct points $\alpha_i$ for $0 \le i < D$. And the only polynomials of degree $< D$ with more than $D - 1$ roots is the $0$ polynomial. So, $M(X) = P(X)$.

\end{proof}

\begin{theorem}
\label{thrm:bivar}
Let $D_X, D_Y \ge 1$ be integers. Given values $y_{ij}$ for $0 \le i < D_X , 0 \le j < D_Y$ and $D_X$ distinct points $(\alpha_0 , \dots , \alpha_{D_X - 1} )$, $D_Y$ distinct points $( \beta_0 , \dots , \beta_{D_Y-1})$ there exists a unique bivariate polynomial $P(X,Y)$ with $\deg_X(P) < D_X$ , $\deg_Y(P) < D_Y $ , such that for all $0 \le i < D_X$, $0 \le j < D_Y$:
\begin{equation}
    P(\alpha_i,\beta_j) = y_{ij} .
\end{equation}
\end{theorem}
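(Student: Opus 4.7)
The plan is to mirror the univariate proof of \Cref{thrm:univar} by constructing an explicit bivariate Lagrange basis as a tensor product of the univariate bases, and then to prove uniqueness by reducing to the univariate uniqueness in two stages. This is the natural approach because the evaluation grid is itself a Cartesian product $\{\alpha_i\}\times\{\beta_j\}$, so the tensor-product structure on the basis side is available for free.

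Concretely, I would first recall from the proof of \Cref{thrm:univar} the univariate Lagrange basis polynomials
\[
p_i(X)=\prod_{\substack{0\le i'<D_X\\ i'\ne i}}\frac{X-\alpha_{i'}}{\alpha_i-\alpha_{i'}},\qquad
q_j(Y)=\prod_{\substack{0\le j'<D_Y\\ j'\ne j}}\frac{Y-\beta_{j'}}{\beta_j-\beta_{j'}},
\]
which are well-defined because the $\alpha_i$ (resp.\ $\beta_j$) are distinct, satisfy $p_i(\alpha_{i'})=\delta_{ii'}$ and $q_j(\beta_{j'})=\delta_{jj'}$, and have $\deg p_i<D_X$, $\deg q_j<D_Y$. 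I would then set $L_{ij}(X,Y)=p_i(X)q_j(Y)$ and define
\[
P(X,Y)=\sum_{i=0}^{D_X-1}\sum_{j=0}^{D_Y-1}y_{ij}\,L_{ij}(X,Y).
\]
The degree bounds $\deg_X P<D_X$ and $\deg_Y P<D_Y$ follow termwise, and the interpolation condition $P(\alpha_i,\beta_j)=y_{ij}$ follows from the delta-property of $L_{ij}$ at the grid points, settling existence.

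For uniqueness, suppose $M(X,Y)$ is any other polynomial with the same degree bounds and the same values on the grid, and set $R(X,Y)=P(X,Y)-M(X,Y)$. The idea is to peel off one variable at a time. First, for each fixed $j$, the univariate polynomial $R(X,\beta_j)\in\R[X]$ has degree $<D_X$ and vanishes at the $D_X$ distinct points $\alpha_0,\dots,\alpha_{D_X-1}$, so by \Cref{thrm:univar} it is identically zero. Writing $R(X,Y)=\sum_{i=0}^{D_X-1}c_i(Y)X^i$ with $\deg c_i<D_Y$, the previous step forces $c_i(\beta_j)=0$ for every $i$ and every $j$; applying \Cref{thrm:univar} once more, now in $Y$, yields $c_i(Y)\equiv 0$ for each $i$, hence $R\equiv 0$ and $M=P$.

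There is no real obstacle here; the only thing to be careful about is that the two-stage vanishing argument uses the degree bounds in both variables separately, so it is important not to try to do everything at once by counting total degree (which would be $D_X+D_Y-2$ and would not give enough structure). I would therefore emphasize in the write-up that the hypothesis is the stronger individual-degree bound, which matches exactly the size $D_XD_Y$ of the grid and the dimension of the basis $\{L_{ij}\}$, so existence and uniqueness are essentially a linear-algebraic consequence of the product structure.
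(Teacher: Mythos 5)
Your construction for existence is exactly the paper's: the tensor-product Lagrange basis $L_{ij}(X,Y)=p_i(X)q_j(Y)$ with the delta property on the grid. Where you genuinely diverge is in the uniqueness step. The paper simply asserts that the difference $M-P$ vanishes at the $D_X\cdot D_Y$ grid points and that ``the only polynomial with $\deg_X<D_X$ and $\deg_Y<D_Y$ that has $D_X\cdot D_Y$ roots is the $0$ polynomial''---a claim that, read literally, is false for bivariate polynomials (a nonzero bivariate polynomial can vanish at infinitely many points), and which is only saved by the fact that the points form a Cartesian product grid. Your two-stage peeling argument---first showing $R(X,\beta_j)\equiv 0$ for each fixed $j$ by univariate uniqueness in $X$, then writing $R=\sum_i c_i(Y)X^i$ and killing each coefficient $c_i$ by univariate uniqueness in $Y$---supplies exactly the missing justification and correctly isolates where the individual-degree bounds (rather than a total-degree bound) are used. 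So your proof is correct and, on the uniqueness half, strictly more rigorous than the one in the paper.
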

\begin{proof}
Define
\begin{equation}
\label{eq:5}
    P(X,Y) = \sum_{j=0}^{m} \sum_{i=0}^{n} y_{ij} \cdot p_i(X)\cdot \bar{p}_j(Y),
\end{equation}

where $p_i$ and $\bar{p}_j$ are Lagrange basis polynomials defined in the proof of Theorem \ref{thrm:univar} such that for $0 \le i , k < D_X$,
\begin{equation}
    p_i(\alpha_k) = 
    \begin{cases}
    1 & \text{for } i = k \\
    0 & \text{otherwise}
    \end{cases} 
\end{equation}
 and for $0 \le j, \ell < D_Y$
\begin{equation}
    \bar{p}(\beta_\ell) = 
    \begin{cases}
    1 & \text{for } k = \ell \\
    0 & \text{otherwise}
    \end{cases}.
\end{equation}
From above, we have for all $i, j, k, \ell$
\begin{equation}
    p_i(\alpha_k) \cdot \bar{p}_j(\beta_\ell) = 
    \begin{cases}
        1 & \text{for } i = k \text{  and  } j = \ell \\
        0 & \text{otherwise}
    \end{cases}.
\end{equation}
Then, for all $0 \le i < D_X$ , $0 \le j < D_Y$:

\begin{equation}
    P(\alpha_i,\beta_j) = y_{ij} .
\end{equation}

By definition of Lagrange basis polynomials, $\deg_X(P) < D_X$ and $\deg_Y(P) < D_Y$.

Finally, the interpolating polynomial is unique. Assume there is another polynomial $M(X,Y)$ with $\deg_X(M) < D_X$ and $\deg_Y(M) < D_Y$ such that $M(\alpha_i,\beta_j) = y_{ij}$ for all $0 \le i < D_X$ and $0 \le j < D_Y$. Then the difference $M(X,Y) - P(X,Y)$ is 0 at $D_X \cdot D_Y$ distinct points, $(\alpha_i , \beta_j)$ for $0 \le i < D_X , 0 \le j < D_Y$. And the only polynomial with $\deg_X < D_X$ and $\deg_Y < D_Y$  that has $D_X \cdot D_Y$ roots is the $0$ polynomial.
\end{proof}

\subsubsection{Notation}
\label{subsubsec:notation}
Here we recall notation we will use from \cite{dao2022monarch}.
\begin{enumerate}
    \item The class of Monarch matrices is defined in appendix C of \cite{dao2022monarch} as $\curlyM^{(b,N)}$ which are $N \times N$ matrices with block size $b$ for any integer $0 \le b \le N$ that divides $N$. When $b=\sqrtN$ we drop $b$ from the notation giving $\idxsqrtn{i_1}{i_0}$ and $\idxsqrtn{j_1}{j_0}$. For example, this is used in Proof of \cref{cor:M-eval_basis}.
    
    \item Row index $i$ can be represented as $\idx{i_1}{i_0}{b}$. Which gives $i = i_1b+i_0$. 
    
    \item Similarly, column index $j$ can be represented as $\idx{j_1}{j_0}{b}$. Which gives $j = j_1b+j_0$. Note that when $b = \sqrtN$, $j_1 = k(j)$ and $j_0 = m(j)$. We choose to use the $\idxsqrtn{j_1}{j_0}$ notation here since that notation is easier to generalize for $p>2$.
    
    \item $\oLM \in \curlyDB^{(b,N)}$ is an $N \times N$ matrix with $b\times b$ blocks that are all diagonal matrices.
    
    \item $\RM \in \curlyBD^{(b,N)}$ meaning it's a block diagonal $N \times N$ matrix with block size $b \times b$.
    
    \item We have a class of permutation matrices defined as $\sigma_{(b,N)}(i) = i_0 \cdot \frac{N}{b} + i_1$. This can be denoted by an $N \times N$ matrix, $P_{(b,N)}$, where the $i^{\text{th}}$ row is $\mathbf{e}_{ \sigma(b,N)(i)}$.
    
    \item We'll use $i$ or pair notation $\idx{i_1}{i_0}{b}$ to denote the rows, and $j$ or pair notation $\idx{j_1}{j_0}{b}$ to denote columns. It should be clear from context which one we're using.
\end{enumerate}

For any $0 \le j_1 < \sqrtN$, let $\ell_{j_{1}}(X,Y)$ be an arbitrary bivariate polynomial with $\deg_X(\ell_{j_1})$, $\deg_Y(\ell_{j_1}) < \sqrtN$.

For any  $0 \le j_1 , j_0 < \sqrtN$, let $r_{j_1,j_0}(Y)$ be an arbitrary univariate polynomial of degree $< \sqrtN$.

Let $A = (\alpha_0, \dots, \alpha_{\sqrtN-1} ) $, $B = (\beta_0, \dots, \beta_{\sqrtN-1})$ each be a sequence of distinct eval points. Note that $A$ and $B$ need not be disjoint.

From the proof of Theorem 3 in the Appendix C of the Monarch paper \cite{dao2022monarch} we get,
\begin{align*}
    \LM &= \PM_{(b,N)} \cdot \oLM \cdot \PM_{(b,N)}^{\top}
    \\
    &= \PM_{(b,N)} \cdot \oLM \cdot \PM_{(\frac{N}{b},N)}.
\end{align*}

Therefore, 
\begin{align*}
    \oLM &= \PM_{(b,N)}^{\top} \cdot \LM \cdot \PM_{(b,N)} 
    \\
    &= \PM_{(\frac{N}{b},N)} \cdot \LM \cdot \PM_{(b,N)}.
\end{align*}

Define $\curlyDB$ and $\curlyBD$ as set of all such $\oLM$ and $\RM$ matrices over $\mathbb{R}^{N \times N}$ where if $i_0 \ne k_0$ \begin{equation}
    \oLM_{i_1, k_1}[i_0,k_0] \stackrel{\text{def}}{=} \oLM[(i_1,i_0)_{\sqrt{N}},(k_1,k_0)_{\sqrt{N}}] = 0 
\end{equation}
and if $k_1 \ne j_1$
\begin{equation}
    \RM_{k_1,j_1}[k_0,j_0] \stackrel{\text{def}}{=} \RM[(k_1,k_0)_{\sqrt{N}},(j_1,j_0)_{\sqrt{N}}]= 0 .
\end{equation}

Pictorially, $\oLM$ and $\RM$ look as follows:

\includegraphics[scale=.55]{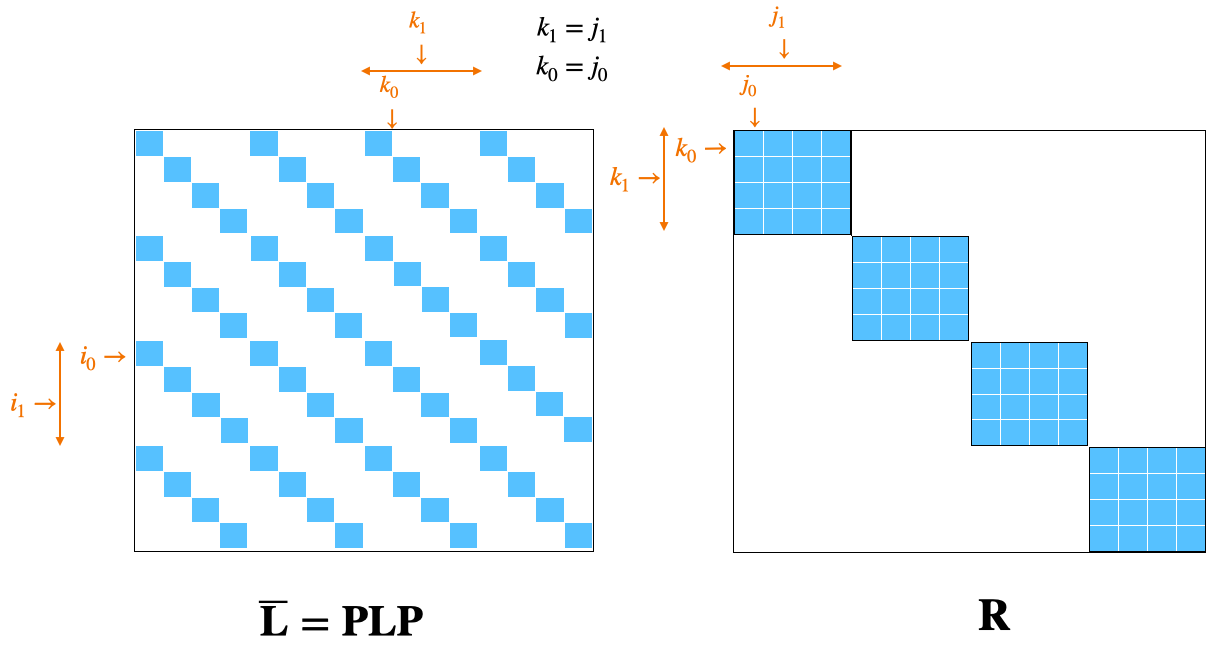}

In \cite{dao2022monarch}, Monarch matrices with block size $b=\sqrt{N}$, $\MM' = \oLM \cdot \RM$, and thus for all $0 \le i_1, i_0, j_1, j_0 < \sqrt{N}$:
\begin{equation}
\label{eq:2}
    \MM'[(i_1,i_0)_{\sqrt{N}}, (j_1, j_0)_{\sqrt{N}}] = \oLM_{i_1,j_1}[i_0,i_0] \cdot \RM_{j_1,j_1}[i_0,j_0].
\end{equation}

We note that our definition of Monarch matrix $\MM$ in \cref{sec:method} is slightly different in that $\MM = \MM'\PM$ with $\MM'$ as defined in \cite{dao2022monarch}.

\subsection{Monarch Matrices and Bivariate Polynomial Evaluation}
\label{sec:MMaBPE}

Given polynomials $\ell_{j_1}(X,Y)$ for $0 \le j_1 < \sqrt{N}$, polynomials $ r_{j_1,j_0}(Y)$ for $0 \le j_1,j_0 < \sqrt{N}$, evaluation points $A = ( \alpha_0, ..., \alpha_{\sqrt{N}-1} )$ $B = ( \beta_0, \cdots , \beta_{\sqrt{N}-1} )$ (as in \cref{subsubsec:notation}), define the matrices $\oLM \in \curlyDB^{\sqrtN,N}$ and $\RM \in \curlyBD^{\sqrtN,N}$ as:

\begin{itemize}
    \item For every $0 \le j_1, i_1, i_0 < \sqrt{N}$:
\begin{equation}
\label{eqn:Lidxs}
    \oLM_{i_1,j_1}[i_0,i_0] \gets \ell_{j_1}(\alpha_{i_1},\beta_{i_0}).
\end{equation}

    \item For every $0 \le j_1, j_0, i_0 < \sqrt{N}$:

\begin{equation}
\label{eqn:Ridxs}
    \RM_{j_1,j_1}[i_0,j_0] \gets r_{j_1,j_0}(\beta_{i_0}).
\end{equation}
\noindent
Note that all entries of $\oLM$ and $\RM$ not specified above are set to $0$.

\end{itemize}
\noindent
Let $f$ be the above function that maps coefficients of $\ell_{j_1}(X,Y)$ (which are coefficients of monomials $X^{i_1} Y^{i_0}$ for all $0 \le i_1,i_0 < \sqrt{N}$ and hence represented by a matrix in $\mathbb{R}^{\sqrt{N}\times \sqrt{N}}$) and coefficients of  $r_{j_1,j_0}(Y)$ (which are coefficients of monomials $Y^{i_0}$ for all $0 \le i_0 < \sqrt{N}$ and hence represented by a vector in $\mathbb{R}^{\sqrtN}$) for all $0 \le j_1,j_0 < \sqrt{N}$ to pairs of matrices in $\curlyDB^{\sqrtN,N} \times \curlyBD^{\sqrtN,N}$.
\begin{theorem}
\label{thrm:f}
Let $f$ be as defined above. Then $f$ is a bijection.
\end{theorem}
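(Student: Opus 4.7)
The plan is to prove bijectivity by explicitly constructing the two-sided inverse $f^{-1}$ via the polynomial interpolation results in \Cref{subsubsec:interpolate}. A dimension check serves as a sanity step: both the domain (the $\sqrtN$ polynomials $\ell_{j_1}(X,Y)$ contributing $N^{3/2}$ coefficients plus the $N$ polynomials $r_{j_1,j_0}(Y)$ contributing another $N^{3/2}$) and the codomain (nonzero entries of $\oLM \in \curlyDB^{\sqrtN,N}$ plus those of $\RM \in \curlyBD^{\sqrtN,N}$) are real vector spaces of dimension $2N^{3/2}$, and $f$ is visibly $\R$-linear in the input coefficients, so injectivity alone would suffice---but producing an explicit inverse takes no extra effort and yields surjectivity simultaneously.

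For the right factor, given $\RM \in \curlyBD^{\sqrtN,N}$ and any fixed $0 \le j_1, j_0 < \sqrtN$, I would collect the $\sqrtN$ scalars $\{\RM_{j_1,j_1}[i_0,j_0]\}_{i_0}$. Since the points $\beta_0,\dots,\beta_{\sqrtN-1}$ are distinct, \Cref{thrm:univar} produces a unique univariate polynomial $r_{j_1,j_0}(Y)$ of degree $< \sqrtN$ whose value at $\beta_{i_0}$ equals the chosen scalar, matching~\eqref{eqn:Ridxs}. For the left factor, given $\oLM \in \curlyDB^{\sqrtN,N}$ and any fixed $j_1$, I would collect the $N$ scalars $\{\oLM_{i_1,j_1}[i_0,i_0]\}_{i_1,i_0}$; since the points in both $A$ and $B$ are distinct, \Cref{thrm:bivar} produces a unique bivariate polynomial $\ell_{j_1}(X,Y)$ with $\deg_X, \deg_Y < \sqrtN$ matching these values on $A \times B$, as required by~\eqref{eqn:Lidxs}.

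Finally, I would verify the two inversion identities. The identity $f \circ f^{-1} = \mathrm{id}$ is immediate: plugging the interpolated polynomials back into~\eqref{eqn:Lidxs}--\eqref{eqn:Ridxs} reproduces the chosen matrix entries by construction, and the entries of $\oLM, \RM$ outside the prescribed diagonal/block positions are zero on both sides because the outputs of $f$ live in $\curlyDB \times \curlyBD$ by definition. For $f^{-1} \circ f = \mathrm{id}$, start from polynomials $(\ell, r)$ in the domain, apply $f$ to obtain matrices, then re-interpolate; the \emph{uniqueness} clauses in \Cref{thrm:univar} and \Cref{thrm:bivar} force the re-interpolated polynomials to equal the originals, since the originals already satisfy the relevant degree bounds and the correct evaluation constraints. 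No step here is a real obstacle; the entire argument is bookkeeping over standard Lagrange interpolation, with the only mild care needed in tracking the block structure of $\oLM$ and $\RM$ so that one indexes the correct $(i_1, j_1, i_0, j_0)$ slots.
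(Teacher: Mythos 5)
Your proposal is correct and follows essentially the same route as the paper's proof: both directions rest on the uniqueness clauses of the univariate and bivariate Lagrange interpolation results (\Cref{thrm:univar} and \Cref{thrm:bivar}) applied to the entries $\RM_{j_1,j_1}[i_0,j_0]$ and $\oLM_{i_1,j_1}[i_0,i_0]$. Your explicit verification of both composition identities (and the dimension-count sanity check) is a slightly more careful packaging of the same argument, but introduces no new idea.
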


\begin{proof}
To prove $f$ is bijection we must show $f$ is one-to-one and $f^{-1}$ is one-to-one (and exists). 

To show $f$ is one to one means each set of polynomials' coefficients given to $f$, will output a unique set of matrices $(\oLM, \RM) \in \curlyDB^{\sqrtN,N} \times \curlyBD^{\sqrtN,N}$. This follows from (\ref{eqn:Lidxs}), (\ref{eqn:Ridxs}) and the known fact that polynomial evaluation is a function.

Now, to show $f^{-1}$ exists and is one-to-one, we must show that there is a map from any pair $(\oLM , \RM) \in \curlyDB^{\sqrtN,N} \times \curlyBD^{\sqrtN,N}$ to unique sets of polynomials, $\Bar{\ell}, \Bar{r}$, with parameters as defined in \cref{subsubsec:notation}. Further, we need 
\begin{equation}
\label{eq:Lpolys}
    \oLM_{i_1,j_1}[i_0,i_0] = \Bar{\ell}_{j_1}(\alpha_{i_1}, \beta_{i_0})
\end{equation}
and 
\begin{equation}
\label{eq:Rpolys}
    \RM_{j_1,j_1}[i_0,j_0] = \Bar{r}_{j_1,j_0}(\beta_{i_0}).
\end{equation}

We will use Theorems \ref{thrm:bivar} and \ref{thrm:univar} to show the existence of $\Bar{\ell}_{j_1}$ and $\Bar{r}_{j_1,j_0}$ polynomials, giving us the mapping from the matrices to unique polynomials.

We first show the existence of the unique polynomials in (\ref{eq:Rpolys}). Fix  $0 \le j_1, j_0 < \sqrt{N}$. Then consider the values $0 \le i_0 < \sqrt{N}$:
\begin{equation}
\label{eq:y_i0}
    y_{i_0} \gets \RM_{j_1,j_1}[i_0, j_0] .
\end{equation}

Then by Theorem \ref{thrm:univar}, there exists a unique polynomial of degree $< \sqrt{N}$ (call it $\Bar{r}_{j_1,j_0}(Y)$) such that for all $0 \le i_0 < \sqrt{N}$,
\[
    \Bar{r}_{j_1,j_0}(\beta_{i_0}) = y_{i_0},
\]
which by (\ref{eq:y_i0}) shows (\ref{eq:Rpolys}).

Next we show the existence of the unique polynomials in (\ref{eq:Lpolys}). Fix $0 \le j_1< \sqrtN$. Consider the values $0 \le i_1, i_0 < \sqrtN$:
\begin{equation}
\label{eq:yi1i0}
    y_{i_1, i_0} \gets \oLM_{i_1,j_1}[i_0,i_0] .
\end{equation}

Then by Theorem \ref{thrm:bivar}, there exists a unique bi-variate polynomial of $\deg_X < \sqrtN$ and $\deg_Y < \sqrtN$ (call it $\Bar{\ell}_{j_1}(X,Y)$) such that for all $0 \le i_1, i_0 < \sqrtN$,
\begin{equation}
    \Bar{\ell}_{j_1}(\alpha_{i_1}, \beta_{i_0}) = y_{i_1,i_0},
\end{equation}
which by (\ref{eq:yi1i0}) shows (\ref{eq:Lpolys}).

Therefore $f$ is a bijection.
\end{proof}

We can now conclude:
\begin{corollary} 
\label{cor:M}
For every matrix $\MM'$ as defined in \eqref{eq:2}, there exists unique polynomials $\ell_{j_1}(X,Y)$ and $r_{j_1,j_0}(Y)$, such that for all $0 \le i_1, i_0, j_1, j_0 < \sqrtN$,

\begin{equation}
    \label{eq:M}
    \MM'[(i_1,i_0)_{\sqrt{N}}, (j_1, j_0)_{\sqrt{N}}] = \ell_{j_1}(\alpha_{i_1}, \beta_{i_0})\cdot r_{j_1,j_0}(\beta_{i_0}).
\end{equation}
\end{corollary}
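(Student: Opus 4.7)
The plan is to derive this corollary as an essentially immediate consequence of \Cref{thrm:f}, which has already done the heavy lifting of establishing the bijection $f$ between pairs of coefficient tuples $(\{\ell_{j_1}(X,Y)\}, \{r_{j_1,j_0}(Y)\})$ and pairs of matrices $(\oLM, \RM) \in \curlyDB^{\sqrtN,N}\times \curlyBD^{\sqrtN,N}$. Given $\MM'$ expressed as in \eqref{eq:2} via its Monarch factors $\oLM$ and $\RM$, I will simply push the claim through by invoking the inverse of $f$ and then unfolding definitions.

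Concretely, the first step is to extract from $\MM'$ the matrices $\oLM \in \curlyDB^{\sqrtN,N}$ and $\RM \in \curlyBD^{\sqrtN,N}$ guaranteed by \eqref{eq:2}. The second step is to apply \Cref{thrm:f}: since $f$ is a bijection, there exists a unique tuple of polynomials $\{\ell_{j_1}(X,Y)\}_{0\le j_1 < \sqrtN}$ with $\deg_X(\ell_{j_1}), \deg_Y(\ell_{j_1}) < \sqrtN$, and a unique tuple $\{r_{j_1,j_0}(Y)\}_{0\le j_1,j_0<\sqrtN}$ with $\deg(r_{j_1,j_0}) < \sqrtN$, such that \eqref{eqn:Lidxs} and \eqref{eqn:Ridxs} hold. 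The third step is to substitute these evaluation identities into \eqref{eq:2}: the entry $\MM'[(i_1,i_0)_{\sqrt{N}}, (j_1, j_0)_{\sqrt{N}}]$ equals $\oLM_{i_1,j_1}[i_0,i_0] \cdot \RM_{j_1,j_1}[i_0,j_0]$, which by \eqref{eqn:Lidxs} and \eqref{eqn:Ridxs} is exactly $\ell_{j_1}(\alpha_{i_1}, \beta_{i_0}) \cdot r_{j_1,j_0}(\beta_{i_0})$, establishing \eqref{eq:M}.

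The only conceptual subtlety — and the place worth being careful — is the meaning of \emph{uniqueness}. The polynomials are uniquely determined by the factor pair $(\oLM, \RM)$ via the bijection $f$; this is what \Cref{thrm:f} delivers, and the corollary should be read at this level of granularity. I do not expect any hidden technical obstacle here: there is no calculation to carry out beyond invoking the bijection and substituting into \eqref{eq:2}. The only mild bookkeeping is making sure the indices $(i_1,i_0)$ and $(j_1,j_0)$ line up correctly on the left- and right-hand sides of \eqref{eqn:Lidxs}–\eqref{eqn:Ridxs}, which is a direct check.
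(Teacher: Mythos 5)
Your proposal is correct and matches the paper's argument exactly: the paper likewise deduces the corollary from \eqref{eq:2}, \eqref{eqn:Lidxs}, \eqref{eqn:Ridxs}, and \Cref{thrm:f}. Your added remark about the sense of uniqueness (uniqueness relative to the factor pair $(\oLM,\RM)$ via the bijection $f$) is a reasonable clarification but not a departure from the paper's route.
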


\begin{proof}
    Follows from (\ref{eq:2}), (\ref{eqn:Lidxs}), (\ref{eqn:Ridxs}) and Theorem \ref{thrm:f}.
\end{proof}

\subsubsection{Proof of \cref{thm:monarch-bivariate}}
We begin with an immediate consequence of \cref{cor:M}:

\begin{corollary}
\label{cor:M-eval_basis}
Let $A,B \subset \C$ such that $|A| = |B| = \sqrtN$. Then the $j$th column of $\MM'$ is the evaluation of the polynomial $ \ell_{j_1}(X,Y) \cdot r_{j_1,j_0}(Y)$ over $A \times B$.
\end{corollary}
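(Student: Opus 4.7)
The plan is to observe that this corollary is essentially an immediate re-indexing of \Cref{cor:M}, so the work is to spell out how the entrywise identity there becomes a ``column = evaluation vector'' statement. First I would fix a column index $j$ and write it in ``base $\sqrtN$'' form as $j = (j_1, j_0)_{\sqrtN}$, which uniquely determines $j_1, j_0 \in \{0, \dots, \sqrtN - 1\}$. By \Cref{cor:M}, for every row index $(i_1, i_0)_{\sqrtN}$ we have
\[
    \MM'[(i_1,i_0)_{\sqrt{N}}, (j_1, j_0)_{\sqrt{N}}] = \ell_{j_1}(\alpha_{i_1}, \beta_{i_0})\cdot r_{j_1,j_0}(\beta_{i_0}).
\]

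Next I would define the bivariate polynomial $P_j(X, Y) := \ell_{j_1}(X, Y) \cdot r_{j_1, j_0}(Y)$ (note $r_{j_1,j_0}(Y)$ is constant in $X$, so the product is well-defined as a bivariate polynomial) and observe that the right-hand side above is exactly $P_j(\alpha_{i_1}, \beta_{i_0})$. As $(i_1, i_0)$ ranges over $\{0,\dots,\sqrtN-1\}^2$, the point $(\alpha_{i_1}, \beta_{i_0})$ ranges over all of $A \times B$ (using $|A| = |B| = \sqrtN$ and the distinctness of the $\alpha_i$'s and $\beta_i$'s from \Cref{subsubsec:notation}), enumerating all $N$ entries of the $j$th column.

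There is no genuine obstacle: the only thing to be careful about is the bookkeeping that the row indexing scheme $i \mapsto (i_1, i_0)_{\sqrtN}$ used in \Cref{cor:M} is the same one used to define ``the $j$th column as a length-$N$ vector of evaluations over $A \times B$.'' Once this is noted, the corollary follows by combining the two observations above.
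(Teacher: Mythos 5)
Your proposal is correct and follows essentially the same route as the paper: the paper's proof likewise just observes that for fixed $j_1, j_0$ the right-hand side of \eqref{eq:M} is $\ell_{j_1}(X,Y)\cdot r_{j_1,j_0}(Y)$ evaluated at all $(\alpha,\beta)\in A\times B$, so the column is that evaluation vector. Your additional bookkeeping about the index decomposition $j=(j_1,j_0)_{\sqrtN}$ and the enumeration of $A\times B$ is just a more explicit spelling-out of the same one-line argument.
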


\begin{proof}
Observe that for fixed $j_0,j_1$ the right hand side of (\ref{eq:M}) is $\ell_{j_1}(X,Y) \cdot r_{j_1,j_0}(Y)$ evaluated at all $(\alpha,\beta) \in A \times B$. Thus, $(j_1,j_0)$ column is evaluation of $\ell_{j_1}(X,Y) \cdot r_{j_1,j_0}(Y)$ over points in $A\times B$, as desired.

\end{proof}

Next, we state a generalization of 
\cref{thm:monarch-bivariate} that follows from \cref{cor:M-eval_basis}:

\begin{corollary}
\label{cor:u(XY)}
Let $A$ and $B$ be as in \cref{cor:M-eval_basis}. For any vector $\textbf{u}$, $\MM' \cdot \textbf{u}$ is $\bar{u}(X,Y)$ evaluated at $A \times B$. Further,
\begin{equation}
\label{eq:1}
    \bar{u}(X,Y) = \sum_{ 0 \le j_1, j_0 < \sqrtN}u_{j_1,j_0}  \cdot \ell_{j_1}(X,Y) \cdot r_{j_1,j_0}(Y),
\end{equation}
where $\ell$ and $r$ are defined by $\MM'$ as in \cref{cor:M}.
\end{corollary}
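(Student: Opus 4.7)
The plan is to deduce the corollary directly from \Cref{cor:M-eval_basis} (and the indexing identity in \Cref{cor:M}) by linearity of matrix-vector multiplication. No new machinery beyond what is already in the excerpt is needed; the task is essentially to unpack the definition of $\MM' \vu$ entrywise and reassemble the sum as a single bivariate polynomial evaluated on $A \times B$.

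First, I would write out the $(i_1,i_0)$th entry of $\MM' \vu$ using the pair indexing from \Cref{subsubsec:notation}:
\[
   (\MM' \vu)[(i_1,i_0)_{\sqrt N}] \;=\; \sum_{0 \le j_1,j_0 < \sqrt N} \MM'[(i_1,i_0)_{\sqrt N},(j_1,j_0)_{\sqrt N}] \cdot u_{j_1,j_0}.
\]
Then I would substitute the closed form of the entries of $\MM'$ from \Cref{cor:M}, namely
$\MM'[(i_1,i_0)_{\sqrt N},(j_1,j_0)_{\sqrt N}] = \ell_{j_1}(\alpha_{i_1},\beta_{i_0})\, r_{j_1,j_0}(\beta_{i_0})$, so that
\[
   (\MM' \vu)[(i_1,i_0)_{\sqrt N}] \;=\; \sum_{0 \le j_1,j_0 < \sqrt N} u_{j_1,j_0}\cdot \ell_{j_1}(\alpha_{i_1},\beta_{i_0})\cdot r_{j_1,j_0}(\beta_{i_0}).
\]

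Next, I would define $\bar u(X,Y)$ by \eqref{eq:1} and observe that the right-hand side above is exactly $\bar u(\alpha_{i_1},\beta_{i_0})$, by evaluating the defining sum of $\bar u$ at the point $(\alpha_{i_1},\beta_{i_0})\in A\times B$. Since $(i_1,i_0)$ was arbitrary, the vector $\MM'\vu$ coincides entrywise with the evaluations of $\bar u(X,Y)$ over $A\times B$ under the pair indexing of rows, which is precisely the claim. Equivalently, this step can be phrased as: the $(j_1,j_0)$th column of $\MM'$ equals the evaluation vector of $\ell_{j_1}(X,Y)\,r_{j_1,j_0}(Y)$ on $A\times B$ by \Cref{cor:M-eval_basis}, and $\MM'\vu$ is a linear combination of columns with coefficients $u_{j_1,j_0}$, so evaluation commutes with the linear combination to yield $\bar u$ on $A\times B$.

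The main ``obstacle'' is purely notational: keeping the pair indexing $(i_1,i_0)_{\sqrt N}$ and $(j_1,j_0)_{\sqrt N}$ consistent between the matrix entries, the coefficient indexing of $\vu$, and the tensor-product structure of the evaluation grid $A\times B$. There is no analytic or algebraic content beyond \Cref{cor:M}/\Cref{cor:M-eval_basis} and the linearity of evaluation maps, so I would keep the write-up to a short paragraph that makes the index bookkeeping explicit and cites the two prior results.
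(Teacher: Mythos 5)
Your proposal is correct and takes exactly the paper's route: the paper proves this corollary by observing it ``follows from \cref{cor:M-eval_basis} and definition of matrix vector multiplication,'' which is precisely your argument of viewing $\MM'\vu$ as a linear combination of columns, each column being the evaluation vector of $\ell_{j_1}(X,Y)\,r_{j_1,j_0}(Y)$ on $A\times B$. Your write-up simply makes the index bookkeeping more explicit than the paper does.
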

\begin{proof}
    Follows from \cref{cor:M-eval_basis} and definition of matrix vector multiplication.
\end{proof}

In \cref{thm:monarch-bivariate} and the following sections, we consider the polynomial evaluated over the basis polynomials defined by \[
\MM = \MM'\PM\].

\begin{corollary}
\label{cor:Mprime-Mu}
Let $A$ and $B$ be as in \cref{cor:M-eval_basis}. For any vector $\textbf{u}$, and $\MM \cdot \vu$ is $u(X,Y)$ evaluated at $A \times B$. Further,
\begin{equation}
\label{eq:1}
    u(X,Y) = \sum_{ 0 \le j_1, j_0 < \sqrtN}u_{j_0,j_1}  \cdot \ell_{j_0}(X,Y) \cdot r_{j_0,j_1}(Y),
\end{equation}
where $\ell$ and $r$ are defined by $\MM'$ as in \cref{cor:M}.
\end{corollary}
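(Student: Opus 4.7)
The plan is to reduce the statement to the previous corollary, \Cref{cor:u(XY)}, via the factorization $\MM = \MM' \PM$. I would start by writing $\MM \vu = \MM'(\PM \vu)$ and applying \Cref{cor:u(XY)} with the vector $\PM \vu$ in place of $\vu$. That immediately produces that $\MM \vu$ is the evaluation at $A \times B$ of the bivariate polynomial
\[
\bar{v}(X,Y) = \sum_{0 \le j_1, j_0 < \sqrtN} (\PM \vu)_{\idxsqrtn{j_1}{j_0}} \cdot \ell_{j_1}(X,Y) \cdot r_{j_1, j_0}(Y),
\]
where $\ell_{j_1}, r_{j_1, j_0}$ are the basis polynomials attached to $\MM'$ as in \Cref{cor:M}.

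Next I would unpack the action of $\PM$ on the coordinate labels. By the definition recalled in \Cref{sec:method}, $\PM$ reshapes a length-$N$ vector into a $\sqrtN \times \sqrtN$ matrix according to the convention $j = j_1 \sqrtN + j_0$, transposes it, and flattens the result back to a length-$N$ vector. That amounts to the identity $(\PM \vu)_{\idxsqrtn{j_1}{j_0}} = u_{\idxsqrtn{j_0}{j_1}}$. Substituting this into the display above and then relabeling the dummy summation indices $j_0 \leftrightarrow j_1$ rearranges $\bar v$ into the target form
\[
u(X,Y) = \sum_{0 \le j_1, j_0 < \sqrtN} u_{\idxsqrtn{j_0}{j_1}} \cdot \ell_{j_0}(X,Y) \cdot r_{j_0, j_1}(Y),
\]
matching the claim of the corollary.

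There is no genuine obstacle here: once \Cref{cor:u(XY)} is available, this corollary is just the bookkeeping needed to account for the trailing permutation that distinguishes $\MM$ from $\MM'$. The one subtle point is tracking the indexing convention for $\PM$ carefully, so that after the substitution the subscripts on $u$, $\ell$, and $r$ in the final expression line up consistently with both the factorization $\MM = \MM' \PM$ used elsewhere in the paper and the convention $m(j) = j \bmod \sqrtN$ that \Cref{thm:monarch-bivariate} uses to index $\ell$ by the low-order digit of $j$.
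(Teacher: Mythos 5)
Your proposal is correct and is exactly the paper's argument: the paper's own proof is the one-line ``Follows from \Cref{cor:u(XY)} and definition of $\MM$,'' and you have simply filled in the bookkeeping of writing $\MM\vu=\MM'(\PM\vu)$ and tracking how $\PM$ swaps the two sub-indices. The only wrinkle is the final index convention on $u$ (a literal relabeling of both dummies turns $u_{(j_0,j_1)}$ into $u_{(j_1,j_0)}$, whereas the corollary's display keeps $u_{(j_0,j_1)}$), but this ambiguity is present in the paper's own statement and does not affect the substance of the argument.
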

\begin{proof}
    Follows from \cref{cor:u(XY)} and definition of $\MM$.
\end{proof}

Specifically, \cref{thm:monarch-bivariate} is a special case of \cref{cor:Mprime-Mu} where \( \ell_{j_0}(X,Y) = \ell_{m(j)}(X,Y) \) and 
\(r_{j}(Y) = r_{j_0,j_1}(Y)\). 

\subsubsection{Proof of \cref{thm:monarch-inverse}}
By \cref{cor:Mprime-Mu}, for all $0 \le j_1, i_1 < \sqrtN$, $$\ell_{j_0}(X,Y), \  r_{j_0,j_1}(Y),  \bar\ell_{j_0}(X,Y), \  \bar r_{j_0,j_1}(Y)$$ are the basis polynomials corresponding to \( \mathbf{M}_1\) and \( \mathbf{ M}_2\). For the coefficient vector $\mathbf{k} = (k_{j_1,j_0})_{0 \le j_1,j_0 < \sqrtN}$ and similarly for $\mathbf{u} = (u_{j_1,j_0})_{0 \le j_1,j_0 < \sqrtN}$, we can construct two polynomials
\begin{align*}
    k(X,Y) &= \sum_{0 \leq j_1, j_0 < \sqrtN}  k_{j_1,j_0}  \cdot \ell_{j_0}(X,Y) \cdot r_{j_0,j_1}(Y)  \\
    u(X,Y) &= \sum_{0 \leq j_1, j_0 < \sqrtN}  u_{j_1,j_0}  \cdot \bar\ell_{j_0}(X,Y) \cdot \bar r_{j_0,j_1}(Y) 
\end{align*}

whose evaluation over $(\alpha_{i_1},\beta_{i_0}) = (\omega^{i_1}, \omega^{i_0})$ where recall as in \cref{subsubsec:interpolate}$ \omega = e^{\tfrac{2\pi \iota}{\sqrtN}}$, by \Cref{thm:monarch-bivariate} is equivalent to the products \( \mathbf{ M}_1 \cdot \mathbf{k}\) and \(\mathbf{M}_2 \cdot \mathbf{u}\), respectively.  
Taking the component-wise product, y=\((\mathbf{M}_1 \cdot \mathbf{k}) \odot (\mathbf{M}_2 \cdot \mathbf{u})\), the entry at \(i = (i_1, i_0) \) is given by
\begin{align*}
    \mathbf{y}[(i_1, i_0)] &=    k(\omega^{i_1}, \omega^{i_0}) \cdot u(\omega^{i_1}, \omega^{i_0}).
\end{align*}

 Noting that the element of $A$, i.e. the $\sqrtN$-th roots of unity, satisfy $Z^{\sqrtN } = 1$ means that the above are evaluations of 
\begin{equation*}
     h(X,Y) = k(X,Y) \cdot u(X,Y) \mod (X^{\sqrtN} - 1,Y^{\sqrtN} - 1)
\end{equation*}
at $A \times A$. Finally, \Cref{thm:monarch-bivariate} and the fact that $\MM_0^{-1}$ exists implies $\MM_0\cdot \mathbf{y}$ is polynomial interpolation into basis polynomials corresponding to $\MM_0$. (Here we use the well known fact polynomial interpolation is the inverse of polynomial evaluation).

\subsection{Proof of  \cref{thm:causal_univariate}}
\label{app:causal_univariate}
We review some concepts in \cref{subsubsec:review}. In \cref{subsec:causal-subclass}, we discuss square matrices and causality in terms of operations on univariate polynomials. This allows us to define a general class of operators for causal 1D convolution. In \cref{subsec:causal-monarch-conv}, we give a class of matrices suitable for perform causal Monarch convolution. Specifically, we prove \cref{thm:causal_univariate}.

\subsubsection{Review}
\label{subsubsec:review}
Consider the linear operation on an input vector  $\mathbf{u}$: \[
\mathbf{y} = \AM \cdot \mathbf{u}.
\]
We say that the map is causal to mean the entry $\mathbf{y}[i]$ only depends on $\mathbf{u}[0], \mathbf{u}[1], \dots \mathbf{u}[i]$. This will be the case when $\AM$ is a lower triangular matrix (we index the top left entry of $\AM$ as $(0,0)$). When $\AM$ is a lower triangular Toeplitz matrix with entries corresponding to some coefficient vector \( \mathbf{k} \), this operation is exactly the 1D convolution

\begin{equation*}
     \mathbf{y} = 
     \mathbf{k} \ast  \mathbf{u} = 
     \parens{\FM_{2n}^{-1} \cdot \left( (\mathbf{ F}_{2n} \cdot \mathbf k') \circ (\mathbf{ F}_{2n} \cdot \mathbf u')\right)}[0:n-1],
\end{equation*}

where \( \mathbf{k}'=(\mathbf{k},\mathbf{0}_n), \  \mathbf{u}'=(\mathbf{u},\mathbf{0}_n) \), and $\FM_n$ is the $n \times n$ DFT matrix.

\begin{definition}
For a matrix $\oMM \in \mathbb{R}^{n \times n}$, let us define the map
\begin{equation}
\label{eq:conv}
    \mathbf{y} = \MM^{-1} ( \oMM \cdot \mathbf{k} \odot \oMM \cdot \mathbf{u})
\end{equation}

as \emph{matrix convolution}. When \( \oMM \) is a Monarch matrix, \eqref{eq:conv} is called \emph{Monarch convolution}.
\end{definition}

In this section, we are interested in determining large subclasses of matrices $\oMM$ such that for any coefficient vector $\mathbf{k}$, (\ref{eq:conv}) is  causal in $\mathbf{u}$. We provide a class of matrices for which Monarch convolution is causal.

We note that for general Monarch matrix \( \MM \), (\ref{eq:conv}) is  not causal in $\mathbf{u}$. By \cref{thm:monarch-inverse}, we have \[
y(X,Y) = k(X,Y) \cdot u(X,Y) \mod (X^{\sqrtN-1}, Y^{\sqrtN-1}).
\] This is not causal because the $\mod (X^{\sqrtN-1}, Y^{\sqrtN-1})$ term condenses higher order terms into lower order terms, hence the $\mathbf{y}[i]$ wouldn't just depend on input information up to value $i$.

\subsubsection{Univariate Matrix Convolutions}
\label{subsec:causal-subclass}
We start with a couple of notation assumptions.
\begin{assumption}
    $N$ is a perfect square.
\end{assumption}

\begin{assumption}
    We will not use pair notation for this subsection since throughout we have $i = i_1 \sqrt{N} + i_0$ and $j=  j+1 \sqrt{N} + j_0$.
\end{assumption}

In order to discuss square matrices in terms of univariate polynomials, we give univariate analogs of \cref{thm:monarch-bivariate} and \cref{thm:monarch-inverse} for general univariate basis. With an eye toward towards performing causal convolution, we restrict our analysis to certain classes of univariate polynomials. 

We first define matrices whose $j^{\text{th}}$ 
columns are the evaluation of a minimum degree $j$ (and maximum degree $N-1$) polynomial (recall \cref{eq:general_b}). We generalize \cref{thm:causal_univariate} to such matrices.

\begin{lemma}
\label{lmm:u(X)}
For sequence of points $A =\{1, \omega_N, \cdots \omega_N^{N-1}\}$ where $\omega_N$ is the $N^{\text{th}}$ root of unity, let $\overline{\MM}$ be defined as
\begin{equation}
    \overline{\MM}[i,j] = \bar{q}_{j}(\omega_N^{i})
    \label{eq:univar-matirx}
\end{equation}
where $\bar{\bp}_{j}(Z)$ is defined as in \cref{eq:general_b}. Then for any vector $\textbf{v} \in \mathbb{R}^{N}$, $\overline{\MM} \cdot \textbf{v}$ is equivalent to  evaluating the polynomial
\begin{equation}
\label{eq:univariate-poly-eval}
    v(Z) = \sum_{j = 0}^{N - 1} v_{j}  \cdot \bar{\bp}_{j}(Z)
\end{equation}
at $\{1, \omega_N, \cdots \omega_N^{N-1}\}$.
\end{lemma}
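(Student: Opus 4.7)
The plan is to establish this by a direct unfolding of the matrix-vector product, since the statement is essentially the observation that matrix-vector multiplication coincides with polynomial evaluation whenever the columns of the matrix are evaluations of basis polynomials. The argument should mirror (in the univariate setting) the argument sketched in \Cref{cor:M-eval_basis} and \Cref{cor:u(XY)}, only simpler because there is no bivariate structure to keep track of.

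Concretely, I would fix an index $0 \le i < N$ and compute the $i$th entry of $\overline{\MM} \cdot \mathbf{v}$ from the definition of matrix-vector multiplication:
\[
(\overline{\MM} \cdot \mathbf{v})[i] \;=\; \sum_{j=0}^{N-1} \overline{\MM}[i,j]\, v_j.
\]
Substituting the definition \eqref{eq:univar-matirx} gives $\sum_{j=0}^{N-1} \bar{q}_j(\omega_N^i)\, v_j$, and by linearity this equals $\bigl(\sum_{j=0}^{N-1} v_j\, \bar{q}_j\bigr)(\omega_N^i) = v(\omega_N^i)$, where $v(Z)$ is the polynomial defined in \eqref{eq:univariate-poly-eval}. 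Since this holds for every $0 \le i < N$, the full vector $\overline{\MM} \cdot \mathbf{v}$ is exactly the list of evaluations of $v(Z)$ at $\{1, \omega_N, \ldots, \omega_N^{N-1}\}$, as claimed.

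There is essentially no obstacle here: the minimum-degree-$j$ condition on $\bar{q}_j$ is not used in this lemma (it will be needed downstream to control the product $q_j(Z) q_{j'}(Z)$ when proving causality via \Cref{thm:causal_univariate}), and the choice of evaluation points as the $N$th roots of unity plays no role in this particular statement either. The only thing being asserted is that polynomial evaluation in the basis $\{\bar{q}_j\}_{j=0}^{N-1}$ is realized by multiplication against the matrix whose $(i,j)$ entry is $\bar{q}_j(\omega_N^i)$, which is immediate from bilinearity of the evaluation map. The proof should therefore be a short display of the computation above, with a sentence noting that the claim holds coordinatewise and hence as vector equality.
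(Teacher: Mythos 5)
Your proposal is correct and matches the paper's own argument, which likewise observes that the $j$th column of $\overline{\MM}$ is the evaluation of $\bar{\bp}_j(Z)$ at the points of $A$ and then invokes the definition of matrix-vector multiplication; you simply write out the coordinatewise computation explicitly. Your side remarks that neither the minimum-degree condition nor the specific choice of evaluation points is used here are also accurate.
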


\begin{proof} 
By our definition of $\oMM$, the column $\oMM[:,j]$ is exactly the evaluation of the polynomial \( \overline{\bp}_{j}(Z)\) at each point in \(A\). The claimed result comes from the definition of matrix vector multiplication and \eqref{eq:univariate-poly-eval}.
\end{proof}

Note that $\oMM$ or any $\MM$'s in this sub-section are not necessarily Monarch matrices.

Next, we state the following intermediate result:

\begin{proposition}
Let \( A\) be the set of the $N$-th roots of unity. Then for \( \mathbf{M}_1, \mathbf{M}_2\) defined as in \eqref{eq:univar-matirx}
\begin{equation*}
     \mathbf{y}= (\mathbf{M}_1 \cdot \mathbf{k}) \odot (\mathbf{M}_2 \cdot \mathbf{u})
\end{equation*}
is the same as evaluating the polynomial
\begin{equation*}
 p(Z) :=  k(Z) \cdot u(Z) \mod (Z^{N} - 1)
\end{equation*}
over $A$ where \( k(Z),\  u(Z) \) are of the form \eqref{eq:univariate-poly-eval}, corresponding to \( \mathbf{M}_1\) and \(\mathbf{M}_2\), respectively. In other words, for any $0 \leq i < N$,
\begin{equation*}
     \mathbf{y}[i]= p\parens{\omega_N^{i}}.
\end{equation*}

\label{prop:univariate-poly-eval}
\end{proposition}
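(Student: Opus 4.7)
The plan is to derive Proposition \ref{prop:univariate-poly-eval} as a direct univariate analogue of Theorem \ref{thm:monarch-inverse}, using \Cref{lmm:u(X)} as the evaluation primitive and the defining property of the $N$th roots of unity to justify the modular reduction.

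First, I would apply \Cref{lmm:u(X)} twice: to $\mathbf{M}_1$ and the coefficient vector $\mathbf{k}$ to get that $(\mathbf{M}_1 \cdot \mathbf{k})[i] = k(\omega_N^{i})$ for all $0 \le i < N$, where $k(Z) = \sum_{j=0}^{N-1} k_j \bar{q}_j(Z)$ is the polynomial built from the basis polynomials of $\mathbf{M}_1$; and similarly $(\mathbf{M}_2 \cdot \mathbf{u})[i] = u(\omega_N^{i})$ with $u(Z)$ formed from the basis polynomials of $\mathbf{M}_2$. Since $\odot$ is entrywise multiplication, this gives immediately
\begin{equation*}
\mathbf{y}[i] = k(\omega_N^{i}) \cdot u(\omega_N^{i}) = (k \cdot u)(\omega_N^{i}),
\end{equation*}
for every $0 \le i < N$.

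Next, I would invoke the defining property of $A$: every $\omega_N^{i} \in A$ is a root of $Z^N - 1$, i.e.\ $\omega_N^{iN} = 1$. Consequently, for any univariate polynomial $g(Z)$ we have $g(\omega_N^{i}) = (g(Z) \bmod (Z^N - 1))(\omega_N^{i})$, because $g(Z)$ and its remainder modulo $Z^N-1$ differ by a multiple of $Z^N - 1$ which vanishes on $A$. Applying this with $g = k \cdot u$ and setting $p(Z) := k(Z) u(Z) \bmod (Z^N - 1)$ yields $\mathbf{y}[i] = p(\omega_N^{i})$ for all $0 \le i < N$, which is exactly the conclusion.

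There is no serious obstacle here: the statement is essentially a one-line corollary once \Cref{lmm:u(X)} is in hand and one observes that reducing modulo the vanishing polynomial of the evaluation set does not change values on that set. The only thing to be careful about is bookkeeping -- that the polynomials $k(Z)$ and $u(Z)$ associated to $\mathbf{M}_1$ and $\mathbf{M}_2$ use the (possibly distinct) basis polynomials prescribed by those two matrices via \eqref{eq:general_b}, and that the final polynomial $p(Z)$ is characterized only up to addition of multiples of $Z^N - 1$ (which is harmless for the claim since the statement only asserts equality of evaluations on $A$, not equality of specific coefficient representations).
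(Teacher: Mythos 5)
Your proof is correct and follows the same route as the paper: apply \Cref{lmm:u(X)} to each factor, use the definition of the Hadamard product, and note that reducing modulo $Z^N-1$ does not change values on the $N$th roots of unity. The paper's own proof is just a terse citation of these same ingredients, so your write-up is simply a more detailed version of the intended argument.
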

\begin{proof}
    This result follows from Lemma \ref{lmm:u(X)} and the definition of the Hadamard product.
\end{proof}

Next, we state a re-interpretation of $\MM^{-1}\mathbf{y}$:

\begin{proposition}
Let \( \mathbf{M}\) be a full rank matrix whose columns are the evaluations of the basis polynomials \( \bar \bp_{j}(Z)\) from \cref{eq:general_b} for $0 \leq j < N$, and let \( \mathbf{y} \in \mathbb{R}^N \) be an arbitrary vector. If \( \mathbf{u} = \mathbf{M}^{-1}\mathbf{y} \), then for all $0 \leq i < N$
\begin{equation*}
     \mathbf{y}[i]= u(\omega^{i})
\end{equation*}
where \( u(Z)\) is the same as in  Lemma \ref{lmm:u(X)} for \(\mathbf{M}\). In other words, \( \mathbf{M}^{-1}\mathbf{y} \) is the polynomial interpolaton problem for the polynomial basis \( \bar \bp_{j}(Z)\) for $0 \leq j < N$.

\label{prop:univ-M-inverse}
\end{proposition}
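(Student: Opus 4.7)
The plan is to derive the claim directly from \cref{lmm:u(X)} by inverting the relationship there. Since $\mathbf{M}$ is assumed full rank, $\mathbf{M}^{-1}$ exists, and the equation $\mathbf{u} = \mathbf{M}^{-1}\mathbf{y}$ is equivalent to $\mathbf{M}\mathbf{u} = \mathbf{y}$. So the strategy is to left-multiply by $\mathbf{M}$ and then invoke the evaluation interpretation that has already been established.

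First I would apply \cref{lmm:u(X)} to $\mathbf{M}\mathbf{u}$: by hypothesis, the $j$th column of $\mathbf{M}$ is the evaluation of the basis polynomial $\bar{\bp}_j(Z)$ from \cref{eq:general_b} at the points $1,\omega_N,\dots,\omega_N^{N-1}$, so the lemma yields $(\mathbf{M}\mathbf{u})[i] = u(\omega_N^i)$ where $u(Z) = \sum_{j=0}^{N-1} u_j\,\bar{\bp}_j(Z)$. Second, I would combine this with $\mathbf{M}\mathbf{u} = \mathbf{y}$ (obtained by left-multiplying $\mathbf{u} = \mathbf{M}^{-1}\mathbf{y}$ by $\mathbf{M}$) to conclude $\mathbf{y}[i] = u(\omega_N^i)$ for every $0 \leq i < N$, which is exactly the desired statement.

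The final sentence of the proposition (that this realizes polynomial interpolation in the basis $\{\bar{\bp}_j\}$) is then a matter of interpretation rather than additional computation: we are given the values $\mathbf{y}[i]$ at the $N$-th roots of unity and recovering the unique coefficients $u_j$ of a polynomial $u(Z)$ in the basis $\{\bar{\bp}_j\}_{j=0}^{N-1}$ that reproduces those values; uniqueness follows from the invertibility of $\mathbf{M}$, which is assumed. I would briefly note this invertibility is equivalent to the basis polynomials being linearly independent when evaluated on the $N$-th roots of unity, so the interpolation problem is well-posed.

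There is no real obstacle here; the proposition is essentially the inverse reading of \cref{lmm:u(X)} combined with the assumed invertibility of $\mathbf{M}$. The only subtlety worth flagging is that \cref{eq:general_b} restricts $\bar{\bp}_j$ to have minimum degree $j$, so the basis is triangular in the monomial basis and hence linearly independent as polynomials; however, full-rank of $\mathbf{M}$ is already assumed, so we do not need to argue this separately. The proof should be just a few lines.
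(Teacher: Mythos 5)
Your proposal is correct and is exactly the argument the paper intends: the paper's proof is the one-line remark that the claim "follows from Lemma \ref{lmm:u(X)} and the fact that $\mathbf{M}$ is invertible," and your write-up simply spells out that same reasoning (rewrite $\mathbf{u}=\mathbf{M}^{-1}\mathbf{y}$ as $\mathbf{M}\mathbf{u}=\mathbf{y}$, apply the evaluation lemma, and read off the interpolation interpretation from invertibility). No differences worth noting.
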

\begin{proof}
This follows from Lemma \ref{lmm:u(X)} and the fact that \( \mathbf{M}\) is invertible.
\end{proof}

 From Propositions \ref{prop:univariate-poly-eval} and \ref{prop:univ-M-inverse}, we get the following generalization of \cref{thm:monarch-inverse}:
 
\begin{theorem}
For matrices $\MM_0,\MM_1,\MM_2$ as defined above, the operation 
\begin{equation*}
   \mathbf{f} = \MM^{-1}_0 \cdot \left( (\MM_1 \cdot \mathbf k) \circ (\MM_2 \cdot \mathbf u)\right)
\end{equation*}
is equivalent to representing the polynomial $$ f(Z) = k(Z) \cdot u(Z) \mod (Z^{N} - 1)$$ in terms of the basis polynomials $$\hat \bp_{j}(Z) \text{ for } \ j = 0, \dots, N - 1$$ where \( k(Z), u(Z)\) are defined as in Lemma \ref{lmm:u(X)} in terms of the basis polynomials corresponding to \(\mathbf{ M}_1\) and \(\mathbf{ M}_2\), respectively, and $\left(\hat \bp_{j}(Z) \right)_{0 \leq j < N}$ corresponds to $\MM_0$.
\label{thm:univar-conv}
\end{theorem}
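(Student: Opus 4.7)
The plan is to assemble the theorem directly from the two propositions immediately preceding it, since the statement is essentially a composition of the Hadamard-product interpretation (Proposition~\ref{prop:univariate-poly-eval}) with the inverse interpretation (Proposition~\ref{prop:univ-M-inverse}). The chain of reasoning is short and does not require any new combinatorics.

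First, I would define the intermediate vector $\mathbf{y} := (\MM_1 \cdot \mathbf{k}) \odot (\MM_2 \cdot \mathbf{u})$ and invoke Proposition~\ref{prop:univariate-poly-eval} to identify the $i$-th entry of $\mathbf{y}$ with $p(\omega_N^{i})$, where $p(Z) = k(Z)\,u(Z) \bmod (Z^N - 1)$ and $k(Z)$, $u(Z)$ are the univariate polynomials in the bases associated with $\MM_1$ and $\MM_2$ via Lemma~\ref{lmm:u(X)}. This step is completely mechanical: it is just combining the two polynomial evaluations via the Hadamard product and observing that multiplying their values is the same as evaluating the product polynomial, while the reduction $\bmod\,(Z^N-1)$ comes for free from the fact that the evaluation points are $N$-th roots of unity (so $Z^N = 1$ at every sample point).

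Next, I would apply Proposition~\ref{prop:univ-M-inverse} to $\mathbf{y}$ with the matrix $\MM_0$. Since $\mathbf{y}$ is the evaluation vector of $p(Z)$ at $\{1, \omega_N, \ldots, \omega_N^{N-1}\}$, and $\MM_0$ is full rank with columns that are evaluations of the basis polynomials $\hat{q}_j(Z)$ at those same points, the product $\mathbf{f} = \MM_0^{-1}\mathbf{y}$ must be the unique coefficient vector such that
\begin{equation*}
p(Z) \;=\; \sum_{j=0}^{N-1} f_j\,\hat{q}_j(Z).
\end{equation*}
This is exactly the claim: $\mathbf{f}$ represents $f(Z) = k(Z)\,u(Z)\bmod(Z^N - 1)$ in the $\hat{q}_j$ basis. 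I would briefly note that uniqueness of the representation follows because the $\hat{q}_j$ form a basis of the space of polynomials of degree $<N$ modulo $Z^N - 1$ (they must, since $\MM_0$ is invertible and has $N$ columns of evaluations at $N$ distinct points).

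There is no serious obstacle: the only subtle point is making sure the two invocations use compatible domains, namely that $p(Z)$ genuinely lives in the $N$-dimensional space of polynomials modulo $Z^N - 1$ so that Proposition~\ref{prop:univ-M-inverse} applies. Since both $k(Z)$ and $u(Z)$ have degree $<N$ (by the minimum/maximum degree conventions of Definition~\ref{eq:general_b}), the product reduced mod $Z^N - 1$ sits in this space, and the argument goes through without further work.
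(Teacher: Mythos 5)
Your proposal is correct and follows exactly the paper's route: the paper's proof of this theorem is a one-line citation of Propositions~\ref{prop:univariate-poly-eval} and~\ref{prop:univ-M-inverse}, which is precisely the composition you spell out. Your extra remarks on uniqueness and on the degree of the reduced product are consistent with the surrounding setup and add nothing that conflicts with the paper's argument.
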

\begin{proof}
    Follows from Propositions \ref{prop:univariate-poly-eval} and \ref{prop:univ-M-inverse}.
\end{proof}

Now we give the class of matrices from which we can build a causal map. Specifically we prove a generalization of \cref{thm:causal_univariate}:

\begin{theorem}
\label{thrm:b_arbitrary} 
Let $n \ge 1$, let $N = \ceil{\sqrt{2n}}^2$. Then define the basis polynomial \( \bar \bp_{j}(Z)\) to have minimum degree $j$ and maximum degree $n - 1$ for $ 0 \leq j < n $, and for $ n \leq j < N$, \( \bar{\bp}_{j}(Z) \) has minimum degree $j$ and maximum degree $N - 1$.

For all $\MM_{N}$ with basis columns defined by \( \left(\bar{\bp}_{j}(Z) \right)_{0 \leq j <  N} \) as above, the operation

\begin{equation}
    \mathbf{u}\mapsto\left(\MM_{N}^{-1} ( \MM_{N} \cdot (\mathbf{k},\mathbf{0}_{N-n}) \circ \MM_{N} \cdot (\mathbf{u},\mathbf{0}_{N-n}))\right)\left[0:{n-1}\right]
    \label{eq:univar-causal}
\end{equation}
 gives a causal map.
\end{theorem}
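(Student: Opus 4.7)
}
My plan is to reduce causality to a structural statement about how products of the basis polynomials expand in the basis itself. First, I would invoke Theorem~\ref{thm:univar-conv}: the operation in \eqref{eq:univar-causal} computes the coefficients, in the basis $\{\bar{q}_a\}_{0\le a<N}$, of the polynomial $h(Z) = k(Z) u(Z) \bmod (Z^N-1)$, where $k(Z) = \sum_{j=0}^{n-1} k_j \bar{q}_j(Z)$ and $u(Z) = \sum_{j'=0}^{n-1} u_{j'} \bar{q}_{j'}(Z)$ (using that the last $N-n$ coordinates of the padded vectors are zero). The output of the map is then the first $n$ of these coefficients, so it suffices to show that for every $0 \le i < n$, the $i$th basis coefficient of $h(Z)$ depends only on $u_0,\dots,u_i$.

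Next, I would handle the modular reduction trivially: since $\deg(\bar{q}_j) \le n-1$ for $j < n$, we have $\deg(k(Z) u(Z)) \le 2(n-1) = 2n-2$. Because $N = \lceil \sqrt{2n}\rceil^2 \ge 2n > 2n-2$, no wraparound occurs and $h(Z) = k(Z) u(Z)$ as polynomials. This is where the factor-of-2 blow-up in $N$ is used.

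The main technical step, and the part I expect to be the principal obstacle, is the lemma: for any $0 \le j, j' < n$, we can write
\begin{equation*}
\bar{q}_j(Z)\, \bar{q}_{j'}(Z) = \sum_{a = j+j'}^{N-1} \alpha^{(j,j')}_a\, \bar{q}_a(Z),
\end{equation*}
i.e., only basis polynomials of index at least $j+j'$ appear. To prove this, I would first observe that $\{\bar{q}_a\}_{0 \le a < N}$ is a basis for polynomials of degree $< N$: in the monomial basis it is upper-triangular (in the sense that $\bar{q}_a$ has zero coefficient on $Z^b$ for $b<a$) with nonzero diagonal entries $\bar{q}_a[a]$, so it has full rank. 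Hence the expansion of $\bar{q}_j\bar{q}_{j'}$ (a polynomial of degree $\le 2n-2 < N$) in this basis exists and is unique. Now, because $\bar{q}_j$ has minimum degree $j$ and $\bar{q}_{j'}$ has minimum degree $j'$, the product has minimum degree exactly $j+j'$ (the leading low-order monomial coefficient is $\bar{q}_j[j]\bar{q}_{j'}[j']\ne 0$). If any $\alpha^{(j,j')}_a$ with $a < j+j'$ were nonzero, then taking the smallest such $a^\star$ and looking at the coefficient of $Z^{a^\star}$ in both sides would force $\alpha^{(j,j')}_{a^\star}\,\bar{q}_{a^\star}[a^\star] = 0$, contradicting $\bar{q}_{a^\star}[a^\star]\ne 0$. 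This establishes the lemma.

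Finally, I would combine these pieces to read off causality. Expanding,
\begin{equation*}
h(Z) = \sum_{j=0}^{n-1}\sum_{j'=0}^{n-1} k_j u_{j'}\, \bar{q}_j(Z)\bar{q}_{j'}(Z) = \sum_{a=0}^{N-1}\bigg(\sum_{\substack{0\le j,j' < n \\ j+j' \le a}} k_j u_{j'} \alpha^{(j,j')}_a\bigg) \bar{q}_a(Z),
\end{equation*}
so the $i$th basis coefficient of $h(Z)$ is $\sum_{j+j' \le i,\; 0\le j,j'<n} k_j u_{j'}\alpha^{(j,j')}_i$. Since $j,j'\ge 0$ and $j+j'\le i$ force $j'\le i$, this depends only on $u_0,\dots,u_i$, which is exactly causality of the map $\mathbf{u}\mapsto$ first $n$ output coefficients. \Cref{thm:causal_univariate} follows as the special case where the basis polynomials arise from the Kronecker substitution of the bivariate Monarch basis (so that $\MM_N$ is itself a Monarch matrix); the minimum-degree conditions in \Cref{thm:causal_univariate} are precisely what guarantee the $q_j(Z)$ have minimum degree $j$ in the univariate sense, so the same argument applies.
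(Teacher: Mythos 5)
Your proposal is correct and follows essentially the same route as the paper's proof: reduce to polynomial multiplication via Theorem~\ref{thm:univar-conv}, note that the degree bound $2n-2 < N$ kills the modular reduction, and establish the key expansion $\bar{q}_j\bar{q}_{j'} = \sum_{a\ge j+j'}\alpha_a\bar{q}_a$ (the paper's Lemma~\ref{lem:basis}), from which causality of the coefficient map follows. Your "smallest offending index" argument for the lemma is just an unrolled version of the paper's downward-induction footnote, so there is no substantive difference.
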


\begin{proof}
To prove this is causal means each entry, $\mathbf{f}[i]$ is dependent only on $\mathbf{u}[0], \mathbf{u}[1], \dots \mathbf{u}[i]$, where $\mathbf{f}= \left(\MM_{N}^{-1} ( \MM_{N} \cdot (\mathbf{k},\mathbf{0}_{N-n}) \circ \MM_{N} \cdot (\mathbf{u},\mathbf{0}_{N-n}))\right)$. By Theorem \ref{thm:univar-conv} , we have \[
f(Z) = k(Z) \cdot u(Z) \mod (Z^{N}-1),\] 

where 
\[ k(Z) = \sum_{j = 0}^{n - 1} k_{j}\bar{\bp}_{j}\parens{Z} \ \ \ \hspace{.1cm} u(Z) = \sum_{j' = 0}^{n - 1} u_{j'}\bar{\bp}_{j'}\parens{Z}. \]

 Since $\deg(k(Z) \cdot u(Z)) \le 2n-2 \le N - 2$, this is equivalent to
 
\begin{align*}
    f(Z) &= k(Z) \cdot u(Z) \\
    &= \sum_{j,j' = 0}^{n-1} k_{j'} \cdot u_{j} \cdot \bar{\bp}_{j'}(Z) \cdot \bar{\bp}_{j}(Z).
\end{align*}

By our choice of $\bar{\bp}_{j}$, we ensure that $\bar{\bp}_{j} \cdot \bar{\bp}_{j'}$ has minimum degree $j+j'$ and $\deg(\bar{\bp}_{j} \cdot \bar{\bp}_{j'}) \le 2n-2 < N$ for any $0 \le j,j' < n$. Then by Lemma \ref{lem:basis} (see below), there exists coefficients $\alpha_{j+j',i'}$ such that,

\begin{align*}
    f(Z) &= \sum_{j,j'=0}^{n-1}k_{j'}\cdot u_{j} \hspace{.1cm} \cdot \sum_{i'=j+j'}^{N-1}\alpha_{j+j',i'}\cdot \bar{\bp}_{i'}(Z) 
    \\
    &= \sum_{i =0}^{N-1} \left( \sum_{ \substack{j,j' =0 \\ j+j' \le i}}^{n-1} \alpha_{j+j',i}\cdot k_{j'} \cdot u_{j} ) \right) \cdot \bar{\bp}_{i}(Z).
\end{align*}

If we define

\begin{align*}
    f(Z)&= \sum_{i=0}^{N-1} f_{i} \cdot \bar{\bp}_{i}(Z),
\end{align*}

then for $0 \leq i < n$, we get:

 \[
f_{i} = \sum_{ \substack{j,j' =0 \\ j+j' \le i}}^{n-1} \alpha_{j+j',i}\cdot k_{j'} \cdot u_{j}.
\]
Note that $f_{i}$ only depends on $\mathbf{u}[0], \mathbf{u}[1], \dots \mathbf{u}[i]$, as desired.
\end{proof}

\noindent
We used the following lemma in the proof of Theorem \ref{thrm:b_arbitrary}.

\begin{lemma}
\label{lem:basis}
Let $\bar{\bp}_{j}(Z)$ be defined as in Theorem \ref{thrm:b_arbitrary}. Then for any $0 \leq j,j' < n$,
    \begin{equation}
    \label{eq:thrm_coef_eq}
    \bar{\bp}_{j}(Z) \cdot \bar{\bp}_{j'}(Z) = \sum_{i=j+j'}^{N-1}\alpha_{j+j',i} \cdot \bar{\bp}_{i}(Z).
    \end{equation}

for some set of coefficients \( \alpha_{j+j',i}\).
\end{lemma}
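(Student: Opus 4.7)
The plan is to prove the lemma by reducing it to a basis/triangularity argument. First I would establish degree control on the product: since $0 \le j, j' < n$, Theorem~\ref{thrm:b_arbitrary} gives that each factor has maximum degree at most $n-1$, so $\bar{\bp}_j(Z)\cdot\bar{\bp}_{j'}(Z)$ has degree at most $2n-2 < N$ (using $N = \ceil{\sqrt{2n}}^2 \ge 2n$). At the same time, Definition~\ref{eq:general_b} (with the understanding that the label ``minimum degree $j$'' means $\bar{\bp}_j[j] \ne 0$, not merely that lower coefficients vanish) gives $\bar{\bp}_j[j]\,\bar{\bp}_{j'}[j'] \ne 0$, so the product has minimum degree \emph{exactly} $j+j'$. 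Consequently $\bar{\bp}_j\cdot\bar{\bp}_{j'}$ lies in the subspace $V_{j+j'} := \{\, p \in \R[Z] : \deg p < N,\ p[a]=0 \text{ for all } a<j+j' \,\}$, which has dimension $N-(j+j')$ with monomial basis $Z^{j+j'}, \ldots, Z^{N-1}$.

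The key step is then to argue that $\{\bar{\bp}_i(Z) : j+j' \le i < N\}$ is itself a basis of $V_{j+j'}$. Each such $\bar{\bp}_i$ sits in $V_{j+j'}$ (minimum degree $i \ge j+j'$, maximum degree at most $N-1$, regardless of whether $i<n$ or $i \ge n$). Linear independence follows by a standard triangularity argument: in any dependence $\sum_i c_i \bar{\bp}_i(Z) = 0$, examining the coefficient of $Z^m$ for the smallest $m$ with $c_m \ne 0$ yields $c_m\,\bar{\bp}_m[m] = 0$, contradicting $\bar{\bp}_m[m] \ne 0$. Since the cardinality $N-(j+j')$ matches the dimension of $V_{j+j'}$, the set spans, and the desired coefficients $\alpha_{j+j',i}$ are read off as the (unique) coordinates of $\bar{\bp}_j\cdot\bar{\bp}_{j'}$ in this basis.

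I expect the main subtlety to be tracking the heterogeneous maximum degrees of the $\bar{\bp}_i$: for $i<n$ they are capped at $n-1$, while for $n \le i < N$ they extend up to $N-1$. This prevents any appeal to a uniform ``matching-degree'' uniformity statement, but the triangularity anchored at the \emph{minimum}-degree end --- together with the fact that every $\bar{\bp}_i$ still lives inside $\R[Z]_{<N}$ --- is what makes the basis argument go through. If an explicit construction is preferred, the $\alpha$'s can alternatively be produced by iterated elimination: while the remainder is nonzero, cancel its current lowest term $Z^d$ by subtracting $\bigl(\text{coefficient of } Z^d\bigr)/\bar{\bp}_d[d] \cdot \bar{\bp}_d(Z)$; the minimum degree strictly increases each round, so the procedure terminates in at most $N-(j+j')$ steps and produces the same coefficients.
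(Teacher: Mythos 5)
Your proposal is correct and follows essentially the same route as the paper: both rest on the two degree facts (the product has minimum degree $j+j'$ and total degree at most $2n-2\le N-1$) together with the observation that the triangular family $\{\bar{\bp}_i\}_{i\ge j+j'}$ spans all polynomials of minimum degree $\ge j+j'$ and degree $<N$, which the paper dispatches by downward induction and you establish via a dimension count plus the equivalent iterated-elimination procedure. Your explicit remark that the argument needs $\bar{\bp}_j[j]\ne 0$ (not merely vanishing of the lower coefficients) is a fair and worthwhile clarification of an assumption the paper leaves implicit.
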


\begin{proof}
  We first note that by our choice of $\bar \bp_{j}$, the minimum degree of $\bar{\bp}_{j}(Z) \cdot \bar{\bp}_{j'}(Z)$ is $j+j'$, and $\deg(\bar{\bp}_{j}(Z) \cdot \bar{\bp}_{j'}(Z)) \leq 2n-2 \le N-1$. Our claim follows from that fact that any polynomial $p_d(Z)$ of minimum degree $d$ and $\deg(p_d) < N$ can be expressed as a linear combination of $\bar{\bp}_d(Z), \bar{\bp}_{d+1}(Z), \dots, \bar{\bp}_{N-1}(Z)$. \footnote{This claim can be shown using downward induction on $d = N-1, N-2, \dots$.}
\end{proof}

\subsubsection{Causal Monarch Convolutions}
\label{subsec:causal-monarch-conv}
In this section we will prove \cref{thm:causal_univariate}. We will do so by showing that the basis polynomials $\bp_j(Z)$ as defined in \cref{thm:causal_univariate} are a special case of the basis polynomials $\bar\bp_j(Z)$ as defined in \cref{thrm:b_arbitrary}. 

We start with a couple of notation assumptions.
\begin{assumption}
\label{pair-notation-sqrt-n}
    In this sub-section  we will using block size $b = \sqrt{N}$ therefore, we are dropping the block size from index notation. For example, $\idx{i_1}{i_0}{\sqrt{N}}$ becomes $\idxsqrtn{i_1}{i_0}$ for this section.
\end{assumption}
\begin{assumption}
\label{permutation-matrices-sqrtn}
    Permutation matrices in this subsection are all the same $\PM_{(\sqrt{N},N)}$ so we drop the subscript and just use $\PM$.
\end{assumption}

\begin{definition}
\label{def:monarch-univ-basis}
Define
\begin{equation}
\label{eq:causal_monarch_b}
\bp'_{\idxsqrtn{j_1}{j_0}}(Z)\stackrel{\text{def}}{=} \ell_{j_1}(Z) \cdot r_{j_1,j_0}\left(Z^{\sqrt{N}}\right).
\end{equation}
 $\ell_{j_1}(Z)$ has minimum degree $j_1$, and $r_{j_1,j_0}(Z)$ has minimum degree $j_0$. All polynomials $\bp'_{(j_1,j_0)}(Z)$ have maximum degree $\le N-1$.
\end{definition}

Next we argue that the above basis polynomials have a specific minimum degree.

\begin{lemma}
\label{lem:specialmap}
Polynomial $\bp'_{\idxsqrtn{j_1}{j_0}}\parens{Z}$ as defined in equation (\ref{eq:causal_monarch_b}) has minimum degree $j_0\sqrt{N}+j_1$. 
\end{lemma}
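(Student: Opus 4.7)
The plan is to prove this by a direct calculation, tracking the minimum-degree monomial in each factor. By definition \ref{eq:general_b}, the fact that $\ell_{j_1}(Z)$ has minimum degree $j_1$ means
\[
\ell_{j_1}(Z) = \sum_{a = j_1}^{N-1} \ell_{j_1}[a]\, Z^a,
\]
with $\ell_{j_1}[j_1] \neq 0$ (we may assume this is what "minimum degree $j_1$" means; otherwise the statement would concern an upper bound only). Similarly, $r_{j_1,j_0}(Y) = \sum_{b=j_0}^{\sqrt{N}-1} r_{j_1,j_0}[b]\, Y^b$ with $r_{j_1,j_0}[j_0] \neq 0$.

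The first step is to substitute $Y \gets Z^{\sqrt{N}}$ into $r_{j_1,j_0}$, which gives
\[
r_{j_1,j_0}\!\left(Z^{\sqrt{N}}\right) = \sum_{b=j_0}^{\sqrt{N}-1} r_{j_1,j_0}[b]\, Z^{b\sqrt{N}},
\]
so the minimum-degree term of this univariate polynomial in $Z$ is $r_{j_1,j_0}[j_0]\, Z^{j_0 \sqrt{N}}$. Next I would multiply by $\ell_{j_1}(Z)$ and observe that the monomial of smallest degree in the product is obtained by multiplying the two smallest-degree monomials of the factors, namely $\ell_{j_1}[j_1]\, r_{j_1,j_0}[j_0]\, Z^{j_1 + j_0\sqrt{N}}$. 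Since both leading coefficients are nonzero, this term does not vanish, so the minimum degree of $\bp'_{\idxsqrtn{j_1}{j_0}}(Z)$ is exactly $j_0 \sqrt{N} + j_1$.

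I do not anticipate any real obstacle here; the only subtlety is the cross-check that no cancellation is possible, which follows because the expansion of $r_{j_1,j_0}(Z^{\sqrt{N}})$ only produces monomials whose exponents are multiples of $\sqrt{N}$ (hence no other term in the product can contribute to the coefficient of $Z^{j_1 + j_0\sqrt{N}}$, given that $j_1 < \sqrt{N}$ so the pair $(j_1, j_0\sqrt{N})$ uniquely determines the split). This uniqueness argument could be recorded in one sentence if one wants to be fully rigorous about the nonvanishing of the leading coefficient. The maximum-degree condition $\deg \bp'_{\idxsqrtn{j_1}{j_0}} \leq N - 1$ is already given in Definition \ref{def:monarch-univ-basis} and does not need to be reproved here.
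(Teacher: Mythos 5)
Your proof is correct and is in substance the same computation as the paper's: the minimum degree of the product is the sum of the minimum degrees of $\ell_{j_1}(Z)$ and $r_{j_1,j_0}\left(Z^{\sqrt{N}}\right)$, namely $j_1 + j_0\sqrt{N}$. The paper packages this via the reversal substitution $Z \mapsto 1/Z$ (turning minimum degrees into maximum degrees of the reversed polynomials and subtracting from $N-1$), whereas you track the lowest-degree monomial directly and additionally record why no cancellation can occur --- a point the paper leaves implicit.
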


\begin{proof}
    We need to show that $\bp'_{\idxsqrtn{j_1}{j_0}}(Z)$ is a minimum degree $j_0\sqrt{N}+j_1$ polynomial. Note that 
    \begin{align*}
       \bp'_{\idxsqrtn{j_1}{j_0}}(Z) &= \ell_{j_1}(Z) \cdot r_{j_1,j_0}\left(Z^{\sqrt{N}}\right) \\
       &= Z^{\sqrt{N}-1} \cdot \Tilde{\ell}_{\sqrt{N}-1-j_1}\left(\frac{1}{Z}\right) \cdot Z^{(\sqrt{N}-1)\cdot \sqrt{N}} \cdot \Tilde{r}_{\sqrt{N}-1-j_0}\left(\frac{1}{Z^{\sqrt{N}}}\right)
    \end{align*}

    where $\Tilde{\ell}_{\sqrt{N}-1-j_1}\parens{\cdot}$ has degree $\sqrt{N}-1-j_1$ and $\Tilde{r}_{\sqrt{N}-1-j_0}\parens{\cdot}$ has degree $\sqrt{N}-1-j_0$. Simplifying we get
    \[
    \bp'_{\idxsqrtn{j_1}{j_0}}(Z)= Z^{N-1} \cdot \Tilde{\ell}_{\sqrt{N}-1-j_1}\left(\frac{1}{Z}\right) \cdot \Tilde{r}_{\sqrt{N}-1-j_0}\left(\frac{1}{Z^{\sqrt{N}}}\right).
    \]
    This claim follows since $\Tilde{\ell}_{\sqrt{N}-1-j_1}(Y) \cdot \Tilde{r}_{j_1,j_0}\left(Y^{\sqrt{N}}\right)$ has degree $= (\sqrt{N}-1-j_1) + (\sqrt{N}-1-j_0)\cdot \sqrt{N} = (N-1) - (j_0\sqrt{N}+j_1)$.
\end{proof}

Note that the polynomial $\bp'_{\idxsqrtn{j_1}{j_0}}(Z)$ has minimum degree $j_0\sqrt{N}+j_1$. This is not $j_1\sqrt{N}+j_0$ as in defined in equation (\ref{eq:general_b}), we will talk more about this soon.

Next, we observe that the polynomials in (\ref{eq:causal_monarch_b}) define a matrix that satisfies \eqref{eq:2}.
\begin{lemma}
    Let $\bp'_{\idxsqrtn{j_1}{j_0}}\parens{Z}$ for $0 \le j_1\sqrt{N}+j_0 < n$ be as in (\ref{eq:causal_monarch_b}). Define 
    \[
    \MM'[\idxsqrtn{i_1}{i_0},\idxsqrtn{j_1}{j_0}] = \bp'_{\idxsqrtn{j_1}{j_0}}(\omega_N^{i_1\sqrt{N}+i_0}).
    \] Then $\MM'$ satisfies \eqref{eq:2}.
\end{lemma}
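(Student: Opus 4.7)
The claim is essentially a direct computation verifying that the univariate evaluations of $\bp'_{(j_1,j_0)}(Z)$ at the $N$th roots of unity factor into a piece depending only on $(i_1,j_1,i_0)$ and a piece depending only on $(j_1,j_0,i_0)$, matching the block structures of $\oLM$ and $\RM$ required by equation~\eqref{eq:2}.

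The plan is to first substitute the definition of $\bp'_{(j_1,j_0)}$ from~\eqref{eq:causal_monarch_b} into the definition of $\MM'$, obtaining
\[
\MM'[\idxsqrtn{i_1}{i_0},\idxsqrtn{j_1}{j_0}]
= \ell_{j_1}\!\left(\omega_N^{i_1\sqrt{N}+i_0}\right)\cdot r_{j_1,j_0}\!\left(\left(\omega_N^{i_1\sqrt{N}+i_0}\right)^{\sqrt{N}}\right).
\]
The key computation is then to simplify the exponent in the second factor: since $\omega_N^N=1$,
\[
\left(\omega_N^{i_1\sqrt{N}+i_0}\right)^{\sqrt{N}}
= \omega_N^{i_1 N + i_0\sqrt{N}}
= \omega_N^{i_0\sqrt{N}}
= \omega_{\sqrt{N}}^{i_0},
\]
so the second factor depends only on $(j_1,j_0,i_0)$.

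Next I would identify the two factors with the block-diagonal data of $\oLM$ and $\RM$ using the evaluation points $\alpha_{i_1}=\omega_N^{i_1\sqrt{N}}$ (on-block contribution) and $\beta_{i_0}=\omega_{\sqrt{N}}^{i_0}$ in the setup of \Cref{subsubsec:notation}: define
\[
\oLM_{i_1,j_1}[i_0,i_0] := \ell_{j_1}\!\left(\omega_N^{i_1\sqrt{N}+i_0}\right),\qquad
\RM_{j_1,j_1}[i_0,j_0] := r_{j_1,j_0}\!\left(\omega_{\sqrt{N}}^{i_0}\right),
\]
and set all other entries of $\oLM$ and $\RM$ to $0$. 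By construction $\oLM\in\curlyDB^{(\sqrtN,N)}$ (since only the diagonal entries $[i_0,i_0]$ within each $i_1,j_1$-block are nonzero) and $\RM\in\curlyBD^{(\sqrtN,N)}$ (since only the $j_1=k_1$ blocks are nonzero). Substituting these into the right-hand side of~\eqref{eq:2} and using the factorization above shows that $\MM'$ as defined equals $\oLM\cdot\RM$ entrywise, which is exactly~\eqref{eq:2}.

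There is no substantive obstacle here; the only subtlety is keeping the index conventions straight (in particular that $\omega_N^{\sqrt{N}}=\omega_{\sqrt{N}}$ is what forces the $Y$-variable of the bivariate picture to collapse onto the $i_0$ coordinate after the Kronecker substitution), and confirming that the zero pattern of the factors lies in $\curlyDB\times\curlyBD$. Once that is in place, the lemma follows immediately from the definitions.
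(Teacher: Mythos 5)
Your proposal is correct and follows essentially the same route as the paper's proof: substitute the definition of $\bp'_{\idxsqrtn{j_1}{j_0}}$, use $\parens{\omega_N^{i_1\sqrt{N}+i_0}}^{\sqrt{N}} = \omega_{\sqrt{N}}^{i_0}$ to collapse the second factor onto the $i_0$ coordinate, and then match the two factors with the entries of $\oLM$ and $\RM$ prescribed by \eqref{eq:2}. No gaps.
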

\begin{proof}
    If we evaluate the polynomials $\bp'_{\idxsqrtn{j_1}{j_0}}\parens{Z}$ at $\omega_N^{i}$ for $0 \le i < N$, we get
    \[
    \bp_{\idxsqrtn{j_1}{j_0}}(\omega_N^{i}) = \ell_{j_1}(\omega_N^{i}) \cdot r_{j_1,j_0}(\omega_N^{{i}\sqrtN }).
    \]
    Since $\omega_N^{i\sqrt{N}} = \omega_N^{i_1 N + i_0\sqrt{N}} = \omega_N^{i_0  \sqrt{N}} = \omega_{\sqrt{N}}^{i_0}$, we get
    \[
    \MM'[\idxsqrtn{i_1}{i_0},\idxsqrtn{j_1}{j_0}]= \ell_{j_1}(\omega_N^{i_1\sqrt{N}+i_0}) \cdot r_{j_1,j_0}(\omega_{\sqrt{N}}^{i_0}).
    \]
    The above corresponds to how we define \eqref{eq:2} since we have \[
    \MM'[\idxsqrtn{i_1}{i_0},\idxsqrtn{j_1}{j_0}] = \oLM_{i_1,j_1}[i_0,i_0] \cdot \RM_{j_1,j_1}[i_0,j_0]
    \]
    with \[
    \oLM_{i_1,j_1}[i_0,i_0] = \ell_{j_1}\left(\omega_N^{i_1\sqrt{N}+i_0}\right)
    \]and\[
    \RM_{j_1,j_1}[i_0,j_0] = r_{j_1,j_0}\left( \omega_{\sqrt{N}}^{i_0}\right).
    \]
\end{proof}

Recall from Lemma \ref{lem:specialmap} that $\bp'_{\idxsqrtn{j_1}{j_0}}$ has minimum degree $j_0\sqrt{N} + j_1$. For causality we need $\bp_{\idxsqrtn{j_1}{j_0}}\parens{Z}$ to have degree minimum $j_1 \sqrt{N} + j_0$ (then the polynomials will satisfy the minimum degree requirements from (\ref{eq:general_b})). Therefore, we permute the columns of $\MM'$,
\begin{equation}
\label{eq:M_prime}
\MM = \MM' \cdot \PM
\end{equation}
Note that the above $\MM = \PM\LM\PM\RM\PM$ and is indeed a Monarch matrix as defined in \cref{sec:method}.

Note that the basis polynomials of $\MM$ are defined as,
\begin{equation}
\label{eq:defn:b_prime}
\bp_{\idxsqrtn{j_1}{j_0}}(Z) = \ell_{j_0}(Z) \cdot \tilde{r}_{j_0,j_1}\parens{Z^{\sqrt{N}}}.
\end{equation}
We note that the above is same as $\bp_j(Z)$ defined in \eqref{eq:uni-kronecker} where $j = j_1\sqrtN+j_0$ with the correction in \eqref{eq:uni-kronecker} that $\bp_j(Z) = \ell_{m(j)} \cdot r_j(Z^{\sqrtN})$ where $r_j(Y) = \tilde{r}_{j_0,j_1}(Y)$.

We are finally ready to prove \cref{thm:causal_univariate}.
\begin{corollary}[\cref{thm:causal_univariate} restated]
\label{cor:causal_map_u}
 Let $N = \ceil{\sqrt{2n}}^2$. Define $\MM_{N}$ by $\bp_{\idxsqrtn{j_1}{j_0}}$ as in (\ref{eq:defn:b_prime}). Then, \[ 
 \mathbf{u}\mapsto\left(\MM_{N}^{-1} ( \MM_{N} \cdot (\mathbf{k},\mathbf{0}_{N-n}) \circ \MM_{N} \cdot  (\mathbf{u},\mathbf{0}_{N-n}))\right)\left[0:n-1\right]
\] gives a causal map.

\end{corollary}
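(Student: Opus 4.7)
The plan is to derive Corollary~\ref{cor:causal_map_u} as an immediate consequence of Theorem~\ref{thrm:b_arbitrary}, by verifying that the Monarch basis polynomials $\bp_{\idxsqrtn{j_1}{j_0}}(Z)$ defined in \eqref{eq:defn:b_prime}, when re-indexed by the linearized index $j = j_1\sqrt{N} + j_0$, fit into the univariate framework of Theorem~\ref{thrm:b_arbitrary}. Concretely, I will show that the family $\{\bp_{\idxsqrtn{j_1}{j_0}}\}_{0 \le j_1,j_0 < \sqrt{N}}$, viewed as $\{\bar{\bp}_j\}_{0 \le j < N}$, satisfies the minimum and maximum degree constraints required there, so the causality conclusion transfers verbatim after padding $\mathbf{k}, \mathbf{u}$ with zeros.

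The key computational step is the minimum-degree verification, which is essentially Lemma~\ref{lem:specialmap} rerun on the \emph{permuted} definition in \eqref{eq:defn:b_prime}. Since $\ell_{j_0}(Z)$ has minimum degree $j_0$ and $\tilde{r}_{j_0,j_1}(Y)$ has minimum degree $j_1$, substituting $Y = Z^{\sqrt{N}}$ makes $\tilde{r}_{j_0,j_1}(Z^{\sqrt{N}})$ have minimum degree $j_1\sqrt{N}$. The product is therefore
\[
\bp_{\idxsqrtn{j_1}{j_0}}(Z) = \ell_{j_0}(Z)\,\tilde{r}_{j_0,j_1}(Z^{\sqrt{N}})
\]
with minimum degree $j_0 + j_1\sqrt{N} = j$, which matches the minimum degree required of $\bar{\bp}_j$ in Theorem~\ref{thrm:b_arbitrary}. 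I would emphasize that it is precisely the column permutation $\MM = \MM' \cdot \PM$ in \eqref{eq:M_prime} that swaps the roles of $j_0$ and $j_1$ so that the minimum degree aligns with $j_1 \sqrt{N} + j_0$ rather than with $j_0 \sqrt{N} + j_1$ (the unpermuted case treated in Lemma~\ref{lem:specialmap}). For the maximum degree, each $\ell_{j_0}$ and $\tilde{r}_{j_0,j_1}$ has individual degree less than $\sqrt{N}$, so $\bp_{\idxsqrtn{j_1}{j_0}}(Z)$ has maximum degree at most $(\sqrt{N}-1) + (\sqrt{N}-1)\sqrt{N} = N - 1$.

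Having verified the degree hypotheses, I would then invoke Theorem~\ref{thrm:b_arbitrary} directly: since $N = \lceil \sqrt{2n}\rceil^2 \ge 2n$, zero-padding $\mathbf{k}$ and $\mathbf{u}$ to length $N$ ensures that both $k(Z)$ and $u(Z)$ are expressed only using basis polynomials with index $j < n$, which is exactly the hypothesis needed for the proof of Theorem~\ref{thrm:b_arbitrary} to go through. The conclusion of Theorem~\ref{thrm:b_arbitrary} then says that the map in \eqref{eq:univar-causal} with $\MM_N$ instantiated by our $\bp_{\idxsqrtn{j_1}{j_0}}$ is causal in $\mathbf{u}$, which is precisely the statement of Corollary~\ref{cor:causal_map_u}.

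The main obstacle I expect is purely bookkeeping: correctly tracking the effect of the permutation $\PM$ on the indexing of basis polynomials. The Monarch construction naturally produces basis polynomials whose minimum degree is $j_0\sqrt{N} + j_1$ (by Lemma~\ref{lem:specialmap}), but Theorem~\ref{thrm:b_arbitrary} demands minimum degree $j$ under the linearization $j = j_1\sqrt{N} + j_0$. The permutation $\MM = \MM'\PM$ is exactly what reconciles these two indexings, and confirming this alignment is the one nontrivial check in an otherwise direct reduction.
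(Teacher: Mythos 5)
Your proposal is correct and follows essentially the same route as the paper: the paper's proof also reduces to Theorem~\ref{thrm:b_arbitrary} by noting (via Lemma~\ref{lem:specialmap} applied after the column permutation $\MM = \MM'\PM$) that $\bp_{\idxsqrtn{j_1}{j_0}}$ has minimum degree $j_1\sqrt{N}+j_0$ and maximum degree $\le N-1$. Your explicit recomputation of the minimum degree from \eqref{eq:defn:b_prime} is just an inlined version of the paper's citation of Lemma~\ref{lem:specialmap}, so there is no substantive difference.
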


\begin{proof}
    Due to using $\bp$ polynomials as in (\ref{eq:defn:b_prime}), by Lemma \ref{lem:specialmap} the degree of the $\idxsqrtn{j_1}{j_0}^{\text{th}}$ column is a polynomial with minimum degree $j_1 \sqrt{N} + j_0$.\footnote{It can also be verified that $\bp_{\idxsqrtn{j_1}{j_0}}$ has maximum degree $\le N-1$.} This implies that these basis polynomials are a subset of the more general causal maps in Theorem \ref{thrm:b_arbitrary}, which proves the claim.
\end{proof}

\subsection{Block Algorithms for Complex Numbers and Block Size $\sqrt{N}$}
\label{app:blocky-monarch-bivariate}
In the following subsections we restate the results in Section \ref{subsec:causal-monarch-conv} in terms of block operations. As mentioned earlier, this is so that we can do computations on Monarch matrices using only GEMM operations (and simple data movement operations like permutations). 

In \cref{subsec:general_size_block_monarch_convs} we consider arbitrary Monarch matrices and in \cref{subsubsec:causal_block_conv} we consider the sub-class of Monarch matrices corresponding to \cref{thm:causal_univariate}.

\subsubsection{General Block Monarch Convolution}
\label{subsec:general_size_block_monarch_convs}
In this subsection we re-state general Monarch convolutions in terms of block operations. 

Recall from equation (\ref{eqn:Ridxs}) we defined the block diagonal matrix $\RM$ as follows ($0 \le i_0,j_1,j_0 < \sqrt{N}$): 
\begin{equation}
\label{eq:indices_of_R_in_5.3.1}
\RM_{j_1,j_1}[i_0,j_0] \gets \tilde{r}_{j_1,j_0}\left(\omega_{\sqrt{N}}^{i_0}\right),
\end{equation}
where $\deg(\tilde{r}_{j_1,j_0}) < \sqrt{N}$. 

To do so, we will first work with $\MM'_N$ such that $\MM_N = \MM'_N \cdot \PM$. We want to express Monarch matrices, $\MM_N$, as univariate polynomial evaluation over $\{1, \omega_N , \dots ,\omega_N^{N-1} \}$. Towards that end, define \[
r_{j_1,j_0}(Z) = \tilde{r}_{j_1,j_0}\left(Z^{\sqrt{N}} \right).
\]
By simple observation that $ \omega_N^{\parens{i_1\sqrt{N}+i_0}\sqrt{N}} = \omega_N^{i_0\sqrt{N}} = \omega_{\sqrt{N}}^{i_0}$, we have\[
r_{j_1,j_0}(\omega_N^{i_1\sqrt{N}+i_0}) = \tilde{r}_{j_1,j_0}\left(\omega_{\sqrt{N}}^{i_0} \right).
\]

In other words we have, \[
\RM_{j_1,j_1}[i_0,j_0] = r_{j_1,j_0}\left(\omega_N^{i}\right).
\]

Fix $0 \le j_1,j_0 < \sqrt{N}$ so we're looking at the $j_0^{th}$ column of block $\RM_{j_1,j_1}$, going down the column we evaluate the polynomial $\tilde{r}_{j_1,j_0}$ at points $\left(1, \omega_{\sqrt{N}}, \dots , \omega_{\sqrt{N}}^{\sqrt{N}-1}\right)$. Which is equivalent to a matrix multiplication of Fourier matrix of size $\sqrt{N}$ and a matrix of the coefficients of the $\tilde{r}$ polynomials.

\includegraphics[scale= .6]{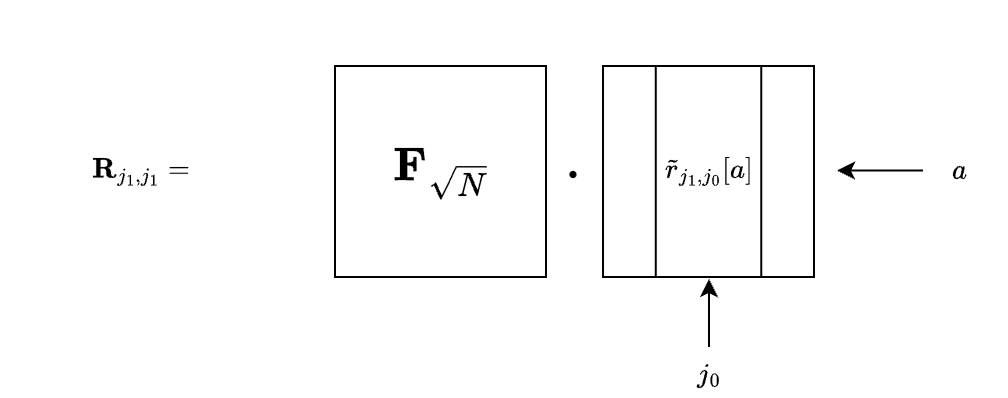}

So we can think of the blocks of $\RM$ as a Fourier matrix times a coefficient matrix. In other words, let us define a matrix $\tRM \in \mathbb{R}^{N \times \sqrt{N}}$ which will hold $\sqrt{N}$ coefficient blocks $\tRM_0,\tRM_1,\dots \tRM_{\sqrt{N}-1} \in \mathbb{R}^{\sqrt{N}\times \sqrt{N}}$ such that,
\[
\tRM_{j_1}[a,j_0] = \tilde{r}_{j_1,j_0}[a],
\] where \[
\tilde{r}_{j_1,j_0}(Y) = \sum_{a=0}^{\sqrt{N}-1} \tilde{r}_{j_1,j_0}[a] \cdot Y^a.
\]

Then we define\[
\RM_{j_1,j_1} = \FM_{\sqrt{N}} \cdot \tRM_{j_1}.
\]

Next, we restate the above as product of two block diagonal matrices:

\includegraphics[scale = .55]{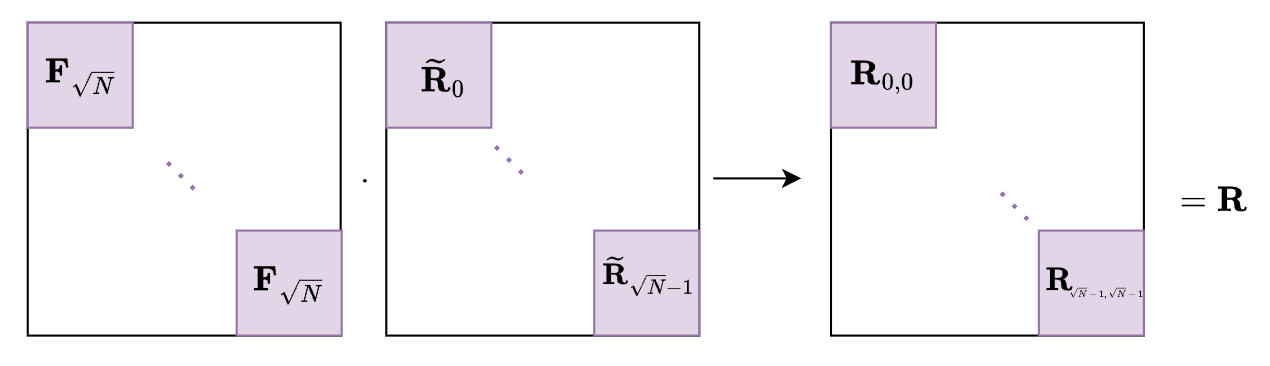}

In other words, we have\begin{equation}
\label{eq:blocky_R}
    \RM = \diag\underbrace{\parens{\FM_{\sqrt{N}},\dots , \FM_{\sqrt{N}}}}_{\text{$\sqrt{N}$ times}} \cdot \diag(\tRM_0, \dots ,\tRM_{\sqrt{N}-1}).
\end{equation}

Equation (\ref{eqn:Lidxs}) defined $\LM$ as evaluation of bivariate polynomials. We now wish to do the same with univariate polynomials. 

Recall that $\oLM = \PM\LM\PM$. We define the $i_0^{th}$ diagonal block ($0 \le i_1,i_0,j_1 < \sqrt{N}$) for $\LM$ as:
\begin{equation}
\label{eq:indices_of_L_in_5.3.1}
\LM_{i_0,i_0}[i_1,j_1] = \ell_{j_1}\left( \omega_N^{i_0\sqrt{N}+i_1} \right),
\end{equation}
where $\deg(\ell_{j_1}) < N$.

Let us define a matrix $\tLM \in \mathbb{R}^{N \times \sqrt{N}}$ that will hold the coefficients of polynomials $\ell_{j_1}(Z)$ i.e. \[
\tLM[a,j_1] = \tilde{\ell}_{j_1}[a]
\] 
where
\[
\ell_{j_1}(Z) = \sum_{a=0}^{N-1} \tilde{\ell}_{j_1}[a] \cdot Z^a.
\]

We can multiply this matrix with the Fourier matrix of size $N \times N$ to get the blocks of $\LM$ (which we will need to diagonalize). Specifically define \[
\LM'' = \FM_N \cdot \tLM.
\]

The rows of $\LM''$  and the rows of $\MM'$ are both indexed by $i$. Meaning they're ordered in lexicographic ordering $\idxsqrtn{i_1}{i_0}$, which is a problem for the following reason. The block diagonal matrix $\LM$ made from $\LM''$ needs to be in lexicographic ordering $\idxsqrtn{i_0}{i_1}$ (see \eqref{eq:indices_of_L_in_5.3.1}) since it gets permuted on the left $\MM = \PM\LM\PM$ and right allowing $\MM$ to be ordered by $\idxsqrtn{i_1}{i_0}$. Therefore, when composing $\LM$ from $\LM''$ we must permute the rows by $\PM$. So we get,
\[
\LM' = \PM\cdot \LM''.
\]

Let \begin{equation*}
      \LM' = \begin{bmatrix} \LM_0' \\ \vdots \\ \LM_{\sqrt{N} - 1}' \end{bmatrix}.
  \end{equation*}
Then\begin{equation}
\label{eq:blocky_L}
\LM = \diag(\LM_0',\dots \LM_{\sqrt{N}-1}').
\end{equation}Pictorially:

\includegraphics[scale=.5]{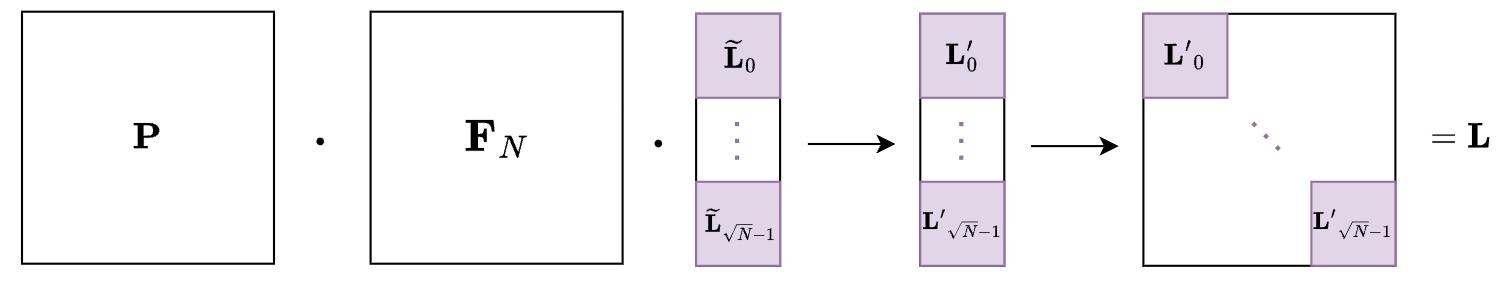}

Let $f$ be the function that maps coefficient matrices $\tLM,\tRM \in \mathbb{R}^{N \times\sqrt{N}}$ to $(\LM,\RM)$ where $\LM$ and $\RM$ are defined by (\ref{eq:blocky_L}) and (\ref{eq:blocky_R}).

\begin{theorem}
\label{thrm:bocky_monarch}
Let $f$ be as defined above. Then $f$ is a bijection.
\end{theorem}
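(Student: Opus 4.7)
The plan is to prove $f$ is a bijection by explicitly constructing its inverse, exploiting the fact that $f$ is built out of operations each of which is invertible: multiplication by the DFT matrices $\FM_{\sqrt{N}}$ and $\FM_N$, multiplication by the permutation $\PM$, and the ``stack vertically / place on diagonal'' reshape between $N \times \sqrt{N}$ matrices and block-diagonal $N \times N$ matrices with $\sqrt{N}\times\sqrt{N}$ blocks. Since each of these atomic pieces is invertible (DFT matrices over $\C$ are invertible, permutations satisfy $\PM^{-1}=\PM^\top$, and ``chop into $\sqrt{N}$ blocks and place on the diagonal'' is a linear isomorphism onto $\curlyBD^{(\sqrt{N},N)}$), their composition is a bijection.

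First I would write down $f^{-1}:(\LM,\RM)\mapsto(\tLM,\tRM)$ explicitly. On the $\RM$-side, since \eqref{eq:blocky_R} decomposes $\RM$ into a product of two block-diagonal matrices with the left factor being $\diag(\FM_{\sqrt{N}},\dots,\FM_{\sqrt{N}})$, the $j_1$th diagonal block satisfies $\RM_{j_1,j_1}=\FM_{\sqrt{N}}\tRM_{j_1}$, so I set $\tRM_{j_1}=\FM_{\sqrt{N}}^{-1}\RM_{j_1,j_1}$ and stack the $\tRM_{j_1}$ vertically to recover $\tRM\in\R^{N\times\sqrt{N}}$. On the $\LM$-side, I extract the diagonal blocks $\LM_0',\dots,\LM_{\sqrt{N}-1}'$ of $\LM$ (well defined since $\LM\in\curlyBD^{(\sqrt{N},N)}$), stack them to form $\LM'\in\R^{N\times\sqrt{N}}$, and set $\tLM=\FM_N^{-1}\PM^\top\LM'$, inverting in turn the permutation step $\LM'=\PM\LM''$ and the Fourier step $\LM''=\FM_N\tLM$.

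Next I would verify $f\circ f^{-1}=\mathrm{id}$ and $f^{-1}\circ f=\mathrm{id}$. Both are short substitution arguments: the $\RM$-coordinate uses only $\FM_{\sqrt{N}}\FM_{\sqrt{N}}^{-1}=\mathbf{I}_{\sqrt{N}}$ and the fact that ``stack blocks vertically'' and ``extract diagonal blocks'' are mutual inverses between $\R^{N\times\sqrt{N}}$ and the set of block-diagonal matrices of the prescribed shape; the $\LM$-coordinate uses in addition $\FM_N\FM_N^{-1}=\mathbf{I}_N$ and $\PM\PM^\top=\mathbf{I}_N$. One should also observe that the image of $f$ indeed lies in $\curlyBD^{(\sqrt{N},N)}\times\curlyBD^{(\sqrt{N},N)}$, which is immediate from the constructions \eqref{eq:blocky_L} and \eqref{eq:blocky_R}, and conversely that $f^{-1}$ is defined on all of that set.

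The main obstacle is purely bookkeeping around the permutation $\PM$: the rows of $\LM''=\FM_N\tLM$ are indexed in the order $\idxsqrtn{i_1}{i_0}$ while the diagonal blocks of $\LM$ are indexed by $i_0$ (cf.\ \eqref{eq:indices_of_L_in_5.3.1}), which is precisely why the construction inserts $\PM$ before chopping into blocks. Getting this reshape/permutation composition in the correct order in both $f$ and $f^{-1}$ is the only delicate point; once it is pinned down, bijectivity follows from the invertibility of each atomic factor.
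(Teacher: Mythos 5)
Your proposal is correct, and it takes a slightly different route from the paper's own proof. The paper establishes invertibility by appealing to uniqueness of polynomial interpolation: for each block of $\RM$ it invokes Theorem~\ref{thrm:univar} to produce the unique degree-$<\sqrt{N}$ polynomials $\tilde{r}_{j_1,j_0}$ whose evaluations at $\omega_{\sqrt{N}}^{i_0}$ reproduce the block entries, and similarly produces the unique degree-$<N$ polynomials $\ell_{j_1}$ from the blocks of $\LM$; the coefficients of these polynomials are then the entries of $\tRM$ and $\tLM$. You instead invert the matrix factorization directly, writing $\tRM_{j_1}=\FM_{\sqrt{N}}^{-1}\RM_{j_1,j_1}$ and $\tLM=\FM_N^{-1}\PM^\top\LM'$ and checking the two compositions are the identity. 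The two arguments are mathematically the same fact in different clothing --- invertibility of the DFT matrix at distinct evaluation points \emph{is} existence and uniqueness of the interpolant --- but your version is the more natural one given that $f$ is defined in this subsection purely through GEMMs and reshapes, and it makes the injectivity/surjectivity bookkeeping completely mechanical. The paper's phrasing keeps the polynomial-evaluation viewpoint front and center, which is what the surrounding appendix needs for the later causality arguments. Your identification of the $\PM$-reordering between the row index order $\idxsqrtn{i_1}{i_0}$ of $\FM_N\tLM$ and the block index $i_0$ of $\LM$ as the only delicate point is accurate. One shared imprecision (present in the paper's proof as well): since $\tLM,\tRM$ are real while the DFT matrices are complex, the codomain should be understood as the image of $f$ inside the complex block-diagonal matrices rather than all of $\curlyBD^{(\sqrt{N},N)}\times\curlyBD^{(\sqrt{N},N)}$; your explicit-inverse formulation actually makes it easier to state this precisely if one wished to.
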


\begin{proof}
    To prove $f$ is a bijection we must show $f$ is one-to-one and $f^{-1}$ is one-to-one (and exists).
    
    To show $f$ is one-to-one means both $\tLM$ and $\tRM$ given to $f$ will output a unique pair $(\LM,\RM)$. This follows from (\ref{eq:blocky_L}) and (\ref{eq:blocky_R}), and the fact that polynomial evaluation is a function.

    Now to show $f^{-1}$ exists and is one-to-one, we must show that there's a map for any $(\LM,\RM)$ to unique $\tLM,\tRM \in \mathbb{R}^{N \times\sqrt{N}}$. Then for any pair $(\LM,\RM)$ where both $\LM$ and $\RM$ are block diagonal matrices, there's a map to unique sets of polynomials, $\ell,\tilde{r}$ where each coefficient is an entry of $\tLM,\tRM$ (thus giving a unique mapping from $(\LM,\RM)$'s to $(\tLM,\tRM)$'s).

    We need to show the existence of $\ell_{j_1}(Z)$ and $\tilde{r}_{j_1,j_0}(Y)$ such that: \[
        \LM_{i_0,i_0}[i_1,j_1] = \ell_{j_1}(\omega_N^{i_0\sqrt{N}+i_1})
    \]and\[
        \RM_{j_1,j_1}[i_0,j_0] = \tilde{r}_{j_1,j_0}(\omega_{\sqrt{N}}^{i_0}).
    \]

    Fix $0 \le j_1,j_0 < \sqrt{N}$. Then consider the values $0\le i_0 < \sqrt{N}$:\[
    y_{i_0} \gets \RM_{j_1,j_1}[i_0,j_0].
    \]
    Then by Theorem \ref{thrm:univar} there exists a unique polynomial of degree $< \sqrt{N}$ (call it $\tilde{r}_{j_1,j_0}$) such that for all $0 \le i_0 < \sqrt{N}$:
    \[
    \tilde{r}_{j_1,j_0}(\omega_{\sqrt{N}}^{i_0}) = y_{i_0}.
    \]
    There will be $N$ polynomials with $\sqrt{N}$ coefficients each, meaning there's unique sets of coefficients to make up the indices of $\tRM$.

    Now to get the entries of $\tLM$ from $\LM$, fix $0 \le j_1 < \sqrt{N}$. Then consider the values $0 \le i_1, i_0 < \sqrt{N}$:\[
    y_{i_1,i_0} \gets \LM_{i_0,i_0}[i_1,j_1].
    \]
    Then by Theorem \ref{thrm:univar} there exists a unique polynomial of degree $< N$ (call it $\ell_{j_1}$) such that for all $0 \le i_1,i_0 < \sqrt{N}$:\[
    \ell_{j_1}(\omega_N^{i_0\sqrt{N}+i_1}) = y_{i_1,i_0}.
    \]There will be $\sqrt{N}$ polynomials with $N$ coefficients each, meaning there's unique sets of coefficients to make up the indices of $\tLM$.
\end{proof}

\cref{alg:blocky_L_R} is pseudo code for the map from $\tLM,\tRM \in \mathbb{R}^{N \times\sqrt{N}}$ to block diagonal matrices $\LM,\RM$.
\begin{algorithm}[H]
\caption{$\BlockyMonarch(\tLM,\tRM)$}
\label{alg:blocky_L_R}
\begin{algorithmic}[1]
\Require{$\tLM,\tRM \in \mathbb{R}^{N \times \sqrt{N}}$}
\Ensure{Block diagonal matrices $\LM, \RM \in \mathbb{C}^{N \times N}$}
    \Statex
    \Comment First, compute $\LM$ from $\tLM$
    \State Let $\FM_N$ be the Fourier transform Monarch matrix $\PM \LM_F \PM \RM_F \PM$ \label{line:make_MM_FT}
    \Comment{See \cref{cor:DFT_MM}}
    \State $\LM' \gets \PM \cdot \FM_N \cdot \tLM$ \label{line:A_times_tLM}
    \For{{$a \gets 0$} to $\sqrt{N}-1$} \label{line:begin_L_loop}
        \State $\LM_{a}' \gets \LM'[a\sqrt{N}:a\sqrt{N}+\sqrt{N}-1, :]$ \label{line:slice_blocks_for_L}
    \EndFor
    \State $\LM \gets \diag(\LM_0', \dots \LM_{\sqrt{N}-1}')$ \label{line:diag_L}
    \Statex
    \Comment{Now compute $\RM$ from $\tRM$}
    \For{{$a \gets 0$} to $\sqrt{N}-1$} \label{line:begin_R_loop}
        \State $\RM_a \gets \FM_{\sqrt{N}} \cdot \tRM[a\sqrt{N}:a\sqrt{N}+\sqrt{N}-1, :]$ \label{line:R_FLOPs}
    \EndFor
    \State $\RM \gets \diag\parens{\RM_0, \dots , \RM_{\sqrt{N}-1}}$ \label{line:diag_R}
    \State \Return{$\oLM, \RM$} \label{line:return}
\end{algorithmic}
\end{algorithm}
\begin{lemma}
    \cref{alg:blocky_L_R} uses $O(N^{3/2})$ FLOPs and $3\sqrtN$ GEMMs of two $\sqrtN \times \sqrtN$ matrices.
\end{lemma}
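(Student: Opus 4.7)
The plan is to audit \cref{alg:blocky_L_R} line by line, bucketing each operation as either (i) a GEMM of two $\sqrt{N} \times \sqrt{N}$ matrices, (ii) a permutation or reshape that only moves $O(N^{3/2})$ entries, or (iii) a structural $\diag$-construction that performs no arithmetic. The two bounds---$3\sqrt{N}$ GEMMs and $O(N^{3/2})$ auxiliary (non-GEMM) FLOPs---then follow by summing each bucket.

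For Lines 1--6 (constructing $\LM$), essentially all the work is in Line 2, where $\LM' \leftarrow \PM \cdot \FM_N \cdot \tLM$. I would expand $\FM_N = \PM \LM_F \PM \RM_F \PM$ using the Monarch factorisation named in Line 1 (Line 1 itself invokes no arithmetic; it only names constants). Since $\LM_F$ and $\RM_F$ are block-diagonal with $\sqrt{N}$ blocks of size $\sqrt{N} \times \sqrt{N}$ and $\tLM$ has exactly $\sqrt{N}$ columns, each block-diagonal multiplication against the running $N \times \sqrt{N}$ matrix decomposes into $\sqrt{N}$ independent GEMMs of two $\sqrt{N} \times \sqrt{N}$ matrices---one per diagonal block, acting on its own $\sqrt{N} \times \sqrt{N}$ row-stripe of the running matrix. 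The two block-diagonal factors therefore contribute $2\sqrt{N}$ GEMMs in total. The interleaved $\PM$'s only shuffle the $N \times \sqrt{N}$ running matrix, costing $O(N^{3/2})$ data movement. Lines 3--6 (slicing out the $\sqrt{N}$ row-stripes of $\LM'$ and wrapping them with $\diag$) are pure bookkeeping and perform no arithmetic.

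For Lines 7--10 (constructing $\RM$), the for-loop issues exactly one GEMM per iteration: $\FM_{\sqrt{N}} \cdot \tRM[a\sqrt{N}:a\sqrt{N}+\sqrt{N}-1,:]$ is a product of two $\sqrt{N} \times \sqrt{N}$ matrices, contributing $\sqrt{N}$ GEMMs in total. Line 10's $\diag$ is again free. Summing both halves gives $2\sqrt{N} + \sqrt{N} = 3\sqrt{N}$ GEMMs of the claimed size, and the only non-GEMM arithmetic or data movement is the $O(N^{3/2})$ from the permutations in Line 2.

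The main subtlety I expect is justifying that the block-diagonal action in Line 2 does \emph{not} further split along the $\sqrt{N}$ columns of $\tLM$---otherwise the GEMM count would blow up to $N$ rather than $\sqrt{N}$ per block-diagonal factor. This follows from the fact that applying a block-diagonal matrix to an $N \times \sqrt{N}$ object acts independently on each of the $\sqrt{N}$ row-stripes but uniformly across all $\sqrt{N}$ columns, so the natural implementation fuses the columnwise work into a single GEMM of two $\sqrt{N} \times \sqrt{N}$ matrices per block. An analogous observation handles Lines 7--9.
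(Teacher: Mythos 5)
Your proposal is correct and follows essentially the same route as the paper's proof: a line-by-line audit in which Line~2 is expanded via the Monarch factorization of $\FM_N$ into $2\sqrt{N}$ block GEMMs, the loop in Lines~7--9 contributes the remaining $\sqrt{N}$, and the slicing/$\diag$ steps are free — you are in fact more explicit than the paper about where the figure $3\sqrt{N}$ comes from. The one divergence is in reading the FLOP bound: you assign the $O(N^{3/2})$ to the non-GEMM permutation/data-movement work, whereas the paper's proof attributes $O(N^{3/2})$ FLOPs to the GEMM computations themselves, which taken literally is $\Theta(N^2)$ (each of the $3\sqrt{N}$ dense $\sqrt{N}\times\sqrt{N}$ GEMMs costs $N^{3/2}$ FLOPs) — your reading is the one that makes the lemma internally consistent.
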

\begin{proof}
    Lines ~\ref{line:make_MM_FT} and ~\ref{line:A_times_tLM} are multiplying a monarch matrix representing the Fourier transform times a $N \times \sqrtN$ matrix in block fashion again, giving $O(N^{3/2})$ FLOPs.

    Similarly, lines ~\ref{line:begin_R_loop} and ~\ref{line:R_FLOPs} have two $\sqrtN$ matrices multiplied $\sqrtN$ times. Giving $O(N^{3/2})$ FLOPs.
    
    Lines ~\ref{line:begin_L_loop}, ~\ref{line:slice_blocks_for_L} , ~\ref{line:diag_L}, ~\ref{line:diag_R}, and ~\ref{line:return} don't count towards FLOPs
    
    Therefore we have $O(N^{3/2})$ FLOPS and $3\sqrtN$ GEMMs of two $\sqrtN \times \sqrtN$ matrices.
\end{proof}

Now that we have the operations in terms of blocks, we will make them causal in the following sub-section.

\subsubsection{Causal Block Monarch Convolution}
\label{subsubsec:causal_block_conv}

Recall that a Monarch matrix is defined as \[
\MM = \PM\LM\PM\RM\PM,
\]

then per equation \eqref{eq:M_prime}
we have
\begin{equation} 
\MM' = \PM\LM\PM\RM.
\end{equation}

Then if $\bp_{\idxsqrtn{j_1}{j_0}}(Z)$ is the basis polynomial corresponding to the $\idxsqrtn{j_1}{j_0}^{th}$ column of $\MM$, by \eqref{eq:defn:b_prime} the basis polynomial corresponding to  $\MM$ is
\begin{equation}
\label{eq:b_prime}
\bp_{\idxsqrtn{j_1}{j_0}} (Z) = \ell_{j_0}(Z) \cdot \tilde{r}_{j_0,j_1}(Z^{\sqrt{N}}) 
\end{equation}
with the minimum degree of $j_1\sqrt{N}+j_0 $ (recall that we pick $\ell_{j_0}$ and $\tilde{r}_{j_0,j_1}$ to have minimum degree $j_0$ and $j_1$ respectively) as desired.

\begin{theorem}
\label{THRM:blocky_monarch_return_val}
     Let $\LM , \RM \gets \BlockyMonarch(\tLM,\tRM)$. Let $\MM$ be as in \eqref{eq:M_prime}. Then for every $0 \le i$, $j < N$, we have:
     \[
\MM[\idxsqrtn{i_1}{i_0},\idxsqrtn{j_1}{j_0}] = \bp_{\idxsqrtn{j_1}{j_0}}\parens{\omega_N^{i_1\sqrt{N}+i_0}},
\]
where $\bp_{\idxsqrtn{j_1}{j_0}}$ is as in (\ref{eq:b_prime}).
\end{theorem}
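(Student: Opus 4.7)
The plan is to track $\MM = \PM\LM\PM\RM\PM = \MM'\PM$ (with $\MM' = \oLM\RM$ and $\oLM = \PM\LM\PM$) entry by entry through \cref{alg:blocky_L_R}. First I would invoke \cref{thrm:bocky_monarch} to identify the output $(\LM,\RM)$ of $\BlockyMonarch(\tLM,\tRM)$ with the polynomial collections $\ell_c(Z) = \sum_{a=0}^{N-1} \tLM[a,c]\,Z^a$ for $0\le c<\sqrtN$ and $\tilde r_{k,b}(Y) = \sum_{a=0}^{\sqrtN-1} \tRM[k\sqrtN + a,b]\,Y^a$ for $0\le k,b<\sqrtN$. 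Unpacking lines~\ref{line:A_times_tLM}--\ref{line:diag_R} using $(\FM_N \tLM)[r,c] = \ell_c(\omega_N^r)$, the row-action $(\PM A)[r,c] = A[\sigma(r),c]$ for the base-$\sqrtN$ digit-reversal $\sigma(i_1\sqrtN + i_0) = i_0\sqrtN + i_1$, and the analogous identity for the Fourier multiplications building the blocks of $\RM$, then yields the explicit block entries $\LM_{a,a}[a',c] = \ell_{c}(\omega_N^{a'\sqrtN + a})$ and $\RM_{k,k}[i_0,j_0] = \tilde r_{k,j_0}(\omega_{\sqrtN}^{i_0})$.

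Second, I would compute $\oLM = \PM\LM\PM$ using $(\PM A \PM)[R,C] = A[\sigma(R),\sigma(C)]$ together with $\sigma^2 = \mathrm{id}$. The double $\PM$ conjugation interacts with the internal digit-reversal baked into $\LM$ by line~\ref{line:A_times_tLM} so that the exponent of $\omega_N$ simplifies from $\sigma(\text{raw row})$ back to the raw row index, yielding $\oLM \in \curlyDB^{(\sqrtN,N)}$ with $\oLM_{i_1,j_1}[i_0,i_0] = \ell_{j_1}(\omega_N^{i_1\sqrtN + i_0})$ on each within-block diagonal and zero off-diagonal. Now $\oLM$ and $\RM$ are in the form of \eqref{eq:2}, and the matrix product $\MM' = \oLM\RM$ collapses, via the surviving delta from diagonality of each block of $\oLM$ combined with block-diagonality of $\RM$, to
\[
\MM'[\idxsqrtn{i_1}{i_0}, \idxsqrtn{j_1}{j_0}] = \ell_{j_1}(\omega_N^{i_1\sqrtN + i_0}) \cdot \tilde r_{j_1,j_0}(\omega_{\sqrtN}^{i_0}).
\]

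Finally, applying the rightmost $\PM$ in $\MM = \MM'\PM$ swaps the column pair via $\sigma$, giving $\MM[\idxsqrtn{i_1}{i_0}, \idxsqrtn{j_1}{j_0}] = \MM'[\idxsqrtn{i_1}{i_0}, \idxsqrtn{j_0}{j_1}] = \ell_{j_0}(\omega_N^{i_1\sqrtN + i_0}) \cdot \tilde r_{j_0,j_1}(\omega_{\sqrtN}^{i_0})$. Using $(\omega_N^{i_1\sqrtN + i_0})^{\sqrtN} = \omega_N^{i_0\sqrtN} = \omega_{\sqrtN}^{i_0}$ (which holds because $\omega_N^N = 1$), the right-hand side becomes $\ell_{j_0}(Z) \cdot \tilde r_{j_0,j_1}(Z^{\sqrtN})$ evaluated at $Z = \omega_N^{i_1\sqrtN + i_0}$, which is precisely $\bp_{\idxsqrtn{j_1}{j_0}}(\omega_N^{i_1\sqrtN + i_0})$ by \eqref{eq:b_prime}. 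The main obstacle is the index bookkeeping across the three $\PM$ factors in the definition of $\MM$ together with the internal $\PM$ baked into $\LM$ by the algorithm: one must keep straight which of the two coordinates of each row/column pair plays the role of block versus within-block versus polynomial subscript at each stage, and confirm that the $\sigma^2 = \mathrm{id}$ cancellation in $\oLM$ is precisely what makes the univariate Kronecker substitution $Y \gets Z^{\sqrtN}$ of \eqref{eq:uni-kronecker} compatible with the discrete evaluation points $\{\omega_N^i\}_{i=0}^{N-1}$.
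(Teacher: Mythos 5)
Your proposal is correct and follows essentially the same route as the paper's proof: identify the block entries of $\LM$ and $\RM$ with evaluations of $\ell_{j_1}$ and $\tilde r_{j_1,j_0}$, use the diagonal-block/block-diagonal structure to collapse $\MM'=\oLM\RM$ into the product form of \eqref{eq:2}, apply the final column permutation $\MM=\MM'\PM$ to swap $(j_1,j_0)\mapsto(j_0,j_1)$, and invoke $\parens{\omega_N^{i}}^{\sqrt{N}}=\omega_{\sqrt{N}}^{i_0}$ to recognize $\bp_{\idxsqrtn{j_1}{j_0}}$. The only difference is that you re-derive the block entries directly from the lines of \cref{alg:blocky_L_R}, whereas the paper cites \eqref{eq:indices_of_L_in_5.3.1} and \eqref{eq:indices_of_R_in_5.3.1} as given; your explicit tracking of the permutations is consistent with the claimed $\oLM_{i_1,j_1}[i_0,i_0]=\ell_{j_1}\parens{\omega_N^{i_1\sqrt{N}+i_0}}$, so the argument goes through.
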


\begin{proof}
    To prove this we need to show the ${\idxsqrtn{j_1}{j_0}}^{th}$ column of $\MM$ is the basis polynomial $\bp_{\idxsqrtn{j_1}{j_0}}$  evaluated at the $N^{th}$ roots of unity, where $\bp_{\idxsqrtn{j_1}{j_0}}$ is as defined in (\ref{eq:b_prime}). Note we have 
    \begin{align}
    \label{eq:b_j-evaluated}
    \bp_{\idxsqrtn{j_1}{j_0}}(\omega_N^{i_1\sqrt{N}+i_0}) &= \ell_{j_0}\parens{\omega_N^{i_1\sqrt{N}+i_0}} \cdot \tilde{r}_{j_0,j_1}\parens{\omega_N^{(i_1\sqrt{N}+i_0)\sqrt{N}}} \nonumber 
    \\
    &= \ell_{j_0}(\omega_N^{i_1\sqrt{N}+i_0}) \cdot \tilde{r}_{j_0,j_1}\parens{\omega_{\sqrt{N}}^{i_0}}.
    \end{align}

    By definition we have,\[
    \MM'[\idxsqrtn{i_1}{i_0},\idxsqrtn{j_1}{j_0}] = \oLM_{i_1,j_1}[i_0,i_0] \cdot \RM_{j_1,j_1}[i_0,j_0].
    \]
    By (\ref{eq:indices_of_R_in_5.3.1}) we have 
    \[
    \RM_{j_1,j_1}[i_0,j_0] = \tilde{r}_{j_1,j_0}\parens{\omega_{\sqrt{N}}^{i_0}}
    \]
    and by (\ref{eq:indices_of_L_in_5.3.1}) and the fact that
    \[
    \oLM = \PM\LM\PM,
    \]
    we have
    \[
    \oLM_{i_1,j_1}[i_0,i_0] = \ell_{j_1}\parens{\omega_N^{i_1\sqrt{N}+i_0}}.
    \]
    Thus, we have \[
    \MM'[\idxsqrtn{i_1}{i_0},\idxsqrtn{j_1}{j_0}] = \ell_{j_1}\parens{\omega_N^{i_1\sqrt{N}+i_0}} \cdot \tilde{r}_{j_1,j_0}\parens{\omega_{\sqrt{N}}^{i_0}}.
    \]
    Since \[
    \MM \cdot \PM = \MM',
    \]
    we have
    \begin{align*}
    \MM[\idxsqrtn{i_1}{i_0},\idxsqrtn{j_1}{j_0}] &= \MM'[\idxsqrtn{i_1}{i_0},\idxsqrtn{j_0}{j_1}] 
    \\
    &= \ell_{j_0}\parens{\omega_N^{i_1\sqrt{N}+i_0}} \cdot r_{j_0,j_1}\parens{\omega_{\sqrt{N}}^{i_0}} = \bp_{\idxsqrtn{j_1}{j_0}}(\omega_N^{i_1\sqrt{N}+i_0}), 
    \end{align*}
    where the last equality follows from (\ref{eq:b_j-evaluated}).

\end{proof}

\begin{corollary}
\label{cor:DFT_MM}
    The DFT is $\MM$ as in Theorem \ref{THRM:blocky_monarch_return_val} when all blocks of $\tRM$ and the top block of $\tLM$ are the identity matrix of size $\sqrt{N} \times \sqrt{N}$ (the rest of $\tLM$ is all $0$'s).
\end{corollary}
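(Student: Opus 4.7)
The plan is to directly compute the basis polynomials $\bp_{\idxsqrtn{j_1}{j_0}}(Z)$ induced by the specified $(\tLM,\tRM)$ via the correspondence in the statement of Theorem~\ref{THRM:blocky_monarch_return_val}, and then evaluate these basis polynomials at the $N$th roots of unity to recognize the DFT entries. Concretely, recall from the construction preceding \cref{alg:blocky_L_R} that $\tRM_{j_1}[a,j_0]=\tilde{r}_{j_1,j_0}[a]$ and $\tLM[a,j_1]=\tilde{\ell}_{j_1}[a]$. So I would first translate the hypothesis ``all blocks of $\tRM$ are $\mathbf{I}_{\sqrt{N}}$ and the top $\sqrt{N}\times\sqrt{N}$ block of $\tLM$ is $\mathbf{I}_{\sqrt{N}}$ with the rest zero'' into the polynomial identifications
\[
\tilde{r}_{j_1,j_0}(Y)=Y^{j_0}, \qquad \ell_{j_1}(Z)=Z^{j_1},
\]
valid for all $0\le j_1,j_0<\sqrt{N}$.

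Next, I would plug these into the formula $\bp_{\idxsqrtn{j_1}{j_0}}(Z)=\ell_{j_0}(Z)\cdot\tilde{r}_{j_0,j_1}\!\left(Z^{\sqrt{N}}\right)$ from~\eqref{eq:b_prime}/\eqref{eq:defn:b_prime}, which gives $\bp_{\idxsqrtn{j_1}{j_0}}(Z)=Z^{j_0}\cdot Z^{j_1\sqrt{N}}=Z^{j_1\sqrt{N}+j_0}$. Invoking Theorem~\ref{THRM:blocky_monarch_return_val}, I would then conclude
\[
\MM[\idxsqrtn{i_1}{i_0},\idxsqrtn{j_1}{j_0}] \;=\; \bp_{\idxsqrtn{j_1}{j_0}}\!\left(\omega_N^{i_1\sqrt{N}+i_0}\right) \;=\; \omega_N^{(i_1\sqrt{N}+i_0)(j_1\sqrt{N}+j_0)} \;=\; \omega_N^{ij},
\]
which is precisely the $(i,j)$ entry of the (unnormalized) DFT matrix, as desired.

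There is essentially no technical obstacle here — the work is just a careful bookkeeping of the two index swaps (the transpose $j_1\leftrightarrow j_0$ built into~\eqref{eq:b_prime}, and the Kronecker substitution $Y=Z^{\sqrt{N}}$). The only place one might slip up is mis-identifying which block of $\tLM$ must be the identity: since $\ell_{j_1}$ is intended to have degree $j_1<\sqrt{N}$ (after restriction) and $\tLM[a,j_1]$ is its degree-$a$ coefficient, it is exactly the top $\sqrt{N}\times\sqrt{N}$ block that needs to be $\mathbf{I}_{\sqrt{N}}$ (so that only the $Z^{j_1}$ monomial survives), with all higher-degree coefficients zero — matching the hypothesis. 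Once this bookkeeping is pinned down, the proof is a two-line computation.
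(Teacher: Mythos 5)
Your proposal is correct and follows essentially the same route as the paper's proof: translate the identity blocks of $\tRM$ and the top identity block of $\tLM$ into $\tilde{r}_{j_1,j_0}(Y)=Y^{j_0}$ and $\ell_{j_1}(Z)=Z^{j_1}$, apply the column permutation built into \eqref{eq:defn:b_prime} to get $\bp_{\idxsqrtn{j_1}{j_0}}(Z)=Z^{j_1\sqrt{N}+j_0}$, and evaluate at the $N$th roots of unity via Theorem~\ref{THRM:blocky_monarch_return_val} to recover $\omega_N^{ij}$. No gaps.
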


\begin{proof}
    Since only the diagonal of the top block of $\tLM$ will contain any non-zero values we only index the first $0,\dots, \sqrt{N}-1$ rows. And since all blocks of $\tRM$ are the identity matrix we get \[
    \tLM[j_1,j_1] = \tilde\ell_{j_1}[j_1] = 1
    \]
    and \[
    \tRM_{j_1}[j_0,j_0] = \tilde{r}_{j_1,j_0}[j_0] = 1.
    \] All other entries of $\tLM$ and $\tRM$ are 0. Thus, we have \[
    \ell_{j_1}(Z) = Z^{j_1}
    \]
    and \[
    \tilde{r}_{j_1,j_0}(Z) = Z^{j_0}.
    \]
    As per Theorem \ref{THRM:blocky_monarch_return_val},
    \begin{align*}
    \bp_{\idxsqrtn{j_1}{j_0}}(Z) &= \ell_{j_0}(Z) \cdot \tilde{r}_{j_0,j_1}\parens{Z^{\sqrt{N}}}
    \\
    &= Z^{j_0} \cdot Z^{j_1\sqrt{N}} = Z^{j_0+j_1\sqrt{N}} = Z^j.
    \end{align*}
    Then by Theorem \ref{THRM:blocky_monarch_return_val} note
    \[
    \bp_{\idxsqrtn{j_1}{j_0}}\parens{\omega_N^{i}} = \omega_N^{{i}{j}} = \MM[\idxsqrtn{i_1}{i_0},\idxsqrtn{j_1}{j_0}],
    \] which implies $\MM$ is $\FM_N$ as desired.
    
\end{proof}

Algorithm \ref{alg:blocky_monarch_conv} is pseudo code for the Monarch convolution algorithm. It maps an input space of $\tLM , \tRM$ matrices, a kernel vector $\mathbf{k}$, and input vector $\mathbf{u}$ to a vector  $\mathbf{f}$.

\begin{algorithm}[H]
\caption{$\BlockyMonarchConv(\tLM,\tRM,\mathbf{k},\mathbf{u})$}
\label{alg:blocky_monarch_conv}
\begin{algorithmic}[1]
\Require{$\tLM,\tRM \in \mathbb{R}^{N \times \sqrt{N}}, \  \vk,\vu \in \mathbb{R}^{N}$}
\Ensure{ $\mathbf{f} \in \mathbb{R}^{N}$}
    \State $\LM,\RM \gets \textsc{Blocky Monarch}(\tLM,\tRM)$
    \State $\MM \gets \PM\LM\PM\RM\PM$
    \Comment Get $\MM'$ from \textsc{Blocky Monarch}
    \Statex  
     \Comment Compute $\mathbf{k}_f, \mathbf{u}_f$ from $\MM$
 \State $\mathbf{k}_f \gets \MM \cdot \mathbf{k}$
 \State $\mathbf{u}_f \gets \MM \cdot \mathbf{u}$
    \Statex
    \State $\mathbf{f} \gets  \MM^{-1} \cdot (\mathbf{k}_f \odot \mathbf{u}_f)$
    \Comment{ Compute f}
    \State \Return{$\mathbf{f}$}
\end{algorithmic}
\end{algorithm}

We next outline how to make the map in Algorithm \ref{alg:blocky_monarch_conv} causal. Towards that end, we observe:
\begin{lemma}
\label{lmm:masking-basis-degree}
  For any fixed \( n \geq 1 \), let \( N = \left\lceil \sqrt{2n} \right\rceil^2 \). Define \(  \tLM,   \tRM \in \mathbb{R}^{N \times \sqrt N} \) such that for every \( 0 \leq j_1 < \sqrt N \),  

  \begin{equation*}
      \tLM = \begin{bmatrix} \tLM_0 \\ \vdots \\ \tLM_{\sqrt N - 1} \end{bmatrix} \ \ \text{ and } \tRM = \begin{bmatrix} \tRM_0 \\ \vdots \\ \tRM_{\sqrt N - 1}\end{bmatrix}.
  \end{equation*}

    Define \( \tLM', \tRM' \in \mathbb{R}^{N \times \sqrt N}\) with corresponding coefficient blocks \( \left( \tLM'_{k} \right)_{0 \leq k < \sqrt N }, \ \left( \tRM'_{k} \right)_{0 \leq k < \sqrt N } \) as

    \begin{equation*}
      \tLM' = \begin{bmatrix} \tLM'_0  \\ \mathbf{0}_{\sqrt N \times \sqrt N}  \\ \vdots \\ \mathbf{0}_{\sqrt N \times \sqrt N} \end{bmatrix} \ \ \text{ and } \tRM' = \begin{bmatrix} \tRM'_0 \\ \tRM'_1 \\ \vdots \\ \tRM'_{\sqrt N - 1}\end{bmatrix}
  \end{equation*}

  where

    \begin{equation*}
        \tLM'_0[\idxsqrtn{i_1}{i_0},j_1] = \begin{cases} \tLM_{0}[i_0,j_1]  \ \ \text{if } {i_1 = 0} \text{ and } {i_0} \geq {j_1} \\ 0  \ \ \text{ otherwise}\end{cases}, \ \ \ \ \ \tLM'_{k} =  \mathbf{0}_{\sqrt N \times \sqrt N} \ \  \text{ for } k = 1 , \cdots, \sqrt N - 1
    \end{equation*}

    \begin{equation*}
        \tRM'_{j_1}[i_0,j_0] = \begin{cases} 0  \ \ \text{ if } {i_0} < j_0 \text{ or } ((i_0 \geq  \left\lfloor \frac{\sqrt N}{2} \right\rfloor) \ \text{ and }  \ (j_0 < \left\lfloor \frac{\sqrt N}{2} \right\rfloor)) \\ \tRM_{j_1}[i_0,j_0]  \ \ \text{ otherwise }\end{cases} \text{ for } j_1 = 0, \dots, \sqrt N - 1.
    \end{equation*}
    
    Further, we require that $\tRM'_{j_1}[j_0,j_0], \ \tLM'_{0}[j_1,j_1]$ are all non-zero entries for all $ 0 \leq  j_0, j_1 < \sqrt N$. Then for $j_0\sqrt{N}+j_1 < \sqrt{N}\left\lfloor \frac{\sqrt{N}}{2} \right\rfloor$, the basis polynomial \( \bp'_{\idxsqrtn{j_1}{j_0}}(Z) = \ell'_{j_1}(Z) r'_{j_1,j_0}\left(Z^{\sqrt N}\right)\) of $\MM'=\PM\LM\PM\RM$ has minimum degree $j_0\sqrt{N}+j_1$ and maximum degree $\le \frac{N}{2}-1$, and for $  \sqrt{N}\left\lfloor\frac{\sqrt{N}}{2} \right\rfloor \leq {j_0\sqrt{N}+j_1} < N$, the basis polynomial \( \bp'_{\idxsqrtn{j_1}{j_0}}(Z)\) has minimum degree $j_0\sqrt{N}+j_1$ and maximum degree $\le N - 1$.  
\end{lemma}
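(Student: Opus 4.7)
The plan is to reduce the lemma to explicit statements about the minimum and maximum degrees of the univariate polynomials $\ell'_{j_1}(Z)$ and $\tilde r'_{j_1,j_0}(Y)$ induced by $\tLM'$ and $\tRM'$, and then combine them through the factorization $\bp'_{\idxsqrtn{j_1}{j_0}}(Z) = \ell'_{j_1}(Z)\,\tilde r'_{j_1,j_0}\bigl(Z^{\sqrt N}\bigr)$ that has already been pinned down for $\MM' = \PM\LM\PM\RM$ in the preceding subsection. So I would not reprove that factorization; I would only read off what the two masks do to the coefficient vectors.

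First I would translate the definitions of $\tLM'$ and $\tRM'$ back into coefficients via the correspondence $\tLM'[a,j_1] = \tilde\ell'_{j_1}[a]$ and $\tRM'_{j_1}[a,j_0] = \tilde r'_{j_1,j_0}[a]$ from \cref{subsec:general_size_block_monarch_convs} (using the natural row-index $a = i_1\sqrt N + i_0$ to interpret the pair-indexing of $\tLM'_0$). The mask on $\tLM'$ kills every block after $\tLM'_0$, so $\deg \ell'_{j_1} < \sqrt N$; within $\tLM'_0$ it zeros entries with $i_0<j_1$ and leaves the diagonal entry $\tLM'_0[j_1,j_1]=\tLM_0[j_1,j_1]\ne 0$, so $\ell'_{j_1}$ has minimum degree exactly $j_1$ and maximum degree at most $\sqrt N - 1$.

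Next I would analyze $\tilde r'_{j_1,j_0}$ in two cases driven by the second clause of the $\tRM'$-mask. In both cases the clause $i_0<j_0$ forces the coefficients of $Y^0,\dots,Y^{j_0-1}$ to vanish, and the nonzero diagonal $\tRM'_{j_1}[j_0,j_0]\ne 0$ pins the minimum degree to $j_0$. When $j_0<\lfloor\sqrt N/2\rfloor$, the second clause additionally kills every coefficient with $i_0\ge \lfloor\sqrt N/2\rfloor$, giving $\deg \tilde r'_{j_1,j_0}\le \lfloor\sqrt N/2\rfloor - 1$; when $j_0\ge \lfloor\sqrt N/2\rfloor$ the second clause is vacuous, leaving only $\deg \tilde r'_{j_1,j_0}\le \sqrt N - 1$.

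Finally I would plug these bounds into $\bp'_{\idxsqrtn{j_1}{j_0}}(Z) = \ell'_{j_1}(Z)\,\tilde r'_{j_1,j_0}(Z^{\sqrt N})$. The minimum-degree contributions add to give $j_1 + j_0\sqrt N$ in every case. For the maximum degree, when $j_0\sqrt N + j_1 < \sqrt N\lfloor\sqrt N/2\rfloor$ (equivalently $j_0 < \lfloor\sqrt N/2\rfloor$) the bound is $(\sqrt N-1) + (\lfloor\sqrt N/2\rfloor-1)\sqrt N = \sqrt N\lfloor\sqrt N/2\rfloor - 1 \le N/2 - 1$, using $\lfloor\sqrt N/2\rfloor \le \sqrt N/2$; in the complementary range it is $(\sqrt N-1) + (\sqrt N - 1)\sqrt N = N - 1$, as claimed. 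The main obstacle I expect is purely bookkeeping: matching the two-dimensional $(i_1,i_0)$/$(i_0,j_0)$ masks to the one-dimensional coefficient vectors of $\ell'_{j_1}$ and $\tilde r'_{j_1,j_0}$ without off-by-one errors, and confirming that $\sqrt N\lfloor\sqrt N/2\rfloor - 1 \le N/2 - 1$ holds uniformly for even and odd $\sqrt N$ (both parities are permissible since $N = \lceil\sqrt{2n}\rceil^2$ is only required to be a perfect square).
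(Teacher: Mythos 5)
Your proposal is correct and follows essentially the same route as the paper's proof: read the masks on $\tLM'$ and $\tRM'$ as constraints on the coefficient vectors of $\ell'_{j_1}$ and $\tilde r'_{j_1,j_0}$, deduce the minimum degrees $j_1$ and $j_0$ (via the nonzero diagonal entries) and the two maximum-degree cases for $\tilde r'_{j_1,j_0}$, and combine them through the Kronecker substitution to get the stated bounds with the identical closing arithmetic $(\sqrt N - 1) + (\lfloor\sqrt N/2\rfloor - 1)\sqrt N \le N/2 - 1$. Your treatment is if anything slightly more explicit than the paper's about how the two-dimensional masks map onto the one-dimensional coefficient vectors.
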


    Note that in $\MM = \MM'\PM$ the $(j_1,j_0)$ basis polynomial $\bp_{j_1,j_0}(Z) = \bp'_{j_0,j_1}(Z)$ has the required degree bound.

\begin{proof}
   Let  $\LM'$, $\RM' \gets \BlockyMonarch(\tLM',\tRM')$, and denote their corresponding polynomials as $\ell'_{j_1}(Z), r'_{j_1,j_0}\left(Z^{\sqrt N}\right)$, respectively for every $0 \le j_1,j_0 < \sqrt{N}$. 
   By our definition, for all $ 0 \leq  j_0 , j_1 < \sqrt N$, $r'_{j_1,j_0}\left(Z^{\sqrt N}\right)$ has minimum degree $j_0 \sqrt N $ and \( \ell'_{j_1}(Z) \) has minimum degree $j_1$. Then it follows that the basis polynomial \( \bp'_{\idxsqrtn{j_1}{j_0}}(Z) = \ell'_{j_1}(Z) r'_{j_1,j_0}\left(Z^{\sqrt N}\right)\) has minimum degree $j_1 + j_0\sqrt{N}$.
   
   We now look at the degrees of each basis polynomial. From our definition of \( \RM'_{j_1}\), all entries \( \RM'_{j_1}[i_0,j_0] = 0\) for $j_0 < \floor{\frac\sqrtN2}$ and $i_0 \ge \floor{\frac\sqrtN2}$. This implies that for  $ 0 \leq  j_0  < \left\lfloor \frac{\sqrt N}{2} \right\rfloor$, and  $ 0 \leq  j_1  <  \sqrt N$, we have $\deg(r'_{j_1,j_0}) < \left\lfloor \frac{\sqrt N}{2} \right\rfloor$. Since we are only looking at  \( \LM'_{0}(Z) \), note that degree $\deg(\ell'_{j_1}) \leq \sqrt N - 1$. Then it follows that 
   \begin{align*}
       \deg\left( \bp'_{\idxsqrtn{j_1}{j_0}} \right) &\leq  \sqrt N - 1 + \parens{ \left\lfloor \frac{\sqrt N}{2} \right\rfloor - 1}\sqrt N \\
       &=  \sqrt N - 1 + \parens{ \sqrt N\left\lfloor \frac{\sqrt N}{2} \right\rfloor - \sqrt N} \\
       &\leq   \sqrt N \frac{\sqrt N}{2}  - 1 \\
       &=\frac{N}{2}  - 1.
   \end{align*}
  (Note that in the above $j_0\sqrt{N}+j_1 \le \sqrt{N}\floor{\frac{\sqrt{N}}{2}}-1$ by same calculations above as needed.)

   For $ \left\lfloor \frac{\sqrt N}{2} \right\rfloor \leq  j_0 < \sqrt N$ and $0 \le j_1 < \sqrt{N}$, $\deg( r'_{j_1,j_0}) = \sqrt N - 1$, and  \( \ell'_{j_1}(Z) \)  degree $\sqrt N - 1$. Then it follows that
   \begin{align*}
       \deg\left( \bp'_{\idxsqrtn{j_1}{j_0}} \right) &\leq  \sqrt N - 1 + \parens{ \sqrt N - 1}\sqrt N \\
       &= N -1,
   \end{align*}

   as desired. (Note that $j_0\sqrt{N}+j_1 \ge \floor{\frac{\sqrt{N}}{2}}\sqrt{N}$ as needed.)
\end{proof}

Finally, we use \cref{lmm:masking-basis-degree} and \cref{thrm:b_arbitrary} to conclude the following:

\begin{theorem}

\label{thm:block-monarch-conv}
    For any fixed \( n \geq 1 \), let \( N = \left\lceil \sqrt{2n} \right\rceil^2 \). Let \( \tLM', \tRM' \in \mathbb{R}^{N \times N}\) with corresponding coefficient blocks \( \left( \tLM'_k \right)_{0 \leq k < \sqrt N }, \ \left( \tRM'_k \right)_{0 \leq k < \sqrt N } \) be defined as in Lemma \ref{lmm:masking-basis-degree}.
    Then for any \( \mathbf{k},\mathbf{u} \in \mathbb{R}^{n}\), 
    \newline $\BlockyMonarchConv\parens{\tLM', \tRM', \mathbf{k}', \ \mathbf{u}'}[0:n-1]$, where \( \mathbf{k}'=(\mathbf{k},\mathbf{0}_{N - n}), \  \mathbf{u}'=(\mathbf{u},\mathbf{0}_{N -n}) \), is causal in $\mathbf{u}$.
\end{theorem}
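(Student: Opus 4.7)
}
The plan is to reduce the statement to Corollary~\ref{cor:causal_map_u} (i.e., the restatement of Theorem~\ref{thm:causal_univariate}) by showing that the matrix $\MM$ constructed inside $\BlockyMonarchConv(\tLM', \tRM', \mathbf{k}', \mathbf{u}')$ is exactly of the form to which that corollary applies. First, note that by construction $N = \lceil\sqrt{2n}\rceil^2 \ge 2n$, so in particular $n \le N/2$, which supplies the size hypothesis required in Corollary~\ref{cor:causal_map_u}. The overall strategy has three stages: (i) identify the output of $\BlockyMonarchConv$ with the abstract expression $\MM^{-1}(\MM\mathbf{k}' \odot \MM\mathbf{u}')$; (ii) read off the basis polynomials of $\MM$ from $\tLM', \tRM'$ via Theorem~\ref{THRM:blocky_monarch_return_val}; and (iii) verify these basis polynomials meet the minimum/maximum degree specifications of Corollary~\ref{cor:causal_map_u}.

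For stage (i), unwinding Algorithm~\ref{alg:blocky_monarch_conv} shows that $\LM, \RM \gets \BlockyMonarch(\tLM', \tRM')$ produces block-diagonal matrices whose composition $\MM = \PM\LM\PM\RM\PM$ is a Monarch matrix, and the returned vector $\mathbf{f}$ is precisely $\MM^{-1}(\MM\mathbf{k}' \odot \MM\mathbf{u}')$. For stage (ii), Theorem~\ref{THRM:blocky_monarch_return_val} tells us that the $(j_1, j_0)$-th column of $\MM$ is the evaluation of the basis polynomial $\bp_{(j_1, j_0)}(Z) = \ell_{j_0}(Z)\,\tilde{r}_{j_0, j_1}(Z^{\sqrt N})$ at the $N$-th roots of unity, where $\ell_{j_0}, \tilde{r}_{j_0, j_1}$ are the univariate polynomials determined by $\tLM'$ and $\tRM'$. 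In the index convention of Corollary~\ref{cor:causal_map_u}, writing $j = j_1\sqrt N + j_0$ so that $m(j) = j_0$ and $k(j) = j_1$, this is exactly the parameterization in~\eqref{eq:uni-kronecker}.

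For stage (iii), I will invoke Lemma~\ref{lmm:masking-basis-degree}. Since the basis polynomials of $\MM = \MM'\PM$ are obtained from those of $\MM'$ by swapping the two index coordinates, the lemma guarantees: (a) $\ell_{j_0}$ has minimum degree $j_0 = m(j)$; (b) $\tilde{r}_{j_0, j_1}$ has minimum degree $j_1 = k(j)$; and (c) the total polynomial $\bp_{(j_1, j_0)}$ has maximum degree $< N/2$ for $j < N/2$ and $\le N-1$ for $N/2 \le j < N$. These are precisely the minimum- and maximum-degree hypotheses required by Corollary~\ref{cor:causal_map_u}. Applying that corollary to $\mathbf{k}' = (\mathbf{k}, \mathbf{0}_{N-n})$ and $\mathbf{u}' = (\mathbf{u}, \mathbf{0}_{N-n})$ immediately yields causality of the first $n$ coordinates of the output in $\mathbf{u}$.

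The main obstacle is bookkeeping: the index coordinates get permuted when passing between $\MM'$ and $\MM$, and one must correctly match the lemma's ``$j_0 \sqrt N + j_1$'' convention (giving the minimum-degree condition on $\MM'$'s basis polynomials) to Corollary~\ref{cor:causal_map_u}'s ``$j_1\sqrt N + j_0$'' convention (the minimum-degree condition on $\MM$'s basis polynomials). Once this correspondence is made explicit, the lemma's degree bounds translate term-by-term into the hypotheses of the corollary, and no further computation is required. A secondary, but essentially trivial, check is that $\lceil\sqrt{2n}\rceil^2 \ge 2n$ ensures enough ``room'' $N/2 \ge n$ for the causal padding $(\mathbf{k}, \mathbf{0}_{N-n})$, $(\mathbf{u}, \mathbf{0}_{N-n})$ to support the no-underflow argument underlying Corollary~\ref{cor:causal_map_u}.
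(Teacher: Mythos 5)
Your proposal is correct and follows essentially the same route as the paper: both identify the algorithm's output with $\MM^{-1}(\MM\mathbf{k}'\odot\MM\mathbf{u}')$, use Lemma~\ref{lmm:masking-basis-degree} to extract the minimum/maximum degree bounds on the basis polynomials of $\MM=\MM'\PM$, and then invoke the general univariate causality result. The only cosmetic difference is that you route through Corollary~\ref{cor:causal_map_u} (the restatement of Theorem~\ref{thm:causal_univariate}) while the paper appeals directly to Theorem~\ref{thrm:b_arbitrary}, of which that corollary is an immediate specialization, so the underlying argument is identical.
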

\begin{proof}

    Note that this is the same setting as  Theorem \ref{thrm:b_arbitrary}. If $N = \left\lceil \sqrt{2n} \right\rceil^2$, then $ \left\lfloor\frac{N}{2} \right\rfloor\geq  n$. Then by Lemma \ref{lmm:masking-basis-degree}, the basis polynomials $\bp_{j_1,j_0}(Z)$ of $\MM=\PM\LM\PM\RM$ have $\deg(\bp_{\idxsqrtn{j_1}{j_0}}) < \frac{N}{2}$ for $0 \le j_1\sqrt{N}+j_0 < \sqrt{N}\floor{\frac{\sqrt{N}}{2}} \le \left\lfloor\frac{N}{2} \right\rfloor$. \footnote{Recall that \cref{lmm:masking-basis-degree} is stated for basis polynomials $\bp'_{j_1,j_0}(Z)$ for $\MM'=\PM\LM\PM\RM$ where $\bp_{j_1,j_0}(Z) = \bp'_{j_0,j_1}(Z)$.}
    
    Then \eqref{eq:univar-causal} computes $\BlockyMonarchConv\parens{\tLM', \tRM', \mathbf{k}', \ \mathbf{u}'}$. Since Algorithm \ref{alg:blocky_monarch_conv} performs the same operation as \eqref{eq:univar-causal}, the result follows from Lemma \ref{lmm:masking-basis-degree} and Theorem \ref{thrm:b_arbitrary}. 
\end{proof}

\newcommand{\DM}{\mathbf D}
\subsection{Bivariate Polynomials with Kronecker Substitution Over Reals}
\label{app:monarch-conv-reals}

Our earlier results pertain to complex evaluation points. In this section we define causal convolution over real evaluation points. To do this, we redefine our basis polynomials in terms of the Chebyshev polynomials of the first kind. 

In  \cref{subsec:univar-real} we recover  \cref{thrm:b_arbitrary} for univariate polynomials defined over real evaluation points. This identifies a class of matrices that we use to define to define a structured subclass in \cref{subsec:real-monarch}. We show that these matrices can be used to perform \eqref{eq:conv}. Finally, in \cref{subsec:real-blocky} we give the analog of \cref{thm:block-monarch-conv} over real evaluation points. However, the resulting matrices are not Monarch matrices but they are close. In \cref{subsec:real-blocky-M-construction} and \cref{subsec:real-blocky-cheby-inv} we discuss how we can exploit this closeness, and compute \eqref{eq:conv} more efficiently.

\subsubsection{Univariate Evaluation over Real Numbers}
\label{subsec:univar-real}
For any integer $a  \geq 0$, the Chebyshev polynomial of the first kind with degree $a$ is denoted as $T_a(Z)$ and is defined as
\begin{align}
\label{eq:Chebyshev}
T_a(\cos\theta) &\stackrel{\text{def}}{=} \cos(a \theta), 
\end{align}
and has the property

\begin{equation}
\label{eq:cheby-mod}
    T_a(-\cos\theta) = (-1)^{a}\cos(a \theta).
\end{equation}

To parallel \cref{subsec:causal-subclass}, we consider the class of basis polynomials with the form
\begin{equation}
\bar \bp^{N}_{j}(Z) = \sum_{a = 0}^{N - j - 1} \bar \bp_j[a] \ T_a(Z),
    \label{eq:real-basis}
\end{equation}
evaluated over $\parens{\omega_{N,i}}_{0 \leq i < N}$ where

\begin{equation}
\omega_{N,i} \stackrel{\text{def}}{=} \cos\parens{\frac{\pi (i + \frac12)}{N}}.
    \label{eq:omega-real}
\end{equation}

Let us consider the class of matrices defined over \eqref{eq:real-basis} and \eqref{eq:omega-real}.

\begin{lemma}
\label{lmm:u(X)-real}
For sequence of points $A =\{\omega_{N,0}, \dots, \omega_{N,N-1}\}$, let $\MM$ be defined as
\begin{equation}
    \MM[i,j] = \bar \bp^{N}_j(\omega_{N,i})
    \label{eq:univar-matirx-real}
\end{equation}
where $\bar \bp^{N}_j(Z)$ and $\omega_{N,i}$ are defined as in \eqref{eq:real-basis} and \eqref{eq:omega-real}, respectively. Then for any vector $\textbf{u}$, $\MM \cdot \textbf{u}$ is equivalent to  evaluating the polynomial
\begin{equation}
\label{eq:real-poly-eval}
    u(Z) = \sum_{ j = 0}^{ N - 1} u_{j}  \cdot \bar \bp^{N}_{j}(Z)
\end{equation}
at each point in $A$.
\end{lemma}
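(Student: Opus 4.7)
The plan is to mirror the proof of Lemma~\ref{lmm:u(X)} (the monomial-basis analog) essentially verbatim, since the argument only uses linearity of polynomial evaluation and the definition of matrix-vector multiplication, neither of which cares whether the basis is $\{Z^a\}$ or $\{T_a(Z)\}$ or whether the evaluation points are roots of unity or Chebyshev nodes. Concretely, I would first observe that by~\eqref{eq:univar-matirx-real}, the $j$th column of $\MM$ is exactly the vector $\left(\bar{q}^{N}_{j}(\omega_{N,0}),\dots,\bar{q}^{N}_{j}(\omega_{N,N-1})\right)^\top$, i.e.\ it holds the evaluations of the $j$th basis polynomial at the $N$ Chebyshev nodes of~\eqref{eq:omega-real}.

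Next I would unfold the $i$th entry of $\MM\cdot\mathbf{u}$ using the definition of matrix-vector multiplication:
\[
(\MM\cdot\mathbf{u})[i] \;=\; \sum_{j=0}^{N-1} \MM[i,j]\cdot u_j \;=\; \sum_{j=0}^{N-1} u_j\cdot \bar{q}^{N}_{j}(\omega_{N,i}).
\]
By the definition of $u(Z)$ in~\eqref{eq:real-poly-eval} and linearity of evaluation, the right-hand side is exactly $u(\omega_{N,i})$. Since this holds for every $0\le i < N$, the vector $\MM\cdot\mathbf{u}$ is precisely the evaluation of $u(Z)$ at every point of $A$, which is the claimed identity.

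There is essentially no obstacle here: the lemma is a definitional restatement that sets up the Chebyshev analog of the framework. The only subtlety worth flagging (though it is not needed for this lemma itself) is that the maximum degree in~\eqref{eq:real-basis} is $N-j-1$ rather than $N-1$; this is a deliberate choice that will be used in the sequel to prove degree bounds analogous to those in Lemma~\ref{lem:basis}, exploiting the Chebyshev product identity $2T_a(Z)T_b(Z)=T_{a+b}(Z)+T_{|a-b|}(Z)$ together with the reflection property~\eqref{eq:cheby-mod}. I would make no use of those facts here and defer them to the subsequent lemmas that establish causality over the reals.
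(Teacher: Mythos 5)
Your proof is correct and matches the paper's own argument exactly: both identify the $j$th column of $\MM$ as the evaluations of $\bar{q}^{N}_{j}$ at the points of $A$ and then conclude via the definition of matrix-vector multiplication and linearity of evaluation. The extra remark about the degree convention in \eqref{eq:real-basis} is accurate but, as you note, not needed here.
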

\begin{proof} 
By our definition of $\MM$, the column $\MM[:,j]$ is exactly the evaluation of the polynomial \( \bar \bp^{N}_j(Z)\) at each point in \(A\). The claimed result comes from the definition of matrix vector multiplication and \eqref{eq:real-poly-eval}.
\end{proof}

This leads to the following analog of \cref{thm:univar-conv}.

\begin{theorem}
For matrices $\MM_0,\MM_1,\MM_2$, each of form as in \cref{lmm:u(X)-real}, the operation 
\begin{equation}
   \mathbf{f} = \MM^{-1}_0 \cdot \left( (\MM_1 \cdot \mathbf k) \odot (\MM_2 \cdot \mathbf u)\right)
    \label{eq:ku-mod}.
\end{equation}
is equivalent to representing the polynomial \[ f(Z) = k(Z) \cdot u(Z) \mod \  T_N(Z)\] in terms of the basis polynomials \[\hat \bp^{N}_{j}(Z) \text{ for } \ j = 0, \dots, N - 1\] where \( k(Z), u(Z)\) are defined in terms of the respective basis polynomials corresponding to \(\MM_1\) and \(\MM_2\) as in \cref{lmm:u(X)-real}, and $\left(\hat \bp^{N}_{j}(Z) \right)_{0 \leq j < N}$ corresponds to $\MM_0$.
\label{thm:univar-conv-real}
\end{theorem}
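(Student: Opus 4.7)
} The plan is to follow the three-step structure used for Theorem~\ref{thm:univar-conv} (evaluate, Hadamard-multiply, interpolate), replacing the $N$th roots of unity by the Chebyshev nodes $\omega_{N,i}$ and the monomial-type basis by the Chebyshev-type basis \eqref{eq:real-basis}. The only genuinely new ingredient is showing that the reduction modulus becomes $T_N(Z)$ rather than $Z^N-1$; everything else is a direct translation.

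First, I would apply Lemma~\ref{lmm:u(X)-real} twice: once to $\MM_1\cdot\mathbf{k}$ and once to $\MM_2\cdot\mathbf{u}$, which identifies these vectors with the evaluation of the polynomials $k(Z)$ and $u(Z)$ (defined via the basis of $\MM_1$ and $\MM_2$ as in \eqref{eq:real-poly-eval}) at the $N$ points $A=\{\omega_{N,0},\dots,\omega_{N,N-1}\}$. Since the Hadamard product of evaluation vectors is the evaluation vector of the product polynomial, the middle vector $(\MM_1\mathbf{k})\odot(\MM_2\mathbf{u})$ is exactly the evaluation of $k(Z)\cdot u(Z)$ at the points in $A$.

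Next, I would establish the key identity that the Chebyshev nodes are precisely the roots of $T_N(Z)$. Using definition \eqref{eq:Chebyshev} together with \eqref{eq:omega-real},
\begin{equation*}
T_N(\omega_{N,i}) \;=\; T_N\!\left(\cos\tfrac{\pi(i+\frac12)}{N}\right) \;=\; \cos\!\left(\pi\!\left(i+\tfrac12\right)\right) \;=\; 0
\end{equation*}
for every $0\le i<N$. Since $\deg T_N=N$ and $T_N$ has $N$ distinct roots (all in $A$), evaluating any polynomial $g(Z)$ at a point of $A$ yields the same value as evaluating $g(Z)\bmod T_N(Z)$ at that point. Applying this with $g(Z)=k(Z)u(Z)$ shows that $(\MM_1\mathbf{k})\odot(\MM_2\mathbf{u})$ is also the evaluation vector of $f(Z) := k(Z)u(Z)\bmod T_N(Z)$ at the nodes in $A$, and crucially $\deg f<N$.

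Finally, I would invert the last step by an analog of Proposition~\ref{prop:univ-M-inverse}. The hypothesis that $\MM_0$ is invertible, combined with Lemma~\ref{lmm:u(X)-real}, means that the basis polynomials $\hat\bp^N_j(Z)$ are linearly independent; each has degree at most $N-1$ by \eqref{eq:real-basis}, so the $N$ of them span the $N$-dimensional space of polynomials of degree $<N$. Hence $f(Z)$ has a unique representation $f(Z)=\sum_{j=0}^{N-1}f_j\hat\bp^N_j(Z)$, and the coefficient vector $\mathbf{f}=(f_0,\dots,f_{N-1})$ is exactly $\MM_0^{-1}$ applied to the evaluation vector obtained in the previous step, which is precisely \eqref{eq:ku-mod}. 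The main obstacle is purely bookkeeping: verifying that $\deg(f)<N$ so that the representation in the Chebyshev-type basis $\{\hat\bp^N_j\}$ exists and is unique; once the $T_N$-root identity is in hand, this is immediate from the degree bound on $\bar\bp^N_j$ and the invertibility of $\MM_0$.
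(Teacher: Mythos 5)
Your proposal is correct and follows essentially the same route as the paper's proof: identify the two matrix-vector products with polynomial evaluations at the Chebyshev nodes via Lemma~\ref{lmm:u(X)-real}, use the fact that the nodes $\omega_{N,i}$ are exactly the roots of $T_N(Z)$ so the Hadamard product evaluates $k(Z)u(Z)\bmod T_N(Z)$, and invoke invertibility of $\MM_0$ to interpret $\MM_0^{-1}$ as interpolation into its basis. Your version is in fact slightly more careful than the paper's (which asserts $T_N = q_A$ without the explicit computation $T_N(\omega_{N,i})=\cos(\pi(i+\tfrac12))=0$ or the degree/spanning check), but there is no difference in substance.
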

\begin{proof}

For $A = \cbrackets{\omega_{N,i}}_{0 \leq i < N}$, define \begin{equation}
    q_A(Z) = \prod_{\alpha \in A} (Z - \alpha).
\end{equation}
Then \eqref{eq:ku-mod} follows since for
\begin{equation*}
     f(Z) = k(Z)\cdot u(Z) \mod  \ q_A(Z),
\end{equation*}
 we have the following for any $\alpha \in A$:
\[ f(\alpha) = k(\alpha)\cdot u(\alpha). \]
The claim follows from \cref{lmm:u(X)-real}, \eqref{eq:ku-mod}, the invertibility of $\MM_0$, and the known fact that \( T_N(Z) = \bp_A(Z) \).
\end{proof}
We also utilize the following result:

\begin{lemma}
    Let $\bar \bp^{\floor{\frac{N}{2}}}_j(Z)$ be defined as in \eqref{eq:real-basis}. Then for any $0 \leq j,j' < N$, 
    \begin{equation}
    \label{eq:coefs-real}
    \bar{\bp}^{\floor{\frac{N}{2}}}_j(Z) \cdot \bar{\bp}^{\floor{\frac{N}{2}}}_{j'}(Z) = \sum_{i=j+j'}^{N-1}\alpha_{j+j',i} \cdot \bar{\bp}^{N}_i(Z).
    \end{equation}
for some set of coefficients \( \alpha_{j+j',i}\).
\label{lmm:bj-coefs}
\end{lemma}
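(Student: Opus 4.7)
The plan is to mirror the standard-monomial argument of \cref{lem:basis}, substituting the Chebyshev product-to-sum identity
$$T_a(Z)\,T_b(Z) = \tfrac{1}{2}\parens{T_{a+b}(Z) + T_{|a-b|}(Z)}$$
for the trivial monomial identity $Z^a \cdot Z^b = Z^{a+b}$. The argument proceeds in three steps: bound the Chebyshev degree of the product, verify that $\{\bar\bp^N_i\}_{i \geq j+j'}$ spans a large enough subspace, and then read off the coefficients $\alpha_{j+j',i}$ by a linear-algebra argument.

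First, I would expand
$$\bar\bp^{\floor{N/2}}_j(Z) \cdot \bar\bp^{\floor{N/2}}_{j'}(Z) = \sum_{a=0}^{\floor{N/2}-j-1} \sum_{b=0}^{\floor{N/2}-j'-1} \bar\bp_j[a]\,\bar\bp_{j'}[b]\,T_a(Z)\,T_b(Z),$$
and apply the product-to-sum identity term by term. Since $a + b \leq (\floor{N/2}-j-1) + (\floor{N/2}-j'-1) \leq N - (j+j') - 2$, every Chebyshev index appearing is at most $N - (j+j') - 1$, so the product lies in $V := \mathrm{span}\{T_0,\dots,T_{N-(j+j')-1}\}$, a space of dimension $N - (j+j')$. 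When $j \geq \floor{N/2}$ or $j' \geq \floor{N/2}$ the left-hand side is zero and the lemma is trivial, so we may restrict to the interesting regime $j,j' < \floor{N/2}$, which automatically gives $j+j' \leq N-2$ and so a nonempty sum on the right.

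Second, I would argue that the polynomials $\bar\bp^N_{j+j'}, \bar\bp^N_{j+j'+1}, \dots, \bar\bp^N_{N-1}$ form a basis of $V$. There are exactly $N - (j+j')$ of them, and by \eqref{eq:real-basis} each $\bar\bp^N_i$ has Chebyshev degree at most $N-i-1$, so all lie in $V$. Linear independence then follows by induction upward on the maximum Chebyshev index $N-i-1$ (equivalently, downward on $i$ starting from $i = N-1$), under the standing hypothesis that the leading Chebyshev coefficient $\bar\bp_i[N-i-1]$ is nonzero for every $i$---the Chebyshev analog of the implicit nonzero-leading-coefficient assumption driving the induction in \cref{lem:basis}.

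Combining the two steps, the product admits a unique expansion $\sum_{i=j+j'}^{N-1} \alpha_{j+j',i}\,\bar\bp^N_i(Z)$, giving the claim. The main obstacle is establishing the basis property in step two: it rests on a nondegeneracy assumption on the coefficients $\bar\bp_i[N-i-1]$, analogous to the one tacitly invoked in \cref{lem:basis}. With that assumption in hand, the rest is a dimension count together with the product-to-sum identity.
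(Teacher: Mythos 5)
Your proposal is correct and takes essentially the same route as the paper: bound the degree of the product by $2\floor{\tfrac N2}-(j+j')-2$, observe that the polynomials $\bar\bp^{N}_{j+j'},\dots,\bar\bp^{N}_{N-1}$ have the distinct degrees $N-(j+j')-1,\dots,0$ and hence span every polynomial of that degree, and expand — relying on the same tacit assumption the paper makes, namely that each $\bar\bp^{N}_i$ has leading Chebyshev coefficient $\bar\bp_i[N-i-1]\neq 0$ so its degree is exactly $N-i-1$. The explicit product-to-sum identity is a harmless but unnecessary detour (the ordinary degree of $T_a T_b$ is $a+b$, which already gives the containment), and your dimension count cleanly absorbs the zero coefficients that the paper handles by explicit padding.
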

\begin{proof}

From \eqref{eq:real-basis}, we have 
\begin{align*}
    \bar{\bp}^{\floor{\frac{N}{2}}}_j(Z) \cdot \bar{\bp}^{\floor{\frac{N}{2}}}_{j'}(Z) 
    &= \sum_{a = 0}^{\floor{\frac{N}{2}}-j -1} \bp_j[a]T_a(Z) \cdot \sum_{a' = 0}^{\floor{\frac{N}{2}}-j' -1}\bp'_{j'}[a']T_{a'}(Z).
\end{align*} 
Recall that within  a $m$ dimensional vector space of polynomials, we can define a basis by choosing any set of polynomials with degrees $0, \dots, m-1$. Because $\deg\parens{\bar \bp^{\floor{\frac{N}{2}}}_j \cdot \bar \bp^{\floor{\frac{N}{2}}}_{j'}} \leq 2\floor{\frac{N}{2}}  - (j + j') -2$, it can be written as a linear combination of any set of $2\floor{\frac{N}{2}} - (j + j') - 1$ polynomials where for $0 \leq a \leq  2\floor{\frac{N}{2}} - (j + j') - 2$, the $a$-th polynomial has degree $a$. In other words we can choose the $a$-th polynomial as $\bp^{N}_{N - a - 1}(Z)$. Thus we have 
\begin{align*}
      \bar{\bp}^{\floor{\frac{N}{2}}}_j(Z) \cdot \bar{\bp}^{\floor{\frac{N}{2}}}_{j'}(Z) &= \sum_{a=0}^{2\floor{\frac{N}{2}}-(j+j')-2} \bar \alpha_{j+j',a} \cdot \bar{\bp}^{N}_{N - a -1}(Z)
\end{align*}

for some set of coefficients $\alpha_{j+j',a}$.
Then after reindexing $i \gets N - a - 1$ we have
\begin{align*}
    \bar{\bp}^{\floor{\frac{N}{2}}}_j(Z) \cdot \bar{\bp}^{\floor{\frac{N}{2}}}_{j'}(Z) &= \sum_{i=\parens{N - 2\floor{\frac N2}}+j+j'+1}^{N-1} \bar \alpha_{j+j',N - i - 1} \cdot \bar{\bp}^{N}_i(Z).
\end{align*}

The claim follows by setting \(  \alpha_{j+j',j + j' + 1} = 0\), and if $N$ is odd, \(  \alpha_{j+j'+1, j + j' + 2} = 0\), and $\alpha_{j+j', i} = \bar \alpha_{j+j', N- i - 1}$ for other $i$.
\end{proof}

This allows us to prove the following causality result for convolutions over real evaluation points.

\begin{theorem}
Fix a family of basis polynomials $\bar{\bp}^{N}_0, \bar{\bp}^{N}_1 \dots, \bar{\bp}^{N}_{N-1} $ as defined in \eqref{eq:real-basis}. Let $N \ge 1$ be a perfect square, $n \leq \floor{\frac N2}$, $\mathbf{k}, \mathbf{u} \in \mathbb{R}^n$ 
 and $\MM_{N}$ defined by basis \( \left(\bar{\bp}^{N}_{j}(Z) \right)_{0 \leq j <  N} \).  Let \( \mathbf{k}''=\parens{\mathbf{k},\mathbf{0}_{\floor{\frac{N}{2}} - n}}\) and \(  \mathbf{u}''=\parens{\mathbf{u},\mathbf{0}_{\floor{\frac{N}{2}} -n}}\). Then the operation

\begin{equation}
\mathbf{u}\mapsto\parens{\MM_{N}^{-1} \parens{ \MM_{N} \cdot \parens{\mathbf{0}_{\ceil{\frac{N}{2}}},\mathbf{k}''} \circ \MM_{N} \cdot \parens{\mathbf{0}_{\ceil{\frac{N}{2}}},\mathbf{u}''}}}[0:n - 1]
    \label{eq:univar-causal-real}
\end{equation}  
 defines a causal map in $\mathbf{u}$.
 \label{thm:real-casual-conv}
\end{theorem}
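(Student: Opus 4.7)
My plan is to reduce the map to a univariate polynomial product modulo $T_N(Z)$ via \Cref{thm:univar-conv-real}, use the front-padding $(\mathbf{0}_{\ceil{N/2}}, \cdot)$ together with the maximum-degree structure of the Chebyshev-based basis \eqref{eq:real-basis} to make the mod vacuous, and then apply \Cref{lmm:bj-coefs} to read off causality from the resulting coefficient expansion.

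Concretely, let $\tilde{\mathbf{k}} = (\mathbf{0}_{\ceil{N/2}}, \mathbf{k}'')$ and $\tilde{\mathbf{u}} = (\mathbf{0}_{\ceil{N/2}}, \mathbf{u}'')$. By \Cref{thm:univar-conv-real}, the output of \eqref{eq:univar-causal-real} encodes the polynomial $f(Z) = k(Z)\, u(Z) \bmod T_N(Z)$ in the basis $(\bar{\bp}^N_i)_{0 \le i < N}$, with $k(Z) = \sum_{j=0}^{n-1} k_j\, \bar{\bp}^N_{\ceil{N/2}+j}(Z)$ and an analogous expression for $u(Z)$ (the trailing zeros in $\mathbf{k}'',\mathbf{u}''$ contribute nothing). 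Because \eqref{eq:real-basis} gives $\deg \bar{\bp}^N_{\ceil{N/2}+j} \le \floor{N/2} - j - 1 \le \floor{N/2} - 1$, we have $\deg(ku) \le 2\floor{N/2} - 2 \le N-2 < \deg T_N$, so the mod is vacuous and $f(Z) = k(Z)\, u(Z)$. I then observe that each $\bar{\bp}^N_{\ceil{N/2}+j}(Z) = \sum_{a=0}^{\floor{N/2}-j-1} \bar{\bp}_{\ceil{N/2}+j}[a]\, T_a(Z)$ has exactly the functional form of a $\bar{\bp}^{\floor{N/2}}_j$ polynomial, and the proof of \Cref{lmm:bj-coefs} is purely a degree-counting argument that does not use the specific coefficient values. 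Applying it therefore yields
\begin{equation*}
\bar{\bp}^N_{\ceil{N/2}+j}(Z) \cdot \bar{\bp}^N_{\ceil{N/2}+j'}(Z) = \sum_{i=j+j'}^{N-1} \alpha_{j,j',i}\, \bar{\bp}^N_i(Z)
\end{equation*}
for some coefficients $\alpha_{j,j',i}$. Substituting into the bilinear expansion of $k(Z) u(Z)$ and collecting coefficients of $\bar{\bp}^N_i$ gives $f_i = \sum_{0 \le j, j' \le n-1,\; j+j' \le i} \alpha_{j,j',i}\, k_j u_{j'}$, so for each $i \in [0, n-1]$ the coefficient $f_i$ depends only on $u_0, \ldots, u_i$, which is exactly the causality claim.

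The main obstacle I anticipate is rigorously transferring \Cref{lmm:bj-coefs} from its stated form (for the $\bar{\bp}^{\floor{N/2}}_j$ family) to our polynomials $\bar{\bp}^N_{\ceil{N/2}+j}$; this requires checking that the lemma's degree-counting proof goes through verbatim with the substituted coefficients, including the parity bookkeeping (zero-padding the first few $\alpha$'s when $N$ is odd) that lets us quote a uniform lower bound $i \ge j+j'$. A conceptual point worth flagging is that the front-padding here (rather than the back-padding used in \Cref{thrm:b_arbitrary}) is forced by the Chebyshev basis \eqref{eq:real-basis} imposing a \emph{maximum}-degree condition rather than the minimum-degree condition of \cref{eq:general_b}: low-degree factors must sit at high basis indices in order for multiplication to shift the index support rightward, which is the form of causality we want in the output coefficients $f_0, \ldots, f_{n-1}$.
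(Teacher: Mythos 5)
Your proposal is correct and follows essentially the same route as the paper's proof: front-pad, reduce to $f = ku \bmod T_N$ via \Cref{thm:univar-conv-real}, kill the mod with the degree bound $2\floor{N/2}-2 < N$, identify $\bar{\bp}^N_{\ceil{N/2}+j}$ with $\bar{\bp}^{\floor{N/2}}_j$ (as the paper does explicitly), and invoke \Cref{lmm:bj-coefs} to collect coefficients and read off causality from the lower bound $i \ge j+j'$. Your flagged "obstacle" is handled in the paper exactly as you suggest, and your remark about front- versus back-padding being forced by the maximum-degree convention of \eqref{eq:real-basis} is an accurate reading of why this theorem mirrors \Cref{thrm:b_arbitrary} with the index order reversed.
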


\begin{proof}
Let
\begin{equation}
\mathbf{f}=\parens{\MM_{N}^{-1} \parens{ \MM_{N} \cdot \parens{\mathbf{0}_{\ceil{\frac{N}{2}}},\mathbf{k}''} \circ \MM_{N} \cdot \parens{\mathbf{0}_{\ceil{\frac{N}{2}}},\mathbf{u}''}}}[0:n - 1].
\end{equation}  
In order to prove that \eqref{eq:univar-causal-real} is actually causal in the input $\mathbf{u} \in \mathbb{R}^n$, we must show that for all $0 \leq i < N$, $\mathbf{f}[i]$ is dependent only on $\mathbf{u}[0], \mathbf{u}[1], \dots \mathbf{u}[i]$. Let \(\mathbf{k}' = \parens{\mathbf{0}_{\ceil{\frac{N}{2}}},\mathbf{k}''}\) and \(\mathbf{u}' = \parens{\mathbf{0}_{\ceil{\frac{N}{2}}},\mathbf{u}''}\).  By \cref{lmm:u(X)-real}, \( \MM_N \cdot \mathbf{k}' \) and \( \MM_N \cdot \mathbf{u}' \) correspond to the evaluations of the polynomials

\begin{equation}
    k'(Z)= \sum_{j=0}^{N - 1} k'_j \cdot    
    \bar{\bp}^{N}_j(Z), \quad\text{and}\quad u'(Z)= \sum_{j'=0}^{N - 1} u'_{j'} \cdot \bar{\bp}^{N}_{j'}(Z).
    \label{eq:ku-prime-bjz}
\end{equation}

Let us define

\begin{equation}
    k(Z)= \sum_{j=0}^{n - 1} k_j \cdot    
    \bar{\bp}^{\floor{\frac N2}}_j(Z), \quad\text{and}\quad u(Z)= \sum_{j'=0}^{n - 1} u_j \cdot \bar{\bp}^{\floor{\frac N2}}_{j'}(Z).
    \label{eq:ku-bjz}
\end{equation}
Note that for $0 \leq j < \ceil{\frac N2}$, the coefficients $k'_j = u'_j = 0$. Then \eqref{eq:ku-prime-bjz} becomes
\begin{align*}
    k'(Z) &= \sum_{j=\ceil{\frac N2} }^{N - 1} k'_j \cdot    
    \bar{\bp}^{N}_j(Z), \quad\text{and}\quad  u'(Z)= \sum_{j'=\ceil{\frac N2} }^{N - 1} u'_j \cdot \bar{\bp}^{N}_{j'}(Z),
\end{align*} 

which is equivalent to

\begin{align*}
    k'(Z)  &= \sum_{j=0 }^{\floor{\frac N2} - 1} k'_{j+\ceil{\frac N2}} \cdot    
    \bar{\bp}^{N}_{j+\ceil{\frac N2}}(Z),  \quad\text{and}\quad  u'(Z)= \sum_{j'=0 }^{\floor{\frac N2} - 1} u'_{j'+\ceil{\frac N2}} \cdot \bar{\bp}^{N}_{j'+\ceil{\frac N2}}(Z).
    \end{align*} 

For $0 \leq j < \floor{\frac N2}$,  $\deg\parens{\bar \bp^{\floor{\frac N2}}_j} = \floor{\frac N2} - j - 1$, and $\deg\parens{\bar \bp^{N}_{j+ \ceil{\frac N2}}} =N - \ceil{\frac N2} - j - 1 =\floor{\frac N2} - j - 1$. This implies that $\bar \bp^{\floor{\frac N2}}_j(Z)$ and $\bar \bp^{N}_{j+\ceil{\frac N2}}(Z)$ are both linear combinations of $\floor{\frac N2} - j - 1$ Chebychev polynomials. Then we can set $\bar \bp^{\floor{\frac N2}}_j(Z) = \bar \bp^{N}_{j+ \ceil{\frac N2}}(Z)$. Similarly, note  that for $0 \leq j < n$, $k'_{j+\ceil{\frac N2}} = k_j$. Then it follows that $k(Z)=k'(Z)$, and by a similar argument, $u(Z)=u'(Z)$. 
Then by \cref{thm:univar-conv-real} we have
\begin{align}
    f(Z) &=  k(Z) \cdot  u(Z) \mod \ T_N(Z) \nonumber \\
     &=  k(Z) \cdot  u(Z) \label{eq:fz}
\end{align}
where the last statement follows since $\deg\parens{ k(Z)}, \deg\parens{ u(Z)} \leq n - 1 < \floor{\frac N2}$, implying that their product has \(\deg\parens{\bar k(Z)\cdot \bar u(Z)} < \deg\parens{T_N(Z)} =N  \). We want to write $f(Z)$ in the form 
\begin{align*}
    f(Z)&= \sum_{i=0}^{N-1} f_i \cdot \bar{\bp}^{N}_i(Z)
\end{align*}
for some set of coefficients set of coefficients $f_i$. From \eqref{eq:ku-bjz}, \eqref{eq:fz} becomes
\begin{align*}
    f(Z) 
    &= \sum_{j = 0}^{n-1}\sum_{j' = 0}^{n-1} k_{j'} \cdot u_{j} \cdot \bar{\bp}^{\floor{\frac N2}}_{j'}(Z) \cdot \bar{\bp}^{\floor{\frac N2}}_{j}(Z).
\end{align*} 
Then by \cref{lmm:bj-coefs} we have
\begin{align*}
    \sum_{i=0}^{N-1} f_i \cdot \bar{\bp}^{N}_i(Z) &= \sum_{j,j' = 0}^{n-1} k_{j'} \cdot u_{j} \cdot \sum_{i=j+j'}^{N - 1} \alpha_{j+j',i}\bar{\bp}^{N}_{i}(Z).
\end{align*}
We show that for all $0 \leq i < N$,  $f_i$ is a function of $\parens{u_{i'}}_{0 
\leq i' \leq i}$. 
Then note from the above that each $ \ k_j$ and $u_{j'}$ appears in terms of $\bar \bp^{N}_i$ where $i \geq j+j'$. Then we have

 \[
f_i = \sum_{ \substack{j,j =0 \\ j + j' \le i}}^{N-1} \alpha_{j+j',i}\cdot k_{j'} \cdot u_j,
\]
 as desired.
    
\end{proof}

\subsubsection{Structured Causal Matrices}
\label{subsec:real-monarch}
In this section we narrow our scope from the general class of matrices defined in \cref{subsec:univar-real} to a particular structured subclass.  Now let us define the class of structured causal matrices over the real numbers.

\begin{definition}
\label{def:monarch-real-lr}
Define
\begin{equation} \label{eq:monarch_real_lr}
\ell_{j_1}(Z) \stackrel{\text{def}}{=} \sum_{a = 0}^{j_1} \ell_{j_1}[a] \ T_a(Z), \ \   \ \ \ 
\tilde r_{j_1,j_0}(Z)\stackrel{\text{def}}{=} \sum_{a = 0}^{j_0}\tilde{r}_{j_1,j_0}[a] \ T_a(Z) 
\end{equation}

where 
\begin{equation}
    \tilde{r}_{j_1,j_0}[a] = 0 \text{ if } (j_0 - a) \text{ is odd, }
    \label{eq:tilde_r_coef_property}
\end{equation}
We define structured causal (SC) matrix polynomials as
\begin{equation}
 \bp^{N}_{\idx{j_1}{j_0}{\sqrt N}}(Z) = \ell_{\sqrt N - j_1 - 1}(Z)\cdot \tilde{r}_{\sqrt N - j_1 - 1,\sqrt N - j_0 - 1}\parens{T_{\sqrt N}(Z)}.
    \label{eq:monarch-real-b}
\end{equation}

A $N \times N$  SC matrix is defined over the set of real evaluation points as 
\begin{equation}
    \MM'[i,(j_0,j_1)] = {\bp'}^{N}_{\idx{j_1}{j_0}{\sqrt N}}(\omega_{N,i}) = \bp^{N}_{\idx{j_0}{j_1}{\sqrt N}}(\omega_{N,i}).
    \label{eq:causal-matrix-real}
\end{equation}
\label{def:causal-matrix-real}
\end{definition}

We note that in \eqref{eq:causal-matrix-real} we deviate from the usual ordering of indices $(j_1,j_0)$ to $(j_0,j_1)$. We show that the SC matrix falls under the category of matrices defined by \eqref{eq:univar-matirx-real}.

\begin{lemma}
Let $\curlyM^C$ denote the set of all matrices defined by of \eqref{eq:univar-matirx-real}, and $\curlyM^{SC}$ denote the set of all matrices defined by \cref{def:causal-matrix-real}. Then $\curlyM^{SC} \subset \curlyM^C$.
    \label{lmm:structured-subclass-real}
\end{lemma}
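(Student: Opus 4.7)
The plan is to exhibit, for each SC matrix $\MM' \in \curlyM^{SC}$, a family of $N$ univariate polynomials of the form \eqref{eq:real-basis} whose evaluations at $\omega_{N,0}, \ldots, \omega_{N,N-1}$ populate the columns of $\MM'$ in the pattern required by \eqref{eq:univar-matirx-real}. The natural candidate for the column of $\MM'$ parameterized by the pair $(j_0, j_1)$ is the polynomial $\bp^N_{\idx{j_0}{j_1}{\sqrt{N}}}(Z)$ from \eqref{eq:monarch-real-b}; the crux of the lemma is to verify this candidate satisfies the degree bound built into \eqref{eq:real-basis}.

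Concretely, I would first combine the degree bounds on $\ell$ and $\tilde{r}$ from \eqref{eq:monarch_real_lr} with the fact that $T_{\sqrt{N}}(Z)$ has degree $\sqrt{N}$, so that $\tilde{r}_{\sqrt{N}-j_0-1,\sqrt{N}-j_1-1}(T_{\sqrt{N}}(Z))$ has degree at most $(\sqrt{N} - j_1 - 1)\sqrt{N}$ in $Z$. Adding the degree of the $\ell_{\sqrt{N} - j_0 - 1}(Z)$ factor yields
\begin{equation*}
\deg\bigl(\bp^{N}_{\idx{j_0}{j_1}{\sqrt{N}}}\bigr) \leq (\sqrt{N} - j_0 - 1) + (\sqrt{N} - j_1 - 1)\sqrt{N} = N - 1 - (j_0 + j_1 \sqrt{N}).
\end{equation*}
If I identify the column index $c$ with the quantity $j_0 + j_1 \sqrt{N}$ (treating $j_0$ as the low digit and $j_1$ as the high digit in the single-index ordering), the bound becomes $\deg \leq N - 1 - c$, exactly the requirement of \eqref{eq:real-basis}.

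Once the degree bound is established, the remaining step is routine: since $\{T_0, T_1, \ldots, T_{N-c-1}\}$ is a basis for the space of real polynomials of degree at most $N - c - 1$, each column polynomial admits a unique Chebyshev expansion of the form $\bar{\bp}^N_c(Z) = \sum_{a=0}^{N-c-1} \bar{\bp}_c[a]\, T_a(Z)$, matching \eqref{eq:real-basis}. Combining with the fact that the evaluation-point set $\{\omega_{N,i}\}_{0 \leq i < N}$ is shared by \cref{def:causal-matrix-real} and \eqref{eq:univar-matirx-real} then exhibits $\MM' \in \curlyM^C$.

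The main obstacle is notational rather than mathematical: tracking the interplay between the pair notation $(j_0, j_1)$ used to index columns of $\MM'$ in \eqref{eq:causal-matrix-real}, the apparently swapped pair $(j_1, j_0)$ inside $\bp'^N_{\idx{j_1}{j_0}{\sqrt{N}}}$, and the single-index column label $c = j_0 + j_1 \sqrt{N}$. The swap is precisely what aligns the $\ell$-contribution with the low-digit $j_0$ and the $T_{\sqrt{N}}$-scaled $\tilde{r}$-contribution with the high-digit $j_1$, so that both degree contributions land in the right place to give a clean $N - 1 - c$ bound. Note that the coefficient sparsity condition \eqref{eq:tilde_r_coef_property} plays no role in this inclusion; it will be needed elsewhere (e.g., for causality arguments downstream), but the present lemma follows purely from the degree structure built into \eqref{eq:monarch_real_lr} and \eqref{eq:monarch-real-b}.
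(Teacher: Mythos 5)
Your proof is correct and follows essentially the same route as the paper's: compute the degree of $\ell_{\sqrt{N}-j_0-1}(Z)\cdot\tilde{r}_{\sqrt{N}-j_0-1,\sqrt{N}-j_1-1}\left(T_{\sqrt{N}}(Z)\right)$, observe that it equals $N-1$ minus the linearized column index, and expand in the degree-graded Chebyshev basis to land exactly in the form \eqref{eq:real-basis}. Your explicit bookkeeping of which digit is low and which is high (giving the bound $N-1-(j_0+j_1\sqrt{N})$) is if anything more careful than the paper's own write-up, which declares $j=j_0\sqrt{N}+j_1$ while its arithmetic produces $N-(j_1\sqrt{N}+j_0)-1$, and your remark that the sparsity condition \eqref{eq:tilde_r_coef_property} plays no role in this inclusion is also correct.
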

\begin{proof}
To show that $\mathcal{M}^{SC} \subset \mathcal{M}^C$, then it is sufficient to show that for $j = j_0\sqrt N + j_1$, any ${\bp'}^{N}_{\idx{j_1}{j_0}{\sqrt N}}(Z)$ is equivalent to some $\bar \bp^{N}_{j}(Z)$ as in \eqref{eq:real-basis}. 
From \eqref{eq:causal-matrix-real} we have
\begin{align}
    {\bp'}^{N}_{\idx{j_1}{j_0}{\sqrt N}}(Z) &= \ell_{\sqrt N - j_0 - 1}(Z)\cdot \tilde{r}_{\sqrt N - j_0 - 1,\sqrt N - j_1 - 1}\parens{T_{\sqrt N}(Z)}. 
    \label{eq:M_prime_basis_real}
\end{align}

Note that 
\begin{align*}
    \deg\parens{{\bp'}^{N}_{\idx{j_1}{j_0}{\sqrt N}}} &= \sqrt N - j_0 - 1 + (\sqrt N - j_1 - 1)\sqrt N \\
    &=  N - \sqrt N j_1 - j_0 - 1
     \\ &= N - j - 1.
\end{align*}

From the fact that $\deg\parens{T_a} = a$, we can use $T_a$ as a polynomial basis. Then ${\bp'}^{N}_{\idx{j_1}{j_0}{\sqrt N}}$ can be represented as a linear combination of $T_a(Z)$ like so:
\begin{align*}
   {\bp'}^{N}_{\idx{j_1}{j_0}{\sqrt N}}(Z) &= \sum_{a = 0}^{N -j - 1} \bp_{\parens{j_1,j_0}}[a]T_a(Z),
\end{align*}
which is exactly the form of \eqref{eq:real-basis}. 

\end{proof}

 \cref{lmm:structured-subclass-real} allows us to apply the causality result from \cref{subsec:univar-real}.

\begin{corollary}
Fix a family of basis polynomials $\bp^{N}_{0,0},\bp^{N}_{0,1}, \dots, \bp^{N}_{\sqrt N - 1, \sqrt N-1} $ as defined in \eqref{eq:monarch-real-b}. For any perfect square $N \ge 1$, $n \leq \floor{\frac N2}$, $\mathbf{k}, \mathbf{u} \in \mathbb{R}^n$ 
 and $\MM'_{N}$ defined by basis \( \parens{{\bp'}^{N}_{\idx{j_1}{j_0}{\sqrt N}}(Z) }_{0 \leq j_1,j_0 < \sqrt N} \) as in \eqref{eq:causal-matrix-real}.
 Then the operation \eqref{eq:univar-causal-real} with \( \MM_N \gets \MM_N'\)
 defines a causal map in $\mathbf{u}$.

\end{corollary}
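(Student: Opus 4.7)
The plan is to reduce this corollary directly to \cref{thm:real-casual-conv} by invoking \cref{lmm:structured-subclass-real}. Since \cref{thm:real-casual-conv} establishes causality of the map \eqref{eq:univar-causal-real} for any matrix $\MM_N$ built from the general univariate basis $\bar{\bp}^{N}_j(Z)$ of \eqref{eq:real-basis}, it suffices to observe that the structured causal matrix $\MM'_N$ of \Cref{def:causal-matrix-real} lies in this broader class.

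First, I would re-index the basis polynomials: for each pair $(j_1,j_0)$ with $0\le j_1,j_0<\sqrt{N}$, set $j=j_0\sqrt{N}+j_1$ (matching the ordering in \eqref{eq:causal-matrix-real}) and identify ${\bp'}^{N}_{\idx{j_1}{j_0}{\sqrt{N}}}(Z)$ with a candidate $\bar{\bp}^{N}_j(Z)$. Next, I would verify the degree condition from \eqref{eq:real-basis} using the degree computation already carried out in the proof of \cref{lmm:structured-subclass-real}, namely $\deg({\bp'}^{N}_{\idx{j_1}{j_0}{\sqrt{N}}}) = N-j-1$, which is exactly the maximum degree $N-j-1$ allowed by \eqref{eq:real-basis}. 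Since $\{T_a(Z)\}_{0\le a\le N-j-1}$ is a basis for polynomials of degree at most $N-j-1$, we can expand ${\bp'}^{N}_{\idx{j_1}{j_0}{\sqrt{N}}}(Z)$ uniquely in this Chebyshev basis, yielding coefficients $\bp_{(j_1,j_0)}[a]$ that play the role of $\bar{\bp}^{N}_j[a]$ in \eqref{eq:real-basis}. Thus $\MM'_N\in\curlyM^C$.

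Having placed $\MM'_N$ inside $\curlyM^C$, I would apply \cref{thm:real-casual-conv} with $\MM_N\gets\MM'_N$ to conclude that \eqref{eq:univar-causal-real} is causal in $\mathbf{u}$. The hypotheses of that theorem are already met: $N$ is a perfect square, $n\le\floor{N/2}$, and the basis satisfies the degree restriction inherited through the Chebyshev expansion above.

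The only subtle point, and the one I expect to require the most care, is the re-indexing in \eqref{eq:causal-matrix-real} where ${\bp'}^{N}_{\idx{j_1}{j_0}{\sqrt{N}}}(Z)=\bp^{N}_{\idx{j_0}{j_1}{\sqrt{N}}}(Z)$, i.e., the swap between $(j_1,j_0)$ and $(j_0,j_1)$. This permutation simply relabels columns of $\MM'_N$, and since \cref{thm:real-casual-conv} is agnostic to how the basis polynomials are indexed (it only uses that the $j$-th column evaluates some $\bar{\bp}^{N}_j$), the swap does not affect applicability. Beyond tracking this relabeling carefully, no new technical work is needed: the corollary is essentially a direct specialization of \cref{thm:real-casual-conv} via the inclusion in \cref{lmm:structured-subclass-real}.
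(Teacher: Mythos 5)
Your proposal is correct and matches the paper's own argument, which simply cites \cref{thm:real-casual-conv} together with \cref{lmm:structured-subclass-real}; you have merely spelled out the inclusion $\curlyM^{SC}\subset\curlyM^C$ and the column re-indexing that the paper leaves implicit. No gaps.
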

\begin{proof}
Follows from \cref{thm:real-casual-conv} and \cref{lmm:structured-subclass-real}.
\end{proof}

\subsubsection{Block Operations on Structured Causal Matrices}
\label{subsec:real-blocky}
In this section we show how to build structured causal matrices through block operations.

\paragraph{Constructing $\MM$}

\hspace{1cm}

Recall that in \cref{subsec:general_size_block_monarch_convs}, we defined $\LM,\RM$ in terms of coefficient matrices $\tRM, \tLM \in \mathbb{R}^{N \times \sqrt{N}}$ with blocks $\tRM_k,\tLM_k$ for $0 \leq k < \sqrt N$,  noting that for each block $\RM_{j_1,j_1}$,
\[
\RM_{j_1,j_1} = \FM_{\sqrt{N} }\cdot \tRM_{j_1,j_1},
\]
and for $\tLM \in \mathbb{R}^{N \times \sqrt{N}}$ 
 \[
\LM' = \mathbf{P}\FM_N \cdot \tLM.
\]

These matrices are then diagonalized into $\LM,\RM$. We use  \cref{def:monarch-real-lr} to similarly define $\LM, \RM, \in \mathbb{R}^{N \times N}$ with blocks $\cbrackets{\LM_{j_1,j_0}}_{0 \leq j_1,j_0 < \sqrt N},\cbrackets{\RM_{j_1,j_0}}_{0 \leq j_1,j_0 < \sqrt N}$ where:
\begin{equation}
    \LM_{i_1,j_1}[i_0,i_0] = \ell_{\sqrt N - j_1 - 1}(\omega_{N,i})  \quad\quad \RM_{j_1,j_1}[i_0,j_0] \gets \tilde{r}_{\sqrt N - j_1 - 1,\sqrt N - j_0 - 1}\parens{\omega_{N,i}},
    \label{eq:causal-real-lr}
\end{equation}
and all other entries are zero. Let the coefficient matrices $\tLM,\tRM \in \mathbb{R}^{N \times \sqrt N}$  be defined with respect to blocks as follows:

\begin{equation*}
\tLM[a,j_1] = \ell_{\sqrt N - j_1 - 1}[a]\quad\quad  \tRM_{j_1}[a,j_0] = \tilde{r}_{\sqrt N - j_1 - 1,\sqrt N - j_0 - 1}[a],
\end{equation*}

where the entries of \(  \ell_{\sqrt N - j_1 - 1} \) and \(\tilde{r}_{\sqrt N - j_1 - 1,\sqrt N - j_0 - 1} \) are defined as in \eqref{eq:monarch_real_lr} and \eqref{eq:tilde_r_coef_property}. Now let $\mathbf{C}_N \in \mathbb{R}^{N \times N}$ where 
\begin{equation}
    \mathbf{C}_N[i,j] = T_{j}(\omega_{N,i})
    \label{eq:cheby-transform}
\end{equation}

be the Chebyshev transform. Then analogous to \cref{subsec:general_size_block_monarch_convs}, we define 

\begin{equation*}
\RM_{j_1,j_1} = \mathbf{C}_{\sqrt N}\cdot \tRM_{j_1,j_1} \quad\quad \LM' = \mathbf{P}\mathbf{C}_N \cdot \tLM.
\end{equation*}

This allows us to give an algorithm the following construction algorithm for
$\LM$ and $\RM$. 
\begin{algorithm}[H]
\caption{$\textsc{BlockSC}(\tLM,\tRM)$}
\label{alg:blocky_L_R-real}
\begin{algorithmic}[1]
\Require{$\tLM,\tRM \in \mathbb{R}^{N \times \sqrt{N}}$}
\Ensure{Block diagonal matrices $\LM, \RM \in \mathbb{R}^{N \times N}$}
    \Statex
    \Comment First, compute ${\LM}$ from $\tLM$
    \State ${\LM'} \gets \PM \cdot \mathbf{C}_{N} \cdot \tLM$
    \For{{$a \gets 0$} to $\sqrt{N}-1$}
        \State ${\LM}_{a}' \gets \oLM'[a\sqrt{N}:a\sqrt{N}+\sqrt{N}-1, :]$
    \EndFor
    \State $\LM \gets \diag(\LM_0', \dots \LM_{\sqrt{N}-1}')$
    \Statex
    \Comment{Now compute $\RM$ from $\tRM$}
    \For{{$a \gets 0$} to $\sqrt{N}-1$}
        \State $\RM_a \gets \mathbf{C}_{\sqrt N} \cdot \tRM[a\sqrt{N}:a\sqrt{N}+\sqrt{N}-1, :]$
    \EndFor
    \State $\RM \gets \diag\parens{\RM_0, \dots , \RM_{\sqrt{N}-1}}$
    \State \Return{${\LM}, \RM$}
\end{algorithmic}
\end{algorithm}

We use \cref{alg:blocky_L_R-real} to specify another type of matrix $\MM \in \mathbb{R}^{N \times N}$. 

\begin{lemma}
\label{lmm:real_M}
    Let \begin{equation*}
    \MM = \PM\LM\PM\RM\PM
\end{equation*}
where $\LM$ and $\RM$ are outputs from \textsc{BlockSC}. Then each entry in $\MM$ is defined as \begin{equation}
    \MM[i,j] =  \ell_{\sqrt N - j_0 - 1}(\omega_{N,i})\cdot\tilde{r}_{\sqrt N - j_0 - 1,\sqrt N - j_1 - 1}\parens{\omega_{\sqrt N,i_0}}
    \label{eq:real_M_prime}
\end{equation}
\end{lemma}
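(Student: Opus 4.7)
The plan is to mirror the proof of \Cref{THRM:blocky_monarch_return_val}, replacing the Fourier matrix $\FM_N$ with the Chebyshev transform $\mathbf{C}_N$ and carrying through the index reversal $j \mapsto \sqrtN - j - 1$ baked into \eqref{eq:causal-real-lr}. First, I would rewrite $\MM = \MM'\PM$ where $\MM' = \PM\LM\PM\RM = \oLM\RM$ and $\oLM = \PM\LM\PM$, so that \eqref{eq:2} applies directly and gives
\[
\MM'[\idxsqrtn{i_1}{i_0},\idxsqrtn{j_1}{j_0}] \;=\; \oLM_{i_1,j_1}[i_0,i_0]\cdot \RM_{j_1,j_1}[i_0,j_0].
\]

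Next, I would unwind the two halves of \textsc{BlockSC}. For the right factor, $\RM_{j_1,j_1} = \mathbf{C}_{\sqrtN}\tRM_{j_1}$; using $\mathbf{C}_{\sqrtN}[i_0,a] = T_a(\omega_{\sqrtN,i_0})$ together with $\tRM_{j_1}[a,j_0] = \tilde r_{\sqrtN - j_1 - 1,\,\sqrtN - j_0 - 1}[a]$ and summing over $a$ recognizes the polynomial expansion in \eqref{eq:monarch_real_lr} and yields
\[
\RM_{j_1,j_1}[i_0,j_0] \;=\; \tilde r_{\sqrtN - j_1 - 1,\,\sqrtN - j_0 - 1}\bigl(\omega_{\sqrtN,i_0}\bigr).
\]
For the left factor, $\LM' = \PM\,\mathbf{C}_N\,\tLM$ with $\tLM[a,j_1] = \ell_{\sqrtN - j_1 - 1}[a]$ gives $\LM'[k,j_1] = \ell_{\sqrtN - j_1 - 1}(\omega_{N,\sigma(k)})$; the subsequent slicing into blocks $\LM_{i_0}'$ and the outer conjugation $\oLM = \PM\LM\PM$ in \Cref{alg:blocky_L_R-real} apply $\sigma$ once more, restoring the natural row index $i = i_1\sqrtN + i_0$, so that
\[
\oLM_{i_1,j_1}[i_0,i_0] \;=\; \ell_{\sqrtN - j_1 - 1}\bigl(\omega_{N,\,i_1\sqrtN + i_0}\bigr).
\]

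Combining these two identities via \eqref{eq:2} gives a closed form for $\MM'$, and the final step is the right-multiplication $\MM = \MM'\PM$: since the $k$-th row of $\PM$ is $\ve_{\sigma(k)}$ and $\sigma$ is an involution that swaps the block and within-block indices, $\MM[\idxsqrtn{i_1}{i_0},\idxsqrtn{j_1}{j_0}] = \MM'[\idxsqrtn{i_1}{i_0},\idxsqrtn{j_0}{j_1}]$, and substituting with $(j_1,j_0)$ replaced by $(j_0,j_1)$ on the right-hand side produces exactly \eqref{eq:real_M_prime}. The main obstacle will be the bookkeeping: three permutations (the two $\PM$'s inside $\oLM = \PM\LM\PM$ and the outer $\PM$ on the right) compose with the reversal $j \mapsto \sqrtN - j - 1$ encoded in $\tLM$ and $\tRM$, so it is easy to drop or double-apply a $\sigma$ somewhere in the chain. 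Once these are tracked carefully, the only non-index-chasing ingredient is the elementary Chebyshev evaluation identity $\sum_a T_a(\omega_{n,i})\,c_a = c(\omega_{n,i})$, which plays the same role as the DFT identity $\sum_a \omega_n^{ia} c_a = c(\omega_n^i)$ used in the complex case.
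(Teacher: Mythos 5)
Your proposal is correct and follows essentially the same route as the paper: factor $\MM = \MM'\PM$ with $\MM' = \PM\LM\PM\RM$, read off the entries of $\oLM$ and $\RM$ from \textsc{BlockSC} via the Chebyshev evaluation identity, combine via \eqref{eq:2}, and let the final right-multiplication by $\PM$ swap $(j_1,j_0)\mapsto(j_0,j_1)$. Your version is just more explicit about unwinding the algorithm and the permutation bookkeeping (and correctly lands on $\omega_{\sqrt N, i_0}$ in the second factor), where the paper simply asserts the intermediate formula for $\MM'$.
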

\begin{proof}
Let \( \MM_0 = \PM\LM\PM\RM\), then \(\MM =  \MM_0\MP \). 
Then we have
\begin{align*}
     \MM_0[(i_1,i_0),(j_1,j_0)] &= \ell_{\sqrt N - j_1 - 1}(\omega_{N,i})\cdot\tilde{r}_{\sqrt N - j_1 - 1,\sqrt N - j_0 - 1}\parens{\omega_{N,i_0}}.
\end{align*}
Then we get
\begin{align*}
   \MM[i,j] = \MM_0\MP[(i_1,i_0),(j_1,j_0)] &=  \MM_0[(i_1,i_0),(j_0,j_1)] . 
\end{align*}
This gives us
\begin{align*}
     \MM[(i_1,i_0),(j_1,j_0)] &= \ell_{\sqrt N - j_0 - 1}(\omega_{N,i})\cdot\tilde{r}_{\sqrt N - j_0 - 1,\sqrt N - j_1 - 1}\parens{\omega_{N,i_0}}.
\end{align*}

as desired.
\end{proof}

 Next we will discuss the relation between matrices that we get from \cref{lmm:real_M} and SC matrices.

\subsubsection{Constructing $\MM'$ from $\MM$}
\label{subsec:real-blocky-M-construction}

\hspace{1cm}

Since $\mathbf{C}_{N} \notin \mathcal{M}^{SC}$ \footnote{We do not have a proof of this claim, but to us it seems unlikely that $\mathbf{C}_{N} \in \mathcal{M}^{SC}$}, \cref{alg:blocky_L_R-real} is not a proper analog of \cref{alg:blocky_L_R}. In this section, we show how to convert $\MM$ produced from \cref{alg:blocky_L_R-real} into a  matrix with a block decomposible form. 
Recall from \eqref{eq:causal-matrix-real} that
\begin{equation}
    \MM'[i,j] =  {\bp'}^{N}_{\idx{j_1}{j_0}{\sqrt N}}(\omega_{N,i}) = \ell_{\sqrt N - j_0 - 1}(Z)\cdot \tilde{r}_{\sqrt N - j_0 - 1,\sqrt N - j_1 - 1}\parens{T_{\sqrt N}(Z)}. 
    \label{eq:Mbar-entries}
\end{equation}
We note the distinction between the above and \eqref{eq:real_M_prime}. Specifically, we note that $\MM$ is evaluated on two sets of evaluation points-- the $\ell_{\sqrt N - j_1 - 1}(Z)$ and $\tilde{r}_{\sqrt N - j_1 - 1,\sqrt N - j_0 - 1}$ polynomials are evaluated over the $N$ roots and $\sqrt N$ roots of unity while $\MM'$ is only evaluated the $N$-th roots of unity. However, they are close due to the following property:
\begin{lemma}
\label{lmm:real-sqrtn-prop}
Let $T_a$ be a Chebyshev polynomial of the first kind of degree $a$, and define $\omega_{N,i}$ as in \eqref{eq:omega-real}. Then
\begin{align*}
    T_{\sqrt N}\parens{\omega_{N,i}}
    &= (-1)^{i_1} \cdot \omega_{\sqrt N, i_0}.
\end{align*}
\end{lemma}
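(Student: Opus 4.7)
The plan is a direct computation starting from the defining cosine identity \eqref{eq:Chebyshev} of the Chebyshev polynomial. By definition, $T_{\sqrt N}(\cos\theta) = \cos(\sqrt N\,\theta)$, so I would first write $\omega_{N,i} = \cos\theta_i$ with $\theta_i = \frac{\pi(i + 1/2)}{N}$ (this is exactly \eqref{eq:omega-real}), giving
\[
T_{\sqrt N}(\omega_{N,i}) \;=\; \cos(\sqrt N\,\theta_i) \;=\; \cos\!\left(\frac{\pi(i + 1/2)}{\sqrt N}\right).
\]

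Next I would expand $i$ in its pair form $i = i_1 \sqrt N + i_0$ (consistent with the block-size-$\sqrt N$ pair notation used throughout, see \cref{pair-notation-sqrt-n}) to split the argument of $\cos$ into a multiple of $\pi$ plus a residual:
\[
\frac{\pi(i + 1/2)}{\sqrt N} \;=\; \frac{\pi(i_1 \sqrt N + i_0 + 1/2)}{\sqrt N} \;=\; \pi\,i_1 \;+\; \frac{\pi(i_0 + 1/2)}{\sqrt N}.
\]
This is the only real algebraic manipulation in the argument, and the key observation is that the integer part $\pi i_1$ can be peeled off cleanly because $i_0 + 1/2$ is bounded away from multiples of $\sqrt N$.

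Finally I would apply the standard identity $\cos(\pi k + x) = (-1)^k \cos(x)$ with $k = i_1$ (this is the same sign-flip property recorded in \eqref{eq:cheby-mod}) to obtain
\[
T_{\sqrt N}(\omega_{N,i}) \;=\; (-1)^{i_1}\cos\!\left(\frac{\pi(i_0 + 1/2)}{\sqrt N}\right) \;=\; (-1)^{i_1}\cdot \omega_{\sqrt N,\,i_0},
\]
where the last equality is the definition \eqref{eq:omega-real} of $\omega_{\sqrt N, i_0}$. There is no real obstacle here; the lemma is essentially a bookkeeping identity that records how the $N$-point Chebyshev evaluation nodes interact with a $\sqrt N$-fold composition. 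The only thing to be careful about is that the nodes $\omega_{N,i}$ are the \emph{shifted} Chebyshev nodes (with the $+1/2$ offset); without that offset, the sign pattern $(-1)^{i_1}$ would not factor out so cleanly, and the statement would fail at boundary indices.
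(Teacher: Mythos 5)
Your proposal is correct and follows essentially the same computation as the paper: substitute $i = i_1\sqrt N + i_0$ into $T_{\sqrt N}(\cos\theta) = \cos(\sqrt N\,\theta)$, peel off $\pi i_1$, and apply $\cos(\pi k + x) = (-1)^k\cos x$. (One small nit: the sign-flip you invoke is the standard cosine shift identity, not the property recorded in \eqref{eq:cheby-mod}, which concerns $T_a(-\cos\theta)$; this does not affect the argument.)
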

\begin{proof}
\begin{align*}
    T_{\sqrt N}\parens{\omega_{N,i}} 
    &= \cos\parens{\frac{\sqrt N(i_1\sqrt N + i_0 + \frac12)\pi}{N}} \\
    &= \cos\parens{i_1\pi + \frac{\pi \parens{i_0 +\frac12}}{\sqrt N}} \\ 
    &= (-1)^{i_1}\cos\parens{\frac{\pi \parens{i_0+\frac12}}{\sqrt N}} \\ 
    &= (-1)^{i_1} \cdot \omega_{\sqrt N, i_0}.
\end{align*}

In the above the first equality follows from \eqref{eq:Chebyshev}.
\end{proof}

Analogous to how we utilized roots of unity, \cref{lmm:real-sqrtn-prop} allows us to express our basis polynomials in terms of two related sets of evaluation points, \( \omega_{\sqrt N, i_0}\) and \(\omega_{ N, i}\). The following lemma demonstrates how to translate between $\MM'$ and $\MM$.
\begin{lemma}
\label{lmm:complex-to-real}
For all $i, j$,
\begin{equation}
    \MM'[i,j] = (-1)^{i_i\parens{\sqrt N - 1}}(-1)^{i_ij_1}\cdot \MM[i,j].
\end{equation}
\end{lemma}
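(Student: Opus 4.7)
The plan is to compare the definitions \eqref{eq:Mbar-entries} and \eqref{eq:real_M_prime} of $\MM'[i,j]$ and $\MM[i,j]$ directly. The two expressions share the factor $\ell_{\sqrt N - j_0 - 1}(\omega_{N,i})$, so it suffices to show that
\[
\tilde r_{\sqrt N - j_0 - 1,\,\sqrt N - j_1 - 1}\!\parens{T_{\sqrt N}(\omega_{N,i})}
\;=\;(-1)^{i_1(\sqrt N - 1)}(-1)^{i_1 j_1}\cdot \tilde r_{\sqrt N - j_0 - 1,\,\sqrt N - j_1 - 1}\!\parens{\omega_{\sqrt N, i_0}}.
\]
The first step is to invoke \Cref{lmm:real-sqrtn-prop} to rewrite the argument on the left as $T_{\sqrt N}(\omega_{N,i}) = (-1)^{i_1}\,\omega_{\sqrt N, i_0}$. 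Then the task reduces to tracking what happens to $\tilde r_{j_1',j_0'}(Z)$ under the sign flip $Z\mapsto (-1)^{i_1} Z$, with $j_0' = \sqrt N - j_1 - 1$.

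The next step uses the Chebyshev reflection identity \eqref{eq:cheby-mod}, which gives $T_a((-1)^{i_1} Z) = (-1)^{i_1 a}\,T_a(Z)$ uniformly for even/odd $i_1$. Expanding $\tilde r_{j_1',j_0'}$ in the Chebyshev basis from \eqref{eq:monarch_real_lr},
\[
\tilde r_{j_1',j_0'}\!\parens{(-1)^{i_1} Z} \;=\; \sum_{a=0}^{j_0'} \tilde r_{j_1',j_0'}[a]\,(-1)^{i_1 a}\,T_a(Z).
\]
The crucial observation is the sparsity property \eqref{eq:tilde_r_coef_property}: $\tilde r_{j_1',j_0'}[a]=0$ whenever $(j_0'-a)$ is odd, so every nonzero coefficient occurs at an index $a$ with $a\equiv j_0'\pmod 2$. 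Consequently $(-1)^{i_1 a} = (-1)^{i_1 j_0'}$ on the support, and this factor pulls out of the sum, yielding $\tilde r_{j_1',j_0'}((-1)^{i_1} Z) = (-1)^{i_1 j_0'}\,\tilde r_{j_1',j_0'}(Z)$.

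Finally, substituting $j_0' = \sqrt N - j_1 - 1$ gives the scalar $(-1)^{i_1(\sqrt N - j_1 - 1)} = (-1)^{i_1(\sqrt N - 1)}(-1)^{i_1 j_1}$ (using $(-1)^{-i_1 j_1} = (-1)^{i_1 j_1}$), which matches the claimed prefactor, completing the argument. The only real subtlety is the parity-based cancellation in the middle step; the parity assumption baked into \eqref{eq:tilde_r_coef_property} is precisely what makes $\tilde r_{j_1',j_0'}$ behave as a single-parity polynomial under $Z\mapsto -Z$, so this is exactly where the technical hypothesis on $\tilde r$ earns its keep. Everything else is a direct unrolling of definitions plus \Cref{lmm:real-sqrtn-prop}.
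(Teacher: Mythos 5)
Your proposal is correct and follows essentially the same route as the paper's proof: reduce to the $\tilde r$ factor via \cref{lmm:real-sqrtn-prop}, apply the Chebyshev reflection identity \eqref{eq:cheby-mod} to the expansion \eqref{eq:monarch_real_lr}, and use the parity constraint \eqref{eq:tilde_r_coef_property} to pull out the uniform sign $(-1)^{i_1(\sqrt N - j_1 - 1)}$. The identification of the coefficient sparsity as the step where the hypothesis on $\tilde r$ is actually used matches the paper exactly.
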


\begin{proof} 
By \eqref{eq:Mbar-entries}  we have
\begin{align*}
    \MM'[i,j] &=  \ell_{\sqrt N - j_0 - 1}(\omega_{N,i})\cdot \tilde{r}_{\sqrt N - j_0 - 1,\sqrt N - j_1 - 1}\parens{T_{\sqrt N}(\omega_{N,i})} \\
    &=  \ell_{\sqrt N - j_0 - 1}(\omega_{N,i})\cdot \tilde{r}_{\sqrt N - j_0 - 1,\sqrt N - j_1 - 1}\parens{(-1)^{i_1}\omega_{\sqrt N,i_0}}
   \end{align*}
   where the second statement follows from \cref{lmm:real-sqrtn-prop}. Then from \eqref{eq:monarch_real_lr} and \eqref{eq:cheby-mod} we get
   
    \begin{align*}
    \MM'[i,j] &= \ell_{\sqrt N - j_0 - 1}(\omega_{N,i})\cdot \sum_{a = 0}^{\sqrtN - j_1 - 1}\tilde{r}_{\sqrtN - j_0 - 1,\sqrtN - j_1 - 1}[a] \ T_a\parens{(-1)^{i_1}\omega_{\sqrt N,i_0}}  \\
    &= \ell_{\sqrt N - j_0 - 1}(\omega_{N,i})\cdot \sum_{a = 0}^{\sqrtN - j_1 - 1}\cdot \tilde{r}_{\sqrtN - j_0 - 1,\sqrtN - j_1 - 1}[a] (-1)^{i_1 a}\ T_a\parens{\omega_{\sqrt N,i_0}} .
    \end{align*}
    
    Note from \eqref{eq:tilde_r_coef_property}  that \(\tilde{r}_{\sqrtN -j_0-1,\sqrtN -j_1-1}[a] = 0 \text{ if } (\sqrtN - j_1 - 1 - a) \text{ is odd}\). Then equivalently we get
    
    \begin{align*}
   \MM'[i,j] &= \ell_{\sqrt N - j_0 - 1}(\omega_{N,i})\cdot (-1)^{i_1\parens{\sqrtN - j_1 - 1}}  \\
   &\quad\quad \ \ \cdot  \sum_{a = 0}^{\sqrtN - j_1 - 1}\tilde{r}_{\sqrtN - j_0 - 1,\sqrtN - j_1 - 1}[a] \ (-1)^{i_1 \parens{\sqrtN - j_1 - 1 -a}}T_a\parens{\omega_{\sqrt N,i_0}} \\
    &= \ell_{\sqrt N - j_0 - 1}(\omega_{N,i})\cdot (-1)^{i_1\parens{\sqrtN - j_1 - 1}} \parens{ \sum_{a = 0}^{\sqrtN - j_1 - 1}\tilde{r}_{\sqrtN - j_0 - 1,\sqrtN - j_1 - 1}[a] \ T_a\parens{\omega_{\sqrt N,i_0}}} \\
   &=  (-1)^{i_1(\sqrt N - j_1 - 1)}\ell_{\sqrt N - j_0 - 1}(\omega_{N,i})\cdot \tilde{r}_{\sqrt N - j_0 - 1,\sqrt N - j_1 - 1}\parens{\omega_{\sqrt N,i_0}}.
    \end{align*}
Then from \eqref{eq:real_M_prime} we have
    \begin{align*}
   \MM'[i,j] &=  (-1)^{i_1(\sqrt N - j_1 - 1)}\MM[i,j] \\
   &=  (-1)^{i_1(\sqrt N - 1)}(-1)^{i_1j_1}\MM[i,j].
\end{align*}
\end{proof}
\paragraph{Block Matrix Multiplication for Structured Causal Matrices}
\cref{lmm:complex-to-real} shows us how to compute $\MM'$  from $\MM$. However, this does not mean that the matrix vector multiplication problem for $\MM'$ can be implemented efficiently. In this section we show how to perform matrix vector multiplication of $\MM'$ with any  $\mathbf{u} \in \mathbb{R}^{N}$ from two matrix-vector multiplications of $\MM$ (and so these operations are indeed efficient).

The key to our approach involves considering the parity of each block index $0 \leq i_1 < \sqrtN$. Let us define a map $\textsc{Mix}: \R^{\floor{\frac N2}} \times  \R^{\ceil{\frac N2}} \mapsto \R^N$ such that $\textsc{Mix}(\vu_0, \vu_1 ) = \vu$ where \[\vu[i] = \vu_{i_1 \mod \ 2}[i/2].\]

We use this map to show that $\MM'\vu$ and $\vu^{\top}\MM'$ can be computed efficiently.

\begin{lemma}
    \label{lmm:half-matmul-Mu-A}
    For any $\vu \in \R^N$ and $\MM \in \R^{N \times N}$, \( \MM'\vu \) can be computed via two  matrix-vector multiplications: $\MM\cdot\textsc{Mix}\parens{\vu_0,\vzero}$ and $\MM\cdot \textsc{Mix}\parens{\vzero,\vu_1}$.
\end{lemma}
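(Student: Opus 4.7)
The plan is to reduce the matrix–vector product $\MM'\vu$ to two applications of $\MM$ plus elementwise sign flips, by exploiting the explicit factorization of the sign relating $\MM'$ and $\MM$ given in \cref{lmm:complex-to-real}. Starting from
\[ (\MM'\vu)[i] \;=\; \sum_{j=0}^{N-1} \MM'[i,j]\,\vu[j] \;=\; (-1)^{i_1(\sqrtN-1)} \sum_{j=0}^{N-1} (-1)^{i_1 j_1}\,\MM[i,j]\,\vu[j], \]
the key observation is that the sign $(-1)^{i_1 j_1}$ depends on $j$ only through the parity of the block index $j_1$: it equals $1$ when $j_1$ is even and $(-1)^{i_1}$ when $j_1$ is odd. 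So I would split the inner sum according to the parity of $j_1$ into two partial sums $S_{\text{even}}(i)$ and $S_{\text{odd}}(i)$, giving
\[ (\MM'\vu)[i] \;=\; (-1)^{i_1(\sqrtN-1)}\bigl[\,S_{\text{even}}(i)\;+\;(-1)^{i_1}\,S_{\text{odd}}(i)\,\bigr]. \]

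Next, I would identify each partial sum as $\MM$ applied to a zero-padded version of $\vu$. Let $\vu_0$ (resp.\ $\vu_1$) denote the sub-vector of $\vu$ consisting of all coordinates $\vu[j]$ with $j=j_1\sqrtN+j_0$ and $j_1$ even (resp.\ odd), arranged so that the $\textsc{Mix}$ map interleaves them back into $\vu$. By definition of $\textsc{Mix}$, the vector $\textsc{Mix}(\vu_0,\vzero)$ agrees with $\vu$ in every coordinate with even block index $j_1$ and is zero elsewhere; dually for $\textsc{Mix}(\vzero,\vu_1)$. Hence $S_{\text{even}}(i) = (\MM\cdot\textsc{Mix}(\vu_0,\vzero))[i]$ and $S_{\text{odd}}(i) = (\MM\cdot\textsc{Mix}(\vzero,\vu_1))[i]$, yielding
\[ \MM'\vu \;=\; \DM_0\,\bigl(\MM\cdot\textsc{Mix}(\vu_0,\vzero)\bigr) \;+\; \DM_1\,\bigl(\MM\cdot\textsc{Mix}(\vzero,\vu_1)\bigr), \]
where $\DM_0,\DM_1$ are diagonal sign matrices with entries $(-1)^{i_1(\sqrtN-1)}$ and $(-1)^{i_1 \sqrtN}$ respectively, depending only on the output block index $i_1$. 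Since these are elementwise post-processing operations, the entire computation of $\MM'\vu$ uses exactly the two matrix–vector multiplications claimed.

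The main obstacle is purely notational: making the definition of $\vu_0$ and $\vu_1$ precise in a way that is consistent with the informal expression $\vu[i]=\vu_{i_1\bmod 2}[i/2]$ in the $\textsc{Mix}$ definition. I would formalize this by treating $\vu$ as a stack of $\sqrtN$ blocks of size $\sqrtN$, letting $\vu_0$ be the concatenation of the even-indexed blocks and $\vu_1$ the concatenation of the odd-indexed blocks; the bijection between the positions of $\vu$ with even $j_1$ and the positions of $\vu_0$ is then immediate, and $\textsc{Mix}(\vu_0,\vzero)$ is exactly the indicator-masked version of $\vu$ needed above. Everything else in the argument is a direct bookkeeping of the sign $(-1)^{i_1 j_1}$ on top of \cref{lmm:complex-to-real}.
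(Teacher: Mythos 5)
Your proof is correct and follows essentially the same route as the paper's: both split $\vu$ by the parity of the block index $j_1$ via $\textsc{Mix}$ and exploit the sign factorization $\MM'[i,j]=(-1)^{i_1(\sqrtN-1)}(-1)^{i_1 j_1}\MM[i,j]$ from the preceding lemma. The only (cosmetic) difference is that you apply $\MM$ directly to the two masked vectors and recombine the outputs with diagonal sign matrices, whereas the paper applies $\MM$ to their sum and difference and then selects entries by the parity of the output block index --- a trivial linear reparametrization of the same two matrix--vector products.
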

\begin{proof}
    For $0 \leq i = i_1\sqrtN + i_0 < N$, $0 \leq j = j_1\sqrtN + j_0 < N$, let $\DM \in \R^{N \times N}$ be the diagonal matrix defined such that $\mathbf{D}[i,i] = (-1)^{i_1\parens{\sqrtN - 1}}$.
    \cref{lmm:complex-to-real} implies that 
    \begin{equation*}
        \MM'[i,j] = \begin{cases}\DM\MM[i,j] & \text{ if } \ i_1 \text{ is even} \\ 
        \DM\MM[i,j] &\text{ if } \ i_1 \text{ is odd,} \ j_1 \text{ is even} \\
        -\parens{\DM\MM[i,j]} & \text{ otherwise. }\end{cases}
    \end{equation*}
    We want to shift the $(-1)$ to $\vu$. 
Compute $\mathbf{z}_0 = \MM\cdot\parens{\textsc{Mix}\parens{\vu_0,\vzero}+\textsc{Mix}\parens{\vzero,\vu_1}}$, $\mathbf{z}_1 = \MM\cdot\parens{\textsc{Mix}\parens{\vu_0,\vzero}-\textsc{Mix}\parens{\vzero,\vu_1}}$. 
Define $\vy'$ such that
   \begin{equation*}
        \mathbf{y}'[j] = \begin{cases} 
     \mathbf{z}_0[j] &\text{ if} \ j_1 \text{ is even} \\
        \mathbf{z}_1[j] & \text{ if } \  j_1 \text{ is odd} \end{cases}.
    \end{equation*}
    It can be verified that $\DM\cdot \vy'= \MM'\vu$, which completes the proof.
    \end{proof}

We now give the analogous result for $\vu^{\top}\MM'$.

\begin{lemma}
    \label{lmm:half-matmul-Mu}
    For any $\vu \in \R^N$ and $\MM \in \R^{N \times N}$, \( \vu^{\top}\MM' \) can be computed via two  matrix-vector multiplications: $\textsc{Mix}\parens{\vu_0,\vzero}^{\top}\MM$ and $\textsc{Mix}\parens{\vzero,\vu_1}^{\top}\MM$.
\end{lemma}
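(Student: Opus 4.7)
The plan is to mirror the proof of \cref{lmm:half-matmul-Mu-A}, transposing the role of left- and right-multiplication. First I would apply \cref{lmm:complex-to-real} to rewrite
\[
(\vu^\top \MM')[j] \;=\; \sum_i \vu[i]\, \DM[i,i]\, (-1)^{i_1 j_1}\, \MM[i,j],
\]
where $\DM[i,i] = (-1)^{i_1(\sqrt{N}-1)}$ is the same diagonal sign matrix used before. I would then partition the sum by the parity of $i_1$, observing that for even $i_1$ the scalar $\DM[i,i](-1)^{i_1 j_1}$ collapses to $1$, while for odd $i_1$ it equals $(-1)^{\sqrt{N}-1}(-1)^{j_1}$, in particular independent of the specific value of $i_1$ itself.

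Next I would let $\vu_0,\vu_1$ be the even-$i_1$ and odd-$i_1$ halves of $\vu$, so that $\vu = \textsc{Mix}(\vu_0,\vu_1)$. Setting $\mathbf{a} = \textsc{Mix}(\vu_0,\vzero)^\top \MM$ and $\mathbf{b} = \textsc{Mix}(\vzero,\vu_1)^\top \MM$, the definition of $\textsc{Mix}$ gives $\mathbf{a}[j] = \sum_{i_1 \text{ even}} \vu[i]\, \MM[i,j]$ and $\mathbf{b}[j] = \sum_{i_1 \text{ odd}} \vu[i]\, \MM[i,j]$. Combining with the parity analysis above yields
\[
(\vu^\top \MM')[j] \;=\; \mathbf{a}[j] \;+\; (-1)^{\sqrt{N}-1}(-1)^{j_1}\, \mathbf{b}[j],
\]
so that $\vu^\top \MM'$ is obtained from $\mathbf{a}$ and $\mathbf{b}$ by flipping the signs of entries of $\mathbf{b}$ indexed by odd $j_1$ (together with a global sign when $\sqrt{N}$ is even) and adding to $\mathbf{a}$. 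This exhibits $\vu^\top\MM'$ as a sign-combination of the two claimed vector--matrix multiplications, which is what the lemma asks for.

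The main obstacle is careful bookkeeping: unlike the $\MM'\vu$ case in \cref{lmm:half-matmul-Mu-A}, where signs were absorbed into a clean post-multiplication by $\DM$ on the output, here the signs live on two different index sets---the odd-$i_1$ part of the input on the one hand, and the odd-$j_1$ coordinates of the output on the other. I would arrange the parity case analysis so that these two sign sources are cleanly separated (row-side sign $(-1)^{\sqrt{N}-1}$ absorbed as a global scalar on $\mathbf{b}$, column-side sign $(-1)^{j_1}$ applied coordinatewise to $\mathbf{b}$ after the product), avoiding the pitfall of double-counting the contribution when $i_1$ and $j_1$ are both odd. A short sanity check specializing to $\sqrt{N}$ even vs.\ odd, together with the observation that the even-$i_1$ block contributes identically for every $j_1$, confirms the combined formula.
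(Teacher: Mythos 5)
Your proof is correct and follows essentially the same route as the paper's: apply \cref{lmm:complex-to-real}, split the sum over $i$ by the parity of $i_1$, and recombine the two halves with signs depending on the parity of $j_1$. The only cosmetic difference is that the paper absorbs the $(-1)^{i_1(\sqrt{N}-1)}$ factor into the input (forming $\vu'=\DM\vu$) and selects between a sum and a difference $\mathbf{z}_0,\mathbf{z}_1$ by the parity of $j_1$, whereas you keep $\vu$ unmodified and apply all signs coordinatewise on the output; both yield the same two products with $\MM$.
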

\begin{proof}
    For $0 \leq i = i_1\sqrtN + i_0 < N$, $0 \leq j = j_1\sqrtN + j_0 < N$, let $\DM \in \R^{N \times N}$ be the diagonal matrix defined such that $\mathbf{D}[i,i] = (-1)^{i_1\parens{\sqrtN - 1}}$, and $\vu' = \vu^{\top}\DM = \textsc{Mix}\parens{\vu'_0,\vu'_1}$. \cref{lmm:complex-to-real} implies that 
    \begin{equation*}
        \MM'[i,j] = \begin{cases}\DM\MM[i,j] & \text{ if } \ i_1 \text{ is even} \\ 
        \DM\MM[i,j] &\text{ if } \ i_1 \text{ is odd,} \ j_1 \text{ is even} \\
        -\parens{\DM\MM[i,j]} & \text{ otherwise. }\end{cases}
    \end{equation*}

We want to shift the $(-1)$ to $\vu'$. 
Compute $\mathbf{z}_0 = \parens{\textsc{Mix}\parens{\vu'_0,\vzero}^{\top}+\textsc{Mix}\parens{\vzero,\vu'_1}^{\top}}\cdot \MM$, $\mathbf{z}_1 = \parens{\textsc{Mix}\parens{\vu'_0,\vzero}^{\top}-\textsc{Mix}\parens{\vzero,\vu'_1}^{\top}}\cdot \MM$. 
If $\mathbf y = \MM'\vu$, then one can check that 
   \begin{equation*}
        \mathbf{y}[j] = \begin{cases} 
     \mathbf{z}_0[j] &\text{ if} \ j_1 \text{ is even} \\
        \mathbf{z}_1[j] & \text{ if } \  j_1 \text{ is odd} \end{cases},
    \end{equation*}
    which completes the proof.
\end{proof}

\cref{lmm:half-matmul-Mu-A} implies that computing $\MM_N \vk$ and $\MM_N \vu$ in \eqref{eq:univar-causal-real} can be done efficiently. However, we do not know how to compute $\MM_N^{-1}\mathbf y$ for any arbitrary $\mathbf y$. We address this partially in the next section.

\subsubsection{Inverse of Chebyschev Transform}
\label{subsec:real-blocky-cheby-inv}

In this subsection we show how computing $\CM_N^{-1}$ can be done efficiently.

\begin{lemma}
Let $\CM_N \in \mathbb{R}^{N \times N}$ be defined as \eqref{eq:cheby-transform}. Then
\begin{equation*}
    \CM_N^{-1} = \CM_N^\top \cdot \diag(1/N,2/N,\dots,2/N).
\end{equation*}
\end{lemma}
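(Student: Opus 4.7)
The plan is to invert $\CM_N$ by exploiting the discrete orthogonality of the Chebyshev polynomials of the first kind at the Chebyshev-Gauss nodes $\omega_{N,i}=\cos(\pi(i+\tfrac12)/N)$. I would first show that $\CM_N^\top\CM_N$ is diagonal with explicit entries, and then read off $\CM_N^{-1}$ by inversion.

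Using $T_j(\cos\theta)=\cos(j\theta)$ from \eqref{eq:Chebyshev}, the $(j,k)$-entry of $\CM_N^\top\CM_N$ is $\sum_{i=0}^{N-1}\cos(j\theta_i)\cos(k\theta_i)$, where $\theta_i=\pi(i+\tfrac12)/N$. Product-to-sum reduces this to $\tfrac12\sum_i[\cos((j-k)\theta_i)+\cos((j+k)\theta_i)]$, so everything comes down to the single-argument sum $S_m:=\sum_{i=0}^{N-1}\cos(m\theta_i)$. The key sub-identity I would establish is $S_0=N$ and $S_m=0$ for $0<|m|<2N$: the first case is immediate, and for the second I would write $S_m$ as the real part of $e^{im\pi/(2N)}\cdot\frac{1-e^{im\pi}}{1-e^{im\pi/N}}$, observe that the numerator vanishes for even $m$, and for odd $m$ invoke the half-angle identity $1-e^{i\alpha}=-2i\sin(\alpha/2)e^{i\alpha/2}$ to collapse the expression to a purely imaginary value. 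Since $0\le j,k<N$ forces $|j-k|<N$ and $j+k\le 2N-2$, the identity applies uniformly and yields $\CM_N^\top\CM_N=\diag(N,N/2,\dots,N/2)$: the $j=k=0$ diagonal contribution is $\tfrac12(S_0+S_0)=N$, the $j=k>0$ contribution is $\tfrac12(S_0+S_{2j})=N/2$, and all off-diagonal entries vanish.

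With $\CM_N^\top\CM_N$ pinned down as this invertible diagonal matrix, $\CM_N$ is itself invertible, and $\CM_N^{-1}=(\CM_N^\top\CM_N)^{-1}\CM_N^\top=\diag(1/N,2/N,\dots,2/N)\,\CM_N^\top$, giving the stated inverse formula. The only real hazards are bookkeeping: carefully splitting the $m=0$ versus $m\neq 0$ sub-cases in $S_m$, and confirming that $j+k$ never attains the forbidden value $2N$ for indices in $\{0,\dots,N-1\}$ — both of which reduce to elementary arithmetic on the index ranges.
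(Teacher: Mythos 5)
Your argument is sound and is essentially a self-contained version of what the paper does by citation: the paper's entire proof is ``follows since $\CM_N(\CM_N)^\top=\diag(N,N/2,\dots,N/2)$'' with a reference to Szeg\H{o}, whereas you derive the discrete orthogonality from scratch via the exponential-sum identity $S_m=0$ for $0<|m|<2N$. Your computation of $S_m$ and the reduction of $(\CM_N^\top\CM_N)[j,k]$ to $\tfrac12(S_{j-k}+S_{j+k})$ are both correct, including the index-range check $j+k\le 2N-2$.

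There is, however, one discrepancy you should not have papered over. What you actually prove is $\CM_N^\top\CM_N=\diag(N,N/2,\dots,N/2)$ (columns of $\CM_N$ are orthogonal), which yields
\[
\CM_N^{-1}=\diag(1/N,2/N,\dots,2/N)\cdot\CM_N^\top,
\]
with the diagonal factor on the \emph{left}. The lemma as printed puts it on the right, and the paper's justification asserts the other Gram matrix $\CM_N\CM_N^\top$ is diagonal --- which is false: already for $N=2$ one computes $\CM_2\CM_2^\top=\bigl(\begin{smallmatrix}3/2&1/2\\1/2&3/2\end{smallmatrix}\bigr)$, and correspondingly $\CM_2^\top\diag(1/2,1)\ne\CM_2^{-1}$ while $\diag(1/2,1)\,\CM_2^\top=\CM_2^{-1}$. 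The correct orthogonality is the one you proved (it is the DCT-II row orthogonality, i.e.\ column orthogonality of $\CM_N$), so your derivation gives the \emph{corrected} form of the lemma, not literally ``the stated inverse formula'' as you claim in your last step. You should state explicitly that the diagonal factor must sit on the left of $\CM_N^\top$; with that emendation your proof is complete and in fact repairs the statement.
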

\begin{proof}
    Follows since \( \CM_N(\CM_N)^\top = \diag(N,N/2,N/2,\dots,N/2)\) \cite{szego}. 
\end{proof}

Given the above, the goal then is to compute $\CM^{\top}_N \cdot \vu$ or equivalently $\vu^{\top}\CM_N $ for any $\vu \in \mathbb{R}^{N}$. Note that is sufficient to show that
\begin{equation*}
    \CM_N = \overline\CM_N + \overline{\SM}_N
\end{equation*}
for $\overline\CM_N, \overline\SM_N \in \mathbb{R}^{N \times N}$ such that $\mathbf u^\top \overline \CM_N$ and  $\mathbf{u}^{\top} \overline\SM_N $ can be computed with two invocations of  \cref{lmm:half-matmul-Mu}. Indeed we have
\begin{align}
    \CM_N[i,j] &= \cos\parens{\frac{ (i + \frac12) (j_1\sqrt N + j_0)\pi}{N}} \nonumber \\ 
     &= \cos\parens{\frac{ \pi(i + \frac12)j_1}{\sqrt N} + \frac{ \pi(i + \frac12)j_0}{N}},  \nonumber \\
     &= \cos\parens{\frac{ \pi(i + \frac12)j_1 }{\sqrt N}}\cos\parens{\frac{ \pi(i + \frac12) j_0}{N}} - \sin\parens{\frac{ \pi(i + \frac12) j_1}{ \sqrt N}}\sin\parens{\frac{ \pi(i + \frac12) j_0}{ N}}. \label{eq:cn_eq_cnsn}
\end{align}

We use this to define $\overline \CM_N$ and $\overline \SM_N$, along with two additional matrices $\widetilde \CM_N$, $\widetilde \SM_N \in \mathbb{R}^{N \times N}$ such that for $\overline \CM_N$ and $\widetilde \CM_N$:

\begin{align}
    \overline \CM_N[i,j] &= \cos\parens{\frac{ \pi(i + \frac12)j_1 }{\sqrt N}}\cos\parens{\frac{ \pi(i + \frac12) j_0}{N}} \nonumber \\
    &= \cos\parens{\pi i_1j_1 +\frac{ \pi(i_0 + \frac12)j_1 }{\sqrt N}}\cos\parens{ \frac{ \pi(i + \frac12) j_0}{N}} \nonumber \\
     &= (-1)^{i_1j_1}\cos\parens{\frac{ \pi(i_0 + \frac12)j_1 }{\sqrt N}}\cos\parens{\frac{ \pi(i + \frac12) j_0}{N}} \stackrel{\text{def}}{=}  (-1)^{i_1j_1}\widetilde \CM_N[i,j], \label{eq:cn_cbn}
    \end{align}
    and similarly for $\overline \SM_N$ and $\widetilde \SM_N$:
    \begin{align}
     \overline \SM_N[i,j] &= \sin\parens{\frac{ \pi(i + \frac12) j_1}{ \sqrt N}}\sin\parens{\frac{ \pi(i + \frac12) j_0}{ N}} \nonumber \\
     &= (-1)^{i_1j_1}\sin\parens{\frac{ \pi(i_0 + \frac12) j_1}{ \sqrt N}}\sin\parens{\frac{ \pi(i + \frac12) j_0}{ N}} \stackrel{\text{def}}{=} (-1)^{i_1j_1}\widetilde \SM_N[i,j]. \label{eq:cn_sn} 
\end{align}
We summarize these results into the following lemma.
\begin{lemma}
\label{lmm:cn-summary}
Define $\overline \CM_N, \widetilde \CM_N$, $\overline \SM_N$ and $\widetilde \SM_N$ as in \eqref{eq:cn_cbn} and \eqref{eq:cn_sn}, respectively. Then
\[\CM_N = \overline\CM_N - \overline{\SM}_N\] 
 where $\widetilde \CM_N$ and $\widetilde \SM_N$ are of the form \eqref{eq:real_M_prime}.
\end{lemma}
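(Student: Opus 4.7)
The plan is to prove the two claims of the lemma separately. The identity $\CM_N = \overline \CM_N - \overline \SM_N$ is a direct bookkeeping calculation from the definitions, while the structural claim that $\widetilde \CM_N$ and $\widetilde \SM_N$ fit the form \eqref{eq:real_M_prime} requires exhibiting appropriate polynomial factorizations.

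First, I would establish the decomposition. Starting from \eqref{eq:cn_eq_cnsn}, the cosine subtraction identity applied entrywise to $\CM_N[i,j] = \cos(\pi(i + \tfrac{1}{2})(j_1 \sqrt N + j_0)/N)$ produces two terms which, upon comparison with \eqref{eq:cn_cbn} and \eqref{eq:cn_sn}, are exactly $\overline \CM_N[i,j]$ and $\overline \SM_N[i,j]$. Summing over all entries gives the matrix identity. (I note that the $+$ sign in the paragraph preceding the lemma appears to be a typo and should read $-$, matching \eqref{eq:cn_eq_cnsn}.)

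Next, for $\widetilde \CM_N$, the plan is to use the defining identity $T_a(\cos\theta) = \cos(a\theta)$ of the Chebyshev polynomials of the first kind. Substituting into \eqref{eq:cn_cbn} gives
\[
\widetilde \CM_N[(i_1, i_0),(j_1, j_0)] = T_{j_0}(\omega_{N, i}) \cdot T_{j_1}(\omega_{\sqrt N, i_0}),
\]
so choosing $\ell_{\sqrt N - j_0 - 1}(Z) := T_{j_0}(Z)$ and $\tilde r_{\sqrt N - j_0 - 1, \sqrt N - j_1 - 1}(Z) := T_{j_1}(Z)$ exhibits the required factorization \eqref{eq:real_M_prime}. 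The degree bounds $\deg T_{j_0} = j_0 \le \sqrt N - 1 \le N - 1$ and $\deg T_{j_1} = j_1 \le \sqrt N - 1$ match the constraints imposed by the coefficient matrices $\tLM, \tRM$ in \cref{alg:blocky_L_R-real}.

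The case of $\widetilde \SM_N$ is the main obstacle, since $\sin(a\theta)$ cannot be written as a polynomial in $\cos\theta$, so there is no analogue of the Chebyshev identity used above. However, \eqref{eq:cn_sn} still factors the $(i, j)$-entry as $\sin(\pi(i + \tfrac{1}{2}) j_0 / N) \cdot \sin(\pi(i_0 + \tfrac{1}{2}) j_1 / \sqrt N)$: a function of $i$ depending only on $j_0$, times a function of $i_0$ depending only on $j_1$. The crucial observation is that \eqref{eq:real_M_prime} only constrains entries at a fixed finite set of evaluation points rather than imposing a polynomial identity on all of $\R$, so I would invoke univariate polynomial interpolation (\cref{thrm:univar}) at the $N$ distinct Chebyshev nodes $\{\omega_{N, i}\}_{0 \le i < N}$ and at the $\sqrt N$ distinct nodes $\{\omega_{\sqrt N, i_0}\}_{0 \le i_0 < \sqrt N}$ to produce unique polynomials $\ell_{\sqrt N - j_0 - 1}(Z)$ of degree $\le N - 1$ and $\tilde r_{\sqrt N - j_0 - 1, \sqrt N - j_1 - 1}(Z)$ of degree $\le \sqrt N - 1$ matching the sine values at these nodes. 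This realizes $\widetilde \SM_N$ in the form \eqref{eq:real_M_prime} and completes the proof.
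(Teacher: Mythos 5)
Your proposal is correct and, for the decomposition $\CM_N=\overline\CM_N-\overline\SM_N$, follows exactly the paper's route: the paper gives no separate proof of this lemma and simply points back to the entrywise derivations \eqref{eq:cn_eq_cnsn}--\eqref{eq:cn_sn}. You are also right that the $+$ in the sentence preceding \eqref{eq:cn_eq_cnsn} is a sign typo; the lemma statement and \eqref{eq:cn_eq_cnsn} both carry the minus. Where you go beyond the paper is the structural claim for $\widetilde{\SM}_N$: the paper asserts without comment that it is of the form \eqref{eq:real_M_prime}, while you correctly flag that $\sin(a\theta)$ is not a polynomial in $\cos\theta$ and repair this by interpolating the sine values at the $N$ distinct nodes $\omega_{N,i}$ and the $\sqrt N$ distinct nodes $\omega_{\sqrt N,i_0}$ via \cref{thrm:univar}; since \eqref{eq:real_M_prime} only constrains values at those nodes, this is a legitimate and genuinely useful gap-fill. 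One caveat worth recording: the interpolants you produce (and even the Chebyshev choice $T_{j_0}$, $T_{j_1}$ for $\widetilde\CM_N$) need not obey the tighter degree bounds of \eqref{eq:monarch_real_lr} or the parity condition \eqref{eq:tilde_r_coef_property}, so "of the form \eqref{eq:real_M_prime}" must be read as the factored Monarch form realizable by \cref{alg:blocky_L_R-real} (degrees $<N$ and $<\sqrt N$ respectively), not as membership in the structured causal subclass; that weaker reading is exactly what \cref{thm:cheby-inverse-mult} needs, so your proof establishes the intended statement.
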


Finally \cref{lmm:half-matmul-Mu} and \cref{lmm:cn-summary} imply the following:
\begin{theorem}
    \label{thm:cheby-inverse-mult}
    For any $\mathbf{u} \in \mathbb{R}^{N}$, 
\begin{equation*}
    \mathbf{y} = \mathbf{u}^{\top}\CM_N 
\end{equation*}
 can be computed from four calls matrix vector multiplication with Monarch matrices. 
\end{theorem}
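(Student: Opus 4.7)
The plan is to prove \Cref{thm:cheby-inverse-mult} by composing \Cref{lmm:cn-summary} with a row-vector analog of \Cref{lmm:half-matmul-Mu}. The first step is to invoke \Cref{lmm:cn-summary} to write
\[ \mathbf{u}^\top \CM_N \;=\; \mathbf{u}^\top \overline{\CM}_N \;-\; \mathbf{u}^\top \overline{\SM}_N, \]
reducing the task to computing two row-vector/matrix products, one against $\overline{\CM}_N$ and one against $\overline{\SM}_N$. By \Cref{lmm:cn-summary} and the identities \eqref{eq:cn_cbn}, \eqref{eq:cn_sn}, these matrices are related to $\widetilde{\CM}_N$ and $\widetilde{\SM}_N$ via the sign pattern $(-1)^{i_1 j_1}$, and $\widetilde{\CM}_N, \widetilde{\SM}_N$ are of the Monarch form \eqref{eq:real_M_prime}. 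So it suffices to show that each of $\mathbf{u}^\top \overline{\CM}_N$ and $\mathbf{u}^\top \overline{\SM}_N$ can be computed with two Monarch matrix-vector multiplications.

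Next, I would apply the $\textsc{Mix}$-based argument of \Cref{lmm:half-matmul-Mu} separately to $\overline{\CM}_N$ and $\overline{\SM}_N$. Concretely, write $\mathbf{u} = \textsc{Mix}(\mathbf{u}_0, \mathbf{u}_1)$ where $\mathbf{u}_0$ and $\mathbf{u}_1$ collect the entries of $\mathbf{u}$ indexed by even and odd $i_1$, respectively. Since the sign factor relating $\overline{\CM}_N$ to $\widetilde{\CM}_N$ is exactly $(-1)^{i_1 j_1}$, setting
\[ \mathbf{z}_0 = \bigl(\textsc{Mix}(\mathbf{u}_0, \mathbf{0}) + \textsc{Mix}(\mathbf{0}, \mathbf{u}_1)\bigr)^\top \widetilde{\CM}_N, \qquad \mathbf{z}_1 = \bigl(\textsc{Mix}(\mathbf{u}_0, \mathbf{0}) - \textsc{Mix}(\mathbf{0}, \mathbf{u}_1)\bigr)^\top \widetilde{\CM}_N \]
and reading off $\mathbf{z}_0[j]$ when $j_1$ is even and $\mathbf{z}_1[j]$ when $j_1$ is odd recovers $\mathbf{u}^\top \overline{\CM}_N$. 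This uses two Monarch matrix-vector multiplications. Repeating the same construction against $\widetilde{\SM}_N$ yields $\mathbf{u}^\top \overline{\SM}_N$ with two additional Monarch matrix-vector multiplications, and subtracting gives $\mathbf{u}^\top \CM_N$ using four calls in total.

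The only subtle point is that \Cref{lmm:half-matmul-Mu} is literally stated with an extra diagonal factor $\DM[i,i] = (-1)^{i_1(\sqrtN - 1)}$ in the sign pattern, whereas our decomposition produces only $(-1)^{i_1 j_1}$. The hard part will be making this alignment clean: either we observe that when $\sqrtN$ is odd the extra factor is identically one so \Cref{lmm:half-matmul-Mu} applies as stated, or we fold the $(-1)^{i_1(\sqrtN-1)}$ factor into $\widetilde{\CM}_N$ (and $\widetilde{\SM}_N$) as a left diagonal rescaling, which preserves the block Monarch structure and lets us reuse the lemma verbatim. Either way, the dependence on $\mathbf{u}$ remains linear and the total count of Monarch matrix-vector multiplications is $2+2 = 4$, giving the claim.
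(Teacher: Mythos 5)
Your proposal is correct and follows essentially the same route as the paper: decompose $\CM_N = \overline\CM_N - \overline\SM_N$ via \Cref{lmm:cn-summary}, then apply the \textsc{Mix}-based sign-splitting of \Cref{lmm:half-matmul-Mu} to each of $\widetilde\CM_N$ and $\widetilde\SM_N$, for $2+2=4$ Monarch matrix-vector multiplications. Your explicit handling of the missing $(-1)^{i_1(\sqrtN-1)}$ diagonal factor (which the paper's sketch glosses over) is a valid and welcome clarification, since that factor acts as a left diagonal rescaling and does not affect the count.
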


\paragraph{Proof Sketch for \cref{thm:cheby-inverse-mult}}
\cref{lmm:cn-summary} tells us that  $\vu^{\top}\overline \CM_N$ and $\vu^{\top}\overline \SM_N$ are sufficient to compute $\vu^{\top}\CM_N$, and \eqref{eq:cn_sn} shows us that \( \widetilde \CM_N, \widetilde \SM_N\) are Monarch matrices that allows $\vu^{\top}\overline \CM_N$ and $\vu^{\top}\overline \SM_N$ to be computed from two matrix- vector multiplications with \( \widetilde \CM_N\) and \( \widetilde \SM_N\), respectively. Then the claim follows from applying \cref{lmm:half-matmul-Mu} twice.

\newcommand{\vm}{\mathbf{m}}

\subsection{Multivariate \sysname with Block Size $ = \sqrt[p]{N}$}
\label{app:monarch-multi}

In this section, we generalize \cref{thm:monarch-bivariate}, \cref{thm:monarch-inverse}, and \cref{thm:causal_univariate} to arbitrary $p \ge 2$. In \cref{subsubsec:notation_p_ge_2}, we set up notation. We then generalize \cref{thm:monarch-bivariate} and \cref{thm:monarch-inverse} to general $p$ in \cref{subsubsec:gen_p_thm1_2} and then generalize \cref{thm:causal_univariate} in \cref{subsubsec:gen_p_thm3}. In \cref{subsubsec:connect_def} we connect definition of $p$-variate Monarch in \cref{subsubsec:gen_p_thm1_2} to the one defined in \cref{sec:method}. Finally in \cref{subsubsec:extensions} we discuss some extensions and generalizations.

\subsubsection{Notation}
\label{subsubsec:notation_p_ge_2}
We will use notation from~\Cref{subsec:univar-real}.

Fix an integer $p\ge 2$. We will be working with indices $\vj,\vi$ where $\vj=(j_0,\dots,j_{p-1})$ and $\vi=(i_0,\dots,i_{p-1})$ with $0\le i_a,j_a<\pN$ for every $0\le a<p$. We will denote the set of all sub-indices as $[0,\sqrt[p]{N})^p$, and the operator $\preceq$ to denote the lexicographical ordering of vectors in $[0,\sqrt[p]{N})^p$.

For any $0\le b'\le M\le N$ such that $b'$ divides $M$ and $M$ divides $N$, define the following permutation matrices:

\[\PM_{b',M,N}=\diag\parens{\underbrace{\PM_{b',M},\dots,\PM_{b',M}}_{\frac NM \text{ times}}}.\]

Note that $\PM_{b',N,N}$ is exactly the same as $\PM_{b',N}$ from earlier.

If we use $\sigma(b,M,N)$ to denote the corresponding permutation then it takes the input $\parens{i_{p-1},\dots,i_0}_b$ and maps it to $\parens{i_{p-1},\dots,i_{p'},i_{p'-2},\dots,i_0,i_{p'}}_b$(where $p=\log_b{N}$ and $p'=\log_b{M}$). I.e. $\sigma(b,M,N)$ does not change the first $p-p'$ sub-indices and then does a left rotation on the remaining $p'$ sub-indices.

For the rest of the section, assume $b=\pN$ and then consider the following `sub-index reversal' permutation matrix\footnote{$b=2$ gives the well known bit reversal permutation.}:

\[\PM^R_{b,b^p}=\prod_{a=0}^{p-2} \PM_{b^{p-a-1},b^{p-a},b^p}.\]

If $\sigma^R(b,N)$ is the permutation corresponding to the permutation matrix above, then $\sigma^R(b,N)$ maps $\parens{i_{p-1},\dots,i_0}\mapsto \parens{i_0,\dots,i_{p-1}}$.

\subsubsection{Generalizing \cref{thm:monarch-bivariate} and \cref{thm:monarch-inverse}}
\label{subsubsec:gen_p_thm1_2}
For a specific $0\le a<p$, let
\begin{equation}
\label{eq:ell-polys-eval}
\lpolyija=\sum_{m=0}^{\pN-1} \lpolyijamcoeff\cdot T_m(X_a)
\end{equation}
be an arbitrary polynomial of degree $<\pN$ in the Chebyshev polynomial basis (see \eqref{eq:cheby-mod}).

We will be interested in the evaluations of the above polynomials over the set
\begin{equation}
\label{eq:evalp-pts-multi}
 A\stackrel{\text{def}}{=}\parens{\cosroots{0}{\pN},\dots,\cosroots{\pN-1}{\pN}}, 
\end{equation}
where $\omega_{\sqrtN,i}$ is defined as in \eqref{eq:univar-matrix-real}.

The $p$-variate version of Monarch matrices $\MM'\in\R^{N\times N}$ as follows. (See \cref{subsubsec:connect_def} to see how these are exactly related to the definition in \eqref{eq:monarch-def}). For every row index $\vi \in [0,\sqrt[p]{N})^p$ and column index $\vj \in [0,\sqrt[p]{N})^p$, we have
\begin{equation}
\label{eq:monarch-multi}
\MM'\brackets{\vi,\vj} =\prod_{a=0}^{p-1} \lpolyij{\cosroots{i_a}{\pN}}   .
\end{equation}

To express the above in terms of a polynomial basis we will need the following definition. For any $0\le a <p$ and $\vi$ define the Lagrange basis polynomial $\lagpolyia{X_0,\dots,X_{a-1}}$ such that for any $0\le m_0,\dots,m_{a-1}<\pN$, we have
\[\lagpolyia{\rootpN{m_0},\dots,\rootpN{m_{a-1}}}=
\begin{cases}
1 &\text{ if }i_0=m_0, i_1=m_1, \dots,i_{a-1}=m_{a-1}\\
0 & \text{otherwise}.
\end{cases}\]

We use the above to convert the polynomials in~\eqref{eq:ell-polys-eval} to not depend on the sub-indices in $\vi$ (at least for the definition in~\eqref{eq:monarch-multi}). For every $\vj$ and $0\le a<p$, define:

\[\lpolyonlyj{\vj}{a}\parens{X_0,\dots,X_a}=\sum_{\vi={(i_0,\dots,i_{a-1},\vzero_{p-a})},i_0, \dots , i_{a-1}\in [0,\sqrt[p]{N})} \lagpolyia{X_0,\dots,X_{a-1}}\cdot \lpolyija.\]
Note that the summation fixes the last $p-a$ sub-indices in $\vi$ since the definition of $\lpolyija$ only depends on $\parens{i_0,\dots,i_{a-1}}$. This implies that for any $0\le i_0,\dots,i_a<\pN$, we have
\begin{equation}
\label{eq:app_ell_polys_eval}
\lpolyonlyj{\vj}{a}\parens{\rootpN{i_0},\dots,\rootpN{i_a}}=\lpolyij{\rootpN{i_a}}.    
\end{equation}

We are now ready to define our basis $p$-variate polynomials. For any index $\vj \in [
0,\sqrt[p]{N})^p$, define
\begin{equation}
\label{eq:basis-poly-multi}
\bpolyNj{X_0,\dots,X_{p-1}}=\prod_{a=0}^{p-1} \lpolyonlyj{\vj}{a}\parens{X_0,\dots,X_a}.  
\end{equation}

Then~\eqref{eq:monarch-multi} can be re-written as 
\begin{equation}
    \MM[\vi,\vj]=\bpolyNj{\rootpN{i_0},\dots,\rootpN{i_{p-1}}}.
    \label{eq:monarch-multi-basis}
\end{equation}

The above leads to the following result, which generalizes \cref{thm:monarch-bivariate} to general $p$:
\begin{theorem}
\label{thm:monarch-mult-var}
Let $\MM'$, $A$ and $\bpolyNj{X_0,\dots,X_{p-1}}$ be as defined in \eqref{eq:monarch-multi}, \eqref{eq:evalp-pts-multi} and  \eqref{eq:basis-poly-multi}.
Then for any vector $\vu$, $\MM \cdot \vu$ is equivalent to  evaluating the polynomial
\begin{equation}
\label{eq:real-poly-eval-multi}
    u\parens{X_0,\dots,X_{p-1}} = \sum_{\vj} u_{\vj}  \cdot  \bpolyNj{X_0,\dots,X_{p-1}}
\end{equation}
at each point in $A^p$.
\end{theorem}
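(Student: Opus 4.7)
The plan is to reduce the statement to the pointwise identity $\MM'[\vi,\vj] = \bpolyNj{\rootpN{i_0}, \ldots, \rootpN{i_{p-1}}}$ already anticipated in \eqref{eq:monarch-multi-basis}, and then conclude by linearity of matrix-vector multiplication. Once that identity is established for every $\vi, \vj \in [0, \pN)^p$, the definition of matrix-vector product gives
\[
(\MM' \vu)[\vi] = \sum_{\vj} \MM'[\vi,\vj] \, u_{\vj} = \sum_{\vj} u_{\vj} \, \bpolyNj{\rootpN{i_0}, \ldots, \rootpN{i_{p-1}}} = u\parens{\rootpN{i_0}, \ldots, \rootpN{i_{p-1}}},
\]
which is exactly the evaluation of $u$ at the point indexed by $\vi$. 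As $\vi$ ranges over $[0, \pN)^p$, these points cover all of $A^p$, so $\MM' \vu$ is precisely the vector of evaluations of $u$ on $A^p$.

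The substantive step is therefore to verify the pointwise identity. First I would unfold the definition \eqref{eq:basis-poly-multi} of $\bpolyNj{X_0,\dots,X_{p-1}}$ as a product over $a \in \{0, \ldots, p-1\}$ of the lifted polynomials $\lpolyonlyj{\vj}{a}\parens{X_0,\dots,X_a}$, and then evaluate each factor at the tuple of Chebyshev roots $(\rootpN{i_0}, \ldots, \rootpN{i_{p-1}})$. The central ingredient is the Lagrange-based collapse
\[
\lpolyonlyj{\vj}{a}\parens{\rootpN{i_0}, \ldots, \rootpN{i_a}} = \lpolyij{\rootpN{i_a}},
\]
recorded in \eqref{eq:app_ell_polys_eval}. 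This holds because $\lagpolyia{X_0,\dots,X_{a-1}}$ equals $1$ on the tuple $(\rootpN{i_0}, \ldots, \rootpN{i_{a-1}})$ and $0$ on every other element of $\{\rootpN{0}, \ldots, \rootpN{\pN-1}\}^a$, so in the defining sum for $\lpolyonlyj{\vj}{a}$ only the single summand whose first $a$ sub-indices of $\vi$ match the evaluation point survives. Multiplying these surviving factors over all $a$ reproduces the product $\prod_{a=0}^{p-1} \lpolyij{\rootpN{i_a}}$, which is exactly $\MM'[\vi, \vj]$ by \eqref{eq:monarch-multi}.

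The main obstacle is largely notational rather than mathematical: one must carefully track that $\lpolyija$ depends on $\vi$ only through the first $a$ sub-indices $i_0, \ldots, i_{a-1}$ (since the remaining coordinates are fixed to zero in the definition of $\lpolyonlyj{\vj}{a}$), and keep straight that the Lagrange lift in $\lpolyonlyj{\vj}{a}$ is precisely what encodes this dependence into the formal variables $X_0, \ldots, X_{a-1}$. A useful sanity check is the $a=0$ base case, where $\lagpoly{\vi}{0}{\cdot}$ is an empty product equal to $1$ and the sum trivially reduces to the single polynomial $\lpolyuni{\vj}{\vzero}{0}{X_0}$, consistent with the fact that this polynomial has no $\vi$-dependence. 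Once this collapse is iterated for each $a \in \{0, \ldots, p-1\}$, the proof reduces exactly to the bivariate argument behind \cref{thm:monarch-bivariate} applied $p$ times in sequence.
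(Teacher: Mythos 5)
Your proposal is correct and follows the same route as the paper: the paper's proof simply cites the pointwise identity \eqref{eq:monarch-multi-basis} (which it derives just before the theorem via the Lagrange collapse \eqref{eq:app_ell_polys_eval}, exactly as you do) and then concludes by the definition of matrix-vector multiplication. You have merely inlined the derivation of \eqref{eq:monarch-multi-basis} that the paper presents in the surrounding text.
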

\begin{proof}
    By \eqref{eq:monarch-multi-basis}, the column $\MM'[:,j]$ is exactly the evaluation of the polynomial \eqref{eq:basis-poly-multi} at each point in \(A^p\). Then the claim follows from the definition of matrix vector multiplication and \eqref{eq:real-poly-eval-multi}.
\end{proof}

This then leads to the following result, (which generalizes \cref{thm:monarch-inverse}):
\begin{theorem}
For matrices $\MM_0,\MM_1,\MM_2$, each of form as in \Cref{thm:monarch-mult-var}, the operation 
\begin{equation}
   \mathbf{f} = \mathbf{M}^{-1}_0 \cdot \left( (\mathbf{ M}_1 \cdot \mathbf k) \odot (\mathbf{ M}_2 \cdot \mathbf u)\right)
    \label{eq:ku-mod-multi-app}.
\end{equation}
is equivalent to representing the polynomial 
\[ f(\vX) = k(\vX) \cdot u(\vX) \mod \parens{T_{\pN}\parens{X_0},\dots, T_{\pN}\parens{X_{p-1}}}\] 
in terms of the basis polynomials \[\hat \bp^{N}_{\vj}(\vX) \] where \( k(\vX), u(\vX)\) are defined in terms of the respective basis polynomials corresponding to \(\MM_1\) and \(\MM_2\) as in \Cref{thm:monarch-mult-var}, and $\hat \bp^{N}_{\vj}(\vX)$s corresponds to $\MM_0$.
\label{thm:mukltivar-conv-real}
\end{theorem}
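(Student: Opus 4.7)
The plan is to mirror the proof of \Cref{thm:monarch-inverse}, replacing the bivariate $(X,Y)$ argument with a $p$-variate one and replacing the roots of $Z^{\sqrtN}-1$ with the roots of the Chebyshev polynomials $T_{\pN}(X_a)$. The three ingredients we need are: (i) that $\MM_1\vk$ and $\MM_2\vu$ are pointwise evaluations of $p$-variate polynomials $k(\vX), u(\vX)$ over $A^p$; (ii) that pointwise multiplication on the left-hand side of \eqref{eq:ku-mod-multi-app} corresponds to the polynomial product $k(\vX)u(\vX)$ restricted to $A^p$; and (iii) that inversion by $\MM_0$ corresponds to interpolation in the basis $\{\hat\bp^N_\vj(\vX)\}$.

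The first step is a direct application of \Cref{thm:monarch-mult-var}: using the basis families defining $\MM_1$ and $\MM_2$, write
\[k(\vX)=\sum_\vj k_\vj \cdot \bpolyNj{\vX}^{(1)}, \qquad u(\vX)=\sum_\vj u_\vj \cdot \bpolyNj{\vX}^{(2)},\]
where the superscripts denote the respective basis families, so that $(\MM_1\vk)[\vi]=k(\rootpN{i_0},\dots,\rootpN{i_{p-1}})$ and similarly for $\MM_2\vu$. Hence the Hadamard product $\mathbf{y}=(\MM_1\vk)\odot(\MM_2\vu)$ satisfies $\mathbf{y}[\vi]=k(\rootpN{i_0},\dots,\rootpN{i_{p-1}}) \cdot u(\rootpN{i_0},\dots,\rootpN{i_{p-1}})$ for every $\vi\in[0,\pN)^p$.

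The second step is to argue that these evaluations equal those of $f(\vX):=k(\vX)u(\vX)\bmod (T_{\pN}(X_0),\dots,T_{\pN}(X_{p-1}))$. By construction of $A$ in \eqref{eq:evalp-pts-multi}, each $\rootpN{i_a}$ is a root of $T_{\pN}(X_a)$, so for any $\vi$ and any $p$-tuple of polynomials $R_a(X_a)$ the evaluation of $\sum_a Q_a(\vX) R_a(X_a)$ at $(\rootpN{i_0},\dots,\rootpN{i_{p-1}})$ vanishes. Extending by linearity (as in the $\mod$ definition in \Cref{app:notation}) shows that reducing $k(\vX)u(\vX)$ modulo $(T_{\pN}(X_0),\dots,T_{\pN}(X_{p-1}))$ does not change any evaluation over $A^p$, so $\mathbf{y}[\vi]=f(\rootpN{i_0},\dots,\rootpN{i_{p-1}})$.

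The third step closes the argument: since $\MM_0$ is invertible, \Cref{thm:monarch-mult-var} applied in reverse shows that the unique vector of basis coefficients whose $\{\hat\bp^N_\vj\}$-expansion evaluates to $\mathbf{y}$ on $A^p$ is exactly $\MM_0^{-1}\mathbf{y}$ (polynomial interpolation in the $\MM_0$-basis is the inverse of evaluation). Combined with step two, this identifies $\mathbf{f}=\MM_0^{-1}\mathbf{y}$ with the coefficient vector of $f(\vX)$ in the basis $\{\hat\bp^N_\vj\}$, as desired. The main obstacle I anticipate is being careful about the multivariate $\mod$ operation — it is only well defined here because the modulus is a $p$-tuple of univariate polynomials $T_{\pN}(X_a)$, one per variable, which is exactly the restricted setting covered by the definition in \Cref{app:notation}; a one-line observation that evaluations at $A^p$ are annihilated by each $T_{\pN}(X_a)$ suffices, but it should be stated explicitly to justify the linear extension.
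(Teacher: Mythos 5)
Your proposal is correct and follows essentially the same route as the paper's proof: evaluation via \Cref{thm:monarch-mult-var}, the observation that the points of $A$ are roots of $T_{\pN}$ so the modular reduction is invisible on $A^p$, and interpolation via the invertibility of $\MM_0$. (One cosmetic slip: in your second step you say the evaluation of $\sum_a Q_a(\vX)R_a(X_a)$ vanishes for \emph{any} tuple $R_a$ — you of course mean for the specific moduli $R_a=T_{\pN}$, with the $Q_a$ arbitrary.)
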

\begin{proof}
Define \begin{equation}
\label{eq:prod-eval-pts-poly}
    q_A(Z) = \prod_{\alpha \in A} (Z - \alpha).
\end{equation}
Then \eqref{eq:ku-mod-multi-app} follows since for
\begin{equation*}
     f(\vX) = k(\vX)\cdot u(\vX) \mod(\bp_A(X_0), \dots \bp_A(X_{p-1}),
\end{equation*}
 we have the following for any $\va \in A^p$ we have:
\[ f(\va) = k(\va)\cdot u(\va). \]
Using the known fact that $T_{\pN}(Z)=\prod_{c=0}^{\pN-1}\parens{Z-\rootpN{c}}= \bp_A(Z)$, the claim follows from \Cref{thm:monarch-mult-var}, \eqref{eq:ku-mod-multi-app}, and the invertibility of $\MM_0$.
\end{proof}

\subsubsection{Generalizing \cref{thm:causal_univariate} for $p \ge 2$}
\label{subsubsec:gen_p_thm3}

To convert \cref{thm:mukltivar-conv-real} into a causal map we basically have to blow up $n\to 2^p\cdot n$. Further, paralleling \eqref{eq:real-basis} we need to change the definition in \eqref{eq:ell-polys-eval} to have degree $\pN-j_a-1$ instead of the earlier $\pN-1$:

\begin{equation}
\label{eq:ell-polys-eval-causal}
\clpolyija=\sum_{m=0}^{\pN-j_a-1} \clpolyijamcoeff\cdot T_m(X_a).
\end{equation}

Note that now the RHS only depends on $a$ and $j_a$ (let us call the RHS $\cnewpoly{a}{\pN}{j_a}{X_a}$), so the next definition becomes easier:
\[\clpolyonlyj{\vj}{a}\parens{X_0,\dots,X_a}= \cnewpoly{a}{\pN}{j_a}{X_a}.\]

We are now ready to define our causal basis polynomials. For any index $\vj$, define
\begin{equation}
\label{eq:basis-poly-multi-causal}
\cbpolyNj{X_0,\dots,X_{p-1}}=\prod_{a=0}^{p-1} \clpolyonlyj{\vj}{a}\parens{X_0,\dots,X_a}.  
\end{equation}

These polynomials form a structured subclass of the polynomials defined in $\eqref{eq:basis-poly-multi}$. 

\begin{lemma}
    The class of polynomials \( \cbpolyNj{X_0,\dots,X_{p-1}}\) defined in \eqref{eq:basis-poly-multi-causal} are a special case of \eqref{eq:basis-poly-multi}.
\end{lemma}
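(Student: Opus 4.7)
The plan is to exhibit an explicit choice of the general polynomials $\lpolyija$ (the free parameters in \eqref{eq:basis-poly-multi}) that reproduces the causal basis polynomial $\cbpolyNj{X_0,\dots,X_{p-1}}$ in \eqref{eq:basis-poly-multi-causal}. Concretely, for each $0\le a<p$ and each $\vj,\vi$, I would set
\[
\lpolyija \;:=\; \cnewpoly{a}{\pN}{j_a}{X_a},
\]
that is, take the underlying polynomial to be independent of the ``$\vi$'' coordinates $(i_0,\dots,i_{a-1})$ and equal to the univariate restricted polynomial appearing in the causal definition. Since $\cnewpoly{a}{\pN}{j_a}{X_a}$ has degree $\pN-j_a-1<\pN$, this is a valid instance of \eqref{eq:ell-polys-eval}, so this choice genuinely lives in the general family.

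Next, I would plug this choice into the formula for $\lpolyonlyj{\vj}{a}$ from the general construction and simplify. Because $\lpolyija$ no longer depends on $(i_0,\dots,i_{a-1})$, the Lagrange factor can be pulled out of the summation:
\[
\lpolyonlyj{\vj}{a}\parens{X_0,\dots,X_a}
\;=\; \cnewpoly{a}{\pN}{j_a}{X_a}\cdot \sum_{i_0,\dots,i_{a-1}\in[0,\pN)} \lagpolyia{X_0,\dots,X_{a-1}}.
\]
The key observation is that the inner sum equals the constant polynomial $1$: by the defining property of $\lagpolyia{\cdot}$, the sum is the unique interpolant of the function that is $1$ at every grid point of $\{\rootpN{0},\dots,\rootpN{\pN-1}\}^a$, which is the constant $1$. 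Therefore
\[
\lpolyonlyj{\vj}{a}\parens{X_0,\dots,X_a} \;=\; \cnewpoly{a}{\pN}{j_a}{X_a} \;=\; \clpolyonlyj{\vj}{a}\parens{X_0,\dots,X_a},
\]
which matches the causal definition exactly.

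Finally, taking the product over $0\le a<p$ on both sides and comparing with \eqref{eq:basis-poly-multi} and \eqref{eq:basis-poly-multi-causal} yields $\bpolyNj{X_0,\dots,X_{p-1}} = \cbpolyNj{X_0,\dots,X_{p-1}}$, completing the embedding of the causal class into the general class. There is no real obstacle here; the only point requiring care is the Lagrange sum identity $\sum_{\vi}\lagpolyia{X_0,\dots,X_{a-1}}=1$, which I would justify in one line by appealing to uniqueness of multivariate interpolation on the product grid $\{\rootpN{0},\dots,\rootpN{\pN-1}\}^a$, together with checking that the degree restriction $\pN-j_a-1<\pN$ ensures the causal polynomials are admissible choices in \eqref{eq:ell-polys-eval}.
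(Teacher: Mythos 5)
Your proof is correct and follows essentially the same route as the paper, which simply observes that \eqref{eq:ell-polys-eval-causal} is a special case of \eqref{eq:ell-polys-eval} because the causal component polynomials have degree $\pN-j_a-1<\pN$ and do not depend on $\vi$. Your write-up is in fact more careful than the paper's one-line argument, since you make explicit the partition-of-unity identity $\sum_{\vi}\lagpolyia{X_0,\dots,X_{a-1}}=1$ needed to see that the general definition of $\lpolyonlyj{\vj}{a}$ collapses to $\clpolyonlyj{\vj}{a}$ when the underlying polynomials are $\vi$-independent.
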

\begin{proof}
This follows from the fact that \eqref{eq:ell-polys-eval-causal} is a special case of \eqref{eq:ell-polys-eval} for every $\vi,\vj, 0\le a < p$.
\end{proof}

We show that the product of two $\cbpoly{\floor{\frac{\sqrt[p] N}{2}}^p}{\vj}{X_0,\dots,X_{p-1}}$ type polynomials can be written has a linear combination of $\cbpoly{N}{\vm}{X_0,\dots,X_{p-1}}$ with the indices of $\vm$ being lexicographically larger than the original indices.

\begin{lemma}
     Let $\cbpoly{\floor{\frac{\sqrt[p] N}{2}}^p}{\vj}{X_0,\dots,X_{p-1}}$ be defined as in \eqref{eq:basis-poly-multi-causal}. Then for any $\vj,\vj' \in \left[0,\frac{\sqrt[p] N}{2}\right)^p$, 
\begin{equation}
    \label{eq:coefs-real-multi}
    \cbpoly{\floor{\frac{\sqrt[p] N}{2}}^p}{\vj}{X_0,\dots,X_{p-1}} \cdot \cbpoly{\floor{\frac{\sqrt[p] N}{2}}^p}{\vj'}{X_0,\dots,X_{p-1}} = \sum_{\vj+\vj' \preceq \vm \in [0,\sqrt[p]{N})^p}   \alpha_{\vj+\vj',\vm} \ \cbpoly{N}{\vm}{X_0,\dots,X_{p-1}}
\end{equation}
for some set of coefficients \( \alpha_{\vj+\vj',\vm}\). 
\label{lmm:bj-coefs-multi}
\end{lemma}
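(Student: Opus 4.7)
The plan is to prove the lemma by reducing it to the univariate analog \cref{lmm:bj-coefs} via the separable product structure of the basis polynomials. From the definition in \eqref{eq:basis-poly-multi-causal}, we have $\cbpoly{M}{\vj}{X_0,\dots,X_{p-1}} = \prod_{a=0}^{p-1} \cnewpoly{a}{\sqrt[p]{M}}{j_a}{X_a}$, where each factor is a univariate polynomial in $X_a$ of exactly the Chebyshev form \eqref{eq:real-basis}. Consequently, the product on the left-hand side of \eqref{eq:coefs-real-multi} factors into a product over $a$ of univariate products of polynomials in $X_a$, which lets me handle each variable independently.

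Next I would invoke \cref{lmm:bj-coefs} on each of the $p$ univariate factors separately, with the role of $N$ in the univariate statement played by $\pN$. This gives, for each $0 \le a < p$, coefficients $\beta^{(a)}_{j_a+j'_a, m_a}$ such that
\begin{equation*}
\cnewpoly{a}{\floor{\pN/2}}{j_a}{X_a} \cdot \cnewpoly{a}{\floor{\pN/2}}{j'_a}{X_a} = \sum_{m_a = j_a+j'_a}^{\pN-1} \beta^{(a)}_{j_a+j'_a, m_a}\, \cnewpoly{a}{\pN}{m_a}{X_a}.
\end{equation*}
Multiplying these $p$ identities together and distributing the sums yields
\begin{equation*}
\cbpoly{\floor{\pN/2}^p}{\vj}{\vX} \cdot \cbpoly{\floor{\pN/2}^p}{\vj'}{\vX} = \sum_{\vm \,:\, m_a \ge j_a+j'_a\ \forall a} \Bigl(\prod_{a=0}^{p-1} \beta^{(a)}_{j_a+j'_a, m_a}\Bigr)\, \cbpoly{N}{\vm}{\vX}.
\end{equation*}

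To match the statement of the lemma, I then need to observe that component-wise dominance of $\vm$ over $\vj+\vj'$ implies lexicographic dominance $\vj+\vj' \preceq \vm$. If $\vm = \vj+\vj'$ this is immediate; otherwise let $a$ be the first coordinate on which they disagree, and component-wise dominance forces $m_a > (j+j')_a$ while $m_b = (j+j')_b$ for all $b<a$, which is exactly the defining condition for $\vj+\vj' \preceq \vm$. Hence the index set appearing in the expansion above is a subset of $\{\vm \in [0,\pN)^p : \vj+\vj' \preceq \vm\}$, and setting $\alpha_{\vj+\vj',\vm} = \prod_{a} \beta^{(a)}_{j_a+j'_a, m_a}$ (with $\alpha_{\vj+\vj',\vm}=0$ whenever $\vm$ fails to dominate $\vj+\vj'$ component-wise) gives exactly \eqref{eq:coefs-real-multi}. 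I do not expect a substantive obstacle: the multivariate case collapses into $p$ independent univariate applications precisely because the basis $\cbpoly{N}{\vm}{\vX}$ is separable across the variables $X_0,\dots,X_{p-1}$, and the only non-algebraic step is the short observation that component-wise $\ge$ on tuples refines lexicographic $\preceq$.
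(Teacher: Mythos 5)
Your proposal is correct and follows essentially the same route as the paper's proof: factor the product over the $p$ variables, apply the univariate \cref{lmm:bj-coefs} coordinate-wise with $N$ replaced by $\pN$, and take $\alpha_{\vj+\vj',\vm}$ to be the product of the univariate coefficients. Your explicit remark that component-wise dominance of $\vm$ over $\vj+\vj'$ implies $\vj+\vj'\preceq\vm$ (with the remaining coefficients set to zero) is a small point the paper leaves implicit, and it is a welcome clarification.
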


\begin{proof}
From \eqref{eq:basis-poly-multi-causal} we have,
\begin{align}
    \cbpoly{\floor{\frac{\sqrt[p] N}{2}}^p}{\vj}{X_0,\dots,X_{p-1}} \cdot \cbpoly{\floor{\frac{\sqrt[p] N}{2}}^p}{\vj'}{X_0,\dots,X_{p-1}}=\prod_{a=0}^{p-1} 
    \cnewpoly{a}{\floor{\frac \pN 2}}{j_a}{X_a}\cdot \cnewpoly{a}{\floor{\frac \pN 2}}{j'_a}{X_a}
    \label{eq:causal-multivar-basis-prod}
\end{align}

Let us fix $0 \leq a < p$. 

Because \eqref{eq:ell-polys-eval-causal} is of the same form as in \eqref{eq:real-basis}, we can apply \cref{lmm:bj-coefs} to each product $\cnewpoly{a}{\floor{\frac \pN 2}}{j_a}{X_a}\cdot \cnewpoly{a}{\floor{\frac \pN 2}}{j'_a}{X_a}$, 
which gives us

\begin{align*}
    \cnewpoly{a}{\floor{\frac \pN 2}}{j_a}{X_a}\cdot \cnewpoly{a}{\floor{\frac \pN 2}}{j'_a}{X_a} = \sum_{m_a=j_a+j_a'}^{\pN-1}\alpha^{(a)}_{j_a+j_a',m_a} \cdot \cnewpoly{a}{\pN}{m_a}{X_a}.
\end{align*}

Going back to \eqref{eq:causal-multivar-basis-prod}, we get
\begin{align*}
  \cbpoly{\floor{\frac{\sqrt[p] N}{2}}^p}{\vj}{X_0,\dots,X_{p-1}} \cdot \cbpoly{\floor{\frac{\sqrt[p] N}{2}}^p}{\vj'}{X_0,\dots,X_{p-1}} = \prod_{a=0}^{p-1}   \sum_{m_a=j_a+j_a'}^{N-1}\alpha^{(a)}_{j_a+j_a',m_a} \cdot \cnewpoly{a}{\pN}{m_a}{X_a}.
\end{align*}

Let \(\alpha_{\vj + \vj', \vm} = \prod_{a=0}^{p-1} \alpha^{(a)}_{j_a+j'_a
,m_a} \). Then we get
\begin{align*}
    &\cbpoly{\floor{\frac{\sqrt[p] N}{2}}^p}{\vj}{X_0,\dots,X_{p-1}} \cdot \cbpoly{\floor{\frac{\sqrt[p] N}{2}}^p}{\vj'}{X_0,\dots,X_{p-1}} = &\sum_{\vj+\vj' \preceq \vm \in [0,\sqrt[p]{N})^p}   \alpha_{\vj+\vj',\vm} \ \cbpoly{N}{\vm}{X_0,\dots,X_{p-1}},
\end{align*}
  as desired. 
\end{proof}

We now define the following padding scheme, $\textsc{Pad}\parens{\vk}$.

\begin{algorithm}[H]
\caption{$\textsc{Pad}(\vk)$}
\label{alg:pad-multi}
\begin{algorithmic}[1]
\Require{$\vk \in \parens{\R^{\floor{\frac{\sqrt[p] N}{2}} }}^p$, indexed as $\vk_{\vj}$ for $\vj \in \left[0,\frac{\sqrt[p] N}{2}\right)^p$}
\Ensure{$\vk' \in \parens{\R^{\sqrt[p] N} }^p$}
    \Statex
    \For{$\vj \in \left[0,\frac{\sqrt[p] N}{2}\right)^p$}
    \If{{$j_a \geq \floor{\frac{\sqrt[p] N}{2}}$ for $ 0\leq a < p$}}
    \State $\vk_{\parens{j_0,\dots,j_{p-1}}}' \gets 
            \vk_{\parens{j_0 + \ceil{\frac{\pN}{2}},\dots,j_{p-1}+ \ceil{\frac{\pN}{2}}}}$

    \Else
        \State $\vk_{\vj}' = 0$
    \EndIf
     \EndFor
    \State \Return{$\vk'$}
\end{algorithmic}
\end{algorithm}

The above basis and padding scheme allows us to extend \cref{thm:causal_univariate} to general $p$.

\begin{theorem}
Fix a family of basis polynomials $\cbpolyNj{\vX}$ as defined in \eqref{eq:basis-poly-multi-causal}. Let $N \ge 1$ be a perfect power of $p$, $n \leq \floor{\frac {\sqrt[p] N}{2}}^p$, $\vk, \vu \in \mathbb{R}^n$ 
 and $\MM'_{N}$ defined by basis \(\cbpolyNj{\vX} \). 
 Then the operation
\begin{equation}
\vu\mapsto\parens{{\MM'}_{N}^{-1} \parens{ \MM'_{N} \cdot \textsc{Pad}\parens{\vk} \circ \MM'_{N} \cdot \textsc{Pad}\parens{\vu}}}[0:n - 1]
    \label{eq:univar-causal-real-multi}
\end{equation}  
 defines a causal map in $\vu$.
 \label{thm:real-casual-conv-multi}
\end{theorem}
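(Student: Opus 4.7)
The plan is to mirror the proof strategy of \Cref{thm:real-casual-conv} (and its structured analog \Cref{thm:causal_univariate}) in the $p$-variate setting, combining \Cref{thm:mukltivar-conv-real} with \Cref{lmm:bj-coefs-multi}.

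First, I would invoke \Cref{thm:mukltivar-conv-real} on~\eqref{eq:univar-causal-real-multi} to interpret the output as the polynomial
\[
   f(\vX) \;=\; k'(\vX)\,u'(\vX)\;\mod\; \parens{T_{\pN}(X_0),\dots,T_{\pN}(X_{p-1})},
\]
represented in the basis $\cbpolyNj{\vX}$, where $k'$ and $u'$ are the polynomials whose coefficient vectors in this basis are $\textsc{Pad}(\vk)$ and $\textsc{Pad}(\vu)$. By construction of \textsc{Pad}, both are supported on indices $\vj$ with $j_a \ge \ceil{\pN/2}$ in every coordinate, and on such indices $\cbpoly{N}{\vj}{\vX}$ has $X_a$-degree $\pN - j_a - 1 \le \floor{\pN/2}-1$. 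Because $\cbpoly{N}{\ceil{\pN/2}\mathbf{1}+\vj'}{\vX}$ and $\cbpoly{\floor{\pN/2}^p}{\vj'}{\vX}$ (for $\vj' \in [0,\floor{\pN/2})^p$) are parameterized by Chebyshev coefficients of the same degrees, I would identify them coordinate-wise, so that $k'(\vX)=k(\vX)$ and $u'(\vX)=u(\vX)$, where $k,u$ are the polynomials in the $\cbpoly{\floor{\pN/2}^p}{\cdot}{\vX}$ basis with coefficient vectors the original (unpadded) $\vk,\vu$.

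Next, I would dispose of the modular reduction: since $k(\vX)u(\vX)$ has $X_a$-degree at most $2(\floor{\pN/2}-1)<\pN=\deg T_{\pN}$ in each variable, the mod is a no-op, so
\[
   f(\vX) \;=\; k(\vX)\,u(\vX) \;=\; \sum_{\vj,\vj' \in [0,\floor{\pN/2})^p} k_{\vj'}\,u_{\vj}\; \cbpoly{\floor{\pN/2}^p}{\vj}{\vX}\,\cbpoly{\floor{\pN/2}^p}{\vj'}{\vX}.
\]
Applying \Cref{lmm:bj-coefs-multi} to each product and swapping summations, the coefficient of $\cbpoly{N}{\vm}{\vX}$ in $f(\vX)$ equals
\[
   f_{\vm} \;=\; \sum_{\vj,\vj'\,:\,\vj+\vj'\preceq\vm} \alpha_{\vj+\vj',\vm}\, k_{\vj'}\, u_{\vj}.
\]
Since $\vj \preceq \vj+\vj' \preceq \vm$ in the lex order, only $u_\vj$ with $\vj \preceq \vm$ contribute, and restricting to the first $n \le \floor{\pN/2}^p$ lex-ordered indices yields the claimed causal map in $\vu$.

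The main obstacle will be reconciling the abstract lex-order causality with the concrete ``$[0{:}n{-}1]$'' slice in~\eqref{eq:univar-causal-real-multi}: one must verify that the implicit flattening $\vu \in \R^n \hookrightarrow \R^{\floor{\pN/2}^p}$ preserves lex order on $[0,\floor{\pN/2})^p$ and that the pre-\textsc{Pad} zero-padding up to length $\floor{\pN/2}^p$ commutes with the causality argument. A secondary subtlety is that \Cref{lmm:bj-coefs-multi} leans crucially on the tensor-product structure~\eqref{eq:basis-poly-multi-causal}, reducing coordinate-wise to the univariate \Cref{lmm:bj-coefs}; this separability is precisely what forces the $2^p$-blowup in the padding (versus the $2$-blowup in the univariate \Cref{thm:causal_univariate}), and sharpening it is, as the authors note, left open.
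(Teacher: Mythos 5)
Your proposal is correct and follows essentially the same route as the paper's proof: identify the padded coefficient vectors with polynomials in the $\cbpoly{\floor{\frac{\sqrt[p]N}{2}}^p}{\vj}{\vX}$ basis via the degree match $\deg\parens{\cbpoly{N}{\vj+\ceil{\frac{\pN}{2}}\mathbf{1}}{\cdot}}=\deg\parens{\cbpoly{\floor{\frac{\sqrt[p]N}{2}}^p}{\vj}{\cdot}}$, observe the modular reduction is vacuous by the per-variable degree bound, and apply \Cref{lmm:bj-coefs-multi} to read off $f_{\vm}=\sum_{\vj+\vj'\preceq\vm}\alpha_{\vj+\vj',\vm}k_{\vj'}u_{\vj}$. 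The lex-order flattening issue you flag is likewise left implicit in the paper, which states causality directly in terms of the partial order $\preceq$ on multi-indices.
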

\begin{proof}
Let \( \vk' =  \textsc{Pad}\parens{\vk},  \vu' =  \textsc{Pad}\parens{\vu}\), and
\begin{equation}
\mathbf{f}=\parens{{\MM'}_{N}^{-1} \parens{ \MM'_{N} \cdot \vk' \circ \MM'_{N} \cdot \vu'}}[0:n - 1].
\end{equation}  
In order to prove that \eqref{eq:univar-causal-real-multi} is causal in the input $\vu \in \R^n$, we must show that for all $\vi \in \left[0,\sqrt[p] N\right)^p $, $\mathbf{f}_{\vi}$ is dependent only on $\vu_{\vi'}$ for $\vi' \preceq \vi$.

By \Cref{thm:monarch-mult-var}, \( \MM'_N \cdot \vk' \) and \( \MM'_N \cdot \vu' \) correspond to the evaluations of the polynomials
\begin{align}
    k'\parens{X_0,\dots,X_{p-1}}&= \sum_{\vj' \in \left[0,\sqrt[p] N\right)^p} k'_{\vj'} \cdot   
   \cbpoly{N}{\vj'}{X_0,\dots,X_{p-1}}, \quad\text{and}\quad \nonumber \\
    u'\parens{X_0,\dots,X_{p-1}}&= \sum_{\vj' \in \left[0,\sqrt[p] N\right)^p} u'_{\vj'} \cdot \cbpoly{N}{\vj'}{X_0,\dots,X_{p-1}},
    \label{eq:ku-prime-bjz-multi}
\end{align}
respectively. Let us define 
\begin{align}
    k\parens{X_0,\dots,X_{p-1}}&= \sum_{\vj \in \left[0,\floor{\frac\pN 2}\right)^p} k_{\vj } \cdot    
    \cbpoly{\floor{\frac{\sqrt[p] N}{2}}^p}{\vj}{X_0,\dots,X_{p-1}}, \quad\text{and}\quad \nonumber \\ u\parens{X_0,\dots,X_{p-1}}&= \sum_{\vj \in \left[0,\floor{\frac\pN 2}\right)^p} u_{\vj} \cdot \cbpoly{\floor{\frac{\sqrt[p] N}{2}}^p}{\vj}{X_0,\dots,X_{p-1}}.
    \label{eq:ku-bjz-multi}
\end{align}

Let us define $\deg\parens{\widetilde \bp^{N}_{\vj}} =\parens{\pN-j_a-1}^{p-1}_{a = 0}$ as the vector of length $p$ consisting of the degrees of the component univariate polynomials $\clpolyija$ defined as in \eqref{eq:ell-polys-eval-causal}.  Then  we have $\deg\parens{\widetilde \bp^{\floor{\frac{\sqrt[p] N}{2}}^p}_{\vj}} = \parens{\floor{\frac{\sqrt[p] N}{2}} - j_0 - 1, \dots, \floor{\frac{\sqrt[p] N}{2}} - j_{p-1}- 1}$ for \( \vj = \parens{j_0,\cdots,j_{p-1}}\) such that $0 \leq j_a  < \ceil{\frac{\sqrt[p] N}{2}}$ for $0 \leq a < p$. Further, for \( \vj' = \parens{j_0 + \ceil{\frac{\pN}{2}},\cdots,j_{p-1} + \ceil{\frac{\pN}{2}}} \) we have
\begin{align*}
    \deg\parens{\widetilde \bp^{N}_{\vj'}} &= \parens{\pN - j'_0 - 1, \dots, \pN - j'_{p-1}- 1} \\
    &= \parens{\pN - \ceil{\frac{\sqrt[p] N}{2}} - j_0 - 1, \dots, \pN -\ceil{\frac{\sqrt[p] N}{2}} - j_{p-1}- 1} \\
    &= \parens{\floor{\frac{\sqrt[p] N}{2}}^p - j_0 - 1, \dots, \floor{\frac{\sqrt[p] N}{2}}^p - j_{p-1}- 1}.
\end{align*} 
Since \( \deg\parens{\widetilde \bp^{\floor{\frac{\sqrt[p] N}{2}}^p}_{\vj}} = \deg\parens{\widetilde \bp^{N}_{\vj'}} \), 
we can set $\clpolyija=\clpolyuni{\vj'}{\vi}{a}{X_a}$.
Similarly, note  that for  $\vj,\vj'$ as above, $k'_{\vj'} = k_{\vj}$ and $u'_{\vj'} = u_{\vj}$. Then it follows that $k\parens{X_0,\dots,X_{p-1}}=k'\parens{X_0,\dots,X_{p-1}}$, and by a similar argument, $u\parens{X_0,\dots,X_{p-1}}=u'\parens{X_0,\dots,X_{p-1}}$. 
Then by \Cref{thm:mukltivar-conv-real} we have
\begin{align}
    f\parens{X_0,\dots,X_{p-1}} &=  k\parens{X_0,\dots,X_{p-1}} \cdot  u\parens{X_0,\dots,X_{p-1}} \mod \parens{T_{\pN}\parens{X_0},\dots, T_{\pN}\parens{X_{p-1}}} \notag\\
     &=  k\parens{X_0,\dots,X_{p-1}} \cdot  u\parens{X_0,\dots,X_{p-1}} \label{eq:fz-multi}
\end{align}
where the second line follows 
by observing that each $0\le a<p$, we have  $\deg_{X_a}\parens{k(\vX)\cdot u(\vX)} < 2\floor{\frac{\sqrt[p] N}{2}}$ and observing that $2\floor{\frac{\sqrt[p] N}{2}} \leq \sqrt[p] N$.
We want to write $f\parens{X_0,\dots,X_{p-1}}$ in the form
\begin{align*}
    f\parens{X_0,\dots,X_{p-1}}&= \sum_{\vm \in \left[0,\pN \right)^p} f_{\vm} \cdot \cbpoly{N}{\vm}{X_0,\dots,X_{p-1}},
\end{align*}
for a set of coefficients \( f_{\vm}\). 
From \eqref{eq:ku-bjz-multi} and \eqref{eq:fz-multi} we get
\begin{align*}
    f\parens{X_0,\dots,X_{p-1}} =  \sum_{\vj,\vj' \in \left[0,\floor{\frac\pN 2}\right)^p} k_{\vj' }   u_{\vj } \cdot\cbpoly{\floor{\frac{\sqrt[p] N}{2}}^p}{\vj'}{X_0,\dots,X_{p-1}} \cdot \cbpoly{\floor{\frac{\sqrt[p] N}{2}}^p}{\vj}{X_0,\dots,X_{p-1}}.
\end{align*}

Then by \cref{lmm:bj-coefs-multi} we have
\begin{align*}
     f\parens{X_0,\dots,X_{p-1}} =\sum_{\vj,\vj' \in \left[0,\floor{\frac\pN 2}\right)^p} k_{\vj' }   u_{\vj } \cdot \sum_{\vj+\vj' \preceq \vm \in [0,\sqrt[p]{N})^p}   \alpha_{\vj+\vj',\vm} \ \cbpoly{N}{\vm}{X_0,\dots,X_{p-1}}. 
\end{align*}
Thus, for any $\vm \in \left[0,\pN \right)^p$, we have
\[f_\vm = \sum_{\vj+\vj'\preceq \vm} \alpha_{\vj+\vj',\vm}\cdot k_{\vj' }   u_{\vj }\]
implying that $f_{\vm}$ depends only on $\vu_{\vj}$ for $\vj \preceq \vm$, as desired.

\end{proof}

\subsubsection{$p$-variate Monarch}
\label{subsubsec:connect_def}
Recall that we have fixed $b=\pN$ (and hence $N=b^p$).

Define the $p$-variate Monarch matrix as follows:

\begin{equation}
\label{eq:monarch-multi-direct}
\MM' =\PM^R_{b,N}\parens{\prod_{a=0}^{p-2} \BM_{p-1-a}\cdot \parens{\PM_{b^{a+1},b^{a+2},N}}^\top} \BM_0, 
\end{equation}
Where each $\BM_{a}$ is block diagonal with $b\times b$ blocks for every $0\le a<p$. Recall that \cref{eq:monarch-def} has a permutation $\MP_0$ at the end while the above definition does not have any permutation at the end. One trivial way to show the equivalence of above to \cref{eq:monarch-def} is to define $\MP_0=\mathbf{I}$. In \cref{subsubsec:extensions}, we show that exists other non-trivial choices for $\MP_0$. Further for $1\le i\le p$, the $\PM_i$ in \cref{eq:monarch-def} connect to the above definition as follows:

\[
\PM_i = \begin{cases}
    \parens{\PM_{b^{p-i},b^{p-i+1},N}}^\top &\text{ for } 1 \le i \le p-1
    \\
    \PM_{b,N}^{R} &\text{ for } i = p
\end{cases}.
\]

Finally we connect the above definition of $p$-variate Monarch to the earlier definition based on polynomial evaluation:

\begin{lemma}
\Cref{eq:monarch-multi} can be written as \cref{eq:monarch-multi-direct}.
\end{lemma}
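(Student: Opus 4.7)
The proof will proceed by a direct, explicit computation of the $(\vi, \vj)$-entry of the right-hand side of~\eqref{eq:monarch-multi-direct}, showing that it collapses to the polynomial-evaluation form in~\eqref{eq:monarch-multi}. The strategy is to first specify the entries of each $\BM_a$ in terms of the coefficients of $\lpolyija$, and then to unfold the product from right to left, using the defining action of each permutation $(\PM_{b^{a+1}, b^{a+2}, N})^\top$ as a cyclic rotation on the last few sub-indices so that each successive block-diagonal factor acts on the appropriate sub-index.

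The first step is to make the block-to-polynomial correspondence explicit. Since $\BM_a$ is block-diagonal with $b^{p-1}$ diagonal blocks of size $b \times b$, I would index its blocks by the tuple of the remaining sub-indices and set the $(m, j_a)$-entry of the block associated with $(j_{a+1}, \ldots, j_{p-1}, i_0, \ldots, i_{a-1})$ equal to $\lpolyijamcoeff$. This is the natural choice, because after the surrounding permutations expose the sub-index $i_a$ on the output side, summing the block entries against the Chebyshev monomials $T_m(\rootpN{i_a})$ yields the factor $\lpolyuni{\vj}{\vi}{a}{\rootpN{i_a}}$ in the final product~\eqref{eq:monarch-multi}.

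I would then argue by induction on $a = 0, 1, \ldots, p - 1$ that after applying $\BM_0$ and the first $a$ permutation--$\BM$ pairs (read from the right of~\eqref{eq:monarch-multi-direct}), the intermediate matrix has $(\vi'', \vj)$-entry equal to the partial product $\prod_{c=0}^{a-1} \lpolyuni{\vj}{\vi}{c}{\rootpN{i_c}}$ times an indicator aligning the unconsumed sub-indices of $\vi''$ with the remaining sub-indices of $\vj$ in the current rotated order. The inductive step relies on the fact that $(\PM_{b^{p-a}, b^{p-a+1}, N})^\top$ implements exactly the rotation of the relevant sub-indices that moves the next target sub-index $i_a$ into the position on which the block-diagonal factor $\BM_a$ acts. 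The outermost multiplication by $\PM^R_{b,N}$ then performs the total sub-index reversal that places the row indices in the standard lexicographic order $\vi = (i_0, \ldots, i_{p-1})$, and the product of block-diagonal factors becomes the claimed product of univariate polynomial evaluations.

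The main obstacle will be the careful bookkeeping of sub-indices through the interleaved rotations, in particular verifying that the composition of the successive $(\PM_{b^{a+1}, b^{a+2}, N})^\top$ with the final $\PM^R_{b,N}$ indeed produces the standard row ordering used in~\eqref{eq:monarch-multi}. This is essentially a careful tracking of a permutation on $\Zp^p$, and once it is pinned down, the rest of the proof follows mechanically from the block correspondence and the definition of $\lpolyija$.
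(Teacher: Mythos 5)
Your overall strategy---unfolding the product in~\eqref{eq:monarch-multi-direct} from right to left, arguing by induction that after $a$ block-diagonal/permutation pairs the intermediate object carries the partial product $\prod_{c=0}^{a-1}\ell^{(c)}_{\vj,\vi}\parens{\rootpN{i_c}}$, and letting the final $\PM^R_{b,N}$ restore the standard row ordering---is exactly the paper's proof (the paper runs the induction on the column vector $\ve_\vj$ rather than on matrix entries, which is equivalent).

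However, your block-to-polynomial correspondence is wrong as stated, and this is not just bookkeeping. You set the $(m,j_a)$-entry of the relevant block of $\BM_a$ to the Chebyshev \emph{coefficient} $\lpolyijamcoeff$. With that assignment, the output index of the block is the coefficient index $m$, and nothing in the product~\eqref{eq:monarch-multi-direct} ever contracts $m$ against $T_m\parens{\rootpN{i_a}}$: the neighboring factors are permutations and other block-diagonal matrices, not Chebyshev transforms. The summation $\sum_m \lpolyijamcoeff\, T_m\parens{\rootpN{i_a}}$ that you invoke has no place to happen, so the right-hand side of~\eqref{eq:monarch-multi-direct} would compute a coefficient-space object rather than the evaluation matrix~\eqref{eq:monarch-multi}. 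The correct assignment, as in~\eqref{eq:B-a-assignment}, is to put the \emph{evaluation} $\lpolyij{\rootpN{i_a}}$ in the $(i_a,j_a)$ offset of the block indexed by $\parens{\parens{i_0,\dots,i_{a-1}},\parens{j_{p-1},\dots,j_{a+1}}}$; equivalently, $\BM_a$ is $\CM_b$ times your coefficient blocks, which is precisely the role of the separate construction algorithm (\textsc{Blocky MultiVar Monarch}) and not part of the factorization being proved. With that correction, your induction through the rotations $\parens{\PM_{b^{a+1},b^{a+2},N}}^\top$ and the final sub-index reversal goes through as you describe.
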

\begin{proof}
In \eqref{eq:monarch-multi-direct}, $\BM_{a}$ would correspond to the evaluations $\lpolyij{\rootpN{i_a}}$ from earlier. Specifically the following holds for any $0\le a <p$.
We index the $\bp^{p-1}$ blocks of $\BM_{a}$ by $\parens{i_0,\dots,i_{a-1}},\parens{j_{p-1},\dots,j_{a+1}}$ and the row and column `offsets' within each such block are indexed by $i_a$ and $j_a$ respectively. Connecting back to $\lpolyij{\cdot}$ we set
\begin{equation}
\label{eq:B-a-assignment}
\BM_{a}^{\parens{\parens{i_0,\dots,i_{a-1}},\parens{j_{p-1},\dots,j_{a+1}}}}\brackets{i_a,j_a} = \lpolyij{\rootpN{i_a}}. 
\end{equation}

We will prove the claim as follows. Let $\ve_\vj\in \R^N$ have a $1$ in the $\vj^{\text{th}}$ location and $0$'s elsewhere. Our goal is to multiply this vector on the right of \cref{eq:monarch-multi-direct} and show that we  get the $\vj^\text{th}$ column of $\MM'$ as defined in \cref{eq:monarch-multi}. We will do so by induction.

Define $\vy_0=\ve_\vj$ and $\vy_1=\BM_0\cdot \vy_0$. Then for every $1\le a<p$, define
\[
    \vy_{a+1} = \BM_a \parens{\PM_{b^{p-a},b^{p-a+1},N}}^\top\vy_{a}.
    \]
Note that we have
\begin{equation}
\label{eq:MM'-j}
\MM'\cdot \ve_\vj=\PM_{b,N}^R\cdot \vy_p. 
\end{equation}

Next, we claim that for every $1\le a\le p$, we have for every $\parens{i_0,\dots,i_{a-1}}\in[0,\pN)^{a-1}$,
\begin{equation}
\label{eq:y-a-recur}
 \vy_a\brackets{\parens{\parens{i_0,\dots,i_{a-2}},\parens{j_{p-1},\dots,j_{a}},i_{a-1}}} = 
    \prod_{b=0}^{a-1} \ell_{\vj,\vi}^{(b)}\parens{\omega_{\sqrt[p]{N},b)}},
\end{equation}

where for 
$a=p$, we think of $\parens{\parens{i_0,\dots,i_{a-2}},\parens{j_{p-1},\dots,j_{a}},i_{a-1}}=\parens{i_0,\dots,i_{p-1}}$. 

We first note that \cref{eq:y-a-recur} is enough to prove the claim. Indeed we have that
\[\vy_p\brackets{\parens{\parens{i_0,\dots,i_{p-1}}}} = 
    \prod_{b=0}^{p-1} \ell_{\vj,\vi}^{(b)}\parens{\omega_{\sqrt[p]{N},b)}},\]
where the RHS in the above is the same as RHS in \Cref{eq:monarch-multi}. To see the claim note that by definition of $\PM_{b,N}^R$, we have $\parens{\PM_{b,N}^R\cdot \vy_p}\brackets{(i_{p-1},\dots,i_0)} = \vy_p[(i_0,\dots,i_{p-1})]$. \cref{eq:MM'-j} then established the claim.

To complete the proof we prove \cref{eq:y-a-recur} by induction on $a$. 

We next consider the base case of $a=1$. Note that $\vy_1$ is just the $\vj$th column of $\BM_0$ (as $\vy_0=\ve_\vj$) and in that case \cref{eq:y-a-recur} follows from \cref{eq:B-a-assignment}.

For the inductive hypothesis, assume that \cref{eq:y-a-recur} is true for $a$ for some $a\ge 1$. We now want to argue \cref{eq:y-a-recur} for $\vy_{a+1}$. Towards that end define
\[\vz_a= \parens{\PM_{b^{p-a},b^{p-a+1},N}}^\top\vy_{a}= \PM_{b,b^{p-a+1},N}\vy_a.\]
Note that
\begin{equation}
\label{eq:za-ya}
\vy_{a+1}=\BM_a\cdot \vz_a.   
\end{equation}

Now by definition of $\PM_{b,b^{p-a+1},N}$, we have
\[\vz_a\brackets{\parens{\parens{i_0,\dots,i_{a-1}},\parens{j_{p-1},\dots,j_{a}}}}=\vy_a\brackets{\parens{\parens{i_0,\dots,i_{a-2}},\parens{j_{p-1},\dots,j_{a}},i_{a-1}}}.\]

We claim that \cref{eq:y-a-recur} is true for $a+1$ from the above along with \cref{eq:za-ya} and \cref{eq:B-a-assignment}. Indeed, fix any $\parens{i_0,\dots,i_{a-1}}$. Then in \cref{eq:za-ya} the entry $\vz_a\brackets{\parens{\parens{i_0,\dots,i_{a-1}},\parens{j_{p-1},\dots,j_{a}}}}$ gets multiplied by the entries $\BM_{a}^{\parens{\parens{i_0,\dots,i_{a-1}},\parens{j_{p-1},\dots,j_{a+1}}}}\brackets{i_{a},j_{a}}$ for all values of $i_{a}\in [0,\pN)$. The inductive hypothesis and \cref{eq:B-a-assignment} then proves the inductive step, as desired.
\end{proof}

Finally, we note that we can generalize \cref{alg:blocky_L_R-real} for constructing $\BM_{a}$ for $0\le a<p$ (since this is the multivariate case some of the steps are a bit simplified):

\begin{algorithm}[H]
\caption{$\textsc{Blocky MultiVar Monarch}(\tBM_{(0)}, \dots \tBM_ {(p-1)}, N, p)$}
\begin{algorithmic}[1]
\Require{$\tBM_{(0)}, \dots ,\tBM_{(p-1)} \in \mathbb{R}^{N \times b}$ where $b = \pN$}
\Ensure{Block diagonal matrices $\BM_{0}, \dots ,\BM_{p-1} \in \R^{N \times N}$}
    \Statex
    \Comment Compose each output matrix from corresponding input matrix
    \For{$y \gets 0$ to $p-1$}
        \For{{$a \gets 0$} to $\frac{N}{b}-1$}
            \State $\BM_{y}^{(a)} \gets \CM_b \cdot \tBM_{(y)}^{(a)}[ab:ab+b-1, :]$
        \EndFor
        \State $\BM_{y} \gets \diag(\BM_{y}^{(0)}, \dots \BM_{y}^{(\frac{N}{b}-1)}) $
    \EndFor
    \State \Return{$\BM_{0}, \dots ,\BM_{p-1}$}
\end{algorithmic}
\end{algorithm}

\subsubsection{Extensions and Open Questions}
\label{subsubsec:extensions}

In this sub-section we outline certain (fairly) straightforward extension of our theoretical results and conclude with some open questions.

\paragraph{Comparing \cref{eq:monarch-multi-direct} to \cref{eq:monarch-def}} We note that we can post multiply $\MM$ in \cref{eq:monarch-multi-direct} with a large class of permutations for which \cref{thm:real-casual-conv-multi} still holds. We outline the technical reason why this is true. At the heart of the argument for why \cref{eq:univar-causal-real-multi} gives a causal map is \cref{lmm:bj-coefs-multi}. Specifically note that the sum in RHS in \cref{eq:coefs-real-multi}, is over all $\vj+\vj'\preceq \vm$. The main observation is that this partial order still holds if we permute the $b$-variate representation of $\vj,\vj'$ and $\vm$ in the same way. In other words, for any permutation $\sigma:[0,p)\to [0,p)$ if we define $\sigma(\vj)=\parens{j_{\sigma{(0)}},\dots,j_{\sigma(p-1)}}$  and similarly $\sigma(\vj'),\sigma(\vm)$. Then we still have $\sigma(\vj)+\sigma(\vj')\preceq \sigma(\vm)$. This in turn implies the following. Let $\MP_{\sigma}$ be a permutation that maps $\vj\in [0,\pN)^p$ to $\sigma(\vj)\in [0,\pN)^p$. Then \cref{thm:real-casual-conv-multi} holds if we replace $\MM'$ by $\MM\cdot \MP_\sigma$ with $\MM$ as in \cref{eq:monarch-multi-direct}.

\paragraph{Evaluation points} Our results as presented are for specific classes of evaluation points. A natural question to ask is if our results can be extended to more general set of evaluation points. It turns out that our results for $p$-variate Monarch matrices can be extended to a wider class of evaluation points. Specifically, for each $0\le a<p$, let $S_a\subset \C$ with $|S_a|=\pN$. Then our results in this sub-section hold if we replace the evaluation points from $A^p$ to $\times_{a=0}^{p-1} S_a$. The only thing that changes in our proofs is that in \cref{thm:mukltivar-conv-real}, we replace $\mod \parens{T_{\pN}\parens{X_0},\dots, T_{\pN}\parens{X_{p-1}}}$ by $\mod \parens{q_{S_0}\parens{X_0},\dots, q_{S_{p-1}}\parens{X_{p-1}}}$, where $q_A(Z)$ is as defined in \cref{eq:prod-eval-pts-poly}. This result can then be propagated throughout the rest of our proofs.

On the other hand, our results in \Cref{app:causal_univariate} and \Cref{app:monarch-conv-reals} do exploit specific properties of the evaluation points (specifically $\parens{\omega_N^i}^{\sqrt{N}}=\omega_{\sqrt{N}}^{i_0}$ for \Cref{app:causal_univariate} and $T_{\sqrt{N}}\parens{\omega_{N,i}}=(-1)^{i_1}\omega_{\sqrt{N},i_0}$ for \Cref{app:monarch-conv-reals}). To generalize these results to other sets of evaluation points, we need the existence of degree $\sqrt{N}$ polynomial that maps (in a $\sqrt{N}$-to-$1$ fashion) $A$ to a set of $\sqrt{N}$ elements. Another interesting open question is to avoid the blowup $n\to 2^p\cdot n$ in \cref{thm:real-casual-conv-multi} and ideally only pay a blowup $n\to 2n$ for every $p\ge 2$ as we were able to do in \Cref{app:causal_univariate} and \Cref{app:monarch-conv-reals} (with $p=2$).

\end{document}